\documentclass[twoside]{article}
\usepackage[margin=1in]{geometry}
\usepackage[toc,page,header]{appendix}
\usepackage{etoc}
\usepackage[]{amsmath,amssymb,epsfig}
\usepackage{amsthm}
\usepackage{amsmath,mathtools}
\usepackage{amssymb}
\usepackage{graphicx}
\usepackage{epstopdf}
\usepackage{comment}
\usepackage{array}
\usepackage{algorithm}
\usepackage{algpseudocode}
\usepackage{url}
\usepackage{lmodern}

\usepackage{lscape}
\usepackage{algpseudocode}
\usepackage{setspace}
\usepackage{multicol}
\usepackage{multirow}
\usepackage{color}
\usepackage{colortbl}
\usepackage{xcolor}
\usepackage{hyperref}
\usepackage{booktabs}
\usepackage{pgf}
\usepackage{placeins}

\usepackage{calligra}

\usepackage[%
    font={small,sf},
    labelfont=bf,
    format=hang,    
    format=plain,
    margin=0pt,
    width=0.8\textwidth,
]{caption}
\usepackage[list=true]{subcaption}
\usepackage[round]{natbib}

\usepackage{titletoc}
\makeatletter
\let\oldaddcontentsline\addcontentsline 
\makeatother

\newtheorem{theorem}{Theorem}

\newtheorem{claim}{Claim}

\newtheorem{corollary}{Corollary}

\newtheorem{definition}{Definition}

\newtheorem{lemma}{Lemma}

\newtheorem{remark}{Remark}

\numberwithin{equation}{section}

\renewcommand{\algorithmicrequire}{\textbf{Input:}}
\renewcommand{\algorithmicensure}{\textbf{Output:}}

\DeclareMathOperator{\Tr}{Tr}

\DeclareMathOperator{\diag}{diag}
\DeclareMathOperator{\blkdiag}{blkdiag}

\DeclareMathOperator{\vect}{vec}

\newcommand{\calB}{\ensuremath{\mathcal{B}}}
\newcommand{\calC}{\ensuremath{\mathcal{C}}}

\newcommand{\calH}{\ensuremath{\mathcal{H}}}

\newcommand{\calR}{\ensuremath{\mathcal{R}}}
\newcommand{\calS}{\ensuremath{\mathcal{S}}}
\newcommand{\calM}{\ensuremath{\mathcal{M}}}
\newcommand{\calN}{\ensuremath{\mathcal{N}}}
\newcommand{\calT}{\ensuremath{\mathcal{T}}}
\newcommand{\calU}{\ensuremath{\mathcal{U}}}
\newcommand{\calK}{\ensuremath{\mathcal{K}}}

\newcommand{\calV}{\ensuremath{\mathcal{V}}}

\newcommand{\calE}{\ensuremath{\mathcal{E}}}

\newcommand{\norm}[1]{\big\|{#1}\big\|}

\newcommand{\ceil}[1]{\left\lceil{#1}\right\rceil}

\newcommand{\set}[1]{\left\{{#1}\right\}}
\newcommand{\dotprod}[2]{\Big\langle#1,#2\Big\rangle}
\newcommand{\est}[1]{\widehat{#1}}
\newcommand{\expec}{\ensuremath{\mathbb{E}}}
\newcommand{\matR}{\ensuremath{\mathbb{R}}}

\newcommand{\argmin}[1]{\underset{#1}{\operatorname{argmin}}}

\newcommand{\prob}{\ensuremath{\mathbb{P}}}

\newcommand{\frobsphere}{\mathbb{S}_n}
\newcommand{\frobball}{\calB_n}
\newcommand{\tanconeB}{\calT_{\calK,B}}
\newcommand{\tancone}[1]{\calT_{\calK,#1}}

\newcommand{\tanconeAstar}{\calT_{\calK,A^*}}
\newcommand{\conesetB}{\calC_{B}}
\newcommand{\conesetAstar}{\calC_{A^*}}

\newcommand{\gammafn}[1]{\ensuremath{\gamma_{#1}}}

\newcommand{\eventp}[1]{\ensuremath{\calE_{#1}}}
\newcommand{\matrixenc}[1]{\ensuremath{\begin{bmatrix} #1 \end{bmatrix}}}
\newcommand{\absenc}[1]{\ensuremath{\left | #1 \right |}}

\newcommand{\smpl}{P}
\newcommand{\optA}{\bar{A}}

\usepackage[preprint]{aistats2027}
\begin{document}

%

%

\twocolumn[

\aistatstitle{Learning structured linear dynamical systems from missing observations}

\aistatsauthor{ Aravinda K. Ruwanpathirana \And Hemant Tyagi \And  Sunny G.W. Wang}
\vspace{0.25cm}
\aistatsaddress{ Division of Mathematical Sciences, SPMS, NTU Singapore 637371 \\ \texttt{\{aravinda.r, hemant.tyagi, sunny.wanggw\}@ntu.edu.sg} } ]

\begin{abstract}
We consider the problem of learning structured linear dynamical systems over convex sets $\mathcal{K}$, where only a small subset of the observations are available at each time point. 
An estimator which minimizes a bias-corrected, potentially non-convex objective function is proposed. Non-asymptotic bounds are obtained for the statistical error, which depend on the local complexity of $\mathcal{K}$, the trajectory length $T$, and the sub-sampling probability $p$. Convergence of the projected gradient descent algorithm is also established. The general theory is applied to settings where (i) $\mathcal{K}$ is a subspace, (ii) $\mathcal{K}$ is the set of bi-isotonic matrices, and (iii) $\mathcal{K}$ is the set of matrices whose rows are formed by sampling Lipschitz functions. We show meaningful recovery of the transition matrix is possible for values of $T$ much smaller than what is required in the unconstrained case, and for $p = o(1)$.  
\end{abstract}

%
\section{Introduction} \label{sec:intro}
Consider the linear dynamical system\footnote{Also referred to as vector autoregressive (VAR) model of order $1$ in the literature} (LDS)
\begin{align} \label{eq:var_eq}
x_{t+1} = A^* x_t+ \eta_{t+1}; \ \ t = 0,\dots,T-1,
\end{align}
where $(\eta_t)_{t \geq 1}$ are centered i.i.d subgaussian random variables (process noise), $x_0 = 0$ and $A^* \in \matR^{n\times n}$ is the unknown system matrix. The problem of finite-time identification of $A^*$ given access to a single trajectory of states $(x_t)_{t=1}^T$ from \eqref{eq:var_eq} has attracted considerable attention in recent years, mainly in part due to numerous applications arising in, e.g., control systems theory and reinforcement learning, to name a few. A key difficulty in provably recovering $A^*$ is the dependencies between the observations $(x_t)_{t=1}^T$, which makes the theoretical analysis challenging. 

When no structural assumptions are made on $A^*$, a recent like of work \citep{Jedra20,Sarkar19,Simchowitz18a,FaraUnstable18} has analyzed the ordinary least-squares (OLS) estimator and derived non-asymptotic bounds  (up to log factors) of ${O}(1/\sqrt{T})$ which hold with high probability (w.h.p) when $T$ is suitably large w.r.t $n$. The results hold under different stability assumptions on $A^*$ as measured via its spectral radius $\rho(A^*)$ -- this was shown for the strictly stable regime $\rho(A^*) < 1$ by \cite{Jedra20}, and for the more challenging marginal stability regime $\rho(A^*) \leq 1$ by \cite{Sarkar19, simchowitz19a}. The unstable regime $\rho(A^*) > 1$ has been relatively less studied, see e.g., \cite{Sarkar19,FaraUnstable18}.

A number of recent works have  studied the setting where $A^*$ is structured (and strictly stable) under different structural assumptions, typically involving sparsity \citep{Basu15,Kock2015,song2011large,Melnyk16}, low-rankness \citep{Wang23} and low-rank plus sparse structures \citep{lv2021, Basu19}. These works typically show recovery-bounds for $A^*$ with $T$ having a much milder dependence on $n$ than in the unstructured case. 
The recent work of \cite{tyagi2024learning} considered a general setup where $A^*$ lies in a closed convex set $\calK$, in the spirit of shape-constrained regression \citep{Chatter15,Bellec18,neykov19b}. Meaningful recovery of $A^*$ was shown when the local size of $\calK$ at $A^*$ is ``small''.

\paragraph{Our setting.} The aforementioned works assume that $(x_t)_{t=1}^T$ is fully observed, however in practice, we often have \emph{missing values} in the data. This brings us to the setting of this paper: suppose some entries of $(x_t)_{t \geq 1}$ are missing, where the probability that the entry $(x_{t})_i$ is observed is $p$. A popular strategy in this scenario is to impute the missing entries by zero, leading to observations $(y_t)_{t=1}^T$, and then estimating $A^*$ using $(y_t)_{t=1}^T$. Results for provably estimating structured $A^*$ in this setting are relatively scarce in contrast to the full-information setting, and focus essentially on sparsity and low-rank assumptions (e.g., \cite{rao2016system, castro_missing25, JalaliWillett18}).  
\subsection{Our contributions}
In this work, we consider the setting where $A^* \in \calK$ where $\calK$ is a general closed convex set. The estimator we consider amounts to solving a ``bias-corrected'' least squares problem under the constraint $\calK$; see \eqref{eq:unb-est}. The objective function is potentially non-convex, and is motivated from the works of  \cite{loh2011high, JalaliWillett18}. 
\begin{enumerate}
    \item For the solution $\est{A}$ of \eqref{eq:unb-est}, we derive estimation error bounds in the Frobenius norm $\norm{\cdot}_F$ that hold w.h.p, and depend on the local complexity of the tangent cone of $\calK$ at any $B \in \calK$. The local complexity is measured via Talagrand's $\gamma_1$ and $\gamma_2$ functionals \citep{talagrand2014upper}; see Theorem \ref{thm:main-thm}.

    \item We show in Theorem \ref{thm:pgd-thm} that the iterates of the projected gradient descent (PGD) algorithm converge linearly (w.h.p) to $A^*$ -- up to the statistical error obtained in Theorem \ref{thm:main-thm}.

    \item The above results are instantiated for different choices of $\calK$, namely: subspace, set of bi-isotonic matrices (bi-isotonic regression), and set of matrices whose rows are formed by sampling Lipschitz functions (Lipschitz regression). 
\end{enumerate}
Theorems \ref{thm:main-thm} and \ref{thm:pgd-thm} hold provided $T$ and $p \in (0,1]$ are suitably large, and $A^*$ is strictly stable. As shown in the examples, these results do establish meaningful recovery of $A^*$ w.r.t $\norm{\cdot}_F$, when $T = o(n^2)$, and $p = o(1)$ w.r.t $T$. Note that if $p = 1$, and $A^*$ is unstructured, then $T \gtrsim n^2$ is necessary to ensure $\norm{\est{A} - A^*}_F$ is small. 

Our results for the aforementioned examples are new for learning structured LDSs from missing values -- existing works in this setting focus primarily on sparsity and low-rank based structures (as noted earlier). In essence, we extend the results of \cite{tyagi2024learning} to the setting of missing values. 

%
\subsection{Related work} \label{sec:rel_work}
\paragraph{High-dimensional regression with missing observations.} 
In the setting of high-dimensional regression with (typically) independent samples, a number of works have studied the effect of missing data. The seminal work of
\cite{loh2011high} considered sparse linear regression with missing or noisy covariates. They propose a bias-corrected lasso-type estimator with a potentially non-convex objective and show that PGD  converges to the true parameters, up to the statistical error. Our estimator is motivated from their work. While their framework accommodates dependent covariates generated via a \emph{stationary} LDS, this is different from our setting as rewriting \eqref{eq:var_eq} as a traditional linear model (by vectorizing $A^*$) leads to a model with dependencies between the design matrix and noise (while \cite{loh2011high} assume these to be independent). Addressing missingness in diverse regression settings, HMLasso estimates both the covariance matrix and coefficients for highly incomplete design matrices \citep{takada2019hmlasso}.  
Other approaches tackle missing responses via doubly robust high-dimensional M-estimation \citep{chakrabortty2019high}, and missing covariates in semi-supervised settings using Dantzig selectors \citep{risebrow2026semi}. Beyond linear models, sparse estimation has been adapted for nonlinear additive models via multiple imputation \citep{Shimazu2025}, and for simultaneous missingness in both predictors and responses using joint lasso penalties \citep{Muroya2026}. 

\paragraph{Estimating structured LDSs with missing observations.} A growing body of work has recently addressed the problem of estimating LDSs provably, under our Bernoulli sampling model (with zero-imputation). \cite{rao2016system, milind2017estimation} explore covariance estimation to recover $A^*$ (assuming stationarity) under structural assumptions such as bandedness, sparsity, and low-rankness. 
\cite{JalaliWillett18, wong2017lasso} propose a bias-corrected  (potentially non-convex) lasso-type estimator for estimating sparse $A^*$, in the spirit of \cite{loh2011high}. They provide error rates for the global minimizer $\est{A}$ w.r.t $\norm{\cdot}_F$, and also address recovering the support of $A^*$. But they do not show guarantees for solving for $\est{A}$. \cite{castro_missing25} study the Dantzig selector assuming $A^*$ is sparse, and provide minimax optimal recovery bounds w.r.t the entry-wise $\max$ norm. All of these works assume strict stability of $A^*$.

\paragraph{Other related works.} \cite{pmlr-v89-mark19a} consider autoregressive parameter estimation from missing data for discrete event streams; these concepts have also been applied to network topology inference from partial observations \citep{Ioannidis2019, ravazzi2021Ergodic, Cirillo2021Learning, zaman2023onlinejointtopologyidentification}. 
A related but distinct problem is system identification which seeks to estimate both $A^*$, and the \emph{unknown} observation matrix (which is \emph{unchanged} over time) mapping states to outputs (e.g., \citep{tsiamis2019finite, sun2025finitesampleanalysispoles, lee2020robust}). 
\cite{zhang2026provably} propose an active sensor allocation strategy and provide explicit non-asymptotic error bounds for system-identification from multiple trajectories. 
\cite{Rao2017_compmeas} studied a setting where compressive linear measurements of the states are available at each time $t$, and provide fundamental limits for covariance-based estimation.

\section{Problem setup and preliminaries} \label{sec:estimators}

\subsection{Notation} 
For a vector $x \in \matR^n$, let $\norm{x}_p$ denote the standard $\ell_p$ norm. For any matrix $B \in \matR^{n_1 \times n_2}$, $\norm{B}_2$ (resp. $\norm{B}_F$) denotes the spectral (resp. Frobenius) norm of $B$. If $n_1 = n_2$ then $\operatorname{diag}(B)$ keeps only the diagonal entries of $B$. Also, $\dotprod{B}{C} = \Tr\left(B^\top C\right)$ denotes the inner product between $B$ and $C$. The vector $\vect(B) \in \matR^{n_1n_2}$ is obtained by stacking the columns of $B$. The symbols $\otimes$ and $\odot$ denote the Kronecker and Hadamard products, respectively. For $n \times n$ matrices, we denote the unit Frobenius sphere by $\frobsphere$, the unit Frobenius ball by $\frobball$, and the identity matrix by $I_n$. 

Sets are denoted by calligraphic letters and for $\calS \subset \matR^{n_1 \times n_2}$ and $X \in \matR^{n_1 \times n_2}$, we denote $\calS-X := \set{A-X: A\in \calS}$. For a closed convex set $\calC \subset \matR^{m \times n}$, and $X \in \matR^{m \times n}$, we denote 
\begin{equation*}
    \Pi_{\calC}(X) := \arg\min_{Y \in \calC} \norm{Y - X}^2_F.
\end{equation*}
to be the (orthogonal) projection of $X$ onto $\calC$. Note that $\Pi_{\calC}(X)$ here is unique.
Throughout, the values of symbols used to denote constants (e.g., $C,c,c_1,$ etc.) may differ from line to line. For $a,b > 0$, we write $a \lesssim b$ if there exists a constant $C$ such that $a \le Cb$. If $a \lesssim b$ and $a \gtrsim b$, we write $a \asymp b$. 
The subgaussian norm of a random variable $X$ is defined as $\norm{X}_{\psi_2} \coloneq \sup_{l\ge 1} l^{-1/2} \left(\expec \absenc{X}^l\right)^{1/l},$ and we say that $X$ is $L$-subgaussian if $\norm{X}_{\psi_2} \le L$. 

\subsection{Problem setup and estimator}\label{sec:prob-est}
We consider $(x_t)_{t=1}^T$ to be generated via the VAR process in \eqref{eq:var_eq} where $A^* \in \calK$ for a known closed convex set $\calK$. We assume that $(\eta_t)_{t=1}^T$ are centered, i.i.d random variables where the entries of $\eta_t$ are i.i.d, $L$-subgaussian (for some constant $L$), and have unit variance. At each time $t=1,\dots,T$ we observe 
\begin{align*}
    y_t = \smpl_t x_t \text{ where } {(\smpl_t)}_{ii} \stackrel{\text{i.i.d}}{\sim} B(p) \text{ for } i=1,\dots,d.
\end{align*}
Here $(\smpl_t)_{t \geq 1}$ are diagonal matrices 
and $B(p)$ is the Bernoulli distribution with parameter $p \in (0,1]$. The goal is to learn $A^*$ given $(y_t, \smpl_t)_{t=1}^{T}$ and the set $\calK$.

Denote the stacked matrices
\[
X = \matrixenc{x_1 & x_2 & \dots & x_{T-1}} \; \text{and} \; \tilde{X} = \matrixenc{x_2 & x_3 & \dots & x_{T}},
\]
and let $\tilde{E} = \matrixenc{\eta_2 & \eta_3 & \dots & \eta_{T}}$. \eqref{eq:var_eq} can then be written as
$\tilde{X} = A^* X + \tilde{E}.$
Now denoting 
\begin{align*}
&Z = \matrixenc{\smpl_1 y_1 & \smpl_2 y_2 & \dots & \smpl_{T-1} y_{T-1}},\\ 
&\tilde{Z} = \matrixenc{\smpl_2 y_2 & \smpl_3 y_3 & \dots & \smpl_{T} y_{T}},
\end{align*}
and $D = \operatorname{diag}^{1/2}(ZZ^\top)$, we estimate $A^*$ via
%
\begin{align}
\widehat{A} &\in \argmin{A \in \calK} \ \underbrace{\norm{\tilde{Z}-AZ}_F^2-(1-p)\norm{A D}_F^2}_{=: g(A)}.\label{eq:unb-est} 
\end{align}
\begin{remark}
We assume knowledge of $p$ in \eqref{eq:unb-est} to establish the theoretical guarantees, akin to \cite{JalaliWillett18,loh2011high}. In practice, when $p$ is unknown we can use a hyperparameter tuning approach by varying $p$ and testing the recovered matrix on a downstream prediction task. 
\end{remark}

The objective function in \eqref{eq:unb-est} is derived in a similar manner as in \cite{JalaliWillett18}, except that we now have a general convex constraint $\calK$.
\begin{lemma}\label{lem:unb-est}
If $p > 0$ then
\begin{align*}
\argmin{A \in \calK} \ \expec \left[ g(A) \right] = \argmin{A \in \calK} \ \expec \left[\norm{\tilde{X}-AX}_F^2 \right].
\end{align*}
\end{lemma}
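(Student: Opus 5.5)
The plan is to show that $\expec[g(A)] = p^2\,\expec\bigl[\norm{\tilde{X}-AX}_F^2\bigr] + c$, where $c$ does not depend on $A$. Since $p>0$, the two objectives then differ by a positive scaling and an additive constant, so their argmin sets over $\calK$ coincide. I will compute $\expec[g(A)]$ by conditioning on the trajectory $(x_t)_{t=1}^T$ and averaging over the sampling matrices $(\smpl_t)$, which are independent of $(x_t)$. The outer expectation over $(x_t)$ is then taken via the tower property.

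First I simplify the data matrices. Because $\smpl_t$ is diagonal with $0/1$ entries, $\smpl_t^2=\smpl_t$, so the columns of $Z$ and $\tilde{Z}$ are $\smpl_t x_t$ and $\smpl_{t+1}x_{t+1}$. The objective decomposes columnwise as
\[
\norm{\tilde Z - AZ}_F^2=\sum_{t=1}^{T-1}\Bigl(\norm{\smpl_{t+1}x_{t+1}}_2^2-2\dotprod{\smpl_{t+1}x_{t+1}}{A\smpl_t x_t}+\norm{A\smpl_t x_t}_2^2\Bigr).
\]
I then take the conditional expectation of each term.
\begin{itemize}
\item The first term does not depend on $A$, so it contributes only to $c$.
\item The cross term uses the independence of $\smpl_{t+1}$ and $\smpl_t$ together with $\expec \smpl_t = pI_n$. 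Its conditional expectation is $p^2\dotprod{x_{t+1}}{Ax_t}$.
\item For the quadratic term I use $\expec[\smpl_t x_t x_t^\top \smpl_t] = p^2 x_tx_t^\top + p(1-p)\diag(x_tx_t^\top)$. This follows from computing diagonal entries (mean $p$) and off-diagonal entries (mean $p^2$) separately. It gives
\[
\expec\norm{A\smpl_t x_t}_2^2 = p^2\norm{Ax_t}_2^2 + p(1-p)\Tr\bigl(A\diag(x_tx_t^\top)A^\top\bigr).
\]
\end{itemize}

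The key step is to check that the correction term cancels the last piece exactly. Write $\norm{AD}_F^2=\Tr(AD^2A^\top)$, which is linear in $D^2 = \diag(ZZ^\top) = \sum_{t=1}^{T-1}\smpl_t\diag(x_tx_t^\top)$. Its conditional expectation is therefore
\[
p\sum_{t=1}^{T-1}\Tr\bigl(A\diag(x_tx_t^\top)A^\top\bigr).
\]
Multiplying by $(1-p)$ removes precisely the bias term $p(1-p)\sum_t\Tr(A\diag(x_tx_t^\top)A^\top)$. Collecting the surviving $A$-dependent terms gives
\[
p^2\sum_{t=1}^{T-1}\Bigl(-2\dotprod{x_{t+1}}{Ax_t}+\norm{Ax_t}_2^2\Bigr).
\]
Adding and subtracting $p^2\norm{x_{t+1}}_2^2$ turns this into $p^2\norm{\tilde X - AX}_F^2$ up to an $A$-free term. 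Taking the expectation over $(x_t)$ yields the claimed identity, with
\[
c=\sum_{t=1}^{T-1}\bigl(p-p^2\bigr)\expec\norm{x_{t+1}}_2^2 .
\]

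The only delicate step is the second moment $\expec[\smpl_t x_tx_t^\top\smpl_t]$ and the bookkeeping that matches it with $\expec[D^2]$. In particular, one must note that $\norm{AD}_F^2$ is linear in $D^2$, not in $D$, so no expectation of a square root is ever needed. Finiteness of all expectations follows from the subgaussian noise and the finite horizon, so the interchange of sums and expectations is harmless.
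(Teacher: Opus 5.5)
Your proposal is correct and follows the paper's proof essentially line by line. As in the paper, you condition on $(x_t)$ and use $\expec[\smpl_t x_t x_t^\top \smpl_t] = S \odot (x_t x_t^\top)$ with $S = p^2\mathbf{1}\mathbf{1}^\top + p(1-p)I$, the independence of $\smpl_t$ and $\smpl_{t+1}$ for the cross term, and $\expec[\diag(ZZ^\top)] = p\,\diag(XX^\top)$, so that the $(1-p)$ correction cancels the bias and $\expec_{\smpl}[g(A)] = \mu' + p^2\norm{\tilde{X}-AX}_F^2$; your columnwise bookkeeping is just the unvectorized form of the paper's trace computation.
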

The proof is detailed in Appendix \ref{sec:unb-est}  and follows similar arguments outlined in \cite{JalaliWillett18}, which in turn were motivated by \cite{Loh12}. Thus minimizing the population loss in \eqref{eq:unb-est} is equivalent to minimizing the population loss in the full-information setting.

\paragraph{Projected gradient descent.} The objective $g(\cdot)$ can be nonconvex, hence we consider solving it via projected gradient descent (PGD) as in Algorithm \ref{algo:PGD}.

\begin{algorithm}
\caption{PGD for structured transition matrix}\label{algo:PGD}
    \begin{algorithmic}[1]
        \Require Initialization $A^{(0)}$, step-size $\gamma > 0$, number of iterates $K$
        \For{$k = 0$ \textbf{to} $K-1$}
        \State $A^{(k+1)} = \Pi_{\calK}\left(A^{(k)} - \gamma \nabla g(A^{(k)}) \right)$
        \EndFor
        \State \Return $A^{(K)}$
    \end{algorithmic}
\end{algorithm}

%
%
\subsection{Preliminaries}\label{sec:prelim-def}
\paragraph{Talagrand's $\gammafn{\alpha}$ functionals.} We start with restating the Talagrand's $\gammafn{\alpha}$ functionals \citep{talagrand2014upper}. 

\begin{definition}[Talagrand's $\gammafn{\alpha}$ Functionals \citep{talagrand2014upper}]\label{def:talgrand-gamma}
Let $(\calS,d)$ be a metric space. A sequence of subsets of $\calS$, $(\calS_r)_{r \ge 0}$, is an admissible sequence if $\absenc{\calS_0} = 1$ and $\absenc{\calS_r} \le 2^{2^r}$ for all $r \ge 1$. Then for any $0 < \alpha < \infty$, the $\gammafn{\alpha}$ function of $(\calS,d)$ is defined as
\begin{equation*}
    \gammafn{\alpha} (\calS,d) \coloneq \inf \sup_{s\in \calS} \sum_{r=0}^{\infty} 2^{r/\alpha} d(s,\calS_r),
\end{equation*}
where the $\inf$ is over the admissible sequences of $\calS$.
\end{definition}
\begin{definition}[Gaussian width] \label{def:tal_func}
Given a set $\calS \subset \matR^{n \times m}$, the Gaussian width $w(\calS)$ of $\calS$ is given by
\begin{equation*}
    w(\calS) \coloneq \expec \sup_{s\in \calS} \dotprod{G}{s},
\end{equation*}
where $G$ has i.i.d. standard Gaussian entries.
\end{definition}
By Talagrand's majorizing measure theorem \cite[Theorem 2.4.1]{talagrand2014upper}, it is well known that 
\begin{equation*}
 w(\calS) \asymp \gammafn{2}(\calS,\norm{\cdot}_F).
\end{equation*}
In general, $\gamma_{\alpha}(\calS, d)$ can be bounded using metric-entropy estimates of $(\calS, d)$; see Appendix \ref{sec:prelim-tools}.

\paragraph{Tangent cone.} Our  results will depend on the \emph{tangent cone} of the set $\calK$ at $A^*$ (or an appropriate $B \in \calK$), see e.g.,  \cite[Prop. 4.2.1]{Aubin2009}.
\begin{definition}[Tangent cone]
For a convex set $\calK$, the tangent cone of $\calK$ at $A \in \calK$ is defined as the set
\begin{equation*}
    \tancone{A} := \operatorname{cl}\{t(B-A):t\ge 0, B\in \calK\},
\end{equation*}
where $\operatorname{cl}(\cdot)$ denotes the closure operation.
\end{definition}

\paragraph{Stability of $A^*$.} We will require that  $A^*$ is strictly stable, i.e., its spectral radius satisfies $\rho(A^*) < 1$. Our analysis makes use of the quantity (introduced in \cite{Jedra20}) 
\begin{equation*}
J(A^*) \coloneq \sum_{i=0}^\infty \norm{(A^*)^i}_2.
\end{equation*}
If $\rho(A^*) < 1$, then $J(A^*)$ is bounded, although it may grow with the dimension $n$. Furthermore, if there exist $C>0$ and $\kappa \in (0,1)$ such that for all integers $t \geq 1$, $\norm{(A^*)^t}_2 \le C\kappa^{t},$ then $J(A^*) \le 1+\frac{C\kappa}{1-\kappa}$. This is also referred to as $(C,\kappa)$-stability in the literature (see e.g., \cite[Def. 7.2]{ziemannMix22}). Our results will be meaningful when $J(A^*)$ is ``small'', e.g., bounded by a constant, or grows mildly with dimension $n$.

%
\subsection{Main result}

Our first result is Theorem~\ref{thm:main-thm} which bounds $\norm{\widehat{A}-A^*}_F$ for any $A^* \in \calK$. The
proof is provided in Section~\ref{sec:tangent}. Let us denote $\tanconeB \cap \frobsphere$ by $\conesetB$, and $\tanconeAstar \cap \frobsphere$ by $\conesetAstar$.
The subgaussian-norm $L$ of $(\eta_{t,i})_{t,i}$ will be treated as an absolute constant in what follows.
\begin{theorem}\label{thm:main-thm}
There exist constants $c_1, c_2  > 1$ such that for any $\delta \in (0,1)$ and $B \in \calK$, if
\begin{align*}
T \ge \frac{c_1 J^4(A^*) \max \set{ \gammafn{2}^2(\conesetB,\norm{\cdot}_F),\log^2 \left(1/\delta\right)}}{p^4},
\end{align*}
then it holds with probability at least $1-\delta$,
\begin{align*}
&\norm{\widehat{A}-A^*}_F \le 
 c_2 \bigg[\frac{J^3(A^*)}{p^2} \bigg(\frac{\gammafn{1}(\conesetB,\norm{\cdot}_2)}{T}+\frac{\gammafn{2}(\conesetB,\norm{\cdot}_F)}{\sqrt{T}}\\
 &+\sqrt{\frac{\log \left(1/\delta\right)}{T}}\bigg)
+ \frac{J^2(A^*)}{p}\norm{A^*-B}_F \bigg]. 
\end{align*}

\end{theorem}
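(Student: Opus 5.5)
The plan is to compare $\widehat{A}$ with $B$ through the basic inequality and then control the resulting quadratic and linear terms uniformly over the tangent cone at $B$. Define the bias-corrected Gram matrix $\widehat{\Sigma} := ZZ^\top - (1-p)D^2$ and the cross term $\widehat{\Gamma} := \tilde{Z}Z^\top$. Up to a constant not depending on $A$, we then have $g(A) = \Tr(A\widehat{\Sigma}A^\top) - 2\dotprod{A}{\widehat{\Gamma}}$. The scalings come from Lemma \ref{lem:unb-est}: $\expec[\widehat{\Sigma}]$ is $p^2\expec[XX^\top]$ (up to the $p$-dependent diagonal normalization built into $Z$), and $\expec[\widehat{\Gamma}] = p^2\,\expec[\tilde{X}X^\top]$.

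Set $\Delta = \widehat{A} - B \in \tanconeB$. Since $\widehat{A}$ minimizes $g$ over $\calK$ and $B \in \calK$, we have $g(\widehat{A}) \le g(B)$. Expanding this gives
\begin{align*}
\Tr(\Delta \widehat{\Sigma} \Delta^\top) \le 2\dotprod{\Delta}{\widehat{\Gamma} - B\widehat{\Sigma}} = 2\dotprod{\Delta}{\widehat{\Gamma} - A^*\widehat{\Sigma}} + 2\dotprod{\Delta}{(A^*-B)\widehat{\Sigma}}.
\end{align*}
The proof then rests on three uniform bounds over $U \in \conesetB$.

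\emph{(i) Restricted eigenvalue.} Show $\inf_{U\in\conesetB}\Tr(U\widehat{\Sigma}U^\top) \gtrsim p^2 T$. The natural route is to note $\expec XX^\top \succeq (T-1)I_n$, because each $x_t$ carries fresh unit-variance noise. Then control $\sup_{U}|\Tr(U(\widehat{\Sigma}-\expec\widehat{\Sigma})U^\top)|$ by a Hanson--Wright/chaos-type concentration. This would write $X$ as a linear map of the stacked noise vector, whose operator norm is bounded by $J(A^*)$. It would combine with a chaining argument (Krahmer--Mendelson--Rauhut style) over the set $\conesetB$. The Bernoulli masks enter through conditioning or through a decoupling argument. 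This yields a deviation of order $J^2(A^*)\,(\gamma_2\sqrt{T}+\gamma_2^2+\sqrt{T\log(1/\delta)}+\log(1/\delta))$. The sample-size condition $T \gtrsim J^4\gamma_2^2/p^4$ is exactly what makes this smaller than $p^2T/2$.

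\emph{(ii) Noise term.} Write $\widehat{\Gamma} - A^*\widehat{\Sigma} = (\widehat{\Gamma} - p^2\tilde{X}X^\top\text{-type part}) + p^2\tilde{E}X^\top$ plus mask fluctuation terms. Then bound $\sup_{U\in\conesetB}\dotprod{U}{\widehat{\Gamma}-A^*\widehat{\Sigma}}$. Conditioned appropriately, this is a mixed-tail process. It has subgaussian increments in $\norm{\cdot}_F$ with scale about $J\sqrt{T}$, and subexponential increments in $\norm{\cdot}_2$ with scale about $J$. By Talagrand's generic chaining for mixed tails, the bound is $J^{c}(\gamma_2(\conesetB,\norm{\cdot}_F)\sqrt{T}+\gamma_1(\conesetB,\norm{\cdot}_2)+\sqrt{T\log(1/\delta)}+\log(1/\delta))$. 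The $\gamma_1$ term with the spectral norm arises exactly here, which is why it appears in the statement.

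\emph{(iii) Misspecification term.} Bound $\dotprod{\Delta}{(A^*-B)\widehat{\Sigma}} \le \norm{\Delta}_F\norm{A^*-B}_F\norm{\widehat{\Sigma}}_2$. Use $\norm{\widehat{\Sigma}}_2 \lesssim J^2(A^*) pT$ with high probability; the diagonal entries scale like $p$ while the off-diagonal part scales like $p^2$.

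Dividing by $\norm{\Delta}_F \cdot p^2T$ gives
\begin{align*}
\norm{\Delta}_F \lesssim \frac{J^3}{p^2}\left(\frac{\gamma_1}{T}+\frac{\gamma_2}{\sqrt T}+\sqrt{\frac{\log(1/\delta)}{T}}\right) + \frac{J^2}{p}\norm{A^*-B}_F.
\end{align*}
The triangle inequality $\norm{\widehat{A}-A^*}_F \le \norm{\Delta}_F + \norm{B-A^*}_F$ then gives the claim, absorbing $\norm{B-A^*}_F$ since $J^2/p \ge 1$.

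The main obstacle is (i), together with the $\delta$-dependence in (ii). The quadratic form $\widehat{\Sigma}$ mixes dependent VAR states with Bernoulli masks and a bias correction. A uniform lower bound therefore needs a decoupled chaos bound over a cone rather than a single fixed direction. I would first try conditioning on $X$ and concentrating the masks (the centered $P_tx_tx_t^\top P_t$ terms with corrected diagonal). This uses $\norm{X}_2 \lesssim J\sqrt{T}$ and $\max_t \norm{x_t}$ bounds. I would then separately concentrate $XX^\top$ over the cone using the Hanson--Wright-based supremum bound, as in \cite{tyagi2024learning}.
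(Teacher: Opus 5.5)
Your skeleton matches the paper's: the basic inequality $g(\widehat A)\le g(B)$, a restricted lower bound over $\conesetB$, mixed-tail chaining for the noise term, and a separate misspecification term. The problem is that two steps, as you wrote them, cannot give a bound free of $n$. The theorem is dimension-free: its sample-size condition involves only $\gammafn{2}(\conesetB,\norm{\cdot}_F)$, which can be $O(1)$, so $T<n$ is allowed.

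\textbf{Step (iii) fails.} The claim $\norm{\widehat\Sigma}_2\lesssim J^2(A^*)pT$ is false when $n\gg T$. Take $v=P_1x_1/\norm{P_1x_1}_2$. Then $v^\top ZZ^\top v\ge\norm{P_1x_1}_2^2\approx pn$, while $(1-p)v^\top D^2v\le\max_i D_{ii}^2$ is only of order $J^2(A^*)pT$ up to lower-order terms. Hence $\norm{\widehat\Sigma}_2\gtrsim pn$, and your misspecification term becomes roughly $\frac{n}{pT}\norm{A^*-B}_F$ instead of $\frac{J^2(A^*)}{p}\norm{A^*-B}_F$.

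The fix is to stay on the cone. Because $\widehat\Sigma$ is indefinite, split
$\dotprod{\Delta}{(A^*-B)\widehat\Sigma}=\dotprod{\Delta Z}{(A^*-B)Z}-(1-p)\dotprod{\Delta D}{(A^*-B)D}$
and apply Cauchy--Schwarz to each PSD piece separately. Both normalized directions $\Delta/\norm{\Delta}_F$ and $(A^*-B)/\norm{A^*-B}_F$ lie in $\conesetB$. This reduces everything to $\sup_{U\in\conesetB}\norm{UZ}_F^2$ and $(1-p)\sup_{U\in\conesetB}\norm{UD}_F^2$, each $\lesssim pJ^2(A^*)T$ by the upper half of the concentration used in (i). This is what the paper does.

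\textbf{Step (i) has the same dimension problem} in the route you say you would try first. Conditioning on $X$ and using $\norm{X}_2\lesssim J\sqrt T$ fails, since $\norm{X}_2\ge\norm{X}_F/\sqrt{T-1}\approx\sqrt n$ when $n>T$. Bounds on $\max_t\norm{x_t}$ bring in $\sqrt n$, or $\sqrt{\log(nT)}$ in $\ell_\infty$; none of these appear in the statement.

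The paper reverses the order and conditions on the masks first:
\begin{itemize}
\item Given the masks, $\norm{UZ}_F^2=\norm{(I_{T-1}\otimes U)\breve P\Gamma\eta}_2^2$ is a PSD chaos in $\eta$. Its coefficient set has $d_2\le J$, $d_F\le J\sqrt T$ and $\gamma_2\lesssim J\gammafn{2}(\conesetB,\norm{\cdot}_2)$, so Krahmer--Mendelson--Rauhut applies; the $D$-term is handled the same way.
\item The conditional mean is a function of the masks. Expand it via $\breve P=pI+\overline P$ into three parts.
\item The first part is $p^2\norm{(I_{T-1}\otimes U)\Gamma}_F^2\ge p^2(T-1)$.
\item The second is a quadratic form $q^\top\bigl(G\odot(I_{T-1}\otimes U^\top U)\bigr)q$ in the centered masks. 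Here $G$ is the deterministic block diagonal of $\Gamma\Gamma^\top$, so $\norm{G}_2\le J^2$. It is controlled again by Krahmer--Mendelson--Rauhut after factoring the Hadamard product through a Kronecker product.
\item The third is a linear term in $q$, controlled by Lipschitz concentration for bounded variables.
\item The $p(1-p)$ mean contributions cancel exactly against the bias correction.
\end{itemize}

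The same masks-first ordering is what makes (ii) work. First apply Dirksen's bound in $\eta$ given the masks. Then bound the mask-dependent conditional mean uniformly over the cone, as a chaos plus linear terms in $q$; its total mean is zero.
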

In Theorem~\ref{thm:main-thm}, we use a generic matrix $B \in \calK$. One potential choice is $B=A^*$ which removes the $\norm{A^*-B}_F$ term in  Theorem~\ref{thm:main-thm}. However, the terms $\gammafn{\alpha} \left(\conesetB, d\right)$ might be easier to bound for some $B \neq A^*$ where $\norm{A^*-B}_F$ is sufficiently small. This will be shown in Corollary \ref{cor:stat-lipschitz-regression}.  

\begin{remark}
When $p=1$, Theorem \ref{thm:main-thm} reduces to \cite[Theorem 1]{tyagi2024learning}, up to a $J(A^*)$ factor, and under slightly stronger conditions on $T$.
\end{remark}

Our second result shows that with PGD, we can find a solution that is close to $A^*$ up to a small additive error. The proof of Theorem~\ref{thm:pgd-thm} is provided in Section~\ref{app-sec:pgd-proof}. 
\begin{theorem}\label{thm:pgd-thm}
Suppose that the learning rate is $\gamma = \frac{1}{2p^2 T J^2(A^*) }.$ There exist constants $c, C, C_1, C_2 > 1$ such that for any $B \in \calK$ and $\delta \in (0,1)$, if
    \begin{align}
        T &\geq \frac{CJ^4(A^*)}{p^4}\left\{\gamma_2^2(\calC_{B}, \norm{.}_F) + \log^2(1/\delta) \right\} \nonumber 
        \\
        &+ \frac{C J^2(A^*)}{p^2}\left\{\gamma_1(\calC_{B}, \norm{.}_2) + \log(1/\delta)\right\}, \label{eq:T-PGD-cond}
    \end{align}
then there exists { $\rho \leq 1-\frac{1}{c J^2(A^*)}$} such that with probability at least $1 - \delta$, it holds for all $k = 0,1,\dots$ that
\begin{equation}\label{eq:pgd-thm-final-form}
    \norm{A^{(k)} - A^*}_F \leq \rho^k \norm{A^{(0)} - A^*}_F  
    + \frac{\varepsilon}{1 - \rho}.
\end{equation}
Here $\varepsilon$ is given by
\begin{align*}
    \varepsilon &= \varepsilon_T =
     \frac{C_1 J(A^*)}{p^2}\Bigg[\frac{\gamma_1(\calC_{B}, \norm{.}_2)}{T} + \frac{\gamma_2(\calC_{B}, \norm{.}_F)}{\sqrt{T}} \\
     &+ \sqrt{\frac{\log(1/\delta)}{T}} \Bigg] + \frac{C_2\norm{A^* - B}_F}{p}.
\end{align*}
Since $(1-\rho)^{-1} = O(J^2(A^*))$, $\varepsilon/(1-\rho)$ is of the same order as the statistical error in Theorem~\ref{thm:main-thm}.
\end{theorem}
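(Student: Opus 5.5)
We will prove Theorem~\ref{thm:pgd-thm} with the standard ``restricted strong convexity/smoothness plus projection'' argument of \cite{loh2011high}, adapted to the tangent-cone geometry of \cite{tyagi2024learning}. Write $g(A) = \Tr(A \Gamma A^\top) - 2\Tr(A\Upsilon^\top) + \text{const}$. Here $\Gamma := ZZ^\top - (1-p)D^2$ is the bias-corrected (possibly indefinite) Gram matrix and $\Upsilon := \tilde Z Z^\top$, so that $\nabla g(A) = 2(A\Gamma - \Upsilon)$.

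The first step is to note that $\expec \Gamma$ and $\expec \Upsilon$ match, up to multiplicative factors in $p$, the quantities $\expec XX^\top$ and $\expec \tilde X X^\top$; this is the computation behind Lemma~\ref{lem:unb-est}. The two key events will be established on the tangent cone $\tanconeB$, reusing the concentration estimates from the proof of Theorem~\ref{thm:main-thm} (Section~\ref{sec:tangent}).
\begin{itemize}
\item[(i)] Restricted eigenvalue bounds. For all $\Delta \in \conesetB$,
\[
c\,p^2 T \le \dotprod{\Delta\Gamma}{\Delta} \le C p^2 T J^2(A^*).
\]
The lower bound uses $\lambda_{\min}(\expec XX^\top)\gtrsim T$, since the noise has identity covariance. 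The upper bound uses $\norm{\expec XX^\top}_2 \lesssim T J^2(A^*)$. The deviation is controlled uniformly over $\conesetB$ by a chaos/Hanson--Wright bound in the Bernoulli masks and the noise, through $\gamma_2(\conesetB,\norm{\cdot}_F)$ and $\gamma_1(\conesetB,\norm{\cdot}_2)$. This is exactly where condition \eqref{eq:T-PGD-cond} enters, with the extra $\gamma_1$ term needed for the upper bound.
\item[(ii)] Gradient deviation. We need
\[
\sup_{\Delta\in\conesetB}\dotprod{\nabla g(A^*)}{\Delta} \lesssim p^2 T\,\varepsilon_T ,
\]
which is the same bound on $\dotprod{A^*\Gamma-\Upsilon}{\Delta}$ that drives Theorem~\ref{thm:main-thm}.
\end{itemize}

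Next comes the one-step contraction. Set $\gamma=1/(2p^2TJ^2)$ and write $G(A) = A - \gamma\nabla g(A)$. Project onto $\calK$ and use the variational inequality of $\Pi_\calK$ at the point $A^{(k+1)}$ against the feasible point $B$ (not $A^*$, which is where $\norm{A^*-B}_F$ enters). Since $A^{(k+1)}-B\in\tanconeB$, we compare $A^{(k+1)}$ with $B$:
\[
\norm{A^{(k+1)}-B}_F \le \sup_{\Delta\in\conesetB}\dotprod{G(A^{(k)})-G(B)}{\Delta} + \sup_{\Delta\in\conesetB}\dotprod{G(B)-B}{\Delta}.
\]
The second term is $\gamma\sup_\Delta\dotprod{\nabla g(B)}{\Delta}$. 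Writing $\nabla g(B)=\nabla g(A^*)+2(B-A^*)\Gamma$ and using (ii) together with $\norm{\Gamma}_2\lesssim p\,T J^2$ (or a restricted version of this bound) gives at most $\varepsilon_T$ in the stated form, including the term $C_2\norm{A^*-B}_F/p$.

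The first term is $\dotprod{(A^{(k)}-B)(I-2\gamma\Gamma)}{\Delta}$. Here is the main obstacle: $A^{(k)}-B$ lies in $\tanconeB$, but $(A^{(k)}-B)(I-2\gamma\Gamma)$ need not, and $\Gamma$ may be indefinite on the whole space. I would handle this by polarization, writing
\[
\dotprod{U(I-2\gamma\Gamma)}{V} = \tfrac14\left[q(U+V)-q(U-V)\right], \qquad q(W)=\norm{W}_F^2-2\gamma\dotprod{W\Gamma}{W},
\]
for $U,V\in\tanconeB$. The difference $U-V$ is no longer in the cone, so the restricted bounds from (i) must first be extended to the difference set $\conesetB-\conesetB$. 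This costs only constant factors in the $\gamma_\alpha$ functionals. On that set, (i) gives $|q(W)|\le(1-1/(cJ^2))\norm{W}_F^2$ for the chosen $\gamma$. Hence the first term is at most $\rho\norm{A^{(k)}-B}_F$ with $\rho\le 1-1/(cJ^2(A^*))$.

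Finally, iterate the recursion $e_{k+1}\le\rho e_k+\varepsilon$ for $e_k=\norm{A^{(k)}-B}_F$, and pass between $B$ and $A^*$ by the triangle inequality. The resulting extra $\norm{A^*-B}_F$ terms are absorbed into $\varepsilon/(1-\rho)$ after enlarging $C_2$. This yields \eqref{eq:pgd-thm-final-form}, and all events hold simultaneously with probability $1-\delta$ by a union bound.
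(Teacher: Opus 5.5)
Your skeleton matches the paper's. It uses the projection inequality of Lemma~\ref{lem:proj-to-cone} at $B$, splits one step into the contraction term $(A^{(k)}-B)(I-2\gamma M)$, the misspecification term $2\gamma(A^*-B)M$ and $\gamma\nabla g(A^*)$ (via \eqref{eq:dotprod-PGD-unif-bound}), then unrolls with two triangle inequalities. Here $M=ZZ^\top-(1-p)D^2$ is your $\Gamma$.

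The real difference is the contraction term. The paper writes $I-2\gamma M=(I-2\gamma\expec M)+2\gamma(\expec M-M)$. Since $\expec M=p^2\expec XX^\top$ has spectrum in $[p^2(T-1),p^2TJ^2]$ (with $J=J(A^*)$), the first piece contracts in full-space operator norm with no cone argument and no constant to tune (Lemma~\ref{lem:term-1-PGD-decomp}). The randomness of this term then goes into one bilinear chaos $\sup_{U,V\in\conesetB}\dotprod{U^\top V}{\expec M-M}$, handled by Theorem~\ref{thm:non-psd-sup}; that is where the $\gamma_1$ part of \eqref{eq:T-PGD-cond} comes from.

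Your polarization instead reduces everything to quadratic forms $\dotprod{WM}{W}=\norm{WZ}_F^2-(1-p)\norm{WD}_F^2$ on $\conesetB\pm\conesetB$. You can then reuse, run two-sided, the PSD-chaos and linear estimates of Lemmas~\ref{lem:M1-expec}--\ref{lem:SR-expec}, which involve only $\gamma_2$ and treat masks and noise together. This buys more than you claim:
\begin{enumerate}
\item The contraction needs only the $\gamma_2$/$\log^2$ part of \eqref{eq:T-PGD-cond}. Your upper bound does not need $\gamma_1$; $\gamma_1$ enters only through $\nabla g(A^*)$.
\item If you also polarize the misspecification term, you get $O(\norm{A^*-B}_F)$ rather than $O(\norm{A^*-B}_F/p)$. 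Lemma~\ref{lem:PGD-third-term} instead applies Cauchy--Schwarz to the two PSD pieces, where $\sup_V\norm{VZ}_F^2$ is of order $pT$ rather than $p^2T$, which costs the $1/p$.
\item It makes explicit a step that Lemma~\ref{lem:PGD-centered-term} skips. There Theorem~\ref{thm:non-psd-sup} can only be applied conditionally on $(P_t)_{t=1}^T$, so it centers at $\expec[M\mid(P_t)_{t=1}^T]$. The mask fluctuation $\expec[M\mid(P_t)_{t=1}^T]-\expec M$ still has to be bounded.
\end{enumerate}

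Three points need fixing in your version.

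(a) Drop the full-space bound $\norm{M}_2\lesssim pTJ^2$. It fails for $T\ll n$: already for $A^*=0$, $\norm{M}_2$ is of order $pn$. Only the restricted version is available.

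(b) A uniform deviation bound $\omega$ over the unnormalized set $\conesetB\pm\conesetB$ gives the additive estimate $|q(W)|\le(1-\tfrac{1}{2J^2})\norm{W}_F^2+\omega/(p^2TJ^2)$, not the relative one you state. A relative bound would need restricted eigenvalues over the normalized differences $(U-V)/\norm{U-V}_F$. For the bi-isotonic cone at a constant $A^*$, these are dense in the unit sphere of $\matR^{n\times n}$. The additive form is enough, because $\norm{U+V}_F^2+\norm{U-V}_F^2=4$ for $U,V\in\conesetB$.

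(c) With $\gamma=1/(2p^2TJ^2)$, the upper restricted eigenvalue must stay below $2p^2TJ^2$, not merely be $\lesssim p^2TJ^2$; otherwise $q(W)<-\norm{W}_F^2$ becomes possible. It does stay below, because $\lambda_{\max}(\expec M)\le p^2TJ^2$ with constant one and the deviation is a small multiple of $p^2T$ under \eqref{eq:T-PGD-cond}.

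Finally, \eqref{eq:dotprod-PGD-unif-bound} gives your (ii) only as $\lesssim p^2TJ^2\varepsilon_T$, not $p^2T\varepsilon_T$. After multiplying by $\gamma$ this is exactly the $\varepsilon_T$ of the statement.
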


\begin{remark}
$\gamma$ in Theorem~\ref{thm:pgd-thm} depends on $J(A^*)$ which is an unknown parameter defined on $A^*$. In practice, one can replace $J(A^*)$ by a known upper bound $\xi$, with no change in the PGD convergence rates.
\end{remark}

\begin{remark} \label{rem:utility_thms}
 Theorems \ref{thm:main-thm} and \ref{thm:pgd-thm} are complementary. Theorem \ref{thm:main-thm} gives statistical rates for the global minimizer of \eqref{eq:unb-est}, while Theorem \ref{thm:pgd-thm} shows that the iterates of PGD cluster in a small ball around $A^*$. Using triangle inequality, clearly $\norm{\widehat{A} - A^{(k)}}_F \lesssim \varepsilon$ under \eqref{eq:T-PGD-cond}.   
\end{remark}

\begin{remark}\label{rem:p_scaling}
    In Theorem~\ref{thm:main-thm}, for a fixed level $\delta$, we have $\norm{\widehat A - A^*}_F = o(1)$ even as $p = o(1)$, under the sufficient condition
    \begin{align}\label{eq:p-floor-thm1}
&J^{3/2} (A^*)\Bigg(\sqrt{\frac{\gamma_1(\calC_B, \norm{.}_2)}{T}} + \frac{\sqrt{\gamma_2(\calC_B, \norm{.}_F)}}{T^{1/4}}\nonumber \\ &+ \sqrt{\norm{A^* - B}_F} \Bigg)  = o(p), \quad \text{with} \quad p = o(1).
    \end{align}
For Theorem \ref{thm:pgd-thm}, if $\norm{A^{(0)} - A^*}_F = O(1)$, $p = o(1)$, and \eqref{eq:p-floor-thm1} holds, then for $k = k_T \gtrsim \log\left(1 / (\norm{A^* - B}_F + \varepsilon_T)\right)$ iterations, we have 
\begin{equation*}
    \norm{A^{(k_T)} - A^*}_F = O(\norm{A^* - B}_F + \varepsilon_T) = o(1).
\end{equation*}
\end{remark}

\begin{remark}\label{rem:sparse_case}
While Theorem~\ref{thm:main-thm} holds for general convex sets $\calK$, the bounds can be sub-optimal in some settings, such as $\calK = \set{A \in \matR^{n \times n}: \norm{\vect(A)}_{1} \le \norm{\vect(A^*)}_{1}}$, where $A^*$ is $k$-sparse, as bounding $\gammafn{1}$ is challenging (as noted in Remark 3 of \cite{tyagi2024learning}). Achieving optimal guarantees in this setting requires specialized analysis exploiting the properties of $A^*$ and the $\ell_1$-ball.
\end{remark}

\begin{remark}\label{rem:pgd_proof_comp}
    The proof of Theorem~\ref{thm:pgd-thm} follows a similar approach to
    \cite[Theorem 1]{lv2021}. While their result is stated for  specific convex sets with a convex objective, the techniques used directly apply to general convex sets. We extend their ideas to establish convergence guarantees for PGD for the (potentially non-convex) program \eqref{eq:unb-est}.
\end{remark}

\section{Proofs of main results}
\subsection{Proof sketch of Theorem~\ref{thm:main-thm}}
The proof essentially involves the following steps.
\begin{enumerate}
\item The starting point is the inequality $g(\widehat{A}) \le g(B)$ which holds since $B \in \calK$ is feasible. This implies 
\begin{align*}
&\underbrace{\norm{(B-\widehat{A})Z}_F^2-(1-p)\norm{(\widehat{A}-B)D}_F^2}_{(I)}\\
&\left.
\begin{aligned}
&\le 2\dotprod{\tilde{Z}-A^*Z}{(\widehat{A}-B)Z} \\
&\quad +2(1-p)\dotprod{A^* D}{(\widehat{A}-B) D}\\
&\quad +2\norm{\left(A^*-B\right)Z}_F\norm{(\widehat{A}-B)Z}_F\\
&\quad +2(1-p)\norm{(A^*-B)D}_F \norm{(\widehat{A}-B)D}_F.
\end{aligned}
\right\}(II)
\end{align*}
See Lemma~\ref{lem:B-bound} and its proof. 

\item In Lemma~\ref{lem:LHS-bound} we lower bound $(I)$ as follows. 
\begin{itemize}
\item Rewrite the individual terms in $(I)$ as p.s.d  quadratic forms $\eta^\top M \eta$ for appropriate matrices $M$, where the entries of $\eta$ are formed by stacking $\eta_1,\dots,\eta_T$.

\item Conditioning on $(\smpl_t)_{t=0}^T$ and applying Theorem \ref{thm:psd-sup} \cite[Theorem 3.1]{krahmer14} on appropriately defined sets of matrices, the infimum and supremum of $\eta^\top M \eta$ over each set is bounded with high probability. The bounds incorporate Talagrand's $\gammafn{2}(\cdot,\cdot)$ \citep{talagrand2014upper} for appropriate sets of matrices. Combining individual terms, we get a lower bound for $(I)$ depending on $(\smpl_t)_{t=0}^T$. 

\item Next, we convert terms depending on $(\smpl_t)_{t=0}^T$ to the form $q^\top M q$ (for appropriate matrices $M$) plus a linear term of the form $q^\top h$ (for an appropriate vector $h$), where $q$ is a random vector depending on $(\smpl_t)_{t=0}^T$. We then reapply Theorem \ref{thm:psd-sup} to uniformly control the quadratic terms over a set of matrices, and apply Theorem \ref{thm:bounded-func-conc} \cite[Chapter 3]{Wainwright2019} to uniformly control  the linear term. 

\item Combining each of these events where the desired bounds hold, and choosing $T$ to be sufficiently large, we guarantee that $(I)$ is strictly lower-bounded by $\frac{p^2 T}{2}\norm{\widehat{A}-B}_F^2$. 
\end{itemize}

\item We use Lemma~\ref{lem:RHS-bound} to upper bound  $(II)$ via a similar strategy to the proof of Lemma~\ref{lem:LHS-bound}.
\begin{itemize}
\item We first condition on  $(\smpl_t)_{t=0}^T$ and convert the terms in $(II)$ to terms of the form $\eta^\top M \eta$ for appropriately defined matrices $M$. 

\item For $(II)$, the matrices $M$ appearing in the quadratics are no longer p.s.d. To handle this, we apply Theorem \ref{thm:non-psd-sup} \citep{dirksen15} on an appropriate set of matrices to uniformly upper bound the quadratic terms over this set. Since the matrices are not p.s.d, the bound depends on both $\gammafn{1}(\cdot,\cdot)$ and $\gammafn{2}(\cdot,\cdot)$.

\item We next consider the terms dependent on $(\smpl_t)_{t=0}^T$ and write them in the form $q^\top M q$ and $q^\top h$. To obtain a uniform bound on these terms (over appropriate sets) we then use Theorem \ref{thm:non-psd-sup} on the quadratic terms and Theorem \ref{thm:bounded-func-conc} \cite[Chapter 3]{Wainwright2019} on the linear terms. 
\end{itemize}
\end{enumerate}
Combining these two high-probability bounds and rearranging the terms yields the desired final bound. See Appendix~\ref{sec:tangent} for details.  

\subsection{Proof sketch of Theorem~\ref{thm:pgd-thm}}
Let $H_B^{(k)} = (A^{(k)}-B)/\norm{A^{(k)} - B}_F$, $H_B^* = (A^* - B)/\norm{A^* - B}_F$, and recall that $D = \operatorname{diag}^{1/2}(ZZ^\top)$, $M = ZZ^\top - (1-p)D$.
After substituting the expression for the gradient, we obtain by Lemma \ref{lem:PGD-error-decomp}, 
\begin{align*}
    &\norm{A^{(k+1)} -B}_F  \\
    &\leq \norm{A^{(k)} - B}_F \sup_{V \in \calC_B } \dotprod{V}{H_B^{(k)}(I - 2\gamma \mathbb{E}[M]} 
    \\&+ 2\gamma\norm{A^{(k)} - B}_F \sup_{V \in \calC_B } \dotprod{V}{H_B^{(k)} (\mathbb{E}[M] - M)}  
    \\&+ 2\gamma \sup_{V \in \calC_B} \bigg\{ \norm{VZ}_F \norm{(A^* - B) Z}_F \\
    &+ (1-p)\norm{VD}_F \norm{(A^* - B)D}_F \\
    &+ \dotprod{VZ}{\widetilde Z - A^* Z} + (1-p)\dotprod{VD}{A^* D} \bigg\}.
\end{align*}
\begin{enumerate}
    \item The first term is bounded by Lemma \ref{lem:term-1-PGD-decomp}, using the sub-multiplicativity of norms and bounding the minimum and maximum eigenvalues of $\mathbb{E}[M]$.

    \item  The second term is bounded by Lemma \ref{lem:PGD-centered-term}. The idea is to rewrite it as the supremum of quadratic forms (involving independent sub-gaussian random variables) over a suitable set of matrices. This supremum is bounded using the concentration inequality in Theorem \ref{thm:non-psd-sup} \citep{dirksen15}.

    \item  The third term is controlled by Lemma \ref{lem:PGD-third-term}, which follows from an appropriate identification of terms in \eqref{eq:dotprod-PGD-unif-bound}. Equation \eqref{eq:dotprod-PGD-unif-bound} is obtained again from controlling the supremum of random quadratic forms using Theorem \ref{thm:non-psd-sup} \citep{dirksen15}.
\end{enumerate}

Grouping terms together and ensuring that $\rho < 1$  (through the sufficient condition \eqref{eq:T-PGD-cond}), we obtain the desired statement by unrolling the recurrence relation before applying the triangle inequality twice. 

%
%
\section{Corollaries of main results} \label{sec:cor_main_res}

We now present corollaries of Theorems \ref{thm:main-thm} and \ref{thm:pgd-thm} for the settings where $\calK$ is a subspace, or the set of bi-isotonic matrices. The setting of Lipschitz regression is treated in Appendix \ref{app-sec:lip_reg} due to space constraints.

%
%

\subsection{\texorpdfstring{$d$}{d}-dimensional subspace}\label{sec:subspace}
Let $\calK \subset \matR^{n \times n}$ be a $d$-dimensional subspace, hence for any $B \in \calK$ we have $\tanconeB = \calK$. We obtain the following Corollary of Theorem \ref{thm:main-thm}.
\begin{corollary}\label{cor:sub-stat}
There exist constants $c_1,c_2>1$ such that for any $\delta \in (0,1)$, if $T \geq c_1\frac{J^4(A^*)\max\set{d,\log^2 \left(1/\delta\right)}}{p^4}$ it holds with probability at least $1-\delta$,
\[
\norm{\widehat{A}-A^*}_F \leq c_2 \frac{J^3(A^*)}{p^2}  \bigg(\frac{\sqrt{d}+\sqrt{\log \left(1/\delta\right)}}{\sqrt{T}}\bigg).
\]
\end{corollary}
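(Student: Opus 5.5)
We will apply Theorem~\ref{thm:main-thm} with $B = A^*$, so the term $\norm{A^*-B}_F$ drops out. Since $\calK$ is a $d$-dimensional subspace, $\tanconeAstar = \calK$, and hence $\conesetAstar = \calK \cap \frobsphere$ is the unit Frobenius sphere of a $d$-dimensional subspace. What remains is to bound $\gammafn{2}(\conesetAstar,\norm{\cdot}_F)$ and $\gammafn{1}(\conesetAstar,\norm{\cdot}_2)$ by quantities of order $\sqrt{d}$ and $d$, and then substitute into the sample-size condition and the error bound of Theorem~\ref{thm:main-thm}.

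For $\gammafn{2}$, we use the majorizing measure equivalence $\gammafn{2}(\conesetAstar,\norm{\cdot}_F) \asymp w(\conesetAstar)$. Fix an orthonormal basis $U_1,\dots,U_d$ of $\calK$ with respect to $\dotprod{\cdot}{\cdot}$. Then $\sup_{V\in\conesetAstar}\dotprod{G}{V} = \norm{(\dotprod{G}{U_i})_{i=1}^d}_2$, and $(\dotprod{G}{U_i})_{i}$ is a standard Gaussian vector in $\matR^d$. Therefore $w(\conesetAstar) \le \sqrt{d}$, and $\gammafn{2}^2 \lesssim d$.

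For $\gammafn{1}$, we use $\norm{\cdot}_2 \le \norm{\cdot}_F$, so that $\gammafn{1}(\conesetAstar,\norm{\cdot}_2) \le \gammafn{1}(\conesetAstar,\norm{\cdot}_F)$. We then apply the entropy (Dudley-type) bound from Appendix~\ref{sec:prelim-tools}, $\gammafn{1}(\calS,d) \lesssim \int_0^{\mathrm{diam}} \log N(\calS,d,\epsilon)\,d\epsilon$. The set $\conesetAstar$ is isometric to the unit sphere in $\matR^d$, whose covering numbers satisfy $N(\epsilon) \le (3/\epsilon)^d$. This gives $\gammafn{1} \lesssim d\int_0^2 \log(3/\epsilon)\,d\epsilon \lesssim d$. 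This is the only step needing care, mainly to confirm that the appendix tool has this form.

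Substituting these bounds, the condition on $T$ becomes $T \ge c_1 J^4(A^*)\max\{d,\log^2(1/\delta)\}/p^4$, as stated. The error bound becomes $\frac{J^3(A^*)}{p^2}\big(\frac{d}{T} + \frac{\sqrt d + \sqrt{\log(1/\delta)}}{\sqrt T}\big)$. Under the assumed condition we have $T \ge d$ (since $c_1 J^4/p^4 \ge 1$), so $d/T \le \sqrt{d/T}$ and the $\gammafn{1}$ term is absorbed into the $\sqrt{d}/\sqrt{T}$ term after enlarging $c_2$. This yields the stated rate.
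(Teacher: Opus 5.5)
Your proposal is correct and follows the paper's argument: apply Theorem~\ref{thm:main-thm} with $B=A^*$ (so $\tanconeAstar=\calK$), then substitute $\gammafn{2}(\calK\cap\frobsphere,\norm{\cdot}_F)\lesssim\sqrt{d}$ and $\gammafn{1}(\calK\cap\frobsphere,\norm{\cdot}_2)\lesssim d$. The only difference is that the paper cites \cite{tyagi2024learning} for these two bounds, whereas you derive them directly via the Gaussian width and the entropy integral, and you also spell out the absorption of the $d/T$ term using $T\ge d$, which the paper leaves implicit.
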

\begin{proof}[Proof of Corollary \ref{cor:sub-stat}]
Apply Theorem~\ref{thm:main-thm} with $B=A^*$, and the bounds $\gammafn{1}(\calK \cap \frobsphere, \norm{\cdot}_2) \lesssim d$ and $\gammafn{2}(\calK \cap \frobsphere, \norm{\cdot}_F) \lesssim \sqrt{d}$; see e.g., \cite{tyagi2024learning}.
\end{proof}

We next obtain the following Corollary of Theorem \ref{thm:pgd-thm}.
\begin{corollary}\label{cor:sub-pgd}
Suppose that $\gamma = \frac{1}{2p^2 T J^2(A^*) }$. Then there exist constants $c, C_1,C_2>1$ such that for any $\delta \in (0,1)$, if
%
 $       T \geq C_1\frac{J^4(A^*)\left(d + \log^2(1/\delta)\right)}{p^4},$
%
then with probability at least $1-\delta$, it holds for all $k=0,1,\dots$,
\begin{equation*}
    \norm{A^{(k)}-A^*}_F \le \rho^k \norm{A^{(0)}-A^*}_F+\frac{\varepsilon}{1-\rho_{}},
\end{equation*}
where $\rho \le 1-\frac{1}{cJ^2(A^*)}$ and

\begin{equation*}
\varepsilon = C_2 \frac{J(A^*)}{p^2}\left[
\frac{\sqrt{d}+\sqrt{\log(1/\delta)}}{\sqrt{T}} \right].
\end{equation*}

\end{corollary}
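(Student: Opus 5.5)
The plan is to specialize Theorem~\ref{thm:pgd-thm} with $B = A^*$, in the same way Corollary~\ref{cor:sub-stat} specializes Theorem~\ref{thm:main-thm}. Since $\calK$ is a $d$-dimensional subspace, its tangent cone at every point is $\calK$ itself. Hence $\calC_{A^*} = \calK \cap \frobsphere$, and the term $\norm{A^*-B}_F$ disappears from $\varepsilon$. The step size $\gamma = \frac{1}{2p^2TJ^2(A^*)}$ is exactly the one prescribed in Theorem~\ref{thm:pgd-thm}, so the conclusion \eqref{eq:pgd-thm-final-form} together with $\rho \le 1 - \frac{1}{cJ^2(A^*)}$ carries over directly. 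What remains is to bound the complexity terms and check the sample-size condition \eqref{eq:T-PGD-cond}.

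For the complexity terms I would use the same two estimates as in the proof of Corollary~\ref{cor:sub-stat}, namely $\gamma_2(\calK\cap\frobsphere,\norm{\cdot}_F) \lesssim \sqrt{d}$ and $\gamma_1(\calK\cap\frobsphere,\norm{\cdot}_2)\lesssim d$.

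\begin{itemize}
\item The $\gamma_2$ bound follows from Talagrand's majorizing measure theorem together with $w(\calK\cap\frobsphere)\le \sqrt{d}$. To see the latter, project $G$ onto $\calK$; the supremum equals the norm of a $d$-dimensional standard Gaussian vector.
\item The $\gamma_1$ bound follows from a Dudley-type entropy bound. Since $\norm{\cdot}_2\le\norm{\cdot}_F$, the covering numbers satisfy $N(\calK\cap\frobsphere,\norm{\cdot}_2,\epsilon)\le(3/\epsilon)^d$, so $\int_0^1 \log N\,d\epsilon \lesssim d$.
\end{itemize}

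Both bounds are already cited from \cite{tyagi2024learning}, so I would simply invoke them.

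Next, substitute these bounds into \eqref{eq:T-PGD-cond}. The first group becomes $\frac{CJ^4(A^*)}{p^4}(d+\log^2(1/\delta))$. The second group is $\frac{CJ^2(A^*)}{p^2}(d+\log(1/\delta))$. Since $J(A^*)\ge 1$, $p\le 1$, and we may take $\log(1/\delta)\lesssim 1+\log^2(1/\delta)$ (enlarging constants, or noting that the second group is dominated whenever $d\ge 1$), the second group is dominated by the first. Therefore the stated condition $T\ge C_1 J^4(A^*)(d+\log^2(1/\delta))/p^4$ implies \eqref{eq:T-PGD-cond}.

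Finally, the error term $\varepsilon$ simplifies. The bounds give $\gamma_1/T \lesssim d/T$ and $\gamma_2/\sqrt{T}\lesssim \sqrt{d/T}$. Under the condition on $T$ we have $T\ge d$, so $d/T \le \sqrt{d/T}$ and the $\gamma_1$ contribution is absorbed into the $\sqrt{d/T}$ term. This yields $\varepsilon = C_2\frac{J(A^*)}{p^2}\cdot\frac{\sqrt d+\sqrt{\log(1/\delta)}}{\sqrt T}$.

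The only step requiring some care is this absorption of lower-order terms, both in \eqref{eq:T-PGD-cond} and in $\varepsilon$; everything else is direct substitution.
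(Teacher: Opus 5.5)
Your proposal is correct and follows the paper's proof exactly: apply Theorem~\ref{thm:pgd-thm} with $B=A^*$ and substitute $\gamma_1(\calK\cap\frobsphere,\norm{\cdot}_2)\lesssim d$ and $\gamma_2(\calK\cap\frobsphere,\norm{\cdot}_F)\lesssim\sqrt{d}$. Your explicit absorption steps (the $\frac{J^2(A^*)}{p^2}(d+\log(1/\delta))$ group in \eqref{eq:T-PGD-cond}, and the $d/T$ term in $\varepsilon$, via $J(A^*)\ge 1$, $p\le 1$, $d\ge 1$ and $T\ge d$) are just the routine details the paper leaves implicit.
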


\begin{proof}[Proof of Corollary~\ref{cor:sub-pgd}]
Use Theorem~\ref{thm:pgd-thm} with $B=A^*$, and  bounds on $\gamma_1, \gamma_2$ from the proof of Corollary \ref{cor:sub-stat}.
\end{proof}

\begin{remark}
   For $p = 1$, Corollary \ref{cor:sub-stat} recovers the statement of  \cite[Corollary 1]{tyagi2024learning} up to constants and a $J^2(A^*)$ factor, and matches the error bound in \cite[Theorem 3(iii)]{zheng2021finite} up to an additional $\log(1/\delta)$ term.  
\end{remark}

\begin{remark}
    In Corollary \ref{cor:sub-stat}, for a fixed level $\delta$, we can have $\norm{\hat{A}-A^*}_F = o(1)$ with $p = o(1)$, as long as 
\begin{equation}\label{eq:p-floor-subspace}
    J^{3/2}(A^*)d^{1/4}T^{-1/4} = o(p).
\end{equation}
Moreover, if $\norm{A^{(0)} - A^*}_F = O(1)$, $p = o(1)$, and \eqref{eq:p-floor-subspace} holds, then with $k = k_T \gtrsim \log(1/\varepsilon_T)$ iterations,
\begin{equation*}
    \norm{A^{(k_T)} - A^*}_F = O(\varepsilon_T) = o(1).
\end{equation*}

\end{remark}

\begin{remark}
    When $\calK$ is spanned by $d$ orthonormal matrices, as considered in the experiments, the computational and space complexities of the projection step are $O(n^2 d)$ (due to computation of $d$ inner products).
\end{remark}

\subsection{Bi-isotonic regression}
Let $\calK$ be the class of bi-isotonic matrices
\begin{equation*}
        \calK := \left\{A \in \mathbb{R}^{n \times n}: A_{ij} \leq A_{i+1,j}, A_{ij} \leq A_{i, j+1} \right\}.
\end{equation*}
\begin{corollary}\label{cor:isotone-stat}
There exist constants $c_1,c_2 > 1$ such that for any $\delta \in (0,1)$, if 
\begin{equation*}
    T \geq c_1 \frac{J^4(A^*)\max\left\{k(A^*)\log^8(n), \log^2(1/\delta) \right\}}{p^4},
\end{equation*}
where $k(A^*)$ is the minimum number of rectangular blocks for which $A^*$ is constant, then with probability at least $1 - \delta$, it holds that 
\begin{align*}
    \norm{\widehat A - A^*}_F \leq c_2 &\frac{J^3(A^*)}{p^2}\Bigg(\frac{n^{3/4}\sqrt{k(A^*)}\log^4(n)}{T}\\ &+\frac{\sqrt{k(A^*)}\log^4(n)+\sqrt{\log(1/\delta)}}{\sqrt{T}}\Bigg).
\end{align*}
\end{corollary}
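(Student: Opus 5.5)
The plan is to apply Theorem~\ref{thm:main-thm} with $B = A^*$, so that the term $\norm{A^*-B}_F$ vanishes. It then suffices to establish two complexity bounds for the tangent cone of the bi-isotonic class at $A^*$:
\begin{align*}
\gammafn{2}(\conesetAstar,\norm{\cdot}_F) \lesssim \sqrt{k(A^*)}\log^4(n), \qquad \gammafn{1}(\conesetAstar,\norm{\cdot}_2) \lesssim n^{3/4}\sqrt{k(A^*)}\log^4(n).
\end{align*}
With these in hand, substituting into the sample-size condition of Theorem~\ref{thm:main-thm} gives exactly $T \ge c_1 J^4(A^*)\max\{k(A^*)\log^8 n,\log^2(1/\delta)\}/p^4$. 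Substituting into the error bound gives the stated rate, with the $\gammafn{1}$ term yielding $n^{3/4}\sqrt{k(A^*)}\log^4(n)/T$ and the $\gammafn{2}$ term yielding $\sqrt{k(A^*)}\log^4(n)/\sqrt{T}$. These are the same tangent-cone bounds used in the $p=1$ setting of \cite{tyagi2024learning}. If they are available there, I would quote them; otherwise I would derive them as follows.

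For $\gammafn{2}$, I would use the majorizing-measure equivalence $\gammafn{2}(\conesetAstar,\norm{\cdot}_F)\asymp w(\conesetAstar)$ and bound the Gaussian width of the bi-isotonic tangent cone intersected with the unit sphere. Suppose $A^*$ is constant on $k = k(A^*)$ rectangles $R_1,\dots,R_k$. Then the tangent cone at $A^*$ is contained in the set of matrices whose restriction to each $R_j$ is bi-isotonic, because the order constraints are active only inside blocks where $A^*$ is constant. The width therefore splits as
\begin{align*}
w(\conesetAstar) \le \expec \sup \sum_j \dotprod{G_{R_j}}{V_{R_j}}.
\end{align*}
To control this sum, I would use the known bound on the Gaussian width (statistical dimension) of the bi-isotonic cone on an $n_1\times n_2$ grid intersected with the unit ball, which is $\mathrm{polylog}(n)$ (as in Chatterjee--Guntuboyina--Sen and related work). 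Combining with Cauchy--Schwarz across blocks, via $\sum_j a_j\norm{V_{R_j}}_F \le \sqrt{\sum_j a_j^2}$, gives $\sqrt{k}\,\mathrm{polylog}(n)$. The $\log^4$ power comes from the entropy-integral version of the per-block bound.

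For $\gammafn{1}$ with respect to the spectral norm, I would use Dudley-type entropy bounds (Appendix~\ref{sec:prelim-tools}):
\begin{align*}
\gammafn{1}(\calS,\norm{\cdot}_2)\lesssim \int_0^{\mathrm{diam}} \log N(\calS,\norm{\cdot}_2,\epsilon)\,d\epsilon.
\end{align*}
Since $\norm{\cdot}_2\le\norm{\cdot}_F$, I would bound the covering numbers with the Frobenius metric entropy of bi-isotonic matrices on each block. That entropy is of order $\epsilon^{-2}$ times a polylog factor, up to boundedness issues. The difficulty is that elements of the unit Frobenius sphere of the cone are not uniformly bounded in sup-norm, so I would need a truncation or peeling argument over the magnitude levels of entries.

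I expect the $\gammafn{1}$ bound to be the main obstacle. Plain Frobenius entropy gives a log-integral that diverges near $0$ unless it is cut off, and obtaining exactly the factor $n^{3/4}$ requires trading the spectral norm against the Frobenius norm. I would try bounding the spectral-norm entropy at scale $\epsilon$ by Frobenius entropy at scale $\epsilon$, cutting the integral at $\epsilon_0 \approx n^{-1/4}$, and handling the remainder with a volumetric bound in dimension $n^2$. Balancing these two pieces should produce the $n^{3/4}$ scaling. Once both functional bounds are established, the corollary follows by direct substitution.
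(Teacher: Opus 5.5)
\textbf{The gap is in your $\gammafn{1}$ bound.} Your reduction (Theorem~\ref{thm:main-thm} with $B=A^*$) and your $\gammafn{2}$ argument are fine and essentially match the paper. The paper bounds $w(\conesetAstar)$ by the square root of the statistical dimension of $\calT_{\calK,A^*}$ \cite{tropp2014} and quotes the tangent-cone bound $\lesssim k(A^*)\log^8(n)$ of \cite{chatterjee2018}. Your blockwise Cauchy--Schwarz is an unpacking of that bound; to pass the expectation inside $\sqrt{\sum_j a_j^2}$ you need the per-block statistical dimensions $\expec\norm{\Pi_{K_j}(G_{R_j})}_F^2$, and the $\log^4$ is simply $\sqrt{\log^8}$.

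The scheme you describe for $\gammafn{1}$ cannot produce $n^{3/4}$. Write $s:=k(A^*)\log^8(n)$.
\begin{itemize}
\item \emph{Coarse scales.} Sudakov minorization plus your width bound gives $\log\calN(\conesetAstar,\norm{\cdot}_F,\epsilon)\lesssim s/\epsilon^2$ directly, with no truncation or peeling. So the coarse part contributes $\int_{\epsilon_0}^1 s\,\epsilon^{-2}\,d\epsilon\lesssim s/\epsilon_0$.
\item \emph{Fine scales.} A volumetric bound in dimension $n^2$ gives $\log\calN\lesssim n^2\log(3/\epsilon)$ on $[0,\epsilon_0]$, contributing about $n^2\epsilon_0\log(1/\epsilon_0)$. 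With your cutoff $\epsilon_0\approx n^{-1/4}$ this alone is about $n^{7/4}\log n$.
\item \emph{Best balance.} Optimizing the cutoff ($\epsilon_0\approx\sqrt{s}/n$) gives at best $\gammafn{1}(\conesetAstar,\norm{\cdot}_2)\lesssim n\sqrt{s\log n}$. This is off by roughly $n^{1/4}\sqrt{\log n}$. Using the spectral-norm volume ratio instead only changes the logarithm.
\end{itemize}
The reason is that neither piece actually trades $\norm{\cdot}_2$ against $\norm{\cdot}_F$. On coarse scales you replaced the spectral norm by the Frobenius norm. On fine scales the volumetric count is of order $n^2$ at every scale.

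\textbf{What the paper uses instead.} It gets $n^{3/4}$ from two ingredients your plan lacks.
\begin{itemize}
\item \emph{Localization at scales $\epsilon\le r$} (Lemma~\ref{lem:isotone-covering-numbers-net}): $\calN(\conesetAstar,\norm{\cdot}_2,\epsilon)\le\calN(\conesetAstar,\norm{\cdot}_F,r)\,\calN(r\frobball,\norm{\cdot}_2,\epsilon)$. The first factor then costs only $r\cdot s/r^2=s/r$ on $[0,r]$.
\item \emph{Sharp entropy of the identity $S_2^n\hookrightarrow S_\infty^n$} \cite{hinrichs2017}. One has $\log\calN(\frobball,\norm{\cdot}_2,\varepsilon)\lesssim n/\varepsilon^2$ for $n^{-1/2}\lesssim\varepsilon\le 1$; this middle range also follows from dual Sudakov, since $\expec\norm{G}_2\asymp\sqrt n$. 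Below that range, $\log\calN(\frobball,\norm{\cdot}_2,\varepsilon)\lesssim n^2\log(C/(\varepsilon\sqrt n))$. Hence $\int_0^1\log\calN(\frobball,\norm{\cdot}_2,\varepsilon)\,d\varepsilon\lesssim n^{3/2}$ rather than $n^2$. By homogeneity, $\int_0^r\log\calN(r\frobball,\norm{\cdot}_2,\epsilon)\,d\epsilon\lesssim r\,n^{3/2}$ (Lemma~\ref{lem:metric-entropy-frob-ball}).
\end{itemize}
Together with the coarse part this gives $\gammafn{1}(\conesetAstar,\norm{\cdot}_2)\lesssim s/r+r\,n^{3/2}$. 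Choosing $r=\sqrt{s}\,n^{-3/4}$ yields $n^{3/4}\sqrt{s}\lesssim n^{3/4}\sqrt{k(A^*)}\log^4(n)$.

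As it stands, your argument proves the corollary only with $n^{3/4}$ replaced by roughly $n\sqrt{\log n}$ in the $1/T$ term. The sample-size condition is unaffected, since it involves only $\gammafn{2}$.
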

\begin{proof}[Proof of Corollary \ref{cor:isotone-stat}]
Substitute the bound on $\gamma_1\left(\calC_{A^*}, \norm{.}_2\right)$ from Lemma \ref{lem:isotonic-gamma1-bound}, and on $\gamma_2\left(\calC_{A^*}, \norm{.}_F \right)$ from Lemma \ref{lem:gamma2-isotone-bound}, into Theorem \ref{thm:main-thm} with $B = A^*$. 

To bound $\gamma_2\left(\calC_{A^*}, \norm{.}_F \right)$, we first relate it to the statistical dimension of $\calT_{\calK, A^*}$ through the Gaussian width, by  \cite[Prop. 10.2]{tropp2014} and Talagrand's majorizing measures theorem. Then, the statistical dimension is bounded using equation (4.4) of \cite{chatterjee2018}.
To bound $\gamma_1\left(\calC_A^*, \norm{.}_2\right)$, we use the usual metric entropy bounds, splitting at a threshold $r$ representing the fine and coarse scales. The statistical dimension bound is used for the coarse scale. At the fine scale, we rely on the sharp entropy number estimate in \cite{hinrichs2017} to obtain a tighter metric entropy bound.
\end{proof}

The term $k(A^*)$ captures the structure of the bi-isotonic cone, and in general, has the range $1 \leq k(A^*) \leq n^2$. In the worst case $k(A^*) = n^2$, we recover the same rate as in Corollary \ref{cor:sub-stat} with $d = n^2$, which is expected since $A^*$ has no additional structure to exploit. The interesting regime is $k(A^*) = o(n^2)$. For example, when $k(A^*) = n\log(n)$, we obtain 
\begin{align*}
    \norm{A^* - A}_F &\lesssim \frac{J^3(A^*)}{p^2}\bigg\{\frac{\sqrt{\log(1/\delta)} + \sqrt{n}\log^{9/2}(n)}{\sqrt{T}} \\
    &+ \frac{n^{5/4}\log^{9/2}(n)}{T}  \bigg\},
\end{align*}
as soon as 
\begin{align*}
    T &\gtrsim \frac{J^4(A^*)}{p^4}\left\{n\log^9(n) + \log^2(1/\delta) \right\} \\
    &+ \frac{J^2(A^*)}{p^2}\left\{n^{5/4}\log^{9/4}(n) + \log(1/\delta) \right\}.
\end{align*}
An interesting phase transition  occurs above at $T \asymp n^{9/16}$. For $T \ll n^{9/16}$, we have $\gamma_2(\calC_{A^*}, \norm{.}_F) \ll \gamma_1(\calC_{A^*}, \norm{.}_2)$, and the converse holds when $T \gg n^{9/16}$.

\begin{corollary}\label{cor:isotone-pgd}
Suppose $\gamma = \frac{1}{2p^2 T J^2(A^*) }$.
There exist constants $c, C_1, C_2 > 1$ such that for any $\delta \in (0,1)$, if
    \begin{align}
        T &\geq C_1\frac{J^4(A^*)}{p^4}\left\{k(A^*)\log^8(n) + \log^2(1/\delta) \right\} \nonumber
        \\&+ C_1 \frac{J^2(A^*)}{p^2}\left\{n^{3/4}\sqrt{k(A^*)}\log^4(n) + \log(1/\delta)\right\},
    \end{align}
then with probability at least $1 - \delta$, it holds for all $k = 0,1,\dots$ that
\begin{equation*}
    \norm{A^{(k)} - A^*}_F \leq \rho^k \norm{A^{(0)} - A^*}_F + \frac{\varepsilon}{1 - \rho},
\end{equation*}
where $\rho \leq 1-\frac{1}{cJ^2(A^*)}$ and
\begin{align*}
    \varepsilon = C_2\frac{J(A^*)}{p^2}\Bigg(\frac{n^{3/4}\sqrt{k(A^*)}\log^4(n)}{T} \\ + \frac{\sqrt{k(A^*)}\log^4(n)+\sqrt{\log(1/\delta)}}{\sqrt{T}} \Bigg).
\end{align*}

\end{corollary}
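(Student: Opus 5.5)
The plan is to apply Theorem~\ref{thm:pgd-thm} with $B = A^*$, so that the term $\norm{A^*-B}_F$ vanishes both in $\varepsilon$ and in the requirement on $T$. All that remains is to substitute the two complexity bounds already used in Corollary~\ref{cor:isotone-stat}. These are
\[
\gamma_2(\calC_{A^*},\norm{\cdot}_F)\lesssim \sqrt{k(A^*)}\log^4(n), \qquad \gamma_1(\calC_{A^*},\norm{\cdot}_2)\lesssim n^{3/4}\sqrt{k(A^*)}\log^4(n),
\]
taken from Lemma~\ref{lem:gamma2-isotone-bound} and Lemma~\ref{lem:isotonic-gamma1-bound} respectively. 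With $B=A^*$, the sample-size condition \eqref{eq:T-PGD-cond} becomes
\[
T \ge \frac{CJ^4(A^*)}{p^4}\left\{\gamma_2^2(\calC_{A^*},\norm{\cdot}_F)+\log^2(1/\delta)\right\} + \frac{CJ^2(A^*)}{p^2}\left\{\gamma_1(\calC_{A^*},\norm{\cdot}_2)+\log(1/\delta)\right\}.
\]

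First, the $\gamma_2$ bound gives $\gamma_2^2 \lesssim k(A^*)\log^8(n)$, and the $\gamma_1$ bound replaces $\gamma_1$ by $n^{3/4}\sqrt{k(A^*)}\log^4(n)$. Absorbing the implicit constants of these two lemmas into $C_1$, the hypothesis of the corollary implies \eqref{eq:T-PGD-cond}. Theorem~\ref{thm:pgd-thm} then gives, with probability at least $1-\delta$, the recursion bound \eqref{eq:pgd-thm-final-form} for all $k$, with contraction factor $\rho\le 1-1/(cJ^2(A^*))$.

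Second, I substitute the same two bounds into the formula for $\varepsilon_T$ in Theorem~\ref{thm:pgd-thm}. The $\norm{A^*-B}_F$ term drops, since $B=A^*$. The $\gamma_1/T$ term becomes $n^{3/4}\sqrt{k(A^*)}\log^4(n)/T$. The $\gamma_2/\sqrt{T}$ term combines with $\sqrt{\log(1/\delta)/T}$ to give $\bigl(\sqrt{k(A^*)}\log^4(n)+\sqrt{\log(1/\delta)}\bigr)/\sqrt{T}$. Collecting constants into $C_2$ yields exactly the stated $\varepsilon$.

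There is no real obstacle here, because all the substance lies in Theorem~\ref{thm:pgd-thm} and in the two $\gamma$-functional lemmas, which I take as given. The one point to check is that those lemmas bound the functionals over the set $\calC_{A^*}=\tanconeAstar\cap\frobsphere$, which is the set appearing in Theorem~\ref{thm:pgd-thm} when $B=A^*$. They do, since they are the same bounds used for Corollary~\ref{cor:isotone-stat}.
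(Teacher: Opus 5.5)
Your proposal is correct and matches the paper's proof: set $B=A^*$ in Theorem~\ref{thm:pgd-thm} so the $\norm{A^*-B}_F$ terms vanish. Then substitute the bounds from Lemmas~\ref{lem:gamma2-isotone-bound} and~\ref{lem:isotonic-gamma1-bound} into both the sample-size condition and $\varepsilon$, absorbing their constants into $C_1$ and $C_2$.
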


\begin{proof}[Proof of Corollary \ref{cor:isotone-pgd}]
Apply bounds on $\gamma_{1}, \gamma_2$ from Lemmas \ref{lem:isotonic-gamma1-bound} and \ref{lem:gamma2-isotone-bound} into Theorem \ref{thm:pgd-thm}, and use $B = A^*$.
\end{proof}

\begin{remark}
For a fixed level $\delta$, the condition for $\norm{\widehat A - A^*}_F = o(1)$ with $p = o(1)$ is
\begin{equation}\label{eq:p-floor-isotone}
J^{3/2}(A^*)k^{1/4}(A^*)\log^2(n)\max\left\{\frac{n^{3/8}}{\sqrt{T}},\frac{1}{T^{1/4}} \right\} = o(p).
\end{equation}
Moreover, if $\norm{A^{(0)} - A^*}_F = O(1)$, $p = o(1)$, and \eqref{eq:p-floor-isotone} holds, then with $k = k_T \gtrsim \log(1/\varepsilon_T)$ iterations,
\begin{equation*}
    \norm{A^{k_T} - A^*}_F = O(\varepsilon_T) = o(1).
\end{equation*}

\end{remark}

\begin{lemma}\label{lem:gamma2-isotone-bound}
    Consider the class of bi-isotonic matrices. There exists an constant $C > 0$ such that
    \begin{equation*}
        \gamma_2(\calC_{A^*}, \norm{.}_F) \leq C\sqrt{k(A^*)}\log^4(n).
    \end{equation*}
\end{lemma}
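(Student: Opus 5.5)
The plan is to pass through the Gaussian width, as already outlined in the proof sketch of Corollary~\ref{cor:isotone-stat}. By Talagrand's majorizing measure theorem (stated after Definition~\ref{def:tal_func}), there is an absolute constant $C_0$ with
\[
\gamma_2(\calC_{A^*},\norm{\cdot}_F) \le C_0\, w(\calC_{A^*}), \qquad \calC_{A^*} = \calT_{\calK,A^*}\cap \frobsphere .
\]
So it suffices to show $w(\calT_{\calK,A^*}\cap\frobsphere) \lesssim \sqrt{k(A^*)}\log^4(n)$.

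\textbf{Step 1: reduce the width to the statistical dimension.} Since $\calK$ is a closed convex cone up to translation, $\calT_{\calK,A^*}$ is a closed convex cone. For any closed convex cone $\calT$, \cite[Prop.~10.2]{tropp2014} gives
\[
w(\calT\cap\frobsphere)^2 \le \delta(\calT) := \expec\norm{\Pi_{\calT}(G)}_F^2 ,
\]
where $G$ is a Gaussian $n\times n$ matrix. (One can also see this directly from $\sup_{s\in\calT\cap\frobball}\dotprod{G}{s} = \norm{\Pi_{\calT}(G)}_F$ together with Jensen's inequality.) This leaves the task of showing $\delta(\calT_{\calK,A^*}) \lesssim k(A^*)\log^8(n)$.

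\textbf{Step 2: bound the statistical dimension.} The statistical dimension of the tangent cone equals, up to constants, the noise-free risk in a denoising problem. Concretely, $\delta(\calT_{\calK,A^*})$ is the limit as $\sigma\to0$ of $\sigma^{-2}\expec\norm{\Pi_\calK(A^*+\sigma G)-A^*}_F^2$, and it is also bounded by the corresponding risk for every $\sigma$ (see \cite{Bellec18}). Equation~(4.4) of \cite{chatterjee2018} controls exactly this quantity for bi-isotonic matrices that are piecewise constant on $k(A^*)$ rectangles, giving $\delta(\calT_{\calK,A^*})\lesssim k(A^*)\log^8 n$. The mechanism behind that bound is as follows. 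The tangent cone at $A^*$ splits, up to a relaxation, as a sum over the rectangles of bi-isotonic cones on each rectangle, because the constraints between distinct constant blocks are inactive. The statistical dimension of the full bi-isotonic cone on an $n_1\times n_2$ grid is polylogarithmic, and the contributions of the $k(A^*)$ blocks add up.

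\textbf{Step 3: combine.} Putting the pieces together,
\[
\gamma_2(\calC_{A^*},\norm{\cdot}_F) \le C_0\sqrt{\delta(\calT_{\calK,A^*})} \le C\sqrt{k(A^*)}\log^4 n .
\]

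The main obstacle is Step 2: transcribing the bound in \cite{chatterjee2018} into a statement about the statistical dimension, in the exact form needed here. If their equation is stated as a risk bound for the LSE, I would first invoke the standard identity that the statistical dimension of the tangent cone equals $\lim_{\sigma\to0}\sigma^{-2}\cdot$risk (Oymak--Hassibi), and check that their bound is uniform in $\sigma$. Failing that, I would verify the block-decomposition argument directly. This means showing that $\calT_{\calK,A^*}$ is contained in the direct sum of the bi-isotonic cones on each rectangle. Because the projection onto a direct sum of cones on orthogonal coordinate blocks splits, the statistical dimensions then add. Each term is bounded by the known $\log^8$-type bound for the statistical dimension of the bi-isotonic cone.
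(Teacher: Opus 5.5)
Your proposal is correct and follows the paper's proof step for step. The paper uses majorizing measures to pass from $\gamma_2(\calC_{A^*},\norm{\cdot}_F)$ to $w(\calC_{A^*})$, then \cite[Prop.~10.2]{tropp2014} to get $w(\calC_{A^*})\le\sqrt{\delta(\calT_{\calK,A^*})}$. It reads equation (4.4) of \cite{chatterjee2018} directly as $\expec\norm{\Pi_{\calT_{\calK,A^*}}(\sigma G)}_F^2\le Ck(A^*)\sigma^2\log^8(n)$, and converts this to $\delta(\calT_{\calK,A^*})\lesssim k(A^*)\log^8(n)$ by positive homogeneity of the projection onto a cone, so your fallback routes are not needed.

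One correction to an aside in your Step 2: for every $\sigma>0$ the normalized risk $\sigma^{-2}\expec\norm{\Pi_{\calK}(A^*+\sigma G)-A^*}_F^2$ is bounded \emph{above} by $\delta(\calT_{\calK,A^*})$, converging to it as $\sigma\to0$, not the reverse. This is exactly why a risk bound only controls $\delta(\calT_{\calK,A^*})$ if it holds uniformly as $\sigma\to0$, as you rightly insist.
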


\begin{lemma}\label{lem:isotonic-gamma1-bound}
    For the class of bi-isotonic matrices, there exists an absolute constant $C > 0$ such that
    \begin{equation*}
         \gamma_1\left(\calC_{A^*}, \norm{.}_2 \right) \leq Cn^{3/4}\sqrt{k(A^*)}\log^4(n).
    \end{equation*}
\end{lemma}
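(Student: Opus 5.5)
We need to bound $\gamma_1(\calC_{A^*},\norm{\cdot}_2)$, where $\calC_{A^*}=\calT_{\calK,A^*}\cap\frobsphere$ for the bi-isotonic cone. As in the proof sketch of Corollary~\ref{cor:isotone-stat}, we use the Dudley-type entropy bound for $\gamma_1$ (Appendix~\ref{sec:prelim-tools}):
\[
\gamma_1(\calS,\norm{\cdot}_2)\lesssim \int_0^{\operatorname{diam}_2(\calS)} \log N(\calS,\norm{\cdot}_2,\epsilon)\,d\epsilon .
\]
Since $\norm{V}_2\le\norm{V}_F\le 1$ on $\calC_{A^*}$, the integral runs over $(0,1]$. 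We split it at a threshold $r\in(0,1)$ to be optimized, handling the coarse scales $\epsilon\ge r$ and the fine scales $\epsilon<r$ separately.

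\textbf{Coarse scale.} Here we use $\norm{\cdot}_2\le\norm{\cdot}_F$ together with Sudakov's minoration:
\[
\log N(\calC_{A^*},\norm{\cdot}_2,\epsilon)\le \log N(\calC_{A^*},\norm{\cdot}_F,\epsilon)\lesssim w^2(\calC_{A^*})/\epsilon^2 .
\]
By Lemma~\ref{lem:gamma2-isotone-bound} and the majorizing measure theorem, $w(\calC_{A^*})\lesssim\sqrt{k(A^*)}\log^4 n$; equivalently, one can go directly through the statistical dimension, $\delta(\calT_{\calK,A^*})\lesssim k(A^*)\log^8 n$. Integrating over $[r,1]$ gives a contribution of order $k(A^*)\log^8(n)/r$.

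\textbf{Fine scale.} The tangent cone at $A^*$ decomposes along the $k(A^*)$ constant blocks of $A^*$. On each rectangular block, an element of the cone is bi-isotonic, since at points where $A^*$ is locally constant the constraints become homogeneous monotonicity constraints; across block boundaries there is no constraint. Hence $\calC_{A^*}$ is contained in a union (or sum) of bi-isotonic pieces with total Frobenius norm at most $1$. For the class of bi-monotone $m_1\times m_2$ arrays bounded in a suitable norm, the sharp entropy estimate of \cite{hinrichs2017} gives
\[
\log N(\cdot,\norm{\cdot}_F,\epsilon)\lesssim \epsilon^{-1}\times\text{polylog}.
\]
Passing from Frobenius to operator-norm covering of $n\times n$ matrices loses at most a dimension factor, and a monotone array with unit Frobenius norm has controlled sup-norm away from its corners. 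The aim is a fine-scale bound of the form
\[
\log N(\calC_{A^*},\norm{\cdot}_2,\epsilon)\lesssim \sqrt{k(A^*)}\,n^{3/2}\,\text{polylog}(n)\cdot\epsilon^{-1/2},
\]
or a similar power, whose integral over $(0,r)$ is finite and scales like $\sqrt{k(A^*)}\,n^{3/2}\sqrt r$. Balancing $k\log^8 n/r$ against this term in $r$ should produce the stated $n^{3/4}\sqrt{k(A^*)}\log^4 n$. A simpler fallback at fine scales is the volumetric bound $\log N\le n^2\log(3/\epsilon)$, but it yields worse powers of $n$, so Hinrichs' estimate is needed.

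\textbf{Main obstacle.} The fine-scale entropy is the hardest step. Specifically, one must (i) justify the block decomposition of the tangent cone, controlling the elements that are not bounded in sup-norm via the Frobenius constraint, and (ii) convert the \cite{hinrichs2017} estimate, which is stated for monotone functions in $L_2$ or $L_\infty$ on the unit square, into an operator-norm covering of $n\times n$ matrices with the correct dependence on $n$ and $k(A^*)$. Once a bound of the form $\log N\lesssim a\,\epsilon^{-\beta}$ with $\beta<1$ is established, choosing $r$ to balance the two pieces is routine, and the leftover logarithmic factors are absorbed into $\log^4 n$.
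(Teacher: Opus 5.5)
\textbf{There is a genuine gap at the fine scale.} Your coarse-scale step matches the paper's: the Dudley-type integral over $(0,1]$, split at $r$, then Sudakov with $\delta=\delta(\calT_{\calK,A^*})\lesssim k(A^*)\log^8 n$, contributing $\lesssim\delta/r$. The fine-scale step, however, is more than an unproved step; as planned it cannot give the bound.

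First, \cite{hinrichs2017} is not an entropy bound for monotone arrays. It gives the entropy numbers of the identity $(\matR^{n\times n},\norm{\cdot}_F)\to(\matR^{n\times n},\norm{\cdot}_2)$, i.e.\ the Schatten embedding $S_2^n\hookrightarrow S_\infty^n$. These are $e_j\asymp\sqrt{n/j}$ for $n\le j\le n^2$ and $e_j\asymp n^{-1/2}2^{-j/n^2}$ for $j\ge n^2$. Also, an $\epsilon^{-1}$ entropy rate is the univariate monotone rate, not a bi-monotone one.

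More importantly, even granting your target $\log\calN(\calC_{A^*},\norm{\cdot}_2,\epsilon)\lesssim\sqrt{k}\,n^{3/2}\epsilon^{-1/2}$, the balancing does not give the claim. Equating $k/r$ with $\sqrt{k}\,n^{3/2}\sqrt r$ gives $r\asymp k^{1/3}n^{-1}$ and a total of order $n\,k^{2/3}$ up to logs. That exceeds $n^{3/4}\sqrt{k}$ by $n^{1/4}k^{1/6}$ and is no better than your volumetric fallback ($n\sqrt{k}$ up to logs). To reach $n^{3/4}\sqrt{k}$, the fine-scale contribution has to be linear in $r$, of order $rn^{3/2}$.

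\textbf{The missing idea.} No structure of the cone is needed below scale $r$, so the block decomposition is unnecessary. For $\epsilon\le r$, use a double net: cover $\calC_{A^*}$ by Frobenius $r$-balls, and cover each residual $r\frobball$ by operator-norm $\epsilon$-balls. This gives
\[
\calN(\calC_{A^*},\norm{\cdot}_2,\epsilon)\le\calN(\calC_{A^*},\norm{\cdot}_F,r)\,\calN(r\frobball,\norm{\cdot}_2,\epsilon).
\]
The first factor does not depend on $\epsilon$, so by Sudakov its logarithm integrates over $(0,r)$ to $\lesssim r\cdot\delta/r^2=\delta/r$.

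For the second factor, scaling gives $\int_0^r\log\calN(r\frobball,\norm{\cdot}_2,\epsilon)\,d\epsilon=r\int_0^1\log\calN(\frobball,\norm{\cdot}_2,u)\,du$. The entropy numbers above give $\log\calN(\frobball,\norm{\cdot}_2,u)\lesssim n/u^2$ for $u\gtrsim n^{-1/2}$, and $\lesssim n^2\log(C/(u\sqrt n))$ for smaller $u$. So this integral is $\lesssim n^{3/2}$.

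Hence $\gamma_1(\calC_{A^*},\norm{\cdot}_2)\lesssim\delta/r+rn^{3/2}$, and choosing $r=\sqrt{\delta}\,n^{-3/4}$ yields $n^{3/4}\sqrt{\delta}\lesssim n^{3/4}\sqrt{k(A^*)}\log^4 n$. The bi-isotonic structure enters only through $\delta(\calT_{\calK,A^*})$ at the single scale $r$. The gain over the volumetric bound comes from the Schatten-embedding entropy integrating to $n^{3/2}$ over the unit Frobenius ball, rather than $n^2$ up to logs.
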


\begin{remark}
    The projection step for bi-isotonic matrices can be performed by Dykstra’s projection algorithm \citep{dykstra1985}, with a computational complexity of $O(n^2 k)$, where $k$ is the number of iterations.
\end{remark}

%
\section{Experiments}
The numerical properties of the PGD algorithm are explored for the subspace and bi-isotonic examples, with results for Lipschitz regression deferred to Appendix \ref{app-sec:lip_reg}. Since we are interested in the regime where the  objective function is non-convex,  
we fix $n = 100$. All experiments were run on a 2025 Macbook Air with a 10 core M4 chip and 16GB of RAM.

\subsection{Subspace}
We consider $p \in \{0.3, 0.5, 0.9\}$, $T \in \{50, 250, 500, 1000\}$, $d \in \{5, 15, 30\}$, resulting in $36$ configurations, each with $500$ Monte Carlo replications. 
To generate linear subspaces for a fixed $d$, we first simulate a matrix $H \in \mathbb{R}^{n^2\times d}$, with i.i.d standard Gaussian  entries $H_{ij}$. A QR decomposition was then performed to obtain an orthonormal matrix $Q$, before reshaping each column into $d$ orthonormal matrices $B_j \in \mathbb{R}^{n\times n}, j \in \{1, \dots, d\}$. The coefficients $\theta_j \in \mathbb{R}^d$ associated to each $B_j$ were simulated from a standard gaussian distribution with unit $\ell_2$ norm. The transition matrices were formed by $A^* = \frac{0.8}{\norm{\sum_{j=1}^d \theta_j B_j}_2} \sum_{j=1}^d \theta_j B_j$. Seeds were set so that $A^*$'s are fixed for each $d$.

For a linear subspace $\calK$, the orthogonal projection onto $\calK$
is available in closed form.
The learning rate was chosen as in Theorem \ref{thm:pgd-thm}, where the $J(A^*)$ term was computed numerically with a threshold $K = 50$. The maximum iteration was set to 500, with the stopping rule given by $A^{(k+1}) - A^{(k)} < \zeta$, for tolerance $\zeta = 10^{-8}$. In the following, we denote the PGD output by $\hat A$. 

To get a sense of the empirical exponents for $T$ and $p$, we perform a log-log regression, of the form $\log(\Delta_{rel}) = C - \alpha\log(T) - \beta\log(p)$, and is plotted in Figure \ref{fig:subspace-exponent}. The left panel plots $T$ against $\Delta_{rel}$, separately for each value of $p$, where the shaded bands represent the 5th to 95th quantile across the Monte carlo runs. The estimated exponent for $T$ is around 0.51, close to the theoretical rate of $T^{-1/2}$.
The right panel plots $\sqrt{T}\Delta_{rel}$ against $p$. Since the values of  $\sqrt{T}\Delta_{rel}$ were very close across the different values of $T$, we summarize them by the pooled median as seen by the blue line. The empirical exponent is around $1.1$, smaller than the theoretical bound of $p^{-2}$.

\begin{figure}
    \centering
    \resizebox{0.9\linewidth}{!}{
    \input{plots/subspace.pgf}
    }
    \caption{Log-log plots displaying the empirical rates of\\ convergence (when $\calK$ is a subspace).}
    \label{fig:subspace-exponent}
\end{figure}

\subsection{Bi-isotonic regression}
To simulate a blockwise constant bi-isotonic matrix, we construct a disjoint partition $[n] \times [n] = \cup_{a=1}^{k_r}\cup_{b=1}^{k_c} I_a \times J_b$, so $A^*$ is constant on each rectangle $I_a \times J_b$. To obtain this, a sub-matrix $\theta$ with $k_r \times k_c$ distinct entries is generated from a standard gaussian distribution, before imposing monotonicity by solving a constrained optimization problem. Finally, we set  $A^*_{ij} = \theta_{\ell m}$, whenever $i \in I_{\ell}, j \in J_m$, and rescaling to obtain a spectral norm at most $0.8$.

We consider all combinations of $p \in \{0.3, 0.5, 0.9\}$, $T \in \{50, 100, 200\}$, and $(k_r, k_c) \in \{(2, 2), (5, 5) \}$, with 200 replications in each configuration. The projection step was performed using Dykstra's projection algorithm, with the individual row and column projections computed using the \texttt{isotonic\_regression} solver from \texttt{scipy}. The tolerance for the projection step was set to $10^{-3}$ with 100 maximum iterations, whereas the tolerance for the stopping rule of the pgd algorithm was set to $\zeta = 5\times 10^{-3}$, with 300 maximum iterations. Plots displaying the relative error can be seen in Figure \ref{fig:bi-isotonic}. The line plots represent the median errors over the replications, with the shaded bands representing the 25th to the 75th quantile. A narrower range of quantiles was considered for the bands, since for small $T$ and $p$, the outliers were extremely large as expected, since there is insufficient data relative to the complexity of the class for reliable estimation.

\begin{figure}
    \centering
    \resizebox{0.9\linewidth}{!}{
    \input{plots/bi-isotonic.pgf}
    }
    \caption{Relative Frobenius errors for bi-isotonic matrices.}
    \label{fig:bi-isotonic}
\end{figure}

\newpage 

\subsection*{AI use statement}

We have not used generative AI tools for any task pertaining to this paper, and the rest of the required disclosure tasks are not applicable to this work.

\subsection*{Acknowledgements}
This work was supported by a Nanyang Associate Professorship (NAP) grant
from NTU Singapore.

\bibliographystyle{apalike}
\bibliography{references}

\clearpage
\appendix
\thispagestyle{empty}
\onecolumn
\makeatletter
\addtocontents{toc}{\string\setcounter{tocdepth}{3}} 
\setcounter{tocdepth}{3}

\let\addcontentsline\oldaddcontentsline 
\makeatother

\begin{center}
    {\large \bf Table of Contents (Appendix)}
\end{center}
\startcontents[sections]
\printcontents[sections]{}{1}{}

\section{Technical tools}\label{sec:prelim-tools}
\paragraph{Common terminology} We use the $x,\eta$, and $\Gamma$ to refer to the following. Let $x = \vect(X)$ and let $\eta = \vect\left(\matrixenc{\eta_1 & \eta_2 & \dots & \eta_{T-1} }\right)$. Let $\Gamma \in \matR^{n(T-1) \times n(T-1)}$ be a matrix defined as,
\[
\Gamma = 
\matrixenc{
I & 0 & 0 & \dots & 0 \\
A^* & I & 0 & \dots & 0 \\
{(A^*)}^2 & A^* & I & \dots & 0 \\
\vdots & \vdots & \vdots & \dots & \vdots \\
{(A^*)}^{T-2} & {(A^*)}^{T-3} & {(A^*)}^{T-4} & \dots & I  \\ 
}.
\]
Note that $x = \Gamma \eta$. Furthermore, it can be shown that $\norm{\Gamma}_2 \le J(A^*)$ \citep{Jedra20}.
\paragraph{Properties of $\gammafn{\alpha}$} Recall the definition of Talagrand's $\gammafn{\alpha}$ functionals (Definition~\ref{def:talgrand-gamma}). They have the following useful properties. 
\begin{enumerate}
\item \label{itm:prop-1} For any two metrics $d_1,d_2$ such that $d_1 \le c d_2$ (where $c>0$), it holds that $\gammafn{\alpha}(\calS,d_1) \le c  \gammafn{\alpha}(\calS,d_2)$. 

\item \label{itm:prop-2} \citep[Theorem 1.3.6]{Talagrand2005-gf} For $\widehat{\calS} \subseteq \calS$, $\gammafn{\alpha}(\widehat{\calS},d) \le c_{\alpha} \gammafn{\alpha}(\calS,d)$ where $c_{\alpha}>0$ is a constant depending only on $\alpha$.

\item \label{itm:prop-3} \citep[Theorem 1.3.6]{Talagrand2005-gf} If the map $f:(\calV,d_1) \rightarrow (\calU,d_2)$ is onto, and $d_2 \left(f(x),f(y)\right) \le M d_1\left(x,y\right)$ for all $x,y \in \calV$, then $\gammafn{\alpha}(\calU,d_2) \le c_{\alpha} M \gammafn{\alpha}(\calV,d_1)$ where $c_{\alpha}>0$ is a constant depending only on $\alpha$.
\end{enumerate}
Properties \ref{itm:prop-2} and \ref{itm:prop-3} were stated in \cite[Theorem 1.3.6]{Talagrand2005-gf} (with $c_{\alpha} = 1$) for an alternative definition of the $\gamma_{\alpha}$ functional \cite[Definition 2.2.19]{talagrand2014upper} which is equivalent to Definition \ref{def:tal_func} up to a constant depending only on $\alpha$; see also \cite[Section 2.3]{talagrand2014upper}. 

%
It is useful to note that $\gamma_\alpha(\calS,d)$ can be upper bounded using the metric-entropy of $(\calS, d)$. Denoting $\calN(\calS, d, \epsilon)$ to be the covering number of $\calS$ w.r.t $d$, for any $\epsilon > 0$, it is well-known that for some constant $c_\alpha > 0$ (depending only on $\alpha$)
\begin{align*}
    \gamma_{\alpha}(\calS,d) \leq c_{\alpha} \int_{0}^{\text{diam}(\calS)} \log^{1/\alpha} \calN(\calS, d, \epsilon) d\epsilon. 
\end{align*}
This can be shown using \cite[Corollary 2.3.2]{talagrand2014upper} and by adapting the arguments appearing after the proof of \cite[Lemma 2.2.11]{talagrand2014upper} to general $\alpha > 0$.

\paragraph{Concentration bounds.} Given a set of matrices $\calM$, let,
\[
d_2(\calM) \coloneq \sup_{M \in \calM} \norm{M}_2 \text{ and } d_F(\calM) \coloneq \sup_{M \in \calM} \norm{M}_F.
\]
The following theorem from \cite{krahmer14} provides a concentration bound for the suprema of second order
subgaussian chaos processes involving positive semidefinite matrices.

\begin{theorem}[Theorem 3.1 of \cite{krahmer14}]\label{thm:psd-sup}
Let $\calM$ be a set of matrices and $\nu$ be a vector with independent, zero-mean, $L$-subgaussian entries with variance 1. Let,
\begin{align*}
F &\coloneq \gammafn{2} \left(\calM,\norm{\cdot}_2\right) \left[\gammafn{2} \left(\calM,\norm{\cdot}_2\right)+d_F(\calM)\right]+d_F(\calM)d_2(\calM),\\
V &\coloneq d_2(\calM)\left[\gammafn{2} \left(\calM,\norm{\cdot}_2\right)+d_F(\calM)\right], \text{ and, } U \coloneq d^2_2(\calM)
\end{align*}
There exist constants $c_1,c_2>0$ depending only on $L$ such that for any $t > 0$,
\begin{align*}
\prob \left(\sup_{M \in \calM} \absenc{\norm{M\nu}_2^2-\expec \left[\norm{M\nu}_2^2\right]} \ge c_1 F+t\right) 
\le 2\exp \left(-c_2\min\set{\frac{t^2}{V},\frac{t}{U}}\right).
\end{align*}
\end{theorem}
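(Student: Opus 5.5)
Since this is \cite[Theorem 3.1]{krahmer14}, which the paper quotes as a tool, I would follow the Krahmer--Mendelson--Rauhut strategy. The plan is to prove a moment bound valid for all $q \ge 1$,
\begin{equation*}
\Big(\expec \sup_{M\in\calM}\big|\norm{M\nu}_2^2-\expec\norm{M\nu}_2^2\big|^q\Big)^{1/q} \lesssim F + \sqrt{q}\,V + q\,U ,
\end{equation*}
and then convert it into the stated tail bound by Markov's inequality with $q \asymp \min\{t^2/V^2,\, t/U\}$. This conversion is a standard computation.

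\textbf{Step 1 (reduction to a decoupled chaos).} Write $\norm{M\nu}_2^2-\expec\norm{M\nu}_2^2$ as the sum of two parts. The diagonal part is $\sum_j (\nu_j^2-1)\norm{M e_j}_2^2$, a sum of independent centered subexponential variables. The off-diagonal part is $\sum_{j\neq k}\nu_j\nu_k (M^\top M)_{jk}$. For the off-diagonal part I would apply the standard decoupling inequality for $L^q$ norms of suprema. This replaces it by $\sup_M |\langle M\nu, M\nu'\rangle|$ plus diagonal corrections, where $\nu'$ is an independent copy of $\nu$. The diagonal term is handled directly: its $L^q$-norm is controlled by a Bernstein-type bound together with chaining over $\calM$. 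It contributes terms of the form $d_F d_2 + \sqrt{q}\,d_F d_2 + q\, d_2^2$, which are dominated by $F$, $V$ and $U$.

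\textbf{Step 2 (moments of $\sup_M\norm{M\nu}_2$).} The key lemma is
\begin{equation*}
\big\|\sup_{M\in\calM}\norm{M\nu}_2\big\|_{L^q}\lesssim \gammafn{2}(\calM,\norm{\cdot}_2)+d_F(\calM)+\sqrt{q}\,d_2(\calM).
\end{equation*}
To prove it, take an almost optimal admissible sequence $(\calM_r)$ for $\gammafn{2}(\calM,\norm{\cdot}_2)$ and let $\pi_r M$ denote the nearest point of $\calM_r$. Telescope
\begin{equation*}
M\nu=\pi_{r_0}M\nu+\sum_{r>r_0}(\pi_rM-\pi_{r-1}M)\nu ,
\end{equation*}
starting at the level $r_0$ with $2^{r_0}\asymp q$. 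Each increment $\norm{(\pi_rM-\pi_{r-1}M)\nu}_2$ concentrates around a mean at most $\norm{\pi_rM-\pi_{r-1}M}_F$. Its deviations are subgaussian with parameter $\norm{\pi_rM-\pi_{r-1}M}_2$, since $\nu\mapsto\norm{A\nu}_2$ is $\norm{A}_2$-Lipschitz and subgaussian vectors enjoy the corresponding concentration via Hanson--Wright. A union bound over the at most $2^{2^{r+1}}$ pairs at level $r$ then gives the $\gammafn{2}$ term. The starting level contributes $d_F+\sqrt{q}\,d_2$, because $|\calM_{r_0}|\le e^{cq}$.

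\textbf{Step 3 (chaining the decoupled chaos).} Condition on $\nu'$. Then $\langle M\nu,M\nu'\rangle$ is linear in $\nu$, and its increments are controlled by $\norm{(M-M')^\top M\nu'}_2\le \norm{M-M'}_2\,\sup_{M}\norm{M\nu'}_2$. Chaining as in Step 2, with $\sup_M\norm{M\nu'}_2$ as a random Lipschitz constant, gives a conditional $L^q$ bound of order $(\gammafn{2}+\sqrt{q}\,d_2)\sup_M\norm{M\nu'}_2$. The increments of the form $\langle \pi_{r_0}M \nu, (M-\pi_{r_0}M)\nu'\rangle$ are treated symmetrically. Taking $L^q$ norms in $\nu'$ and inserting Step 2 yields
\begin{equation*}
(\gammafn{2}+\sqrt{q}\,d_2)(\gammafn{2}+d_F+\sqrt{q}\,d_2)\lesssim F+\sqrt{q}\,V+q\,U ,
\end{equation*}
and the cross term $\sqrt{q}\,d_2\gammafn{2}$ is absorbed into $\sqrt{q}\,V$.

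The main obstacle is Step 3. Naive chaining on $M\mapsto\langle M\nu,M\nu'\rangle$ would produce a $\gammafn{1}$ term, or a product metric, rather than $\gammafn{2}^2$. The trick is to bound the chaos by the product of $\norm{(M-\pi_rM)\nu}$-type increments and a \emph{uniform} bound on $\sup_M\norm{M\nu'}_2$. This way only $\gammafn{2}$ with respect to the operator norm ever appears. Beyond that, one must keep careful track of the starting level $r_0\sim\log q$, so that the $\sqrt{q}$ and $q$ dependences come out as $V$ and $U$.
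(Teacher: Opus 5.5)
The paper does not prove this statement; it quotes \cite[Theorem~3.1]{krahmer14}. Your architecture follows Krahmer--Mendelson--Rauhut: a moment bound plus Markov, decoupling, and chaining conditionally on $\nu'$ with $\sup_M\norm{M\nu'}_2$ as a random Lipschitz constant, starting at level $2^{r_0}\asymp q$. Your Step~3 is essentially their decoupled-chaos estimate, and it is fine \emph{provided} you already control $\|\sup_M\norm{M\nu}_2\|_{L^q}$.

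\textbf{The gap is in Step~2, not Step~3.} After telescoping and the triangle inequality, your union bound only handles the \emph{deviations} of the links $\norm{(\pi_rM-\pi_{r-1}M)\nu}_2$. Each link also carries its mean, of order $\norm{\pi_rM-\pi_{r-1}M}_F$. So what you actually obtain contains $\sup_M\sum_{r>r_0}\norm{\pi_rM-\pi_{r-1}M}_F$. This is neither ``the $\gammafn{2}$ term'' nor $d_F$: an admissible sequence tuned to $\gammafn{2}(\calM,\norm{\cdot}_2)$ carries no information on Frobenius increments, and each of the many levels may contribute a term of order $d_F$. A better chain cannot repair this, because $M\mapsto\norm{M\nu}_2$ does not have subgaussian increments with respect to $\norm{\cdot}_2$. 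For Gaussian $\nu$, take $A=e_1e_1^\top$ and $B=A+n^{-1/2}(I_n-e_1e_1^\top)$. Then $\norm{A-B}_2=n^{-1/2}$, yet $\norm{A\nu}_2-\norm{B\nu}_2=\absenc{\nu_1}-(\nu_1^2+n^{-1}\sum_{j\ge2}\nu_j^2)^{1/2}\approx\absenc{\nu_1}-(\nu_1^2+1)^{1/2}$, which fluctuates at order one.

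\textbf{The missing idea is to get Step~2 from Step~3 by a bootstrap.} Chaining from level $r_0$ on, plus the fixed-matrix decoupled chaos bound at level $r_0$, gives $\|\sup_M\absenc{\dotprod{M\nu}{M\nu'}}\|_{L^q}\lesssim\gammafn{2}(\calM,\norm{\cdot}_2)\,x+\sqrt q\,d_Fd_2+q\,d_2^2$, where $x=\|\sup_M\norm{M\nu}_2\|_{L^{2q}}$ is treated as unknown. Use $\sup_M\norm{M\nu}_2^2\le d_F^2+\sup_M\absenc{\norm{M\nu}_2^2-\expec\norm{M\nu}_2^2}$ together with decoupling. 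This gives the quadratic inequality $x^2\lesssim d_F^2+\gammafn{2}\,x+\sqrt q\,d_Fd_2+q\,d_2^2$ (plus the diagonal part), hence $x\lesssim\gammafn{2}+d_F+\sqrt q\,d_2$. Reduce first to finite $\calM$ so that $x<\infty$.

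\textbf{The diagonal part must go through the same scheme.} Your claim that it contributes only $d_Fd_2+\sqrt q\,d_Fd_2+q\,d_2^2$ is false. For $\calM=\set{\diag(x):x\in\matR^n,\ \norm{x}_2\le1}$ the whole chaos is diagonal and $d_F=d_2=1$, yet the supremum is at least $\max_j\nu_j^2-1$, which is of order $\log n$. Bernstein chaining over $(\calM,\norm{\cdot}_2)$ also fails here. It bounds the subexponential part of the increments only by $2d_2\norm{M-M'}_2$, so it leaves a term $d_2\,\gammafn{1}(\calM,\norm{\cdot}_2)$ that $F$ does not dominate: for the unit Frobenius ball this term is of order $n^{3/2}$, while $F\lesssim n\log^2 n$. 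One remedy is to symmetrize and compare $\epsilon_j(\nu_j^2-1)$, by tail domination, with a multiple of $g_jg_j'$ for independent Gaussian vectors $g,g'$. This turns the diagonal into a decoupled chaos over $\diag(\norm{Me_1}_2,\dots,\norm{Me_n}_2)$, a family whose $\gammafn{2}$, $d_F$ and $d_2$ are no larger up to constants. Step~3 and the bootstrap then apply to it.

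Finally, your Markov step correctly produces $\min\set{t^2/V^2,t/U}$, as in \cite{krahmer14}. The $t^2/V$ in the displayed statement is a typo. That version is not invariant under $\calM\mapsto\lambda\calM$, $t\mapsto\lambda^2t$, and it fails for $\calM=\set{\lambda I_n}$ with $\lambda\ge1$, $t=\lambda^2\sqrt n$ and $n\to\infty$.
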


We will also use the following theorem derived from a set of results of \cite{dirksen15} for bounding the suprema of general second order subgaussian chaos processes, where the matrices are not necessarily positive semidefinite. We refer the reader to Theorem 4 of \cite{tyagi2024learning} for the theorem in its form stated here.
\begin{theorem}[\cite{dirksen15}]\label{thm:non-psd-sup}
Let $\calM$ be a set of matrices and $\nu$ be a vector with independent, zero-mean, $1$-subgaussian entries. For $M\in \calM$, let $C_M(\nu) = \nu^\top M \nu - \expec \left[\nu^\top M \nu\right]$. Then,
\begin{enumerate}
\item There exists a universal constant $c>0$ such that for any $l \ge 1$,
\begin{align*}
\left(\expec \sup_{M \in \calM} \absenc{C_M(\nu)}^l\right)^{1/l} \le
 c \bigg(\gammafn{1} \left(\calM,\norm{\cdot}_2\right)+\gammafn{2} \left(\calM,\norm{\cdot}_F\right)+\sqrt{l}d_F(\calM)+ld_2(\calM)\bigg).
\end{align*}
\item There exists a universal constant $c'>0$ such that for any $u \ge 1$,
\begin{align*}
\prob \bigg(\sup_{M \in \calM} \absenc{C_M(\nu)} \ge L\bigg) \le e^{-u}.
\end{align*}
where 
\begin{align*}
L=c'\bigg(&\gammafn{1} \left(\calM,\norm{\cdot}_2\right)+\gammafn{2} \left(\calM,\norm{\cdot}_F\right) 
+\sqrt{u}d_F(\calM)+ud_2(\calM)\bigg).
\end{align*}
\end{enumerate}
\end{theorem}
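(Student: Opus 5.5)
We would derive part 1 from Dirksen's generic chaining bound for processes with mixed subgaussian/subexponential increments, and then obtain part 2 from part 1 by Markov's inequality.

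\emph{Step 1: increment tails.} Fix $M, M' \in \calM$. Since $C_M(\nu)-C_{M'}(\nu) = C_{M-M'}(\nu)$ and $\nu$ has independent, centered, $1$-subgaussian entries, the Hanson--Wright inequality gives, for all $t>0$,
\begin{equation*}
\prob\left(\absenc{C_M(\nu)-C_{M'}(\nu)} \ge t\right) \le 2\exp\left(-c\min\set{\frac{t^2}{\norm{M-M'}_F^2},\frac{t}{\norm{M-M'}_2}}\right).
\end{equation*}
Thus $(C_M)_{M\in\calM}$ has mixed tail increments with respect to the pair of metrics $(\norm{\cdot}_F, \norm{\cdot}_2)$.

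If one prefers to avoid citing Hanson--Wright in this generality, there is an alternative route. Split $C_M$ into a diagonal part and an off-diagonal part.
\begin{itemize}
\item The diagonal part $\sum_i M_{ii}(\nu_i^2-\expec\nu_i^2)$ is a sum of independent subexponential variables. Bernstein's inequality gives the same mixed tail, with $\norm{\diag(M)}_F\le\norm{M}_F$ and $\max_i|M_{ii}|\le\norm{M}_2$.
\item The off-diagonal part is first decoupled (de la Pe\~na--Montgomery-Smith) into $\nu^\top M \nu'$ with $\nu'$ an independent copy. The same mixed tail then follows by conditioning on $\nu'$.
\end{itemize}

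\emph{Step 2: chaining.} Apply Dirksen's moment version of generic chaining for processes with mixed tails (Theorem 3.5 of \cite{dirksen15}). Fix any $M_0\in\calM$. For every $l\ge1$,
\begin{align*}
\Big(\expec \sup_{M\in\calM}\absenc{C_M(\nu)-C_{M_0}(\nu)}^l\Big)^{1/l} \lesssim{}& \gammafn{2}(\calM,\norm{\cdot}_F)+\gammafn{1}(\calM,\norm{\cdot}_2) \\
&+\sqrt{l}\,\mathrm{diam}_F(\calM)+l\,\mathrm{diam}_2(\calM).
\end{align*}
The diameters are at most $2d_F(\calM)$ and $2d_2(\calM)$, respectively.

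\emph{Step 3: the base point.} For the single term $C_{M_0}$, integrate the Hanson--Wright tail. This gives
\begin{equation*}
\left(\expec|C_{M_0}|^l\right)^{1/l}\lesssim \sqrt l\,\norm{M_0}_F+l\,\norm{M_0}_2 \le \sqrt{l}\,d_F(\calM)+l\,d_2(\calM).
\end{equation*}
Minkowski's inequality then yields part 1.

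\emph{Step 4: part 2.} Take $l=u$ and apply Markov's inequality to the $u$-th moment:
\begin{equation*}
\prob\left(\sup_M|C_M|\ge e\,\Big(\expec\sup_M|C_M|^u\Big)^{1/u}\right)\le e^{-u}.
\end{equation*}
Part 2 follows with $c'=ec$.

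The main obstacle is Step 2. It is the genuine content of \cite{dirksen15}: one must build admissible sequences that are simultaneously near-optimal for $\gammafn{2}$ in $\norm{\cdot}_F$ and for $\gammafn{1}$ in $\norm{\cdot}_2$, which is done by merging the two sequences. One must also track moments rather than only the expectation. We would cite that result directly rather than reprove it.
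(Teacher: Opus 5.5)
Your proposal is correct and matches the source of this statement: the paper does not prove it but imports it from \cite{dirksen15} (in the form of Theorem 4 of \cite{tyagi2024learning}), and that form is obtained exactly as you do---Hanson--Wright for the increments $C_{M-M'}$, Dirksen's mixed-tail moment chaining bound, Minkowski for the base point, and Markov at $l=u$. The one loose end is your optional decoupling route, where ``conditioning on $\nu'$'' additionally needs a tail bound $\norm{M\nu'}_2\lesssim\norm{M}_F+\sqrt{s}\,\norm{M}_2$ holding with probability $1-e^{-s}$; since your main line cites Hanson--Wright directly, this does not affect the argument.
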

We also use the following theorem from \cite[Chapter 3]{Wainwright2019} in our analysis.
\begin{theorem}[\cite{Wainwright2019}] \label{thm:bounded-func-conc}
Consider $X = (X_1,X_2,\dots,X_n)$ independent random variables with $X_i \in [a,b]$ almost surely. Let $f:\matR^{n} \rightarrow \matR$ be convex and $L$-Lipschitz with respect to $\norm{\cdot}_2$ norm. Then $\forall t \ge 0$
\[\prob \left(|f(X)-\expec f(X)|\ge t\right) \le 2\exp \left(\frac{-t^2}{2L^2(b-a)^2}\right).\]
\end{theorem}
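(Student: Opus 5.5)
We plan to prove the inequality by the entropy method (a Herbst-type argument) for the upper tail, and to treat the lower tail separately. By the affine change of variables $X_i \mapsto (X_i-a)/(b-a)$ and replacing $f$ by $\tilde f(u) = f(a\mathbf{1}+(b-a)u)$, which is convex and $L(b-a)$-Lipschitz, it suffices to treat $X_i \in [0,1]$ with Lipschitz constant $L' = L(b-a)$. By a standard smoothing argument (convolution with a mollifier preserves convexity and the Lipschitz constant), we may also assume $f$ is differentiable with $\norm{\nabla f(x)}_2 \le L'$.

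\emph{Upper tail.} We would bound the entropy $\mathrm{Ent}(e^{\lambda f(X)})$ for $\lambda>0$ via tensorization,
\[
\mathrm{Ent}(e^{\lambda f}) \le \sum_{i=1}^n \expec\big[\mathrm{Ent}_i(e^{\lambda f})\big],
\]
where $\mathrm{Ent}_i$ is the entropy in the $i$-th coordinate with the others frozen. For each coordinate we use the symmetrized bound
\[
\mathrm{Ent}_i(e^{\lambda f}) \le \lambda^2\, \expec_i\Big[(f(X)-f(X^{(i)}))_+^2\, e^{\lambda f(X)}\Big],
\]
where $X^{(i)}$ replaces $X_i$ with an independent copy. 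Convexity of $f$ gives $f(X)-f(X^{(i)}) \le \partial_i f(X)\,(X_i - X_i') \le |\partial_i f(X)|$, since both coordinates lie in $[0,1]$. Summing over $i$ then yields
\[
\mathrm{Ent}(e^{\lambda f}) \le \lambda^2\, \expec\big[\norm{\nabla f(X)}_2^2\, e^{\lambda f}\big] \le \lambda^2 L'^2\, \expec\big[e^{\lambda f}\big].
\]
Herbst's argument, i.e.\ integrating $\frac{d}{d\lambda}\big(\lambda^{-1}\log \expec e^{\lambda(f-\expec f)}\big) \le L'^2$, gives $\log \expec e^{\lambda(f-\expec f)} \le \lambda^2 L'^2$. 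Chernoff's bound then gives a sub-Gaussian upper tail of the form $\exp(-t^2/(cL'^2))$. Tracking the constant in the symmetrized entropy bound carefully (for instance, using the sharper version $\mathrm{Ent}_i(e^{\lambda f}) \le \tfrac{\lambda^2}{2}\expec_i[(f-f')_+^2 e^{\lambda f}]$) should bring $c$ to the value $2$ stated in the theorem.

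\emph{Lower tail; this is the main obstacle.} The same trick fails for the lower tail. With the roles reversed, the gradient is evaluated at the point where $f$ is smaller, so it is not weighted by $e^{-\lambda f}$ at the correct point. Our first attempt would be to apply the entropy argument to $-f$ using the self-bounding variant in which the increments are controlled at $X^{(i)}$. The bound $\norm{\nabla f}_2 \le L'$ holds everywhere, so the change of measure costs only a constant factor.

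If that route turns out to lose too much in the constant, we would fall back on Talagrand's convex distance inequality,
\[
\prob(X\in\calA)\,\prob\big(d_T(X,\calA)\ge s\big)\le e^{-s^2/4}.
\]
We would apply it with $\calA=\{f\le m\}$, where $m$ is the median of $f(X)$. Convexity of $f$ and the Lipschitz bound give $f(x) \le m + L'\, d_T(x,\calA)$, and also the reverse inequality obtained with $\calA = \{f \ge m\}$. Together these give two-sided concentration of $f$ around $m$. Finally, the resulting tail bound shows $|\expec f - m| \lesssim L'$, and this shift is absorbed into the constants.

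A union bound over the two tails gives the factor $2$ in the statement. Undoing the rescaling restores $L' = L(b-a)$.
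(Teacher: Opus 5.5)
\textbf{The gap is the lower tail.} Your upper-tail argument is sound in substance. The lower tail, which you rightly single out, is not proved: neither of your two routes gives the stated inequality.

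In the entropy route for $-f$ (i.e.\ $\lambda<0$), the symmetrized bound puts either the gradient or the exponential weight at the perturbed point $X^{(i)}$.
\begin{itemize}
\item In the first form you need $\sum_i (f(X^{(i)})-f(X))_+^2$ to be $O(L'^2)$. Convexity only bounds it by $\sum_i|\partial_i f(X^{(i)})|^2$, i.e.\ by partial derivatives at $n$ different points, and the claim fails. For the convex $1$-Lipschitz function $f(x)=\max_i x_i$ at $X=0$, each increment equals $X_i'$, so the sum $\sum_i X_i'^2$ is of order $n$.
\item In the second form you trade $e^{\lambda f(X^{(i)})}$ for $e^{\lambda f(X)}$ at a cost $e^{|\lambda|(f(X)-f(X^{(i)}))}\le e^{|\lambda|L'}$. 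This is a constant only for $|\lambda|\lesssim 1/L'$. The Chernoff step at level $t$ needs $|\lambda|\asymp t/L'^2$, so for $t\gg L'$ you get at best a Poisson-type tail $\exp(-c\,(t/L')\log(t/L'))$, not $e^{-t^2/(2L'^2)}$.
\end{itemize}

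The fallback through Talagrand's convex distance inequality is valid, but it only gives $\prob(|f(X)-m|\ge t)\le 4e^{-t^2/(4L'^2)}$ about the median. For the lower side you must take the convex set $\mathcal{A}=\{f\le m-t\}$ and use $\{f\ge m\}\subset\{d_T(\cdot,\mathcal{A})\ge t/L'\}$. The set $\{f\ge m\}$ is not convex, so the comparison $\mathrm{dist}_2\le d_T$ is not available for it. Recentring at the mean costs an additive $|\expec f(X)-m|\le 4\sqrt{\pi}L'$, and the outcome is $Ce^{-t^2/(cL'^2)}$ with $c>4$. The statement fixes both the prefactor $2$ and the exponent $t^2/(2L'^2)$, so these losses cannot be ``absorbed into the constants''. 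You would be proving a weaker theorem.

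\textbf{The missing idea is a mean-centred argument that keeps the gradient at the unperturbed point.} The paper gives no proof of its own; it imports the result from \cite{Wainwright2019}. There the two-sided bound comes from Marton's asymmetric coupling (transportation-cost) inequality. For the product law $P$ of $X$ and any $Q\ll P$, there is a coupling $(X,Y)$ with $X\sim P$, $Y\sim Q$ and $\expec\sum_i\prob(X_i\neq Y_i\mid X)^2\le 2D(Q\|P)$, and likewise with the conditioning on $Y$. The argument then runs as follows.
\begin{itemize}
\item On $[0,1]^n$, convexity gives $f(X)-f(Y)\le\sum_i|\partial_i f(X)|\,\mathbf{1}\{X_i\neq Y_i\}$.
\item Cauchy--Schwarz then yields $\expec_P f-\expec_Q f\le L'\sqrt{2D(Q\|P)}$ for every $Q$.
\item The Gibbs variational formula $\log\expec_P e^{h}=\sup_Q\{\expec_Q h-D(Q\|P)\}$ converts this into $\log\expec e^{-\lambda(f(X)-\expec f(X))}\le\lambda^2L'^2/2$. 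This is the lower tail $e^{-t^2/(2L'^2)}$.
\item The $Y$-conditioned inequality gives the upper tail in the same way.
\end{itemize}

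\textbf{Smaller points.}
\begin{itemize}
\item The standard independent-copy symmetrization carries $\lambda^2$, which gives exponent $t^2/(4L'^2)$. Your ``sharper'' version with $\lambda^2/2$ is not justified. To get the constant $2$ by entropy, compare instead with $Z_i=\inf_{x_i'\in[0,1]}f(X_1,\dots,x_i',\dots,X_n)$, which does not depend on $X_i$. Then $\mathrm{Ent}_i(e^{\lambda f})\le\frac{\lambda^2}{2}\expec_i[(f-Z_i)^2e^{\lambda f}]$, and convexity gives $0\le f-Z_i\le|\partial_i f(X)|$.
\item The paper only applies this theorem to upper tails of suprema of linear forms, with constants absorbed. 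Your upper-tail argument would therefore already cover its uses, just not the statement as written.
\end{itemize}
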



\section{Proof of Lemma~\ref{lem:unb-est}} \label{sec:unb-est}

\begin{proof}
Let $\expec_{\smpl}[\cdot]$ denote the expected value over $(\smpl_t)_{t=1}^T$, conditioned on $(x_t)_{t=1}^T$. Consider,
\begin{align*}
\norm{\widetilde{Z}-AZ}_F^2-(1-p)\norm{A \cdot \diag^{1/2}(ZZ^\top)}_F^2.
\end{align*}
We will use $g(A)=\norm{\widetilde{Z}-AZ}_F^2-(1-p)\norm{A \cdot \diag^{1/2}(ZZ^\top)}_F^2$ in the rest of the proof for ease of notation. Note that,
\begin{align*}
\norm{\widetilde{Z}-AZ}_F^2 = \norm{\widetilde{Z}}_F^2+\norm{AZ}_F^2-2\dotprod{\widetilde{Z}}{AZ}.
\end{align*}
Let us first consider $\expec_{\smpl}\left[\norm{AZ}_F^2\right]$ and $\expec_{\smpl} \left[\dotprod{\widetilde{Z}}{AZ}\right]$. We have,
\begin{align*}
\expec_{\smpl} \left[\norm{AZ}_F^2\right] &= \expec_{\smpl} \left[\Tr\left(ZZ^\top A^\top A\right)\right] = \Tr\left(\expec_{\smpl} \left[ZZ^\top\right] A^\top A\right)\\
&= \Tr\left(\expec_{\smpl} \left[\sum_{t=1}^{T-1} \smpl_t y_t y_t^\top {\smpl_t}^\top \right] A^\top A\right)\\
&= \Tr\left(\expec_{\smpl} \left[\sum_{t=1}^{T-1} P_t x_t x_t^\top P_t^\top \right] A^\top A\right).
\end{align*}
Since ${(\smpl_t)}_{ii}\sim B(p)$ for all $t \in [T]$,
\begin{equation*}
\expec_{\smpl} \left[\sum_{t=1}^{T-1} P_t x_t x_t^\top P_t^\top \right] =  \sum_{t=1}^{T-1} S \odot \left( x_t x_t^\top \right)
=  S \odot \sum_{t=1}^{T-1} \left( x_t x_t^\top \right) = S \odot (XX^\top),
\end{equation*}
where $S = p^2 \mathbf{1}\mathbf{1}^\top + p(1-p)I,$ where $\mathbf{1} = (1, \dots, 1)^\top \in \mathbb{R}^n$.

Similarly for $\dotprod{\widetilde{Z}}{AZ}$,
\begin{align*}
\expec_{\smpl} \left[\dotprod{\widetilde{Z}}{AZ}\right] 
&= \Tr\left(\expec_{\smpl} \left[\sum_{t=1}^{T-1} P_t x_t x_{t+1}^\top P_{t+1}^\top \right] A\right).
\end{align*}

Since $P_t$ and $P_{t+1}$ are independent,
\begin{equation*}
\expec_{\smpl} \left[\sum_{t=1}^{T-1} P_t x_t x_{t+1}^\top P_{t+1}^\top \right] =  \sum_{t=1}^{T-1} p^2 \left( x_t x_{t+1}^\top \right) = p^2 X\widetilde{X}^\top.
\end{equation*}
For simplicity, let $\mu = \expec_{\smpl}\left[\norm{\widetilde{Z}}_F^2\right]$. Note that $\mu$ is independent of $A$. Therefore,
\begin{align*}
\expec_{\smpl} \left[\norm{\widetilde{Z}-AZ}_F^2\right] = \mu+\Tr \left(\left\{S \odot XX^\top\right\} A^\top A\right)-2p^2\Tr \left(X\widetilde{X}^\top A\right).
\end{align*}

Next consider, $\expec_{\smpl} \left[\norm{A \cdot \diag^{1/2}\left(ZZ^\top\right)}_F^2\right]$. Note that,
\begin{align*}
\expec_{\smpl} \left[\norm{A \cdot \diag^{1/2}\left(ZZ^\top\right)}_F^2\right] & =\expec_{\smpl} \left[\Tr \left(\diag^{1/2}\left(ZZ^\top\right) A^\top A \diag^{1/2}\left(ZZ^\top\right)\right)\right]\\
&=\expec_{\smpl} \left[\Tr \left(\diag\left(ZZ^\top\right) A^\top A \right)\right]\\
&=\Tr \left(\expec_{\smpl}\left[\diag\left(ZZ^\top\right)\right] A^\top A \right)\\
&=p\Tr\left(\diag\left(XX^\top\right) A^\top A \right).
\end{align*}
Hence we obtain
\begin{align*}
\expec_{\smpl} \left[g(A)\right] &= \expec_{\smpl} \left[\norm{\widetilde{Z}-AZ}_F^2\right]-(1-p)\expec_{\smpl} \left[\norm{A \cdot \diag^{1/2}(ZZ^\top)}_F^2\right]\\
&=\mu+\Tr \left(S \odot XX^\top A^\top A\right)-2p^2\Tr \left(X\widetilde{X}^\top A\right)-(1-p)p\Tr\left(\diag\left(XX^\top\right) A^\top A \right)\\
&=\mu+p^2 \Tr \left(XX^\top A^\top A\right)-2p^2\Tr \left(X\widetilde{X}^\top A\right)\\
&=\mu + p^2 \left(\norm{AX}_F^2-2\dotprod{\widetilde{X}}{AX}+\norm{\widetilde{X}}_F^2\right)-p^2\norm{\widetilde{X}}_F^2\\
&=\mu'+p^2 \norm{\widetilde{X}-AX}_F^2,
\end{align*}
where $\mu'=\mu-p^2\norm{\widetilde{X}}_F^2$ (which does not depend on $A$). Therefore, we can see that,
\begin{align*}
\argmin{A \in \calK}  \set{\expec_{(x_t)_{t=1}^{T}}\expec_{\smpl} \left[g(A)\right]} &= \argmin{A \in \calK} \set{\expec_{(x_t)_{t=1}^{T}}\mu'+p^2 \expec_{(x_t)_{t=1}^{T}}\norm{\widetilde{X}-AX}_F^2}\\
&=\argmin{A \in \calK}\set{\expec_{(x_t)_{t=1}^{T}} \norm{\widetilde{X}-AX}_F^2}\\
&=\argmin{A \in \calK}\set{\expec_{(x_t)_{t=1}^{T}} \sum_{t=1}^{T-1} \norm{x_{t+1}-Ax_{t}}_2^2}.
\end{align*}
\end{proof}

\section{Proof of Theorem~\ref{thm:main-thm}}
\label{sec:tangent}

Let us recall the matrices $\widetilde{Z}$ and $Z$ and the objective $g(A) \coloneq \norm{\widetilde{Z}-AZ}_F^2-(1-p)\norm{A  \diag^{1/2}(ZZ^\top)}_F^2$ defined in Section~\ref{sec:prob-est}. For any $B \in \calK$ and $\widehat{A} \in \calK$ which minimizes $g(A)$, we first observe the following relationship between $\widehat{A}$ and $B$.

\begin{lemma}\label{lem:B-bound}
For any $B \in \calK$, the solution $\widehat{A} \in \calK$ to (\ref{eq:unb-est}) satisfies,
\begin{align}\label{eq:B-bound}
\norm{(B-\widehat{A})Z}_F^2-(1-p)\norm{(\widehat{A}-B)D}_F^2&\le 2\dotprod{\widetilde{Z}-A^*Z}{(\widehat{A}-B)Z} +2\norm{\left(A^*-B\right)Z}_F\norm{(\widehat{A}-B)Z}_F\\
&+2(1-p)\dotprod{A^*  D}{(\widehat{A}-B) D}+2(1-p)\norm{(A^*-B)D}_F \norm{(\widehat{A}-B)D}_F \nonumber.
\end{align}
where $D=\diag^{1/2}(ZZ^\top)$.
\end{lemma}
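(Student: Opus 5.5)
The plan is to start from the basic optimality inequality $g(\widehat{A}) \le g(B)$, which holds because $B \in \calK$ is feasible and $\widehat{A}$ minimizes $g$ over $\calK$. We then expand both sides around $B$ and write $\tilde{Z}$ in terms of $A^*$. Set $\Delta := \widehat{A}-B$. Writing $\widehat{A}Z = BZ + \Delta Z$, we expand
\begin{align*}
\norm{\tilde{Z}-\widehat{A}Z}_F^2 = \norm{\tilde{Z}-BZ}_F^2 - 2\dotprod{\tilde{Z}-BZ}{\Delta Z} + \norm{\Delta Z}_F^2 .
\end{align*}
Similarly, since $\widehat{A}D = BD + \Delta D$,
\begin{align*}
\norm{\widehat{A}D}_F^2 = \norm{BD}_F^2 + 2\dotprod{BD}{\Delta D} + \norm{\Delta D}_F^2 .
\end{align*}
Substituting both expansions into $g(\widehat{A}) \le g(B)$, the terms $\norm{\tilde{Z}-BZ}_F^2$ and $(1-p)\norm{BD}_F^2$ cancel. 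This leaves
\begin{align*}
\norm{\Delta Z}_F^2-(1-p)\norm{\Delta D}_F^2 \le 2\dotprod{\tilde{Z}-BZ}{\Delta Z} + 2(1-p)\dotprod{BD}{\Delta D}.
\end{align*}
The left-hand side is exactly the left-hand side of \eqref{eq:B-bound}, because $\norm{(B-\widehat{A})Z}_F = \norm{\Delta Z}_F$.

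Next we reintroduce $A^*$ on the right-hand side. In the first inner product, write $\tilde{Z}-BZ = (\tilde{Z}-A^*Z) + (A^*-B)Z$, so that
\begin{align*}
2\dotprod{\tilde{Z}-BZ}{\Delta Z} = 2\dotprod{\tilde{Z}-A^*Z}{\Delta Z} + 2\dotprod{(A^*-B)Z}{\Delta Z}.
\end{align*}
Applying Cauchy--Schwarz to the last term bounds it by $2\norm{(A^*-B)Z}_F\norm{\Delta Z}_F$.

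In the second inner product, write $B = A^* - (A^*-B)$, so that
\begin{align*}
2(1-p)\dotprod{BD}{\Delta D} = 2(1-p)\dotprod{A^*D}{\Delta D} - 2(1-p)\dotprod{(A^*-B)D}{\Delta D}.
\end{align*}
Since $1-p \ge 0$, Cauchy--Schwarz bounds the last term by $2(1-p)\norm{(A^*-B)D}_F\norm{\Delta D}_F$. Collecting the four terms gives \eqref{eq:B-bound}.

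There is no real obstacle here; the argument is pure algebra. The only points needing care are the sign bookkeeping, since the correction term enters $g$ with a minus sign, and the use of $p \le 1$ so that $(1-p) \ge 0$ when Cauchy--Schwarz is applied to the term $\dotprod{(A^*-B)D}{\Delta D}$.
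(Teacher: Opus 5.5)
Your proposal is correct and follows the paper's proof essentially step for step. You use the feasibility inequality $g(\widehat{A}) \le g(B)$, expand both terms around $B$ so that $g(B)$ cancels, rewrite $\widetilde{Z}-BZ$ and $B$ in terms of $A^*$, and finish with Cauchy--Schwarz. Your explicit note that $1-p \ge 0$ is needed when applying Cauchy--Schwarz to the $D$-term is a point the paper uses without stating it.
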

\begin{proof}[Proof Sketch]
The proof of Lemma~\ref{lem:B-bound} follows from considering the optimality of $\widehat{A}$ and the feasibility of $B$, which implies $g(\widehat{A}) \le g(B)$, and then rewriting and reordering some terms. The details are available in Appendix~\ref{appsec:main-inequal}.
\end{proof}

Given Lemma~\ref{lem:B-bound}, our goal is to control the left and right hand sides of the equation~\ref{eq:B-bound}. 
We first consider the left-hand-side of equation~\ref{eq:B-bound}, i.e. $\norm{(B-\widehat{A})Z}_F^2-(1-p)\norm{(\widehat{A}-B)D}_F^2$. 
\begin{lemma}\label{lem:LHS-bound}
There exist constants $c_1, C > 1$ such that for any $\delta \in (0,1)$ it holds with probability at least $1-\delta$,
\begin{align*}
\norm{(B-\widehat{A})Z}_F^2-(1-p)\norm{(\widehat{A}-B)D}_F^2 \ge &\Bigg[p^2 T -c_1 \bigg(J^2(A^*)\bar{L}_1+ p(1-p)J^2(A^*)\bar{L}_1\\ 
&+pJ(A^*)\sqrt{T} \left(\gammafn{2}(\conesetB,\norm{\cdot}_F)+\log (1/\delta)\right)\bigg) \Bigg] \norm{\widehat{A}-B}_F^2
\end{align*}
where 
\begin{align*}
\bar{L}_1 &= C \left(\gammafn{2}^2(\conesetB,\norm{\cdot}_2) +\gammafn{2}(\conesetB,\norm{\cdot}_2) \sqrt{T}+\sqrt{T} \log \left(1/\delta\right)\right),
\end{align*}
and $D=\diag^{1/2}(ZZ^\top)$.
\end{lemma}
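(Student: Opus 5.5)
Since $\widehat{A}-B\in\tanconeB$, I will write $\widehat{A}-B=\norm{\widehat{A}-B}_F V$ with $V\in\conesetB$ and prove the bound uniformly over $V\in\conesetB$:
\[
\inf_{V\in\conesetB}\Big\{\norm{VZ}_F^2-(1-p)\norm{VD}_F^2\Big\}.
\]
Note that $Z$ is built from $\smpl_t x_t$ (since $\smpl_t^2=\smpl_t$), so $VZ = [V\smpl_1x_1,\dots]$. Conditionally on $(\smpl_t)$, I will write $\vect(VZ)=(I\otimes V)\,\blkdiag(\smpl_t)\,\Gamma\eta =: M_V\eta$. Similarly, $\norm{VD}_F^2=\sum_j \norm{V e_j}_2^2\sum_t (\smpl_t)_{jj}x_{t,j}^2=\norm{\widetilde M_V\eta}_2^2$ with $\widetilde M_V=(I\otimes \diag(\norm{Ve_j}_2))\blkdiag(\smpl_t)\Gamma$. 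Both terms are then p.s.d. quadratic forms in $\eta$.

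The first step is to apply Theorem~\ref{thm:psd-sup} conditionally on $(\smpl_t)$ to the sets $\{M_V\}$ and $\{\widetilde M_V\}$, $V\in\conesetB$. Since $\norm{\blkdiag(\smpl_t)}_2\le1$ and $\norm{\Gamma}_2\le J(A^*)$, I get $d_2\lesssim J(A^*)\norm{V}_2\le J(A^*)$ and $d_F\lesssim J(A^*)\sqrt{T}$. By property \ref{itm:prop-3}, $\gammafn{2}(\{M_V\},\norm{\cdot}_2)\lesssim J(A^*)\gammafn{2}(\conesetB,\norm{\cdot}_2)$, and the same holds for $\widetilde M_V$ because $V\mapsto(\norm{Ve_j}_2)_j$ is 1-Lipschitz in the appropriate norm (this needs a small check, using $\norm{\cdot}_2\le\norm{\cdot}_F$ if necessary). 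Taking $t\asymp J^2(A^*)\sqrt{T}\log(1/\delta)$ gives deviations of order $J^2(A^*)\bar L_1$, and the $(1-p)$ factor produces the $p(1-p)$-type term. On this event I can replace each quadratic by its conditional expectation, namely $\Tr(M_V^\top M_V)=\sum_t \norm{V\smpl_t \Gamma_t}_F^2$ and similarly for $\widetilde M_V$.

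The second step handles the conditional expectations, which are now functions of the Bernoulli masks. Their expectation over $\smpl$ combines, exactly as in the proof of Lemma~\ref{lem:unb-est}, to give $p^2\sum_t\norm{V\Sigma_t^{1/2}}_F^2\ge p^2(T-1)\norm{V}_F^2$, because $\Sigma_t=\sum_k (A^*)^k(A^*)^{k\top}\succeq I$. This is the source of the $p^2T$ main term. It remains to control the fluctuation in $\smpl$ uniformly over $V$. I will center the masks, $q=\vect((\smpl_t)_{ii})-p\mathbf 1$. The difference then splits into a quadratic form $q^\top M q$, which comes from the off-diagonal $V^\top V\odot\Sigma_t$ terms and which I bound by Theorem~\ref{thm:psd-sup} or Theorem~\ref{thm:non-psd-sup} with bounded (hence subgaussian) $q$, and a linear term $q^\top h_V$. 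For the linear term I plan to apply Theorem~\ref{thm:bounded-func-conc} to the convex Lipschitz function $q\mapsto\sup_V q^\top h_V$. Its Lipschitz constant is at most $\sup_V\norm{h_V}_2\lesssim pJ(A^*)\sqrt{T}$, since $\norm{\Sigma_t}_2\le J^2(A^*)$ and $\sum_t$ of squared row entries is bounded by $T\norm{V}_F^2$. Its mean $\expec\sup_V q^\top h_V$ I will bound by a chaining or Gaussian-comparison argument, giving $pJ(A^*)\sqrt T\gammafn{2}(\conesetB,\norm{\cdot}_F)$. This yields the last error term.

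I expect the main obstacle to be this second step: decomposing the mask-dependent expectation cleanly into a centered quadratic plus a linear part with the right scaling in $p$, and bounding $\expec\sup_V q^\top h_V$ by $\gammafn{2}(\conesetB,\norm{\cdot}_F)$ rather than by a cruder quantity. A union bound over the (constant number of) events, with $\delta$ rescaled, then finishes the proof.
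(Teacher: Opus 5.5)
Your overall architecture is the paper's: condition on the masks, apply Theorem~\ref{thm:psd-sup} to the two $\eta$-quadratics, note that the mask expectation collapses to $p^2\sum_t\norm{VG_t^{1/2}}_F^2\ge p^2(T-1)$, then split the mask fluctuation into a centred quadratic and a linear part, with the linear part handled by Theorem~\ref{thm:bounded-func-conc} plus a Gaussian-width bound. The gap is in the quadratic part, which you leave as ``Theorem~\ref{thm:psd-sup} or Theorem~\ref{thm:non-psd-sup}''. As you describe it, the quadratic is the off-diagonal chaos $\sum_t\sum_{i\neq j}(V^\top V\odot G_t)_{ij}q_{t,i}q_{t,j}$. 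Its matrix is not p.s.d., so Theorem~\ref{thm:psd-sup} does not apply. Theorem~\ref{thm:non-psd-sup} does apply, but through the Lipschitz map from $V$ to the chaos matrix it costs $J^2(A^*)\big(\gammafn{1}(\conesetB,\norm{\cdot}_2)+\sqrt{T}\gammafn{2}(\conesetB,\norm{\cdot}_F)+\dots\big)$. The $\gammafn{1}$ term does not appear in the lemma. Its absence is exactly why the sample-size condition of Theorem~\ref{thm:main-thm} involves only $\gammafn{2}$; with it you would need an extra requirement like $T\gtrsim J^2(A^*)\gammafn{1}(\conesetB,\norm{\cdot}_2)/p^2$.

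The missing idea is to keep the mask quadratic in p.s.d.\ form, with a factor that depends linearly on $V$. The paper writes $\norm{(I_{T-1}\otimes V)\overline{P}\Gamma}_F^2=q^\top W_Vq=\norm{\widetilde M_Vq}_2^2$, where $W_V=G\odot(I_{T-1}\otimes V^\top V)$ and $\widetilde M_V=(G^{1/2}\otimes(I_{T-1}\otimes V))Q_{n(T-1)}$; this factorization comes from $A\odot B=Q^\top(A\otimes B)Q$. Since $V\mapsto\widetilde M_V$ is $J(A^*)$-Lipschitz in spectral norm, Theorem~\ref{thm:psd-sup} gives a two-sided bound costing only $\gammafn{2}(\conesetB,\norm{\cdot}_2)$. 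The subtracted diagonal form is bounded from above the same way (Lemmas~\ref{lem:M3-expec} and~\ref{lem:M4-expec}). You can do the same inside your decomposition: re-add the diagonal, bound $q^\top W_Vq$ this way, and move $\sum_i(W_V)_{ii}q_i^2$ into the linear term using $q_i^2=(1-2p)q_i+p(1-p)$.

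Two smaller points. First, Theorem~\ref{thm:psd-sup} requires unit-variance entries, so boundedness of $q$ alone is not enough. The paper rescales to $q/\sqrt{p(1-p)}$, whose subgaussian norm grows like $(p\log(1/p))^{-1/2}$, a $p$-dependence the paper does not track. Second, $\norm{G_t}_2\le J^2(A^*)$ only gives $\sup_V\norm{h_V}_2\lesssim pJ^2(A^*)\sqrt{T}$, not $pJ(A^*)\sqrt{T}$. The paper states the latter as well, but its argument only supports $J^2(A^*)$; the extra factor is harmless for Theorem~\ref{thm:main-thm}.
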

The complete proof is available in Appendix~\ref{appsec:RE_proof_tangent}. 
For the right-hand-side of \eqref{eq:B-bound}, we establish the following upper bound.
\begin{lemma}\label{lem:RHS-bound}
There exist constants $c_1, C > 1$ such that for any $\delta \in (0,1)$ it holds with probability at least $1-\delta$ that,
\begin{align*}
&2\dotprod{\widetilde{Z}-A^*Z}{(\widehat{A}-B)Z}+2\norm{\left(A^*-B\right)Z}_F\norm{(\widehat{A}-B)Z}_F \\
&+2(1-p)\dotprod{A^*  D}{(\widehat{A}-B)D}+2(1-p)\norm{(A^*-B)D}_F \norm{(\widehat{A}-B)D}_F\\
& \le  c_1 \bigg(J^3(A^*)\norm{\widehat{A}-B}_F \bigg(\gammafn{1}(\conesetB,\norm{\cdot}_2)+\gammafn{2}(\conesetB,\norm{\cdot}_F)\sqrt{T}+\sqrt{T\log \left(1/\delta\right)}+\log (1/\delta)\bigg)\\
&+J^2(A^*) \norm{\widehat{A}-B}_F\norm{A^*-B}_F \bigg(p T+\bar{L}_1 +p(1-p)^2 \bar{L}_1+p\sqrt{T} \bigg(\gammafn{2}(\conesetB,\norm{\cdot}_F)+\sqrt{\log (1/\delta)}\bigg)\bigg)\bigg).
\end{align*}
where $D=\diag^{1/2}(ZZ^\top)$, and,
\begin{align*}
\bar{L}_1 &= C \left(\gammafn{2}^2(\conesetB,\norm{\cdot}_2) +\gammafn{2}(\conesetB,\norm{\cdot}_2) \sqrt{T}+\sqrt{T} \log \left(1/\delta\right)\right).
\end{align*}
\end{lemma}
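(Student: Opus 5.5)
The plan is to bound each of the four terms on the left of Lemma~\ref{lem:RHS-bound} uniformly over directions in $\conesetB$, and then evaluate at $V = (\widehat{A}-B)/\norm{\widehat{A}-B}_F \in \conesetB$. This makes every term linear in $\norm{\widehat{A}-B}_F$. We condition first on $(\smpl_t)$ and treat the Gaussian-like chaos in $\eta$, and afterwards control the randomness coming from the masks.

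\emph{Terms involving $A^*-B$.} By Cauchy--Schwarz these are already factored. We need upper bounds on $\sup_{V\in\conesetB}\norm{VZ}_F^2$ and $\sup_V \norm{VD}_F^2$, together with the analogous bounds for the single matrix $(A^*-B)/\norm{A^*-B}_F$. These are exactly the two-sided versions of the estimates behind Lemma~\ref{lem:LHS-bound}. Write $\norm{VZ}_F^2 = \norm{(I\otimes V)\mathcal{P}\Gamma\eta}_2^2$, where $\mathcal{P}=\blkdiag(\smpl_t)$. Apply Theorem~\ref{thm:psd-sup} conditionally on $\mathcal{P}$, using $\norm{\Gamma}_2\le J(A^*)$, to get deviations of order $J^2\bar{L}_1$ around $\expec_\eta\norm{VZ}_F^2 = \Tr(V\mathcal{P}\Gamma\Gamma^\top\mathcal{P}V^\top)$.

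The mask-dependent mean is then written as $q^\top M q$, with $q$ the vector of Bernoulli indicators. Its expectation is $\lesssim J^2 pT$ (the $p^2$ cross part plus the $p(1-p)$ diagonal part). The fluctuation is controlled by Theorem~\ref{thm:psd-sup} after centering $q$, plus Theorem~\ref{thm:bounded-func-conc} for the linear term, which gives $pJ\sqrt T(\gamma_2+\sqrt{\log(1/\delta)})$. The same argument with the diagonal $D$ gives the $p(1-p)^2\bar{L}_1$ piece. Taking square roots and multiplying produces the second line of the claimed bound; the $J^2$ factor is crude but suffices.

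\emph{The cross terms.} Write $\widetilde Z - A^*Z = \widetilde{\mathcal P}\widetilde X - A^*\mathcal P X$. Since $\widetilde X = A^* X + \widetilde E$, and using $X=\Gamma\eta$ (shifted), we get
\[
\dotprod{\widetilde Z-A^*Z}{VZ} + (1-p)\dotprod{A^*D}{VD}
\]
as a quadratic form $\bar\eta^\top M_V \bar\eta$ in the stacked noise vector. Here $M_V$ is linear in $V$ and is built from $\Gamma$, $\mathcal P$, $\widetilde{\mathcal P}$, $A^*$ and $V$. The correction term $(1-p)\norm{\cdot D}$ is designed, via the same computation as Lemma~\ref{lem:unb-est}, to cancel the bias, so that $\expec[\bar\eta^\top M_V\bar\eta]$ is (to leading order) zero at the true $A^*$. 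We would verify that the residual mean is absorbed into the stated terms.

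Conditionally on the masks we apply Theorem~\ref{thm:non-psd-sup} to $\{M_V : V\in\conesetB\}$. By Property~\ref{itm:prop-3}, the map $V\mapsto M_V$ is Lipschitz with constant $\lesssim J^2(A^*)\cdot\norm{A^*}$-type factors, at most $J^3$ after absorbing $\norm{A^*}_2\le J$ and $\norm{\Gamma}_2^2\le J^2$. It is Lipschitz from $\norm{\cdot}_2$ to $\norm{\cdot}_2$ and from $\norm{\cdot}_F$ to $\norm{\cdot}_F/\sqrt T$-scaled norms, giving $\gamma_1(\calM,\norm{\cdot}_2)\lesssim J^3\gamma_1(\conesetB,\norm{\cdot}_2)$ and $\gamma_2(\calM,\norm{\cdot}_F)\lesssim J^3\sqrt T\gamma_2(\conesetB,\norm{\cdot}_F)$. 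We also have $d_F\lesssim J^3\sqrt T$ and $d_2\lesssim J^3$. With $u=\log(1/\delta)$ this yields the first line of the bound. The masks only enter through $\norm{\mathcal P}_2\le 1$, so no second concentration step is needed there beyond the mean.

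The main obstacle is this cross term. Since $\eta_{t+1}$ appears in both $\widetilde X$ and later $x_s$, the matrix $M_V$ is not block-diagonal, and the bias cancellation from the $D$ term is only in expectation over the masks. We must check that $\expec_\eta[\bar\eta^\top M_V\bar\eta]$ after conditioning on the masks leaves a remainder of the form $\Tr(V\,\cdot)$ that is mean zero in $q$. Its fluctuation should then be handled like the linear term via Theorem~\ref{thm:bounded-func-conc}, contributing at most $J^2 p\sqrt T(\gamma_2+\sqrt{\log(1/\delta)})$, which is dominated by the first line. A union bound over the $O(1)$ events, with $\delta$ rescaled, completes the proof.
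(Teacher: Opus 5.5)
\textbf{Where the proposal breaks.} Most of your plan matches the paper. The two misspecification terms are factored by Cauchy--Schwarz, and $\sup_{V\in\conesetB}\norm{VZ}_F^2$ and $\sup_V\norm{VD}_F^2$ are controlled with the machinery of Lemma~\ref{lem:LHS-bound}. For the cross terms, the paper also writes them as $\bar\eta^\top M_V\bar\eta$ and applies Theorem~\ref{thm:non-psd-sup} conditionally on the masks, bounding $\sup_V|\bar\eta^\top M_V\bar\eta|-\sup_V|\Tr(M_V)|$.

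The gap is in your treatment of the conditional mean $\sup_V|\Tr(M_V)|$. You assume that, after the bias cancellation, it is a linear form in the centered masks, so that Theorem~\ref{thm:bounded-func-conc} applies. It is not linear. Write $\smpl_t=pI+\overline{P}_t$ and let $q_t$ be the diagonal of $\overline{P}_t$. Then the trace contains
\[
\sum_{t} q_{t+1}^\top\big((A^*G_t)\odot V\big)q_t\;-\;\sum_{t} q_t^\top\big(G_t\odot {A^*}^\top V\big)q_t ,
\]
which is a genuine second-order chaos in the Bernoulli variables. The first sum is bilinear across consecutive times, and the second is an indefinite quadratic within each time. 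Only the $(1-p)D$-correction piece collapses to a linear term, because $(\overline{P}_t)_{ii}^2=(1-2p)(\overline{P}_t)_{ii}+p(1-p)$. For the genuine chaos, Theorem~\ref{thm:bounded-func-conc} is not available: $q\mapsto\sup_V q^\top S(V)q$ is in general not convex, since each $q^\top S(V)q$ is an indefinite quadratic. Nor is its expected supremum a Gaussian width of a linear image of $\conesetB$. So the bound $J^2(A^*)p\sqrt{T}(\gamma_2+\sqrt{\log(1/\delta)})$ you assign to this piece is unjustified. The bound the paper actually obtains for it carries a $\gamma_1(\conesetB,\norm{\cdot}_2)$ term and no factor $p$.

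\textbf{The missing step.} You need a second chaos bound, this time in the masks (Lemma~\ref{lem:trace-uniform-P}).
\begin{itemize}
\item Collect the quadratic part as $f_1(V)=\bar q^\top\big(S_1(V)+S_2(V)+S_3(V)\big)\bar q$.
\item Its mean is $\expec f_1(V)=-p^2(1-p)\Tr\big(\Gamma^\top(I_{T-1}\otimes\diag(V^\top A^*))\Gamma\big)$, which exactly cancels the constant produced by the $D$-correction.
\item Apply Theorem~\ref{thm:non-psd-sup} to the centered vector $\bar q$ over $\{S_1(V)+S_2(V)+S_3(V):V\in\conesetB\}$. The map $V\mapsto S(V)$ is $(3-p)J^2(A^*)\norm{A^*}_2$-Lipschitz in spectral norm and $(3-p)J^2(A^*)\norm{A^*}_2\sqrt{T}$-Lipschitz in Frobenius norm.
\end{itemize}
This gives a deviation of order $J^2(A^*)\norm{A^*}_2\big(\gammafn{1}(\conesetB,\norm{\cdot}_2)+\gammafn{2}(\conesetB,\norm{\cdot}_F)\sqrt{T}+\sqrt{T\log(1/\delta)}+\log(1/\delta)\big)$. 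That is of the same order as the first line of the lemma, so the final statement survives once this step is done correctly. Only the remaining linear part ($f_2$ in the paper) goes to Theorem~\ref{thm:bounded-func-conc}, via Lemma~\ref{lem:dotprod-unifP}.
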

The complete proof is available in Appendix~\ref{appsec:rhs_proof_tangent-appendix}. Given Lemmas ~\ref{lem:LHS-bound} and ~\ref{lem:RHS-bound}, we can now complete the proof of Theorem~\ref{thm:main-thm}. Denote %
\[
v=J^4(A^*)\max \set{\gammafn{2}^2(\conesetB,\norm{\cdot}_F),\log^2 \left(1/\delta\right)}
\]
and let $T \ge \frac{c}{p^4} v$ for a sufficiently large constant $c$. Then, we can see that,
\begin{align*}
pJ&(A^*) \sqrt{T} \left(\gammafn{2}(\conesetB,\norm{\cdot}_F)+\log (1/\delta)\right) \le  p \sqrt{vT} \le \frac{1}{c} p^3 T, \; \text{and}\\
\bar{L}_1 &= C \left(\gammafn{2}^2(\conesetB,\norm{\cdot}_2) +\gammafn{2}(\conesetB,\norm{\cdot}_2) \sqrt{T}+\sqrt{T} \log \left(1/\delta\right)\right) \\
&\le C \left(\gammafn{2}^2(\conesetB,\norm{\cdot}_F) +\gammafn{2}(\conesetB,\norm{\cdot}_F) \sqrt{T}+\sqrt{T} \log \left(1/\delta\right)\right)\\
&\le  C(v+2\sqrt{vT}) \le \frac{3C}{c}p^2 T.
\end{align*}
Therefore for sufficiently large $c$ we have,
\begin{align*}
&p^2 T -c_1\bigg(\bar{L}_1+ p(1-p)\bar{L}_1 +pJ(A^*)\sqrt{T} \bigg(\gammafn{2}(\conesetB,\norm{\cdot}_F)+\log (1/\delta)\bigg)\bigg)\ge \frac{p^2 T}{2}.
\end{align*}
Hence, from Lemma~\ref{lem:LHS-bound}, we obtain,
\begin{align*}
&\norm{(B-\widehat{A})Z}_F^2-(1-p)\norm{(\widehat{A}-B)\diag^{1/2}(ZZ^\top)}_F^2 \ge \frac{p^2 T}{2} \norm{\widehat{A}-B}_F^2.
\end{align*}
Now combining this with Lemma~\ref{lem:RHS-bound}, and using Lemma~\ref{lem:B-bound}, we have,
\begin{align}\label{eq:stat-bound-interim}
\frac{p^2 T}{2} \norm{\widehat{A}-B}_F^2 \le &   c_1 \bigg(J^3(A^*) \norm{\widehat{A}-B}_F \bigg(\gammafn{1}(\conesetB,\norm{\cdot}_2)+\gammafn{2}(\conesetB,\norm{\cdot}_F)\sqrt{T}+\sqrt{T\log \left(1/\delta\right)}+\log (1/\delta)\bigg) \nonumber\\
&+J^2(A^*) \norm{\widehat{A}-B}_F\norm{A^*-B}_F \bigg(p T+\bar{L}_1 +p(1-p)^2 \bar{L}_1+p\sqrt{T} \bigg(\gammafn{2}(\conesetB,\norm{\cdot}_F)+\sqrt{\log (1/\delta)}\bigg)\bigg)\bigg).
\end{align}
where 
\begin{align*}
\bar{L}_1 &= C \left(\gammafn{2}^2(\conesetB,2) +\gammafn{2}(\conesetB,2) \sqrt{T}+\sqrt{T} \log \left(1/\delta\right)\right)
\end{align*}
for constants $C,c_1,c_2$.

Note that $T \ge \frac{c J^4(A^*)\max \set{\gammafn{2}^2(\conesetB,\norm{\cdot}_F),\log^2 \left(1/\delta\right) }}{p^2}$ (for sufficiently large $c$), implies 
\begin{align*}
    pT &\ge C \left(\gammafn{2}^2(\conesetB,2) +\gammafn{2}(\conesetB,2) \sqrt{T}+\sqrt{T} \log \left(1/\delta\right)\right) = \bar{L}_1, \\
    \text{and } \ pT &\ge p\sqrt{T} \bigg(\gammafn{2}(\conesetB,\norm{\cdot}_F)+\sqrt{\log (1/\delta)}\bigg).
\end{align*}
Therefore \eqref{eq:stat-bound-interim} shows that there exists a constant $c_2 > 1$ such that,
\begin{align*}
\frac{p^2 T}{2} \norm{\widehat{A}-B}_F^2 \le   c_2 &\bigg(J^3(A^*) \norm{\widehat{A}-B}_F \bigg(\gammafn{1}(\conesetB,\norm{\cdot}_2)+\gammafn{2}(\conesetB,\norm{\cdot}_F)\sqrt{T}+\sqrt{T\log \left(1/\delta\right)}+\log (1/\delta)\bigg) \\
&+p J^2(A^*) \norm{\widehat{A}-B}_F\norm{A^*-B}_F T\bigg).
\end{align*}
Dividing both sides by $\frac{p^2 T}{2} \norm{\widehat{A}-B}_F$ we get the desired result.

\section{Proof of Lemma~\ref{lem:B-bound}}\label{appsec:main-inequal}
\begin{proof}
Consider the optimal solution $\widehat{A} \in \calK$ that minimizes $g(A)$ in \eqref{eq:unb-est}, and any $B \in \calK$. Since $B$ is  feasible, hence
\begin{equation}\label{eq:feas-opt-B}
\norm{\widetilde{Z}-\widehat{A}Z}_F^2-(1-p)\norm{\widehat{A}  D}_F^2 \le \norm{\widetilde{Z}-BZ}_F^2-(1-p)\norm{B  D}_F^2.
\end{equation}

Intercalating by $B$, the LHS of \eqref{eq:feas-opt-B} can be written as
\begin{align*}
&\norm{\widetilde{Z}-\widehat{A}Z}_F^2-(1-p)\norm{\widehat{A}  D}_F^2 = \norm{\widetilde{Z}-BZ+(B-\widehat{A})Z}_F^2-(1-p)\norm{B  D+(\widehat A-B)D}_F^2 \\
&= \left( \norm{\widetilde{Z}-BZ}_F^2-(1-p)\norm{B  D}_F^2\right)+\norm{(B-\widehat{A})Z}_F^2+2\dotprod{\widetilde{Z}-BZ}{(B-\widehat{A})Z}\\
&-(1-p)\norm{(\widehat{A}-B)D}_F^2-2(1-p)\dotprod{B  D}{(\widehat{A}-B)D}.
\end{align*}
Substituting the above into ~\eqref{eq:feas-opt-B}, and further intercalating by $A^*$, we have
\begin{align}\label{eq:prelim-AB-ineq}
&\norm{(B-\widehat{A})Z}_F^2-(1-p)\norm{(\widehat{A}-B)D}_F^2 \le 2\dotprod{\widetilde{Z}-BZ}{(\widehat{A}-B)Z}+2(1-p)\dotprod{B  D}{(\widehat{A}-B)D}\\
&\le 2\dotprod{\widetilde{Z}-A^*Z}{(\widehat{A}-B)Z}+2(1-p)\dotprod{A^*  D}{(\widehat{A}-B)D}\nonumber\\
&+2\dotprod{\left(A^*-B\right)Z}{(\widehat{A}-B)Z}-2(1-p)\dotprod{\left(A^*-B \right) D}{(\widehat{A}-B)D}\nonumber.
\end{align}

From the Cauchy-Schwartz inequality, we have
\begin{align*}
\dotprod{\left(A^*-B\right)Z}{(\widehat{A}-B)Z} &\le \norm{\left(A^*-B\right)Z}_F\norm{(\widehat{A}-B)Z}_F, \\
-\dotprod{\left(A^*-B \right) D}{(\widehat{A}-B)D} 
&\le \norm{(A^*-B)D}_F \norm{(\widehat{A}-B)D}_F.
\end{align*}
Combining this with ~\eqref{eq:prelim-AB-ineq} yields
\begin{align*}
&\norm{(B-\widehat{A})Z}_F^2-(1-p)\norm{(\widehat{A}-B)D}_F^2 \le 2\dotprod{\widetilde{Z}-A^*Z}{(\widehat{A}-B)Z}+2(1-p)\dotprod{A^*  D}{(\widehat{A}-B)D}\\
&+2(1-p)\norm{(A^*-B)D}_F \norm{(\widehat{A}-B)D}_F +2\norm{\left(A^*-B\right)Z}_F\norm{(\widehat{A}-B)Z}_F.
\end{align*}
\end{proof}

\section{Proof of Lemma~\ref{lem:LHS-bound}} \label{appsec:RE_proof_tangent}

\begin{proof}
In order to prove Lemma~\ref{lem:LHS-bound}, we start by observing the following. 

\begin{claim}\label{clm:quad-frob-norm}
Let $\breve{P} = \matrixenc{P_1 & 0 & \dots & 0\\ 0 & P_2 & \dots & 0\\\vdots & \vdots & \dots & \vdots\\ 0 & 0 & \dots & P_{T-1}}$, and $\optA = \frac{\widehat{A}-B}{\norm{\widehat{A}-B}_F}$. Then,
\begin{equation*}
    \norm{\left(\widehat{A}-B\right)Z}_F^2= \norm{\widehat{A}-B}_F^2  \eta^\top \Gamma^\top \left(\breve{P} \left(I_{T-1} \otimes \left(\optA^\top \optA\right)\right)\breve{P}\right)\Gamma \eta,
\end{equation*}
and 
\begin{equation*}
    \norm{\left(\widehat{A}-B\right)\diag^{1/2} \left(ZZ^\top\right)}_F^2=\norm{\widehat{A}-B}_F^2  \eta^\top \Gamma^\top \left(\breve{P} \left(I_{T-1} \otimes \diag\left(\optA^\top \optA\right)\right)\right)\Gamma \eta.
\end{equation*}
\end{claim}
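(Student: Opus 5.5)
The plan is to reduce both identities to a single vectorization step, using that the $\smpl_t$ are diagonal $0/1$ matrices. Write $\optA = (\widehat{A}-B)/\norm{\widehat{A}-B}_F$. By homogeneity, $\norm{(\widehat{A}-B)Z}_F^2 = \norm{\widehat{A}-B}_F^2\norm{\optA Z}_F^2$, and similarly with $D$ in place of $Z$. So it suffices to prove both identities for $\optA$ with the factor $\norm{\widehat{A}-B}_F^2$ removed. (If $\widehat{A}=B$ both sides vanish, so we may assume $\widehat{A}\neq B$.)

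First we identify $\vect(Z)$. The $t$-th column of $Z$ is $\smpl_t y_t = \smpl_t\smpl_t x_t = \smpl_t x_t$, since $\smpl_t$ is diagonal with $0/1$ entries and hence idempotent. Stacking columns gives
\begin{equation*}
\vect(Z) = \breve{P}\,\vect(X) = \breve{P}x.
\end{equation*}
Since $x_0=0$, unrolling \eqref{eq:var_eq} gives $x_t=\sum_{s=1}^{t}(A^*)^{t-s}\eta_s$ for $t=1,\dots,T-1$. This is exactly the statement $x=\Gamma\eta$ recorded in Appendix~\ref{sec:prelim-tools}, with $\eta$ built from $\eta_1,\dots,\eta_{T-1}$. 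Hence $\vect(Z)=\breve{P}\Gamma\eta$. The only point needing care is this index bookkeeping, namely that $Z$ uses $x_1,\dots,x_{T-1}$ and $\Gamma$ has $T-1$ block rows.

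For the first identity we use the column decomposition
\begin{equation*}
\norm{\optA Z}_F^2=\sum_{t=1}^{T-1}\norm{\optA\,\smpl_t x_t}_2^2=\sum_{t=1}^{T-1}(\smpl_t x_t)^\top \optA^\top\optA\,(\smpl_t x_t).
\end{equation*}
The right-hand side equals $\vect(Z)^\top\big(I_{T-1}\otimes \optA^\top\optA\big)\vect(Z)$, because $I_{T-1}\otimes\optA^\top\optA$ is block diagonal with blocks $\optA^\top\optA$. Substituting $\vect(Z)=\breve{P}\Gamma\eta$ and using $\breve{P}^\top=\breve{P}$ yields $\eta^\top\Gamma^\top\breve{P}(I_{T-1}\otimes\optA^\top\optA)\breve{P}\Gamma\eta$, as claimed.

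For the second identity, set $D=\diag^{1/2}(ZZ^\top)$. Using cyclicity of the trace and $D^2=\diag(ZZ^\top)$,
\begin{equation*}
\norm{\optA D}_F^2=\Tr\big(\optA^\top\optA\,\diag(ZZ^\top)\big)=\Tr\big(\diag(\optA^\top\optA)\,ZZ^\top\big).
\end{equation*}
The second equality holds because the trace of a product with a diagonal matrix sees only the diagonal of the other factor. Expanding $ZZ^\top=\sum_t \smpl_t x_t x_t^\top\smpl_t$ gives
\begin{equation*}
\norm{\optA D}_F^2=\sum_{t} x_t^\top\smpl_t\,\diag(\optA^\top\optA)\,\smpl_t x_t .
\end{equation*}
Since $\smpl_t$ and $\diag(\optA^\top\optA)$ are both diagonal, they commute, and $\smpl_t^2=\smpl_t$. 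Therefore $\smpl_t\diag(\optA^\top\optA)\smpl_t=\smpl_t\diag(\optA^\top\optA)$, so only a single factor of $\breve{P}$ remains. In block form the sum is $x^\top\breve{P}(I_{T-1}\otimes\diag(\optA^\top\optA))x$. Substituting $x=\Gamma\eta$ gives the stated expression.

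Nothing here is deep; the argument is bookkeeping once $\vect(Z)=\breve{P}\Gamma\eta$ is in hand. The main things to check carefully are this index alignment and the collapse of $\breve{P}\,\cdot\,\breve{P}$ to a single $\breve{P}$ in the second identity. That collapse relies on the inner matrix being diagonal, which is why it happens for the $D$ term but not for the first identity.
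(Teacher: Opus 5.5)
Your proposal is correct and follows the paper's proof essentially step for step. Like the paper, you identify $\vect(Z)=\breve{P}\Gamma\eta$, write both squared norms as columnwise quadratic forms $\sum_t Z_{:,t}^\top(\cdot)Z_{:,t}$, and assemble them through the block-diagonal matrix $I_{T-1}\otimes(\cdot)$. You also make explicit the collapse $\breve{P}\bigl(I_{T-1}\otimes\diag(\optA^\top\optA)\bigr)\breve{P}=\breve{P}\bigl(I_{T-1}\otimes\diag(\optA^\top\optA)\bigr)$, which is a useful small addition, since the paper's proof stops at the two-sided form while the statement displays a single $\breve{P}$.
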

The proof of Claim~\ref{clm:quad-frob-norm} is deferred to Appendix~\ref{appsec:clm-quad-frob-norm}. Letting $\optA = \frac{\widehat{A}-B}{\norm{\widehat{A}-B}_F}$, Claim~\ref{clm:quad-frob-norm} implies, 
\begin{align*}
&\norm{\optA Z}_F^2= \eta^\top \Gamma^\top \left(\breve{P} \left(I_{T-1} \otimes \left(\optA^\top \optA\right)\right)\breve{P}\right)\Gamma \eta = \norm{\left(\left(I_{T-1} \otimes \optA\right)\breve{P}\right)\Gamma \eta}_2^2,\\ 
&\norm{\optA\diag^{1/2} \left(ZZ^\top\right)}_F^2=\eta^\top \Gamma^\top \left(\breve{P} \left(I_{T-1} \otimes \diag\left(\optA^\top \optA\right)\right) \breve{P}\right)\Gamma \eta = \norm{\left(I_{T-1} \otimes \diag^{1/2}\left(\optA^\top \optA\right)\right) \breve{P}\Gamma \eta}_2^2.
\end{align*}
Therefore, we can see that,
\begin{align*}
&\norm{(\widehat{A}-B)Z}_F^2-(1-p)\norm{(\widehat{A}-B)\diag^{1/2} \left(ZZ^\top\right)}_F^2 \\
& = \norm{\widehat{A}-B}_F^2 \bigg(\norm{\optA Z}_F^2-(1-p)\norm{\optA\diag^{1/2} \left(ZZ^\top\right)}_F^2\bigg)\\
&= \norm{\widehat{A}-B}_F^2 \bigg(\norm{\left(I_{T-1} \otimes \optA\right) \breve{P}\Gamma \eta}_2^2-(1-p)\norm{\left(I_{T-1} \otimes \diag^{1/2}\left(\optA^\top \optA\right)\right) \breve{P}\Gamma \eta}_2^2\bigg).
\end{align*}
A lower bound on this can be achieved by first establishing a lower bound on $\norm{\left(I_{T-1} \otimes \optA \right) \breve{P}\Gamma \eta}_2^2$, and then an upper bound on $\norm{\left(I_{T-1} \otimes \diag^{1/2}\left(\optA^\top \optA\right)\right) \breve{P}\Gamma \eta}_2^2$. In the standard full-observation setting, this can be achieved through a careful application of Theorem~\ref{thm:psd-sup} on the quadratics formed in Claim~\ref{clm:quad-frob-norm}, since the entries of $\eta$ are mean $0$, variance $1$ and subgaussian. However, in our setting, a direct application of Theorem~\ref{thm:psd-sup} does not work as we have randomness over $(P_t)_{t=1}^T$ as well as $\eta$. 

\paragraph{Deriving bounds conditional on $(P_t)_{t=1}^T$.} We start by stating two key lemmas that enable us to bound the $\norm{AZ}_F^2-(1-p)\norm{A\diag^{1/2} \left(ZZ^\top\right)}_F^2$ conditioned on $(P_t)_{t=1}^T$.

\begin{lemma}\label{lem:M1-expec}
Let $\calM_1 = \set{\left(I_{T-1} \otimes A\right)\breve{P}\Gamma: A \in \conesetB}$. There exist constants $\bar{c}_1,\bar{C}_2 > 1$, such that for any $\delta \in (0,1)$, conditional on $(P_t)_{t=1}^T$, it holds with probability at least $1-\delta$ that  
\[
\forall \ M_1\in \calM_1: \ 
\norm{M_1\eta}_2^2 \in \left( \norm{M_1}_F^2  - \bar{c}_1 \left(\bar{F}_1+\bar{V}_1\log (1/\delta)\right),\norm{M_1}_F^2  + \bar{c}_1 \left(\bar{F}_1+\bar{V}_1\log (1/\delta)\right)\right),
\]
where, 
\begin{align*}
\bar{F}_1&=\bar{C}_2^2 J^2(A^*) \left(\gammafn{2}^2(\conesetB,\norm{\cdot}_2) +\gammafn{2}(\conesetB,\norm{\cdot}_2) \sqrt{T}+\sqrt{T}\right), \;\text{and},\\
\bar{V}_1&=\bar{C}_2 J^2(A^*)\left(\gammafn{2}(\conesetB,\norm{\cdot}_2) +\sqrt{T}\right).
\end{align*}
\end{lemma}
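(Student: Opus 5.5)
Conditional on $(P_t)_{t=1}^T$, the matrices $M_1 = (I_{T-1}\otimes A)\breve{P}\Gamma$ are deterministic and $\eta$ has independent, zero-mean, unit-variance $L$-subgaussian entries. Hence $\expec[\norm{M_1\eta}_2^2 \mid (P_t)] = \norm{M_1}_F^2$, and Theorem~\ref{thm:psd-sup} applies directly to the set $\calM_1$. The plan is to bound the three complexity parameters $d_2(\calM_1)$, $d_F(\calM_1)$ and $\gammafn{2}(\calM_1,\norm{\cdot}_2)$ uniformly over all realizations of $(P_t)$. The resulting $F,V,U$ then give the stated $\bar F_1,\bar V_1$, and the bound on $\bar F_1$ does not depend on the conditioning.

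\emph{Bounding $d_2$ and $d_F$.} For $A\in\conesetB$ we have $\norm{A}_2\le\norm{A}_F=1$. Since each $P_t$ is a 0/1 diagonal matrix, $\norm{\breve P}_2\le 1$, and $\norm{\Gamma}_2\le J(A^*)$. Hence $\norm{M_1}_2\le\norm{I_{T-1}\otimes A}_2\norm{\breve P}_2\norm{\Gamma}_2\le J(A^*)$. For the Frobenius norm,
\[
\norm{M_1}_F\le \norm{(I_{T-1}\otimes A)\breve P}_F\,\norm{\Gamma}_2 \le \Big(\sum_{t}\norm{AP_t}_F^2\Big)^{1/2}J(A^*)\le \sqrt{T}\,J(A^*).
\]
So $d_2(\calM_1)\le J(A^*)$ and $d_F(\calM_1)\le \sqrt T J(A^*)$.

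\emph{Bounding $\gammafn{2}$.} I will use property~\ref{itm:prop-3} with the onto map $f:A\mapsto (I_{T-1}\otimes A)\breve P\Gamma$ from $(\conesetB,\norm{\cdot}_2)$ to $(\calM_1,\norm{\cdot}_2)$. The map is Lipschitz with constant $J(A^*)$, because
\[
\norm{f(A)-f(A')}_2\le \norm{I_{T-1}\otimes(A-A')}_2\,J(A^*)=\norm{A-A'}_2\,J(A^*).
\]
This gives $\gammafn{2}(\calM_1,\norm{\cdot}_2)\lesssim J(A^*)\gammafn{2}(\conesetB,\norm{\cdot}_2)$.

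\emph{Assembling.} Substituting these bounds into Theorem~\ref{thm:psd-sup}:
\begin{itemize}
\item $F\lesssim J^2(A^*)\big(\gammafn{2}^2+\gammafn{2}\sqrt T+\sqrt T\big)$, which is $\bar F_1$.
\item $V\lesssim J^2(A^*)(\gammafn{2}+\sqrt T)$, which is $\bar V_1$.
\item $U\le J^2(A^*)$.
\end{itemize}
Choose $t=\bar V_1\log(1/\delta)+U\log(1/\delta)$. Then $\min\{t^2/V,t/U\}\gtrsim \log(1/\delta)$, since $t^2/V\ge \bar V_1\log^2(1/\delta)/V\gtrsim\log(1/\delta)$ up to constants (using $\bar V_1\ge V/C$) and $t/U\ge\log(1/\delta)$. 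The tail probability is therefore at most $2\delta^{c_2}$. Rescaling $\delta$ and adjusting constants, and absorbing $U\le\bar V_1$ (which holds because $\sqrt T\ge1$), gives the two-sided bound with deviation $\bar c_1(\bar F_1+\bar V_1\log(1/\delta))$.

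\emph{Main obstacle.} The only delicate point is the bookkeeping in the tail parameter. The factor $2$ and the constant $c_2$ must be absorbed. This requires $\log(1/\delta)\gtrsim 1$, or else using $\log(2/\delta)\le 2\log(1/\delta)$ for small $\delta$, which I would handle by inflating $\bar c_1$. I also need to confirm that all bounds are uniform in $(P_t)$, so that the conditional statement holds for every realization. Both checks are routine.
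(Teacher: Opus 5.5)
Your proposal is correct and follows the paper's proof essentially step for step. You condition on $(P_t)_{t=1}^T$, apply Theorem~\ref{thm:psd-sup} to $\calM_1$, bound $d_2(\calM_1)\le J(A^*)$ and $d_F(\calM_1)\le J(A^*)\sqrt{T}$, and transfer $\gammafn{2}(\calM_1,\norm{\cdot}_2)\lesssim J(A^*)\gammafn{2}(\conesetB,\norm{\cdot}_2)$ through the $J(A^*)$-Lipschitz onto map and Property~\ref{itm:prop-3}, exactly as the paper does. The only difference is cosmetic: the paper takes $t=V_1\sqrt{u}$ with $u\asymp\log^2(1/\delta)$ and uses $V_1\ge U_1$, and your concern about $\delta$ near $1$ is harmless because $\bar F_1\gtrsim \bar V_1$, so enlarging $\bar c_1$ absorbs it.
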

\begin{lemma}\label{lem:M2-expec}
Let $\calM_2 = \set{\left(I_{T-1} \otimes \diag^{1/2}\left(A^\top A\right)\right)\breve{P}\Gamma: A \in \conesetB}$. There exist constants $\bar{c}_1,\bar{C}_2 > 1$, such that for any $\delta \in (0,1)$, conditional on $(P_t)_{t=1}^T$, it holds with probability at least $1-\delta$ that 
\[
\forall \ M_2 \in \calM_2: \ (1-p)\norm{M_2\eta}_2^2 \le (1-p)\norm{M_2}_F^2+\bar{c}_1(1-p) \left(\bar{F}_1 + \bar{V}_1\log (1/\delta) \right),
\]
where, 
\begin{align*}
\bar{F}_1&=\bar{C}_2^2 J^2(A^*) \left(\gammafn{2}^2(\conesetB,\norm{\cdot}_2) +\gammafn{2}(\conesetB,\norm{\cdot}_2) \sqrt{T}+\sqrt{T}\right), \; \text{and},\\
\bar{V}_1&=\bar{C}_2 J^2(A^*)\left(\gammafn{2}(\conesetB,\norm{\cdot}_2) +\sqrt{T}\right).
\end{align*}
\end{lemma}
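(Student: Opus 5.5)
We apply Theorem~\ref{thm:psd-sup} conditionally on $(P_t)_{t=1}^T$, with $\nu=\eta$ and the matrix set $\calM_2$. Conditioning is legitimate because the masks are independent of the noise $\eta$, so given $(P_t)$ the vector $\eta$ still has independent, centered, unit-variance $L$-subgaussian entries. Moreover, $\expec[\norm{M_2\eta}_2^2 \mid (P_t)] = \norm{M_2}_F^2$. It then remains to bound the three parameters $d_2(\calM_2)$, $d_F(\calM_2)$ and $\gammafn{2}(\calM_2,\norm{\cdot}_2)$, and to choose $t$ so that the failure probability is at most $\delta$. We need only the upper tail of $\sup_{M_2\in\calM_2}\absenc{\norm{M_2\eta}_2^2-\norm{M_2}_F^2}$. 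Multiplying through by $(1-p)\ge 0$ gives the stated form.

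\textbf{Parameter bounds.} For $A\in\conesetB$, write $D_A:=\diag^{1/2}(A^\top A)$. Its diagonal entries are the column norms of $A$. Hence
\[
\norm{D_A}_2=\max_j\norm{A e_j}_2\le \norm{A}_2 \quad\text{and}\quad \norm{D_A}_F=\norm{A}_F=1 .
\]
Since $\breve{P}$ is a $0/1$ diagonal matrix and $\norm{\Gamma}_2\le J(A^*)$, we get
\[
\norm{M_2}_2\le \norm{A}_2 J(A^*)\le J(A^*),
\qquad
\norm{M_2}_F\le \norm{I_{T-1}\otimes D_A}_F\,\norm{\breve P\Gamma}_2\le \sqrt{T}\,J(A^*).
\]
So $d_2(\calM_2)\le J(A^*)$ and $d_F(\calM_2)\le \sqrt{T}J(A^*)$.

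For the $\gamma_2$ term, we use Property~\ref{itm:prop-3} with the onto map $A\mapsto (I_{T-1}\otimes D_A)\breve P\Gamma$ from $(\conesetB,\norm{\cdot}_2)$. The required Lipschitz estimate is
\[
\norm{M_2(A)-M_2(A')}_2\le J(A^*)\max_j\bigl|\norm{Ae_j}_2-\norm{A'e_j}_2\bigr|\le J(A^*)\max_j\norm{(A-A')e_j}_2\le J(A^*)\norm{A-A'}_2 ,
\]
using the reverse triangle inequality. This gives $\gammafn{2}(\calM_2,\norm{\cdot}_2)\lesssim J(A^*)\gammafn{2}(\conesetB,\norm{\cdot}_2)$.

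\textbf{Assembling.} Plugging these into Theorem~\ref{thm:psd-sup} gives
\begin{align*}
F&\lesssim J^2(A^*)\left(\gammafn{2}^2(\conesetB,\norm{\cdot}_2)+\gammafn{2}(\conesetB,\norm{\cdot}_2)\sqrt T+\sqrt T\right)\lesssim\bar F_1,\\
V&\lesssim J^2(A^*)\left(\gammafn{2}(\conesetB,\norm{\cdot}_2)+\sqrt T\right)\lesssim \bar V_1,\\
U&\le J^2(A^*)\le \bar V_1 .
\end{align*}
Take $t=\bar V_1\log(2/\delta)/c_2'$ with $c_2'$ small enough. Because $\log(2/\delta)\ge \log 2$ and $U\le V\le \bar V_1$, both $t^2/V$ and $t/U$ are at least a large multiple of $\log(2/\delta)$. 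The tail bound $2\exp(-c_2\min\{t^2/V,t/U\})$ is therefore at most $\delta$. The $\log 2$ factor is absorbed into $\bar c_1$ using $\log(2/\delta)\lesssim 1+\log(1/\delta)$, and the leftover constant $\bar V_1$ is absorbed into $\bar F_1$, since $\bar V_1\le\bar F_1$ once $\bar C_2\ge1$.

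\textbf{Main obstacle.} The only delicate step is the Lipschitz estimate for $A\mapsto D_A$ in operator norm. It is handled by the column-norm reverse triangle inequality above, together with the fact that the Kronecker product with $I_{T-1}$ does not change the spectral norm. Everything else is routine bookkeeping, and the whole argument mirrors Lemma~\ref{lem:M1-expec}.
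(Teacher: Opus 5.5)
Your proposal is correct and follows the paper's proof essentially step for step. Both apply Theorem~\ref{thm:psd-sup} conditionally on $(P_t)_{t=1}^T$, bound $d_2(\calM_2)\le J(A^*)$ and $d_F(\calM_2)\le J(A^*)\sqrt{T}$, and obtain the $J(A^*)$-Lipschitz estimate through the column-norm reverse triangle inequality together with Property~\ref{itm:prop-3}. The only cosmetic difference is that you take $t\asymp \bar V_1\log(2/\delta)$ directly, whereas the paper sets $t=V_2\sqrt{u}$ with $u=c\log^2(2/\delta)$; this gives the same $\bar V_1\log(1/\delta)$ term and avoids the paper's implicit requirement $u\ge 1$.
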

The proofs of Lemma~\ref{lem:M1-expec} and Lemma~\ref{lem:M2-expec} follow from an application of Theorem~\ref{thm:psd-sup} on appropriately defined sets of matrices. The complete proofs are available in Appendix~\ref{appsec:lem-M1-expec} and Appendix~\ref{appsec:lem-M2-expec} respectively.

For any $\delta \in (0,1)$, denote the events
\begin{align*}
\eventp{1a} &\coloneq \set{\forall M_1 \in \calM_1: \ \norm{M_1\eta}_2^2 \in \left( \norm{M_1}_F^2  - \bar{c}_1 \left(\bar{F}_1+\bar{V}_1\log (1/\delta)\right),\norm{M_1}_F^2  + \bar{c}_1 \left(\bar{F}_1+\bar{V}_1\log (1/\delta)\right)\right)}, \; \text{and,} \\
\eventp{2a} &\coloneq \set{\forall M_2 \in \calM_2: \ 
(1-p)\norm{M_2\eta}_2^2 \le (1-p)\norm{M_2}_F^2+\bar{c}_1(1-p) \left(\bar{F}_1 + \bar{V}_1\log (1/\delta) \right)
}.
\end{align*} 
Lemmas~\ref{lem:M1-expec} and ~\ref{lem:M2-expec} give us, 
$$\prob \left(\eventp{1a} \vert (P_t)_{t=1}^T \right) \ge 1-\delta \text{ and } \prob \left(\eventp{2a} \vert (P_t)_{t=1}^T\right) \ge 1-\delta.$$ 
Let $\optA = \frac{\widehat{A}-B}{\norm{\widehat{A}-B}_F} \in \conesetB$ and let $\bar{M}_1=(I_{T-1} \otimes \optA)\breve{P} \Gamma \in \calM_1$ and $\bar{M}_2 = \left(I_{T-1} \otimes \diag^{1/2} (\optA^\top \optA)\right)\breve{P}\Gamma \in \calM_2$. 
Then on $\eventp{1a} \cap \eventp{2a}$ we have 
\begin{align*}
\norm{\bar{M}_1\eta}_{2}^2 - (1-p)\norm{\bar{M}_2\eta}_{2}^2 &\ge \norm{\left(I_{T-1} \otimes \optA\right)\breve{P}\Gamma}_F^2- \bar{c}_1\left(\bar{F}_1+\bar{V}_1\log (1/\delta)\right)\\
&-(1-p)\norm{\left(I_{T-1} \otimes \diag^{1/2}(\optA^\top \optA)\right)\breve{P}\Gamma}_F^2
- (1-p)\bar{c}_1\left(\bar{F}_1+\bar{V}_1\log (1/\delta)\right)
\end{align*}
and letting $\overline{P}=\breve{P}-pI$, we obtain
\begin{align*}
\norm{\bar{M}_1\eta}_{2}^2 - (1-p)\norm{\bar{M}_2\eta}_{2}^2\ge &\norm{\left(I_{T-1} \otimes \optA\right)\overline{P}\Gamma+p\left(I_{T-1} \otimes \optA\right)\Gamma}_F^2- c\left(\bar{F}_1+\bar{V}_1\log (1/\delta)\right)\\
&-(1-p)\norm{\left(I_{T-1} \otimes \diag^{1/2}(\optA^\top \optA)\right)\overline{P}\Gamma+p\left(I_{T-1} \otimes \diag^{1/2}(\optA^\top \optA)\right)\Gamma}_F^2,
\end{align*}
for some universal constant $c$. Expanding this gives us,
\begin{align}\label{eq:cond-bound}
&\norm{\bar{M}_1\eta}_{2}^2 - (1-p)\norm{\bar{M}_2\eta}_{2}^2  \nonumber \\
&\ge \norm{\left(I_{T-1} \otimes \optA\right)\overline{P}\Gamma}_F^2+ p^2 \norm{\left(I_{T-1} \otimes \optA\right)\Gamma}_F^2+2p\dotprod{\left(I_{T-1} \otimes \optA\right)\overline{P}\Gamma}{\left(I_{T-1} \otimes \optA\right)\Gamma}  \nonumber\\
&-(1-p)\norm{\left(I_{T-1} \otimes \diag^{1/2}\left(\optA^\top \optA\right)\right)\overline{P}\Gamma}_F^2-p^2(1-p)\norm{\left(I_{T-1} \otimes \diag^{1/2}\left(\optA^\top \optA\right)\right)\Gamma}_F^2\nonumber\\
&-2p(1-p)\dotprod{\left(I_{T-1} \otimes \diag^{1/2}\left(\optA^\top \optA\right)\right)\overline{P}\Gamma}{\left(I_{T-1} \otimes \diag^{1/2}\left(\optA^\top \optA\right)\right)\Gamma}-c \left(\bar{F}_1+\bar{V}_1\log (1/\delta)\right).
\end{align}

\paragraph{Deriving bounds uniformly over all $A \in \conesetB$.} We now need to lower-bound the terms in the RHS of \eqref{eq:cond-bound}, uniformly over $A \in \conesetB$. We start by defining a set of intermediate structures. Consider $\Gamma \Gamma^\top \in \matR^{n(T-1) \times n(T-1)}$. Let $G_1,G_2,\dots,G_{T-1} \in \matR^{n \times n}$ be a set of matrices where $G_t$ is the $t$-th block matrix on the diagonal of $\Gamma \Gamma^\top$, i.e.,
\[
G_t = \matrixenc{
(\Gamma \Gamma^\top)_{(t-1)n+1,(t-1)n+1} & (\Gamma \Gamma^\top)_{(t-1)n+1,(t-1)n+2} & \dots & (\Gamma \Gamma^\top)_{(t-1)n+1,tn}\\
(\Gamma \Gamma^\top)_{(t-1)n+2,(t-1)n+1} & (\Gamma \Gamma^\top)_{(t-1)n+2,(t-1)n+2} & \dots & (\Gamma \Gamma^\top)_{(t-1)n+2,tn}\\
\vdots & \vdots & \dots & \vdots\\
(\Gamma \Gamma^\top)_{tn,(t-1)n+1} & (\Gamma \Gamma^\top)_{tn,(t-1)n+2} & \dots & (\Gamma \Gamma^\top)_{tn,tn}\\
}.
\]
Let $G = \blkdiag(\matrixenc{ G_1 & G_2 & \dots & G_{T-1} })$. Also for all $t \in [T-1]$, let $q_t= \matrixenc{\left(\overline{P}_t\right)_{11} \\ \left(\overline{P}_t\right)_{22} \\ \vdots \\ \left(\overline{P}_t\right)_{nn}} \in \matR^n$. Then for any $A \in \conesetB$,
\begin{align*}
\norm{\left(I_{T-1} \otimes A\right)\overline{P}\Gamma}_F^2 &= \dotprod{\overline{P}\left(I_{T-1} \otimes \left(A^\top A\right)\right)\overline{P}}{\Gamma \Gamma^\top}\\
&=\sum_{t=1}^{T-1} \dotprod{G_t}{\overline{P}_t A^\top A \overline{P}_t}\\
&=\sum_{t=1}^{T-1} q_t^\top \left(G_t \odot A^\top A\right)q_t \\
& = q^\top \left(G \odot \left(I_{T-1} \otimes \left(A^\top A\right)\right)\right) q,
\end{align*}
where $q = \vect\left(\matrixenc{q_1 & q_2 & \dots & q_{T-1}}\right) \in \matR^{n(T-1)}$. Note that since $G_t$ is positive semi-definite, by Schur Product Theorem, $\left(G_t \odot A^\top A\right)$ is positive semi-definite for all $t \in [T-1]$ and therefore $\left(G \odot \left(I_{T-1} \otimes \left(A^\top A\right)\right)\right)$ is positive semi-definite. A similar argument shows that,
\begin{align*}
\norm{\left(I_{T-1} \otimes \diag^{1/2}\left(A^\top A\right)\right)\overline{P}\Gamma}_F^2 &= \dotprod{\overline{P}\left(I_{T-1} \otimes \diag\left(A^\top A\right)\right)\overline{P}}{\Gamma \Gamma^\top}\\
&=\sum_{t=1}^{T-1} \dotprod{G_t}{\overline{P}_t \diag\left(A^\top A\right) \overline{P}_t}\\
&=\sum_{t=1}^{T-1} q_t^\top \left(G_t \odot \diag\left(A^\top A\right)\right)q_t \\
& = q^\top \left(G \odot \left(I_{T-1} \otimes \diag\left(A^\top A\right)\right)\right) q,
\end{align*}
where as before, $\left(G \odot \left(I_{T-1} \otimes \diag\left(A^\top A\right)\right)\right)$ is a positive semi-definite matrix. Furthermore, we have,
\begin{align*}
\dotprod{\left(I_{T-1} \otimes A\right)\overline{P}\Gamma}{\left(I_{T-1} \otimes A\right)\Gamma}
&=\dotprod{\overline{P}}{\left(I_{T-1} \otimes \left(A^\top A\right)\right)\Gamma \Gamma^\top},
\end{align*}
and,
\begin{align*}
\dotprod{\left(I_{T-1} \otimes \diag^{1/2}\left(A^\top A\right)\right)\overline{P}\Gamma}{\left(I_{T-1} \otimes \diag^{1/2}\left(A^\top A\right)\right)\Gamma}
&=\dotprod{\overline{P}}{\left(I_{T-1} \otimes \diag\left(A^\top A\right)\right)\Gamma \Gamma^\top}.
\end{align*}
Therefore,
\begin{align*}
&\norm{AZ}_{F}^2 - (1-p)\norm{A\diag^{1/2}(ZZ^\top)}_{F}^2 \\
&\ge q^\top \left(G \odot \left(I_{T-1} \otimes \left(A^\top A\right)\right)\right) q-(1-p)q^\top \left(G \odot \left(I_{T-1} \otimes \diag \left(A^\top A\right)\right)\right) q\\
&+ p^2 \norm{\left(I_{T-1} \otimes A\right)\Gamma}_F^2-p^2(1-p)\norm{\left(I_{T-1} \otimes \diag^{1/2}\left(A^\top A\right)\right)\Gamma}_F^2\\
&+2p\dotprod{\overline{P}}{\left(I_{T-1} \otimes \left(A^\top A-(1-p)\diag\left(A^\top A\right)\right)\right)\Gamma \Gamma^\top}-c_1\bar{F}_1-\bar{V}_1\sqrt{U}.
\end{align*}

Now we need to uniformly bound,
\begin{enumerate}
\item $q^\top \left(G \odot \left(I_{T-1} \otimes \left(A^\top A\right)\right)\right) q$,
\item $(1-p)q^\top \left(G \odot \left(I_{T-1} \otimes \diag \left(A^\top A\right)\right)\right) q$, and,
\item $2p\dotprod{\overline{P}}{\left(I_{T-1} \otimes \left(A^\top A-(1-p)\diag\left(A^\top A\right)\right)\right)\Gamma \Gamma^\top}$,
\end{enumerate}
over all $A \in \conesetB$. We establish the following three lemmas which allows us to control each of the given three terms.

\begin{lemma}\label{lem:M3-expec}
There exist  constants $\bar{c}_2,\bar{C}_2 > 1$, such that for any $\delta \in (0,1)$, with probability at least $1-\delta$, it holds for all $A \in \conesetB$ 
\[
q^\top \left(G \odot \left(I_{T-1} \otimes (A^\top A)\right)\right) q \in p(1-p) \left(\norm{\left(G \odot \left(I_{T-1} \otimes (A^\top A)\right)\right)^{1/2}}_F^2\pm \bar{c}_2\left(\bar{F}_1+\bar{V}_1\log (1/\delta)\right)\right),
\]
where
\begin{equation*}
\bar{F}_1 = \bar{C}_2^2 J^2(A^*) \left( \gammafn{2}^2(\conesetB,\norm{\cdot}_2) +\gammafn{2}(\conesetB,\norm{\cdot}_2) \sqrt{T}+\sqrt{T}\right),
\bar{V}_1 = \bar{C}_2 J^2(A^*)\left(\gammafn{2}(\conesetB,\norm{\cdot}_2) +\sqrt{T}\right).
\end{equation*}
\end{lemma}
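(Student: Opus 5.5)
We will apply Theorem~\ref{thm:psd-sup} to the centered Bernoulli vector $q$, normalised to unit variance. The entries $(\overline{P}_t)_{ii} = (P_t)_{ii}-p$ are independent and centered, with variance $p(1-p)$. They are bounded, so $\nu := q/\sqrt{p(1-p)}$ has independent, zero-mean, unit-variance, subgaussian entries. (The subgaussian constant of $\nu$ degrades as $p\to 0$; we treat it as absorbed into the constants, which is implicit in the statement.)

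Since $W_A := G \odot (I_{T-1}\otimes A^\top A)$ is p.s.d. (Schur product theorem, as noted above), we can write
\[
q^\top W_A q = p(1-p)\,\norm{W_A^{1/2}\nu}_2^2,
\qquad
\expec\norm{W_A^{1/2}\nu}_2^2 = \Tr(W_A) = \norm{W_A^{1/2}}_F^2 .
\]
So we take the index set $\calM_3 = \set{W_A^{1/2} : A \in \conesetB}$ in Theorem~\ref{thm:psd-sup}. Choosing $t \asymp V\log(1/\delta)+U\log(1/\delta)$ gives the two-sided bound with the factor $p(1-p)$ pulled out, and the result follows once $F \lesssim \bar F_1$ and $V,U \lesssim \bar V_1$.

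The key steps are estimating $d_2(\calM_3)$, $d_F(\calM_3)$ and $\gammafn{2}(\calM_3,\norm{\cdot}_2)$.
\begin{enumerate}
\item[(a)] Spectral norm. Each block satisfies $\norm{G_t \odot A^\top A}_2 \le \max_i (G_t)_{ii}\,\norm{A^\top A}_2 \le \norm{\Gamma}_2^2\norm{A}_2^2 \le J^2(A^*)$, using $\norm{A}_2\le\norm{A}_F=1$. Hence $d_2 \le J(A^*)$.
\item[(b)] Frobenius norm. $\norm{W_A^{1/2}}_F^2 = \sum_t \Tr(G_t\odot A^\top A) = \sum_t\sum_i (G_t)_{ii}(A^\top A)_{ii} \le (T-1)J^2(A^*)$. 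Hence $d_F \le J(A^*)\sqrt{T}$.
\item[(c)] Chaining functional. We need a Lipschitz bound for $A\mapsto W_A^{1/2}$ in spectral norm, so that property~\ref{itm:prop-3} gives $\gammafn{2}(\calM_3,\norm{\cdot}_2)\lesssim J(A^*)\gammafn{2}(\conesetB,\norm{\cdot}_2)$.
\end{enumerate}
Plugging these into $F$, $V$, $U$ gives $F \lesssim J^2(\gammafn{2}^2+\gammafn{2}\sqrt T+\sqrt T)=\bar F_1$ (using $d_F d_2 \le J^2\sqrt T$), $V \lesssim J^2(\gammafn{2}+\sqrt T) = \bar V_1$, and $U \le J^2 \le \bar V_1$.

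The main obstacle is step (c). The matrix square root is not Lipschitz in general, and $A\mapsto A^\top A$ is quadratic. We avoid the square root by writing $W_A^{1/2}$ in a factored form. Write $\Gamma\Gamma^\top$'s diagonal blocks as $G_t = R_tR_t^\top$, with $R_t$ the $t$-th block row of $\Gamma$. The Hadamard identity
\[
G_t\odot A^\top A = (R_t \ast A^\top)(R_t\ast A^\top)^\top
\]
holds, where $\ast$ is the face-splitting (row-wise Kronecker) product. Consequently $q^\top W_A q = \norm{N_A^\top q}_2^2$ with $N_A$ block diagonal with blocks $R_t\ast A^\top$. This equals $\sum_t \norm{A\,\overline{P}_t R_t}_F^2$, i.e.\ $\norm{(I\otimes A)\overline P\Gamma}_F^2$ read as a quadratic form in $q$.

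We therefore apply Theorem~\ref{thm:psd-sup} to $\set{N_A^\top}$ instead, which is linear in $A$. The same diagonal computation as in (a)–(b) gives $d_2\le J$ and $d_F\le J\sqrt T$. For the Lipschitz estimate, $\norm{N_A^\top-N_{A'}^\top}_2 \le \max_t \norm{R_t}_2\norm{A-A'}_2 \le J(A^*)\norm{A-A'}_2$, which follows from the bound $\norm{\diag(u)R\,A^\top}$-type estimate for face-splitting products. We would verify this inequality first; if it fails, the fallback is the crude bound $\norm{W_A-W_{A'}}_2\le 2J^2\norm{A-A'}_2$, combined with a comparison of $\gamma_2$ for the squared process.

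Finally, $\expec\norm{N_A^\top \nu}^2 = \norm{N_A}_F^2 = \norm{W_A^{1/2}}_F^2$, which matches the stated centering. This completes the argument, with all constants absorbed into $\bar c_2,\bar C_2$.
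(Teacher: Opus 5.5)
Your argument is essentially the paper's proof. The paper writes $G\odot(I_{T-1}\otimes A^\top A)=\widetilde M_3^\top\widetilde M_3$ with $\widetilde M_3=(G^{1/2}\otimes(I_{T-1}\otimes A))Q_{n(T-1)}$ via Visick's Hadamard--Kronecker identity; this is precisely your face-splitting factorization, with the symmetric root $G_t^{1/2}$ in place of the block rows $R_t$ of $\Gamma$. It then uses the same bounds $d_2\le J(A^*)$, $d_F\le J(A^*)\sqrt{T}$, the same $J(A^*)$-Lipschitz transfer of $\gammafn{2}$, and the same choice $t\asymp \bar V_1\log(1/\delta)$. Your Lipschitz inequality does hold: apply $\lambda_{\max}(X\odot Y)\le \max_i X_{ii}\,\lambda_{\max}(Y)$ for p.s.d.\ $X,Y$ with $Y=(A-A')^\top(A-A')$, so no fallback is needed.

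Your caveat about normalising $q$ is genuine, and the paper's proof silently shares it. The vector $q/\sqrt{p(1-p)}$ has subgaussian norm of order $(p\log(1/p))^{-1/2}$. Already for $A^*=0$, $\calK=\mathrm{span}\{e_1e_1^\top\}$ and $A=e_1e_1^\top$, the form equals $\sum_t((P_t)_{11}-p)^2$, whose standard deviation is of order $\sqrt{pT}$, while the claimed window is of order $p\sqrt{T}\log(1/\delta)$. So for fixed $\delta$ and small $p$ the stated bound fails with $p$-independent constants, and both arguments establish the lemma only with constants $\bar c_2$ depending on $p$.
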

\begin{lemma}\label{lem:M4-expec}
There exist  constants $\bar{c}_2,\bar{C}_2 > 1$, such that for any $\delta \in (0,1)$, with probability at least $1-\delta$, it holds for all $A \in \conesetB$ that
\[
q^\top \left(G \odot \left(I_{T-1} \otimes \diag(A^\top A)\right)\right) q \le p(1-p) \left(\norm{\left(G \odot \left(I_{T-1} \otimes \diag(A^\top A)\right)\right)^{1/2}}_F^2+\bar{c}_2\left(\bar{F}_1+\bar{V}_1\log (1/\delta)\right)\right) 
\]
where
\begin{equation*}
\bar{F}_1 = \bar{C}_2^2 J^2(A^*) \left(\gammafn{2}^2(\conesetB,\norm{\cdot}_2) +\gammafn{2}(\conesetB,\norm{\cdot}_2) \sqrt{T}+\sqrt{T}\right),
\bar{V}_1 = \bar{C}_2 J^2(A^*)\left(\gammafn{2}(\conesetB,\norm{\cdot}_2) +\sqrt{T}\right).
\end{equation*}
\end{lemma}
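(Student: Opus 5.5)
Lemma~\ref{lem:M4-expec} will follow from Theorem~\ref{thm:psd-sup} applied to a rescaled version of $q$. The entries of $q$ are $(\overline{P}_t)_{ii} = (P_t)_{ii}-p$. These are independent, centered and bounded, with variance $p(1-p)$. Set $\nu = q/\sqrt{p(1-p)}$. Its entries are independent, zero-mean, unit-variance and subgaussian. I will treat the subgaussian constant as absolute, as in the paper, accepting a loss for very small $p$ that the paper's constants appear to absorb.

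For each $A \in \conesetB$ put $W_A = G \odot (I_{T-1}\otimes \diag(A^\top A))$. This matrix is p.s.d.\ by the Schur product argument already given. It is in fact block diagonal, with blocks $\diag\big((G_t)_{ii}(A^\top A)_{ii}\big)_i$. Then
\[
q^\top W_A q = p(1-p)\,\norm{W_A^{1/2}\nu}_2^2 ,
\]
and $\expec\norm{W_A^{1/2}\nu}_2^2 = \norm{W_A^{1/2}}_F^2$. So I take $\calM = \{W_A^{1/2}: A\in\conesetB\}$ in Theorem~\ref{thm:psd-sup}, with $t \asymp V\log(1/\delta) + U\log(1/\delta)$, and then check that $F, V, U$ are dominated by $\bar F_1$ and $\bar V_1$.

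The main work is bounding the three parameters of Theorem~\ref{thm:psd-sup}.

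\textbf{Block diagonal entries.} Since $G_t$ is a diagonal block of $\Gamma\Gamma^\top$, we have $\norm{G_t}_2 \le \norm{\Gamma}_2^2 \le J^2(A^*)$. Hence each diagonal entry satisfies $(G_t)_{ii} \le J^2(A^*)$.

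\textbf{Spectral norm.} Note $(A^\top A)_{ii} = \norm{A e_i}_2^2 \le \norm{A}_2^2$. Since $W_A^{1/2}$ is diagonal with entries $\sqrt{(G_t)_{ii}}\,\norm{Ae_i}_2$, we get $\norm{W_A^{1/2}}_2 \le J(A^*)\norm{A}_2 \le J(A^*)$, because $\norm{A}_F = 1$. Thus $d_2(\calM) \le J(A^*)$.

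\textbf{Frobenius norm.} We have $\norm{W_A^{1/2}}_F^2 = \sum_t \sum_i (G_t)_{ii}\norm{Ae_i}^2 \le (T-1)J^2(A^*)$. Hence $d_F(\calM) \le J(A^*)\sqrt{T}$.

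\textbf{The $\gamma_2$ term.} Define the map $A \mapsto W_A^{1/2}$ and use property~\ref{itm:prop-3}. Writing $D_A = \diag^{1/2}(A^\top A)$, we have
\[
\norm{W_A^{1/2}-W_{A'}^{1/2}}_2 \le J(A^*)\max_i \big|\norm{Ae_i}_2-\norm{A'e_i}_2\big| \le J(A^*)\norm{A-A'}_2 .
\]
This is a Lipschitz bound with constant $J(A^*)$. Therefore $\gamma_2(\calM,\norm{\cdot}_2) \lesssim J(A^*)\gamma_2(\conesetB,\norm{\cdot}_2)$.

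Plugging these in gives
\[
F \lesssim J^2(A^*)\big(\gamma_2^2 + \gamma_2\sqrt T + \sqrt T\big), \qquad V \lesssim J^2(A^*)(\gamma_2+\sqrt T), \qquad U \le J^2(A^*).
\]
These match $\bar F_1$ and $\bar V_1$.

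With $t = c\,\bar V_1\log(1/\delta)$, the tail bound is at most $\delta$. Here $t^2/V \gtrsim \log^2(1/\delta)\,\bar V_1 \ge \log(1/\delta)$ and $t/U \gtrsim \log(1/\delta)$, using $\bar V_1 \ge U$ since $\sqrt T \ge 1$. This yields the one-sided upper bound uniformly over $A$. Multiplying back by $p(1-p)$ gives the claimed bound, with $\bar F_1, \bar V_1$ absorbing the constants and $\bar{c}_2$ chosen large.

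I expect the main obstacle to be the $\gamma_2$ Lipschitz step and the dependence of the subgaussian norm of $\nu$ on $p$. If the latter is not absorbed by the paper's convention, I would instead apply Theorem~\ref{thm:psd-sup} directly to $q$ with its own subgaussian constant and track the $p(1-p)$ factor manually.
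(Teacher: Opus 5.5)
Your argument follows the paper's proof essentially line by line. It uses the same normalization $\widetilde q=q/\sqrt{p(1-p)}$, the same set $\calM_4$ of square roots of the diagonal matrices $W_A$, and the same bounds $d_2\le J(A^*)$ and $d_F\le J(A^*)\sqrt T$. It also uses the same $J(A^*)$-Lipschitz estimate via $\big|\norm{Ae_i}_2-\norm{A'e_i}_2\big|\le\norm{A-A'}_2$ and the same choice $t\asymp V\log(1/\delta)$.

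The step you flagged is, however, a genuine gap, and the paper's proof has exactly the same one. For small $p$ one has $\norm{\widetilde q_j}_{\psi_2}\asymp (p\log(1/p))^{-1/2}$, so the constants of Theorem~\ref{thm:psd-sup}, and hence $\bar c_2$, depend on $p$. This cannot be absorbed: with constants independent of $p$ the stated inequality is false. Here is a counterexample.
\begin{itemize}
\item Take $A^*=B=0$, so that $\Gamma=I$, $G_t=I_n$ and $J(A^*)=1$, and take $\calK=\operatorname{span}\{e_1e_1^\top\}$. Then $\conesetB=\{\pm e_1e_1^\top\}$ and $\gammafn{2}(\conesetB,\norm{\cdot}_2)\le 2$.
\item For $A=e_1e_1^\top$, the left-hand side minus $p(1-p)\norm{W_A^{1/2}}_F^2$ equals $(1-2p)\big(S-(T-1)p\big)$ with $S\sim\mathrm{Bin}(T-1,p)$. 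This fluctuates at scale $\sqrt{Tp}$.
\item The allowed slack is only $O\big(p\sqrt T\log(1/\delta)\big)$.
\item Hence for fixed $\delta<1/2$, $p\to 0$ and $Tp\to\infty$, the bound fails with probability tending to $1/2$.
\end{itemize}
Your fallback does not rescue this. Theorem~\ref{thm:psd-sup} requires unit-variance entries, and the example shows that no argument can place a factor $p(1-p)$ in front of the deviation.

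What does work here is an exact identity. Since $q_j\in\{1-p,-p\}$, we have $q_j^2-p(1-p)=(1-2p)q_j$, and therefore
\[
q^\top W_A q-p(1-p)\norm{W_A^{1/2}}_F^2=(1-2p)\langle w(A),q\rangle ,
\]
where $w(A)$ is the diagonal of $W_A$, with entries $(G_t)_{ii}\norm{Ae_i}_2^2$. This is a linear process in the bounded vector $q$, so no chaos bound is needed. Argue exactly as in Lemma~\ref{lem:SR-expec}, via Theorem~\ref{thm:bounded-func-conc} and Remark~\ref{rem:gaus-width}. Use that $\sup_{A}\norm{w(A)}_2\le J^2(A^*)\sqrt T$, and that $A\mapsto w(A)$ is $2J^2(A^*)\sqrt T$-Lipschitz from $(\conesetB,\norm{\cdot}_F)$. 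This gives, uniformly over $\conesetB$ and with an absolute constant $C$, a slack of
\[
CJ^2(A^*)\sqrt T\big(\gammafn{2}(\conesetB,\norm{\cdot}_F)+\sqrt{\log(1/\delta)}\big).
\]
This slack lacks the $p(1-p)$ prefactor, but that costs nothing downstream. It is controlled by the sample-size condition of Theorem~\ref{thm:main-thm} in the same way as the $\eta$-deviation terms of Lemma~\ref{lem:LHS-bound}, which carry no factor of $p$ either.
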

\begin{lemma}\label{lem:SR-expec} There exist constants $\widetilde{c}_1,\widetilde{c}_2 > 1$ such that for $\delta \in (0,1)$ the following statements hold.
\begin{enumerate}
\item With probability at least $1-\delta$, for all $A \in \conesetB$,
\begin{equation*}
2p\dotprod{\overline{P}}{\left(I_{T-1} \otimes \left(A^\top A-(1-p)\diag\left(A^\top A\right)\right)\right)\Gamma \Gamma^\top} 
\ge -\widetilde{c}_1pJ(A^*)\sqrt{T} \left(\gammafn{2} \left(\conesetB,\norm{\cdot}_F\right)+\sqrt{\log (1/\delta)}\right).
\end{equation*}

\item With probability at least $1-\delta$, for all $A \in \conesetB$,
\[
2p\dotprod{\overline{P}}{\left(I_{T-1} \otimes \left(A^\top A\right)\right)\Gamma \Gamma^\top} \le \widetilde{c}_2pJ(A^*)\sqrt{T} \left(\gammafn{2} \left(\conesetB,\norm{\cdot}_F\right)+\sqrt{\log (1/\delta)}\right),
\]

\item With probability at least  $1-\delta$, for all $A \in \conesetB$,
\begin{equation*}
2p(1-p)\dotprod{\overline{P}}{\left(I_{T-1} \otimes \diag \left(A^\top A\right)\right)\Gamma \Gamma^\top} \le \widetilde{c}_2p(1-p)J(A^*)\sqrt{T} \left(\gammafn{2} \left(\conesetB,\norm{\cdot}_F\right)+\sqrt{\log (1/\delta)}\right).
\end{equation*}
\end{enumerate}
\end{lemma}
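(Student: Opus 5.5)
The statement to prove is Lemma~\ref{lem:SR-expec}. Each of the three terms is a linear functional of the centered Bernoulli vector $q$, the vector of diagonals of $\overline{P}_t = P_t - pI$. For any fixed $n\times n$ matrix $W$,
\[
\dotprod{\overline{P}}{(I_{T-1}\otimes W)\Gamma\Gamma^\top} = \sum_{t=1}^{T-1} q_t^\top \operatorname{diag}\big(W G_t\big) =: q^\top h(W),
\]
where $h(W)\in\matR^{n(T-1)}$ stacks the diagonals of the blocks $W G_t$. We apply this with $W = A^\top A-(1-p)\diag(A^\top A)$, with $W = A^\top A$, and with $W = \diag(A^\top A)$. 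In all three cases the goal is to bound $\sup_{A\in\conesetB}|q^\top h(W_A)|$ (or the one-sided version), which is a supremum of a linear process in independent, bounded, centered variables $q_i\in[-p,1-p]$.

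Consider $f(q) := \sup_{A\in\conesetB} q^\top h(W_A)$. As a supremum of linear functions, $f$ is convex. It is Lipschitz with constant $\sup_A \norm{h(W_A)}_2$. Using $\norm{\operatorname{diag}(WG_t)}_2 \le \norm{WG_t}_F \le \norm{W}_F\norm{G_t}_2$, together with $\norm{G_t}_2\le\norm{\Gamma}_2^2\le J^2(A^*)$ and $\norm{W}_F\le\norm{A}_2\norm{A}_F\le 1$, we would get $\norm{h}_2\le \sqrt{T}J^2(A^*)$. This is a $J^2$, whereas the statement has $J$. To get $J$, I would instead bound $\sum_t\norm{\operatorname{diag}(WG_t)}_2^2$ more carefully, writing $h$ via $\Gamma$ so that only one factor $\norm{\Gamma}_2$ appears. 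If that fails, I would accept $J^2$, since the later use in the proof only needs $p\sqrt{T}(\cdot)$ to be dominated. Theorem~\ref{thm:bounded-func-conc} with $b-a=1$ then gives, with probability $1-\delta$,
\[
f(q)\le \expec f(q) + c\sqrt{T}J(A^*)\sqrt{\log(1/\delta)}.
\]
Applying the same argument to $f(-q)$ gives the lower-tail statement, part~1.

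The remaining step, which I expect to be the main obstacle, is bounding $\expec\sup_{A} q^\top h(W_A)$ by $\sqrt{T}J(A^*)\gamma_2(\conesetB,\norm{\cdot}_F)$. The process $A\mapsto q^\top h(W_A)$ has independent bounded coordinates, so by Hoeffding it is subgaussian with increments controlled by $\norm{h(W_A)-h(W_{A'})}_2$. Talagrand's generic chaining bound then gives
\[
\expec\sup \lesssim \gamma_2\big(\{h(W_A)\},\norm{\cdot}_2\big).
\]
To return to $\conesetB$, I would use property~\ref{itm:prop-3}, which requires the map $A\mapsto h(W_A)$ to be Lipschitz from $\norm{\cdot}_F$ to $\norm{\cdot}_2$ on $\conesetB$. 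Since
\[
W_A - W_{A'} = A^\top(A-A') + (A-A')^\top A' \;(\text{minus its diagonal part}),
\]
and $\norm{A}_2,\norm{A'}_2\le 1$ on the unit Frobenius sphere, we get $\norm{W_A-W_{A'}}_F\le 2\norm{A-A'}_F$. Combined with the per-block estimate above, this gives $\norm{h(W_A)-h(W_{A'})}_2\lesssim \sqrt{T}\,J^{(\cdot)}(A^*)\norm{A-A'}_F$. Hence
\[
\gamma_2 \lesssim \sqrt{T}J(A^*)\gamma_2(\conesetB,\norm{\cdot}_F),
\]
after which we combine this with the concentration step, multiply by $2p$ (respectively $2p(1-p)$), and take a union bound over the needed events, adjusting $\delta$ by a constant.

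For parts~2 and~3 we take $W=A^\top A$ and $W=\diag(A^\top A)$ respectively. The Lipschitz constant of $A\mapsto W_A$ again satisfies $\norm{W_A-W_{A'}}_F\le 2\norm{A-A'}_F$, because $\diag(\cdot)$ is a contraction in $\norm{\cdot}_F$. The same chain-plus-concentration argument then gives the stated upper bounds.
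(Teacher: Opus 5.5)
Your route is the paper's: write each term as $\dotprod{q}{h(W_A)}$, control the deviation with Theorem~\ref{thm:bounded-func-conc}, bound the mean by a $\gamma_2$ functional, and pull back to $\conesetB$ via Property~\ref{itm:prop-3}. The differences are cosmetic. You bound the mean by generic chaining with Hoeffding increments, where the paper uses Remark~\ref{rem:gaus-width} plus majorizing measures. You also treat $W_A=A^\top A-(1-p)\diag(A^\top A)$ as a single process, where the paper splits part~1 into suprema over $r(A)$ and $s(A)$. Your estimate $\norm{W_A-W_{A'}}_F\le 2\norm{A-A'}_F$ is correct, since subtracting $(1-p)$ times the diagonal only shrinks entries.

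The real issue is the power of $J(A^*)$, and the refinement you hope for does not exist. With a single $J$ the lemma is false in general, because $\norm{G_t}_2$ can be of order $J^2(A^*)$ for non-normal $A^*$. Take $n=2$, $A^*=Ce_1e_2^\top$ (nilpotent, so $J(A^*)=1+C$), and $\calK=\mathrm{span}\{e_1e_1^\top,e_1e_2^\top\}$. Then $\conesetB$ is a unit circle and $\gammafn{2}(\conesetB,\norm{\cdot}_F)\asymp 1$. Also $G_t=I+C^2e_1e_1^\top$ for $t\ge 2$. For $A=e_1e_1^\top\in\conesetB$, the left-hand side of part~2 equals $2p\big((q_1)_1+(1+C^2)\sum_{t=2}^{T-1}(q_t)_1\big)$. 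By the central limit theorem, for fixed $p\in(0,1)$ and large $T$, this exceeds $p(1+C^2)\sqrt{Tp(1-p)}$ with probability at least $0.15$. For $\delta=0.1$ and $C$ large, that is far above $\widetilde c_2\,pJ(A^*)\sqrt{T}\big(\gammafn{2}(\conesetB,\norm{\cdot}_F)+\sqrt{\log(1/\delta)}\big)$.

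The paper's proof gets $J$ only through two steps: the bound $\norm{r}_2^2\le J^2(A^*)T$, and $\norm{(A_1^\top A_1-A_2^\top A_2)G_t}_F\le J(A^*)\norm{A_1^\top A_1-A_2^\top A_2}_F$ in Claim~\ref{clm:lip-map-Rmat}. Both need $\norm{G_t}_2\le J(A^*)$, whereas only $\norm{G_t}_2\le\norm{\Gamma}_2^2\le J^2(A^*)$ is available. That is the bound the paper itself uses in Claim~\ref{clm:lip-map-M3}.

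So prove the lemma with $J^2(A^*)$ in place of $J(A^*)$, and carry it consistently. Your line $\gammafn{2}\lesssim\sqrt{T}J(A^*)\gammafn{2}(\conesetB,\norm{\cdot}_F)$ inherits the same factor $\norm{G_t}_2$ and should read $J^2$. As you anticipated, nothing downstream breaks. In the proof of Theorem~\ref{thm:main-thm}, with $v$ as defined there, the comparison $pJ(A^*)\sqrt{T}(\gammafn{2}(\conesetB,\norm{\cdot}_F)+\log(1/\delta))\le p\sqrt{vT}$ survives with $J^2$ at the cost of a factor $2$. This is because $\sqrt{v}=J^2(A^*)\max\{\gammafn{2}(\conesetB,\norm{\cdot}_F),\log(1/\delta)\}$. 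The corresponding terms in the proof of Lemma~\ref{lem:RHS-bound} are absorbed by the leading $pJ^2(A^*)T$ under the same sample-size condition.
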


Proofs of Lemma~\ref{lem:M3-expec} (deferred to Appendix~\ref{appsec:lem-M3-expec}) and Lemma~\ref{lem:M4-expec} (deferred to Appendix~\ref{appsec:lem-M4-expec}) follow from applying Theorem~\ref{thm:psd-sup} on appropriately defined sets of matrices. The proof of Lemma~\ref{lem:SR-expec} (deferred to Appendix~\ref{appsec:lem-SR-expec}) follows from applying  Theorem~\ref{thm:bounded-func-conc}. 

Let us now denote for any $A \in \conesetB$,
\begin{align*}
L_1 &= \left(\norm{\left(G \odot \left(I_{T-1} \otimes (A^\top A)\right)\right)^{1/2}}_F^2\pm \bar{c}_2\left(\bar{F}_1+\bar{V}_1\log (1/\delta)\right)\right),\\ 
L_2 &=\left(\norm{\left(G \odot \left(I_{T-1} \otimes \diag(A^\top A)\right)\right)^{1/2}}_F^2+\bar{c}_2\left(\bar{F}_1+\bar{V}_1\log (1/\delta)\right)\right) \text{ and,}\\
L_3&=\left(\gammafn{2} \left(\conesetB,\norm{\cdot}_F\right)+\sqrt{\log (1/\delta)}\right), 
\end{align*}
and for any $\delta \in (0,1)$ the events
\begin{align*}
&\eventp{1b} \coloneq \set{\forall  A \in \conesetB: \ q^\top \left(G \odot \left(I_{T-1} \otimes (A^\top A)\right)\right) q \in p(1-p) L_1},\\
&\eventp{2b} \coloneq \set{\forall  A \in \conesetB: \ q^\top \left(G \odot \left(I_{T-1} \otimes \diag(A^\top A)\right)\right) q \le p(1-p) L_2},\\
&\eventp{3b} \coloneq \set{\forall  A \in \conesetB: \ 2p\dotprod{\overline{P}}{\left(I_{T-1} \otimes \left(A^\top A-(1-p)\diag\left(A^\top A\right)\right)\right)\Gamma \Gamma^\top} \ge -\widetilde{c}_1pJ(A^*)\sqrt{T}L_3}. 
\end{align*}
Here $\widetilde{c}_1, \bar{c}_2 > 1$ are constants. As an implication of Lemmas~\ref{lem:M3-expec}, ~\ref{lem:M4-expec}, and ~\ref{lem:SR-expec} we get $\prob \left(\eventp{1b}\right) \ge 1-\delta$, $\prob \left(\eventp{2b}\right) \ge 1-\delta$, and, $\prob \left(\eventp{3b}\right) \ge 1-\delta$. 

\paragraph{Putting it all together.} Consider the event $\eventp{f} = \eventp{1a} \cap \eventp{2a} \cap\eventp{1b}\cap\eventp{2b}\cap\eventp{3b}$. We can see that under $\eventp{f}$  with probability at least $1-5\delta$,
\begin{align*}
&\norm{\optA Z}_{F}^2 - (1-p)\norm{\optA\diag^{1/2}(ZZ^\top)}_{F}^2 \\
&\ge p(1-p)\Tr \left(G \odot \left(I_{T-1} \otimes \left(\optA^\top \optA\right)\right)\right)+ p^2 \norm{\left(I_{T-1} \otimes \optA\right)\Gamma}_F^2\\
&-p(1-p)^2\Tr \left(G \odot \left(I_{T-1} \otimes \diag\left(\optA^\top \optA\right)\right)\right)- p^2(1-p) \norm{\left(I_{T-1} \otimes \diag^{1/2}\left(\optA^\top \optA\right)\right)\Gamma}_F^2\\
&-\bar{c}_2 p(1-p)\left(\bar{F}_1+\bar{V}_1\log (1/\delta)\right)-c_2pJ(A^*)\sqrt{T} \left(\gammafn{2} \left(\conesetB,\norm{\cdot}_F\right)+\log (1/\delta)\right)-\bar{c}_1\left(\bar{F}_1+\bar{V}_1\log (1/\delta)\right)\\
&=p^2 \norm{\left(I_{T-1} \otimes \optA\right)\Gamma}_F^2\\
&-\bar{c}_2 p(1-p)\left(\bar{F}_1+\bar{V}_1\log (1/\delta)\right)-c_3 pJ(A^*)\sqrt{T} \left(\gammafn{2} \left(\conesetB,\norm{\cdot}_F\right)+\sqrt{\log (1/\delta)}\right)-\bar{c}_1\left(\bar{F}_1+\bar{V}_1\log (1/\delta)\right)\\
&\ge p^2 T-\bar{c}_2p(1-p)\left(\bar{F}_1+\bar{V}_1\log (1/\delta)\right)-c_3 pJ(A^*)\sqrt{T} \left(\gammafn{2} \left(\conesetB,\norm{\cdot}_F\right)+\sqrt{\log (1/\delta)}\right)-\bar{c}_1\left(\bar{F}_1+\bar{V}_1\log (1/\delta)\right)\\
&\ge p^2 T -\bar{c}_1 J^2(A^*) \bar{L}_1- \bar{c}_2 p(1-p) J^2(A^*)\bar{L}_1 
-c_3 pJ(A^*)\sqrt{T} \left(\gammafn{2} \left(\conesetB,\norm{\cdot}_F\right)+\log (1/\delta)\right)
\end{align*}
for universal constants $\bar{c}_1,\bar{c}_2,c_3,$ and, $\bar{C}$ where 
\begin{align*}
\bar{L}_1 &= \bar{C}^2 \left(\gammafn{2}^2(\conesetB,\norm{\cdot}_2) +\gammafn{2}(\conesetB,\norm{\cdot}_2) \sqrt{T}+\sqrt{T} \log \left(1/\delta\right)\right).
\end{align*}
Multiplying both sides by $\norm{\widehat{A}-B}_F^2$ and suitably rescaling $\delta$, we get the desired result.
\end{proof}

\subsection{Proof of Claim~\ref{clm:quad-frob-norm}}\label{appsec:clm-quad-frob-norm}
\begin{proof}
First consider $\norm{\left(\widehat{A}-B\right)Z}_F^2$. 
Note that $\vect(Z) = \breve{P}x = \breve{P}\Gamma \eta$. Therefore,
\begin{align*}
\norm{\left(\widehat{A}-B\right)Z}_F^2 &=\norm{\widehat{A}-B}_F^2\norm{\optA Z}_F^2\\ 
& = \norm{\widehat{A}-B}_F^2\Tr \left(Z^\top \optA^\top \optA Z\right)\\
& = \norm{\widehat{A}-B}_F^2 \sum_{t=1}^{T-1} (Z_{:,t})^\top \optA^\top \optA Z_{:,t}\\
& = \norm{\widehat{A}-B}_F^2 \vect(Z)^\top \left(I_{T-1} \otimes \left(\optA^\top \optA\right)\right)\vect(Z)\\
&=\norm{\widehat{A}-B}_F^2  \eta^\top \Gamma^\top \breve{P} \left(I_{T-1} \otimes \left(\optA^\top \optA\right)\right)\breve{P}\Gamma \eta.
\end{align*}
where $Z_{:,t}$ is the $t$-th column of $Z$.

Similarly for $\norm{\left(\widehat{A}-B\right)\diag^{1/2} \left(ZZ^\top\right)}_F^2$, we get,
\begin{align*}
\norm{\left(\widehat{A}-B\right)\diag^{1/2} \left(ZZ^\top\right)}_F^2 &=\norm{\widehat{A}-B}_F^2\norm{\optA\diag^{1/2} \left(ZZ^\top\right)}_F^2\\ 
& = \norm{\widehat{A}-B}_F^2\dotprod{\optA^\top \optA}{\diag\left(ZZ^\top\right)}\\
& = \norm{\widehat{A}-B}_F^2\dotprod{\diag\left(\optA^\top \optA\right)}{\diag\left(ZZ^\top\right)}\\
& = \norm{\widehat{A}-B}_F^2\sum_{i=1}^n \left(\diag\left(\optA^\top \optA\right)\right)_{ii} \left({\diag\left(ZZ^\top\right)}\right)_{ii}\\
& = \norm{\widehat{A}-B}_F^2\sum_{i=1}^n \left(\diag\left(\optA^\top \optA\right)\right)_{ii} \sum_{t=1}^{T-1} Z_{it}^2\\
& = \norm{\widehat{A}-B}_F^2\sum_{t=1}^{T-1} \sum_{i=1}^n \left(\diag\left(\optA^\top \optA\right)\right)_{ii}  Z_{it}^2\\
& = \norm{\widehat{A}-B}_F^2\sum_{t=1}^{T-1} \left(Z_{:,t}\right)^\top \diag\left(\optA^\top \optA\right) Z_{:,t}\\
& = \norm{\widehat{A}-B}_F^2 \vect{(Z)}^\top \left(I_{T-1} \otimes \diag\left(\optA^\top \optA\right)\right) \vect{(Z)}\\
&=\norm{\widehat{A}-B}_F^2 \eta^\top \Gamma^\top \breve{P} \left(I_{T-1} \otimes \diag\left(\optA^\top \optA\right)\right) \breve{P} \Gamma \eta.
\end{align*}
\end{proof}
\subsection{Proof of Lemma~\ref{lem:M1-expec}}\label{appsec:lem-M1-expec}
\begin{proof}
Applying Theorem~\ref{thm:psd-sup} gives us
\begin{align*}
\prob \left(\sup_{M_1 \in \calM_1} \absenc{\norm{M_1\eta}_2^2-\expec\left[ \norm{M_1\eta}_2^2 \Big\vert (P_t)_{t=1}^T \right]} \ge c_1F_1+t \  \Big\vert (P_t)_{t=1}^T\right) \le 2\exp\left({-c_2 \min\set{\frac{t^2}{V_1^2}, \frac{t}{U_1}}}\right),
\end{align*}
where 
\begin{align*}
F_1 &= \gammafn{2}(\calM_1,\norm{\cdot}_2) \left(\gammafn{2}(\calM_1,\norm{\cdot}_2)+d_{F}(\calM_1)\right)+d_F(\calM_1)d_2(\calM_1),\\
V_1 &= d_2(\calM_1)\left(\gammafn{2}(\calM_1,\norm{\cdot}_2)+d_{F}(\calM_1)\right), \text{ and } U_1 = d_2(\calM_1)^2.
\end{align*}
Also for $M_1 = \left(I_{T-1} \otimes A\right)\breve{P}\Gamma \in \calM_1$, we have
\begin{align*}
\expec\left[\norm{M_1\eta}_2^2 \ \big\vert (P_t)_{t=1}^T \right] &= \norm{M_1}_F^2 \quad \text{(since entries of $\eta$ are of zero mean and unit variance.)} \\
& =\norm{\left(I_{T-1} \otimes A\right)\breve{P}\Gamma}_F^2. 
\end{align*}
The terms $F_1,V_1$ and $U_1$ are dependent on $\calM_1$ instead of our tangent cone $\conesetB$. In order to relate this to $\conesetB$ we first show that the mapping  $(\conesetB,\norm{\cdot}_2)  \mapsto (\calM_1, \norm{\cdot}_2)$ is Lipschitz. 
%
%
\begin{claim}\label{clm:lip-map}
The map $\left(\conesetB, \norm{\cdot}_2 \right) \mapsto \left(\calM_1,\norm{\cdot}_2\right)$ is $J(A^*)$-Lipschitz a.s.
\end{claim}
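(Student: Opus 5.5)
The plan is to write down the map explicitly and bound the difference of two images by a product of spectral norms. The map is $f(A) = (I_{T-1}\otimes A)\breve{P}\Gamma$. It sends $\conesetB$ onto $\calM_1$ by the very definition of $\calM_1$, so only the Lipschitz estimate needs proof.

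Fix $A, A' \in \conesetB$. Since $f$ is linear in $A$,
\begin{equation*}
f(A)-f(A') = \left(I_{T-1}\otimes (A-A')\right)\breve{P}\Gamma .
\end{equation*}
By submultiplicativity of the spectral norm,
\begin{equation*}
\norm{f(A)-f(A')}_2 \le \norm{I_{T-1}\otimes (A-A')}_2\,\norm{\breve{P}}_2\,\norm{\Gamma}_2 .
\end{equation*}
We then bound the three factors separately.

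\begin{enumerate}
\item The first factor is block diagonal with every diagonal block equal to $A-A'$. The singular values of a Kronecker product are the pairwise products of those of the two factors, so $\norm{I_{T-1}\otimes (A-A')}_2=\norm{A-A'}_2$.
\item The matrix $\breve{P}$ is diagonal with entries in $\{0,1\}$. Hence $\norm{\breve{P}}_2\le 1$ for every realization of $(P_t)_{t=1}^T$, i.e.\ almost surely.
\item The appendix preamble gives $\norm{\Gamma}_2 \le J(A^*)$, citing \cite{Jedra20}.
\end{enumerate}

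Combining the three bounds yields
\begin{equation*}
\norm{f(A)-f(A')}_2 \le J(A^*)\norm{A-A'}_2 \quad \text{almost surely.}
\end{equation*}

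There is no real obstacle. The only point needing care is the bound $\norm{\Gamma}_2\le J(A^*)$, which the paper takes as given. If it had to be justified, one would write $\Gamma=\sum_{i\ge0} S^i\otimes (A^*)^i$ restricted to $T-1$ blocks, where $S$ is the block down-shift with $\norm{S^i}_2\le1$. The triangle inequality then gives $\norm{\Gamma}_2\le\sum_i\norm{(A^*)^i}_2=J(A^*)$.

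The "a.s." qualifier simply reflects that the bound holds for every realization of the sampling masks.
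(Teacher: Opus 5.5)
Your proposal is correct and is essentially the paper's argument: both use linearity to write the difference as $\left(I_{T-1}\otimes (A-A')\right)\breve{P}\Gamma$ and bound it by $\norm{\breve{P}}_2\norm{\Gamma}_2\norm{A-A'}_2 \le J(A^*)\norm{A-A'}_2$. Your shift-matrix justification $\Gamma=\sum_{i}S^i\otimes (A^*)^i$ of $\norm{\Gamma}_2\le J(A^*)$ is also correct; it fills in a step the paper only cites from \cite{Jedra20}.
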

Since the map $\left(\conesetB, \norm{\cdot}_2 \right) \mapsto \left(\calM_1,\norm{\cdot}_2\right)$ is also onto, hence  Property~\ref{itm:prop-3}  from Appendix~\ref{sec:prelim-tools} implies,
\begin{align*}
\gammafn{2}(\calM_1,\norm{\cdot}_2) \le \bar{C}_2 J(A^*) \gammafn{2}(\conesetB,\norm{\cdot}_2),
\end{align*}
for a constant $\bar{C}_2 \ge 1$. Also, it holds a.s that
\begin{equation*}
d_2(\calM_1) = \sup_{A \in \conesetB} \norm{\left(I_{T-1} \otimes A\right)\breve{P}\Gamma}_2  \le \sup_{A \in \conesetB} \norm{\breve{P}}_2\norm{\Gamma}_2\norm{A}_2 
 \le J(A^*),
\end{equation*}
and,
\begin{align*}
d_F(\calM_1) &= \sup_{A \in \conesetB} \norm{\left(I_{T-1} \otimes A\right)\breve{P}\Gamma}_F \le \sup_{A \in \conesetB}\norm{\breve{P}}_2\norm{\Gamma}_2\norm{\left(I_{T-1} \otimes A\right)}_F \\
&\le\sup_{A \in \conesetB} J(A^*) \sqrt{T} \norm{A}_F \le J(A^*) \sqrt{T}.
\end{align*}

Let $t = V_1 \sqrt{u}$ for $u \ge 1$. Since $V_1 \ge U_1$ this means,
\[
\prob \left(\sup_{M_1 \in \calM_1} \absenc{\norm{M_1\eta}_2^2-\expec\left[ \norm{M_1 \eta}_2^2 \Big\vert (P_t)_{t=1}^T\right]} \ge c_1F_1+V_1\sqrt{u} \Big\vert (P_t)_{t=1}^T \right) \le 2\exp (-c_2\sqrt{u}).
\]
Note that,
\begin{align*}
F_1 &= \gammafn{2}(\calM_1,\norm{\cdot}_2) \left(\gammafn{2}(\calM_1,\norm{\cdot}_2)+d_{F}(\calM_1)\right)+d_F(\calM_1)d_2(\calM_1)\\
&\le \bar{C}_2 J(A^*) \gammafn{2}(\conesetB,\norm{\cdot}_2) \left(\bar{C}_2 J(A^*) \gammafn{2}(\conesetB,\norm{\cdot}_2)+J(A^*) \sqrt{T}\right)+J^2(A^*) \sqrt{T}\\
&\le \bar{C}_2^2 J^2(A^*) \left(\gammafn{2}^2(\conesetB,\norm{\cdot}_2) +\gammafn{2}(\conesetB,\norm{\cdot}_2) \sqrt{T}+\sqrt{T}\right),
\end{align*}
and,
\begin{align*}
V_1 &= d_2(\calM_1)\left(\gammafn{2}(\calM_1,\norm{\cdot}_2)+d_{F}(\calM_1)\right)\\
&\le J(A^*) \left(\bar{C}_2 J(A^*)\gammafn{2}(\conesetB,\norm{\cdot}_2) +J(A^*)\sqrt{T}\right)\\
&\le \bar{C}_2 J^2(A^*)\left(\gammafn{2}(\conesetB,\norm{\cdot}_2) +\sqrt{T}\right).
\end{align*}
Denoting the respective upper bounds by 
\begin{align}\label{eq:f1-v1-bar}
\bar{F}_1&=\bar{C}_2^2 J^2(A^*) \left(\gammafn{2}^2(\conesetB,\norm{\cdot}_2) +\gammafn{2}(\conesetB,\norm{\cdot}_2) \sqrt{T}+\sqrt{T}\right) \; \text{and},\\
\bar{V}_1&=\bar{C}_2 J^2(A^*)\left(\gammafn{2}(\conesetB,\norm{\cdot}_2) +\sqrt{T}\right),
\end{align}
we obtain,
\[
\prob \left(\sup_{M_1 \in \calM_1} \absenc{\norm{M_1\eta}_2^2-\expec\left[\norm{M_1 \eta}_2^2 \Big\vert (P_t)_{t=1}^T \right]} \ge c_1\bar{F}_1+\bar{V}_1\sqrt{u} \Big\vert (P_t)_{t=1}^T \right) \le 2\exp (-c_2\sqrt{u}).
\]

Setting $u = c\log^2 (1/\delta)$ for a sufficiently large $c$ (such that $\sqrt{c}c_2 \ge 1$), we get that conditional on $(P_t)_{t=1}^T$, it holds with probability at least $1-\delta$, 
\begin{align*}
\forall M_1 \in \calM_1: \ \norm{M_1\eta}_2^2 \in \left( \norm{M_1}_F^2  \pm \bar{c}_1 \left(\bar{F}_1+\bar{V}_1\log (1/\delta)\right)\right)
\end{align*}
where $\bar{c}_1$ depends on $c_1, c_2$.
\end{proof}

\begin{proof}[Proof of Claim~\ref{clm:lip-map}]
Consider $A_1, A_2 \in \conesetB$ and the corresponding terms $\left(I_{T-1} \otimes A_1\right)\breve{P}\Gamma \in \calM_1$ and, $\left(I_{T-1} \otimes A_2\right)\breve{P}\Gamma \in \calM_1$. Note that,
\begin{align*}
\norm{\left(I_{T-1} \otimes A_1\right)\breve{P}\Gamma-\left(I_{T-1} \otimes A_2\right)\breve{P}\Gamma}_2 & = \norm{\left(I_{T-1} \otimes \left(A_1-A_2\right)\right)\breve{P}\Gamma}_2\\
&\le \norm{\breve{P}}_2 \norm{\Gamma}_2 \norm{A_1-A_2}_2\\
&\le J(A^*)\norm{A_1-A_2}_2,
\end{align*}
and therefore, the map $\left(\conesetB, \norm{\cdot}_2 \right) \mapsto \left(\calM_1,\norm{\cdot}_2\right)$ is $J(A^*)$-Lipschitz.
\end{proof}

\subsection{Proof of Lemma~\ref{lem:M2-expec}}\label{appsec:lem-M2-expec}

\begin{proof}
Applying Theorem~\ref{thm:psd-sup} gives,
\begin{align*}
\prob \left(\sup_{M_2 \in \calM_2} \absenc{\norm{M_2\eta}_2^2-\expec\left[ \norm{M_2\eta}_2^2 \Big\vert (P_t)_{t=1}^T\right]} \ge c_1F_2+t \Big\vert (P_t)_{t=1}^T \right) \le 2\exp\left({-c_2 \min\set{\frac{t^2}{V_2^2}, \frac{t}{U_2}}}\right),
\end{align*}
where
\begin{align*}
F_2 &= \gammafn{2}(\calM_2,\norm{\cdot}_2) \left(\gammafn{2}(\calM_2,\norm{\cdot}_2)+d_{F}(\calM_2)\right)+d_F(\calM_2)d_2(\calM_2),\\ 
V_2 &= d_2(\calM_2)\left(\gammafn{2}(\calM_2,\norm{\cdot}_2)+d_{F}(\calM_2)\right), \text{ and } U_2 = d_2(\calM_2)^2.
\end{align*}

For $M_2 = \left(I_{T-1} \otimes \diag^{1/2}\left(A^\top A\right)\right)\breve{P}\Gamma \in \calM_2$, note that
\begin{align*}
\expec \left[\norm{M_2\eta}_2^2 \Big\vert (P_t)_{t=1}^T \right] &= \norm{M_2}_F^2 \quad \text{(Since entries of $\eta$ are of zero mean and unit variance.)} \\
& =\norm{\left(I_{T-1} \otimes \diag^{1/2}\left(A^\top A\right)\right)\breve{P}\Gamma}_F^2.
\end{align*}
The factors $F_2,V_2$ and $U_2$ depend on $\calM_2$ instead of  $\conesetB$. To relate this to $\conesetB$ we proceed similarly as before.
%
%
\begin{claim}\label{clm:lip-map-M2}
The map $\left(\conesetB, \norm{\cdot}_2 \right) \mapsto \left(\calM_2,\norm{\cdot}_2\right)$ is $J(A^*)$-Lipschitz a.s.
\end{claim}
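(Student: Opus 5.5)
We need to show: for $A_1,A_2\in\conesetB$,
\[
\norm{\left(I_{T-1}\otimes \diag^{1/2}(A_1^\top A_1)\right)\breve{P}\Gamma-\left(I_{T-1}\otimes \diag^{1/2}(A_2^\top A_2)\right)\breve{P}\Gamma}_2\le J(A^*)\norm{A_1-A_2}_2 .
\]
The plan mirrors the proof of Claim~\ref{clm:lip-map}. First, factor the difference as $\left(I_{T-1}\otimes \Delta\right)\breve{P}\Gamma$ with $\Delta=\diag^{1/2}(A_1^\top A_1)-\diag^{1/2}(A_2^\top A_2)$. Then use submultiplicativity together with $\norm{\breve{P}}_2\le 1$ (it is diagonal with $0/1$ entries), $\norm{\Gamma}_2\le J(A^*)$, and $\norm{I_{T-1}\otimes\Delta}_2=\norm{\Delta}_2$. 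This reduces the claim to showing $\norm{\Delta}_2\le\norm{A_1-A_2}_2$.

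\textbf{Diagonal entries of $\Delta$.} Since $\Delta$ is diagonal, $\norm{\Delta}_2=\max_i\absenc{\Delta_{ii}}$. The $i$-th diagonal entry of $\diag^{1/2}(A^\top A)$ is $\sqrt{(A^\top A)_{ii}}=\norm{Ae_i}_2$, the Euclidean norm of the $i$-th column of $A$. By the reverse triangle inequality,
\[
\absenc{\norm{A_1e_i}_2-\norm{A_2e_i}_2}\le\norm{(A_1-A_2)e_i}_2\le\norm{A_1-A_2}_2 .
\]
Taking the maximum over $i$ gives $\norm{\Delta}_2\le\norm{A_1-A_2}_2$.

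\textbf{Conclusion.} Combining the two steps yields the $J(A^*)$-Lipschitz bound, which holds almost surely since it only used $\norm{\breve{P}}_2\le 1$. There is no real obstacle here. The only point needing care is the non-linearity of the square root, and it is handled by recognizing the diagonal entries as column norms rather than trying to bound the square-root map directly.
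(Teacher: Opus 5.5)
Your proof is correct and follows the paper's argument essentially step for step. You factor the difference as $(I_{T-1}\otimes\Delta)\breve{P}\Gamma$, bound it by $\norm{\breve{P}}_2\norm{\Gamma}_2\norm{\Delta}_2 \le J(A^*)\norm{\Delta}_2$, and control the diagonal matrix $\Delta$ by applying the reverse triangle inequality to the column norms, exactly as the paper does.
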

Since the map from $\left(\conesetB, \norm{\cdot}_2 \right)$ to $\left(\calM_2,\norm{\cdot}_2\right)$ is also onto, hence Property~\ref{itm:prop-3} from Appendix~\ref{sec:prelim-tools} implies,
\[
\gammafn{2}(\calM_2,\norm{\cdot}_2) \le \bar{C}_2 J(A^*) \gammafn{2}(\conesetB,\norm{\cdot}_2),
\]
for a constant $\bar{C}_2 \ge 1$. Also, it holds a.s that
\begin{align*}
d_2(\calM_2) &= \sup_{A \in \conesetB} \norm{\left(I_{T-1} \otimes \diag^{1/2}\left(A^\top A\right)\right)\breve{P}\Gamma}_2 \\
& \le \norm{\breve{P}}_2\norm{\Gamma}_2\sup_{A \in \conesetB}\norm{\diag^{1/2}\left(A^\top A\right)}_2 \\
& \le J(A^*),
\end{align*}
and,
\begin{align*}
d_F(\calM_2) &= \sup_{A \in \conesetB} \norm{\left(I_{T-1} \otimes \diag^{1/2}\left(A^\top A\right)\right)\breve{P}\Gamma}_F \\
& \le \norm{\breve{P}}_2\norm{\Gamma}_2\sup_{A \in \conesetB}\norm{\left(I_{T-1} \otimes \diag^{1/2}\left(A^\top A\right)\right)}_F \\
& \le J(A^*) \sqrt{T} \sup_{A \in \conesetB} \norm{\diag^{1/2}\left(A^\top A\right)}_F \\
&\le J(A^*) \sqrt{T}.
\end{align*}

Let $t = V_2 \sqrt{u}$ for $u \ge 1$. Since $V_2 \ge U_2$ this means,
\[
\prob \left(\sup_{M_2 \in \calM_2} \absenc{\norm{M_2\eta}_2^2-\expec \left[ \norm{M_2 \eta}_2^2 \Big\vert (P_t)_{t=1}^T\right]} \ge c_1F_2+V_2\sqrt{u} \Big\vert (P_t)_{t=1}^T\right) \le 2\exp (-c_2\sqrt{u}).
\]
Note that $F_2, V_2$ can be respectively bounded as
\begin{align*}
F_2 &= \gammafn{2}(\calM_2,\norm{\cdot}_2) \left(\gammafn{2}(\calM_2,\norm{\cdot}_2)+d_{F}(\calM_2)\right)+d_F(\calM_2)d_2(\calM_2)\\
&\le \bar{C}_2 J(A^*) \gammafn{2}(\conesetB,\norm{\cdot}_2) \left(\bar{C}_2 J(A^*) \gammafn{2}(\conesetB,\norm{\cdot}_2)+J(A^*) \sqrt{T}\right)+J^2(A^*) \sqrt{T}\\
&\le \bar{C}_2^2 J^2(A^*) \left(\gammafn{2}^2(\conesetB,\norm{\cdot}_2) +\gammafn{2}(\conesetB,\norm{\cdot}_2) \sqrt{T}+\sqrt{T}\right)\\
&=\bar{F}_1,
\end{align*}
and,
\begin{align*}
V_2 &= d_2(\calM_2)\left(\gammafn{2}(\calM_2,\norm{\cdot}_2)+d_{F}(\calM_2)\right)\\
&\le J(A^*) \left(\bar{C}_2J(A^*)\gammafn{2}(\conesetB,\norm{\cdot}_2) +J(A^*)\sqrt{T}\right)\\
&\le \bar{C}_2J^2(A^*)\left(\gammafn{2}(\conesetB,\norm{\cdot}_2) +\sqrt{T}\right)\\
&=\bar{V}_1.
\end{align*}
Then we obtain,
\[
\prob \left(\sup_{M_2 \in \calM_2} \absenc{\norm{M_2\eta}_2^2-\expec \left[ \norm{M_2 \eta}_2^2 \Big\vert (P_t)_{t=1}^T\right]} \ge c_1\bar{F}_1+\bar{V}_1\sqrt{u} \Big\vert (P_t)_{t=1}^T\right) \le 2\exp (-c_2\sqrt{u}).
\]
Setting $u = c\log^2 (2/\delta)$ (for sufficiently large $c$) we get that, conditional on $(P_t)_{t=1}^T$, it holds with probability at least $1-\delta$, 
\begin{align*}
\forall M_2 \in \calM_2: \ \norm{M_2\eta}_2^2 \le \norm{M_2}_F^2 + \bar{c}_1 \left(\bar{F}_1+\bar{V}_1\log (1/\delta)\right),
\end{align*}
where $\bar{c}_1$ depends on $c_1, c_2$.
\end{proof}

\begin{proof}[Proof of Claim~\ref{clm:lip-map-M2}]
Consider $A_1, A_2 \in \conesetB$ and the corresponding $\left(I_{T-1} \otimes \diag^{1/2}\left(A_1^\top A_1\right)\right)\breve{P}\Gamma \in \calM_2$ and $\left(I_{T-1} \otimes \diag^{1/2}\left(A_2^\top A_2\right)\right)\breve{P}\Gamma \in \calM_2$. Note that,
\begin{align*}
&\norm{\left(I_{T-1} \otimes \diag^{1/2}\left(A_1^\top A_1\right)\right)\breve{P}\Gamma-\left(I_{T-1} \otimes \diag^{1/2}\left(A_2^\top A_2\right)\right)\breve{P}\Gamma}_2\\ 
& = \norm{\left(I_{T-1} \otimes \left(\diag^{1/2}\left(A_1^\top A_1\right)-\diag^{1/2}\left(A_2^\top A_2\right)\right)\right)\breve{P}\Gamma}_2\\
&\le \norm{\breve{P}}_2 \norm{\Gamma}_2 \norm{\diag^{1/2}\left(A_1^\top A_1\right)-\diag^{1/2}\left(A_2^\top A_2\right)}_2\\
&\le J(A^*)\norm{A_1-A_2}_2 \quad \text{a.s},
\end{align*}
and therefore, the map $\left(\conesetB, \norm{\cdot}_2 \right) \mapsto \left(\calM_1,\norm{\cdot}_2\right)$ is $J(A^*)$-Lipschitz a.s.

Note that the last inequality comes from the fact that  $\diag^{1/2}\left(A_1^\top A_1\right)-\diag^{1/2}\left(A_2^\top A_2\right)$ is diagonal and the $i$-th diagonal entry is $\norm{(A_1)_{:,i}}_2-\norm{(A_2)_{:,i}}_2$ (where $(A_1)_{:,i},(A_2)_{:,i}$ are the $i$-th columns of $A_1,A_2$),
\begin{align*}
\norm{\diag^{1/2}\left(A_1^\top A_1\right)-\diag^{1/2}\left(A_2^\top A_2\right)}_2 &\le \max_{i \in [n]} \absenc{\norm{(A_1)_{:,i}}_2-\norm{(A_2)_{:,i}}_2}\\
&\le \max_{i \in [n]} \norm{(A_1)_{:,i}-(A_2)_{:,i}}_2 \le \norm{A_1-A_2}_2.
\end{align*}
\end{proof}

\subsection{Proof of Lemma~\ref{lem:M3-expec}}\label{appsec:lem-M3-expec}

\begin{proof}
Given our goal of controlling $q^\top \left(G \odot \left(I_{T-1} \otimes (A^\top A)\right)\right) q$ uniformly over all $A \in \conesetB$, in order to apply Theorem~\ref{thm:psd-sup}, we need to first derive a matrix $\widetilde{M}_3$ such that $\widetilde{M}_3^\top \widetilde{M}_3 = G \odot \left(I_{T-1} \otimes (A^\top A)\right)$. However, since our matrix is a Hadamard product, the definition of $\widetilde{M}_3$ is not directly apparent. We start by first deriving $\widetilde{M}_3$ such that $\widetilde{M}_3^\top \widetilde{M}_3 = G \odot \left(I_{T-1} \otimes (A^\top A)\right)$. To do this we recall Theorem 1 of \cite{VISICK200045}. 

\begin{remark}[Theorem 1 of \cite{VISICK200045}]\label{rem:thm-had-kron}
Let $A,B \in \matR^{n \times m}$. Then, 
\[
A \odot B = Q_n^\top \bigg(A \otimes B\bigg) Q_m
\]
where $Q_n^\top = \matrixenc{\left(e_1^{(n)}{e_1^{(n)}}^\top\right) & \left(e_2^{(n)}{e_2^{(n)}}^\top\right) & \dots & \left(e_n^{(n)}{e_n^{(n)}}^\top\right)}$ ($e_i^{(n)}$ is the $i$-th standard basis vector in $\matR^n$).
\end{remark}
Using Remark~\ref{rem:thm-had-kron}, since $G$ is p.s.d, we can see that,
\begin{align*}
G \odot \left(I_{T-1} \otimes (A^\top A)\right) & = Q_{n(T-1)}^\top \left(G \otimes \left(I_{T-1} \otimes (A^\top A)\right)\right) Q_{n(T-1)}\\
& = Q_{n(T-1)}^\top \left((G^{1/2} G^{1/2}) \otimes \left(\left(I_{T-1} \otimes A^\top\right)\left(I_{T-1} \otimes A\right)\right)\right) Q_{n(T-1)}\\
& = Q_{n(T-1)}^\top \left(\left(G^{1/2} \otimes \left(I_{T-1} \otimes A^\top\right)\right)\left(G^{1/2} \otimes \left(I_{T-1} \otimes A\right)\right)\right) Q_{n(T-1)}\\
& = \widetilde{M}_3^\top \widetilde{M}_3 ,
\end{align*}
where $\widetilde{M}_3=\left(G^{1/2} \otimes \left(I_{T-1} \otimes A\right)\right) Q_{n(T-1)}$.

Let $\widetilde{q} = \frac{1}{\sqrt{p(1-p)}}q$. It is easy to see that the variance of the entries of $\widetilde{q}$ is $1$. Let 
\begin{align*}
\calM_3 &= \set{\left(G^{1/2} \otimes \left(I_{T-1} \otimes A\right)\right) Q_{n(T-1)}: A \in \conesetB}. \end{align*}
Applying  Theorem~\ref{thm:psd-sup} gives us,
\[
\prob \left(\sup_{M_3 \in \calM_3} \absenc{\norm{M_3\widetilde{q}}_2^2-\expec \norm{M_3\widetilde{q}}_2^2} \ge c_1F_3+t\right) \le 2\exp \left(-c_2 \min \set{\frac{t^2}{V_3},\frac{t}{U_3}}\right),
\]
where,
\begin{align*}
F_3 &= \gammafn{2}(\calM_3,\norm{\cdot}_2) \left(\gammafn{2}(\calM_3,\norm{\cdot}_2)+d_{F}(\calM_3)\right)+d_F(\calM_3)d_2(\calM_3),\\
V_3 &= d_2(\calM_3)\left(\gammafn{2}(\calM_3,\norm{\cdot}_2)+d_{F}(\calM_3)\right), \; \text{and}\; U_3 = d_2(\calM_3)^2.
\end{align*}

Note that, the factors $F_3,V_3$ and $U_3$ are dependent on $\calM_3$ instead of  $\conesetB$.  Proceeding as in the proof of Claim \ref{clm:lip-map-M2}, we have the following.
%
%
\begin{claim}\label{clm:lip-map-M3}
The map $\left(\conesetB, \norm{\cdot}_2 \right) \mapsto \left(\calM_3,\norm{\cdot}_2\right)$ is $J(A^*)$-Lipschitz.
\end{claim}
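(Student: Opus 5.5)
The plan is to bound the spectral norm of the difference of the two images directly, as in the proofs of Claims~\ref{clm:lip-map} and~\ref{clm:lip-map-M2}. Take $A_1,A_2\in\conesetB$. The map $A\mapsto \left(G^{1/2}\otimes\left(I_{T-1}\otimes A\right)\right)Q_{n(T-1)}$ is linear in $A$, so the difference of the images is
\[
\left(G^{1/2}\otimes\left(I_{T-1}\otimes (A_1-A_2)\right)\right)Q_{n(T-1)}.
\]
Using submultiplicativity of $\norm{\cdot}_2$ and multiplicativity of the spectral norm under Kronecker products ($\norm{X\otimes Y}_2=\norm{X}_2\norm{Y}_2$), I will show that its spectral norm is at most
\[
\norm{G^{1/2}}_2\,\norm{I_{T-1}\otimes(A_1-A_2)}_2\,\norm{Q_{n(T-1)}}_2=\norm{G}_2^{1/2}\,\norm{A_1-A_2}_2\,\norm{Q_{n(T-1)}}_2 .
\]
It then suffices to prove $\norm{Q_{n(T-1)}}_2\le 1$ and $\norm{G}_2^{1/2}\le J(A^*)$.

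For $Q_m$, Remark~\ref{rem:thm-had-kron} shows that $Q_m\in\matR^{m^2\times m}$ has $i$-th column $e_i^{(m)}\otimes e_i^{(m)}$. These columns are orthonormal because they are distinct standard basis vectors of $\matR^{m^2}$. Hence $Q_m^\top Q_m=I_m$ and $\norm{Q_m}_2=1$.

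For $G$, note that $G=\blkdiag(G_1,\dots,G_{T-1})$, where each $G_t=E_t^\top \Gamma\Gamma^\top E_t$ is a diagonal block of the p.s.d.\ matrix $\Gamma\Gamma^\top$. Here $E_t$ is the $n(T-1)\times n$ block-selector with orthonormal columns. Therefore $\norm{G}_2=\max_t\norm{G_t}_2\le\norm{\Gamma\Gamma^\top}_2=\norm{\Gamma}_2^2\le J^2(A^*)$, using $\norm{\Gamma}_2\le J(A^*)$ from Appendix~\ref{sec:prelim-tools}. This gives $\norm{G^{1/2}}_2=\norm{G}_2^{1/2}\le J(A^*)$.

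Combining these bounds yields
\[
\norm{M_3(A_1)-M_3(A_2)}_2\le J(A^*)\norm{A_1-A_2}_2,
\]
which is the claimed $J(A^*)$-Lipschitz property. There is no real obstacle; the only point needing care is the compression bound for $G$. Since $\Gamma$ is deterministic here (no $\breve P$ appears), the bound holds surely, not just almost surely.
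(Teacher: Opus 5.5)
Your proposal is correct and follows essentially the same route as the paper: you use linearity in $A$, then submultiplicativity together with $\norm{X\otimes Y}_2=\norm{X}_2\norm{Y}_2$ to reach $\norm{G}_2^{1/2}\norm{A_1-A_2}_2$, and finally the block bound $\norm{G}_2=\max_t\norm{G_t}_2\le\norm{\Gamma\Gamma^\top}_2\le J^2(A^*)$. Your explicit check that $Q_{n(T-1)}$ has orthonormal columns, so $\norm{Q_{n(T-1)}}_2=1$, justifies the first inequality in the paper's proof, where $Q_{n(T-1)}$ is dropped from the spectral norm without comment.
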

Since the map from $\left(\conesetB, \norm{\cdot}_2 \right)$ to $\left(\calM_3,\norm{\cdot}_2\right)$ is Lipschitz and onto, Property \ref{itm:prop-3} from Appendix~\ref{sec:prelim-tools} implies,
\[
\gammafn{2}(\calM_3,\norm{\cdot}_2) \le CJ(A^*) \gammafn{2}(\conesetB,\norm{\cdot}_2),
\]
for some constant $C \ge 1$. Using the definitions of $d_2(\cdot), d_F(\cdot)$ from Section~\ref{sec:prelim-tools} gives us,
\begin{align*}
d_2(\calM_3) &= \sup_{A \in \conesetB} \norm{\left(G^{1/2} \otimes \left(I_{T-1} \otimes A\right)\right) Q_{n(T-1)}}_2 \\
&\le \sup_{A \in \conesetB} \norm{\left(G^{1/2} \otimes \left(I_{T-1} \otimes A\right)\right)}_2 \\
& \le \norm{G}_2^{1/2} \sup_{A \in \conesetB}\norm{A}_2 \\
& \le J(A^*),
\end{align*}
and,
\begin{align*}
d_F(\calM_3) &= \left(\sup_{A \in \conesetB} \norm{\left(G^{1/2} \otimes \left(I_{T-1} \otimes A\right)\right) Q_{n(T-1)}}_F^2\right)^{1/2} \\
&= \left(\sup_{A \in \conesetB}\Tr \left(Q_{n(T-1)}^\top \left(\left(G^{1/2} \otimes \left(I_{T-1} \otimes A^\top\right)\right)\left(G^{1/2} \otimes \left(I_{T-1} \otimes A\right)\right)\right) Q_{n(T-1)}\right)\right)^{1/2} \\
&= \left(\sup_{A \in \conesetB}\Tr \left(Q_{n(T-1)}^\top \left(G \otimes \left(I_{T-1} \otimes (A^\top A)\right)\right) Q_{n(T-1)}\right)\right)^{1/2} \\
&= \left(\sup_{A \in \conesetB}\Tr \left(G \odot \left(I_{T-1} \otimes \left(A^\top A\right)\right)\right)\right)^{1/2} \\
&= \left(\sup_{A \in \conesetB}\sum_{t=1}^{T-1}\Tr \left(G_t \odot A^\top A\right)\right)^{1/2} \\
&= \left(\sup_{A \in \conesetB}\sum_{t=1}^{T-1} \sum_{i=1}^n (G_t)_{ii} (A^\top A)_{ii}\right)^{1/2} \\
& \le \left(\sup_{A \in \conesetB}\sum_{t=1}^{T-1} \max_{i} \absenc{(G_t)_{ii}}\sum_{i=1}^n \absenc{(A^\top A)_{ii} }\right)^{1/2}  \\
&\le \left(\sum_{t=1}^{T-1} J^2(A^*)\right)^{1/2} \le J(A^*) \sqrt{T}.
\end{align*}

Let $t = V_3 \sqrt{u}$ for $u \ge 1$. Since $V_3 \ge U_3$ this means,
\[
\prob \left(\sup_{M_3 \in \calM_3} \absenc{\norm{M_3\widetilde{q}}_2^2-\expec \norm{M_3 \widetilde{q}}_2^2 } \ge c_1F_3+V_3\sqrt{u}\right) \le 2\exp (-c_2\sqrt{u}).
\]
Note that,
\begin{align*}
F_3 &= \gammafn{2}(\calM_3,\norm{\cdot}_2) \left(\gammafn{2}(\calM_3,\norm{\cdot}_2)+d_{F}(\calM_3)\right)+d_F(\calM_3)d_2(\calM_3)\\
&\le \bar{C}_2J(A^*) \gammafn{2}(\conesetB,\norm{\cdot}_2) \left(\bar{C}_2J(A^*) \gammafn{2}(\conesetB,\norm{\cdot}_2)+J(A^*) \sqrt{T}\right)+J^2(A^*) \sqrt{T}\\
&\le \bar{C}_2^2 J^2(A^*) \left(\gammafn{2}^2(\conesetB,\norm{\cdot}_2) +\gammafn{2}(\conesetB,\norm{\cdot}_2) \sqrt{T}+\sqrt{T}\right)\\
&=\bar{F}_1,
\end{align*}
and,
\begin{align*}
V_3 &= d_2(\calM_3)\left(\gammafn{2}(\calM_3,\norm{\cdot}_2)+d_{F}(\calM_3)\right)\\
&\le J(A^*) \left(\bar{C}_2J(A^*)\gammafn{2}(\conesetB,\norm{\cdot}_2) +J(A^*)\sqrt{T}\right)\\
&\le \bar{C}_2 J^2(A^*)\left(\gammafn{2}(\conesetB,\norm{\cdot}_2) +\sqrt{T}\right)\\
&=\bar{V}_1.
\end{align*}
Then,
\[
\prob \left(\sup_{M_3 \in \calM_3} \absenc{\norm{M_3\widetilde{q}}_2^2-\expec \norm{M_3\widetilde{q}}_2^2 } \ge c_1\bar{F}_1+\bar{V}_1\sqrt{u}\right) \le 2\exp (-c_2\sqrt{u}).
\]

Letting $u = c\log^2 (2/\delta)$ (for sufficiently large $c$) we have with probability at least $1-\delta$, that 
$$\forall M_3 \in \calM_3: \  \norm{M_3\widetilde{q}}_2^2\in \left(\norm{M_3}_F^2 \pm \bar{c}_2 \left(\bar{F}_1+\bar{V}_1 \log (1/\delta)\right)\right),$$
which completes the proof.
\end{proof}
\begin{proof}[Proof of Claim~\ref{clm:lip-map-M3}]
Consider $A_1, A_2 \in \conesetB$ and the corresponding $\left(G^{1/2} \otimes \left(I_{T-1} \otimes A_1\right)\right) Q_{n(T-1)} \in \calM_3$ and $\left(G^{1/2} \otimes \left(I_{T-1} \otimes A_2\right)\right) Q_{n(T-1)} \in \calM_3$. Note that,
\begin{align*}
& \norm{\left(G^{1/2} \otimes \left(I_{T-1} \otimes A_1\right)\right) Q_{n(T-1)}-\left(G^{1/2} \otimes \left(I_{T-1} \otimes A_2\right)\right) Q_{n(T-1)}}_2\\ 
& \le \norm{\left(G^{1/2} \otimes \left(I_{T-1} \otimes A_1\right)\right)-\left(G^{1/2} \otimes \left(I_{T-1} \otimes A_2\right)\right)}_2\\
& = \norm{\left(G^{1/2} \otimes \left(I_{T-1} \otimes (A_1-A_2)\right)\right)}_2 \le \norm{G}_2^{1/2} \norm{A_1-A_2}_2.
\end{align*}
Since $\norm{G}_2 = \max_{t} \norm{G_t}_2 \le \norm{\Gamma \Gamma^\top}_2 \le J^2(A^*)$, we get,
\begin{align*}
\norm{\left(G^{1/2} \otimes \left(I_{T-1} \otimes A_1\right)\right) Q_{n(T-1)}-\left(G^{1/2} \otimes \left(I_{T-1} \otimes A_2\right)\right) Q_{n(T-1)}}_2 \le J(A^*) \norm{A_1-A_2}_2.
\end{align*}
Therefore, the map $\left(\conesetB, \norm{\cdot}_2 \right) \mapsto \left(\calM_3,\norm{\cdot}_2\right)$ is $J(A^*)$-Lipschitz.
\end{proof}

\subsection{Proof of Lemma~\ref{lem:M4-expec}}\label{appsec:lem-M4-expec}
\begin{proof}
Let $\widetilde{q} = \frac{1}{\sqrt{p(1-p)}}q$. It is easy to see that the variance of the entries of $\widetilde{q}$ is $1$. Let 
\begin{align*}
\calM_4 &= \set{\left(G \odot \left(I_{T-1} \otimes \diag \left(A^\top A\right)\right)\right)^{1/2}: A \in \conesetB}. 
\end{align*}

Applying the Theorem~\ref{thm:psd-sup} gives us
\[
\prob \left(\sup_{M_4 \in \calM_4} \absenc{\norm{M_4\widetilde{q}}_2^2-\expec \norm{M_4\widetilde{q}}_2^2} \ge c_1F_4+t\right) \le 2\exp \left(-c_2 \min \set{\frac{t^2}{V_4},\frac{t}{U_4}}\right),
\]
where,
\begin{align*}
F_4 &= \gammafn{2}(\calM_4,\norm{\cdot}_2) \left(\gammafn{2}(\calM_4,\norm{\cdot}_2)+d_{F}(\calM_4)\right)+d_F(\calM_4)d_2(\calM_4),\\
V_4 &= d_2(\calM_4)\left(\gammafn{2}(\calM_4,\norm{\cdot}_2)+d_{F}(\calM_4)\right),\; \text{and}, \; U_4 = d_2(\calM_4)^2.
\end{align*}

Proceeding as in the proof of Claim \ref{clm:lip-map}, we obtain the following claim.
%
%
\begin{claim}\label{clm:lip-map-M4}
The map $\left(\conesetB, \norm{\cdot}_2 \right) \mapsto \left(\calM_4,\norm{\cdot}_2\right)$ is $J(A^*)$-Lipschitz.
\end{claim}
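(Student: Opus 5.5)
The key observation is that the matrices in $\calM_4$ are \emph{diagonal}. This lets us compute the square root entrywise and avoid any general perturbation bound for matrix square roots.

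First, fix $A \in \conesetB$. The matrix $I_{T-1} \otimes \diag(A^\top A)$ is block diagonal with diagonal blocks $\diag(A^\top A)$. Hence $G \odot \left(I_{T-1} \otimes \diag(A^\top A)\right)$ is block diagonal with blocks $G_t \odot \diag(A^\top A)$. Each such block is the diagonal matrix with entries $(G_t)_{ii}(A^\top A)_{ii} = (G_t)_{ii}\norm{A_{:,i}}_2^2$, where $A_{:,i}$ is the $i$-th column of $A$. Since $G_t$ is a principal block of the p.s.d. matrix $\Gamma\Gamma^\top$, we have $(G_t)_{ii} \ge 0$. The p.s.d. square root is therefore the diagonal matrix
\[
\left(G \odot \left(I_{T-1} \otimes \diag(A^\top A)\right)\right)^{1/2} = \blkdiag\left(\diag\left(\sqrt{(G_t)_{ii}}\,\norm{A_{:,i}}_2\right)_{i=1}^n\right)_{t=1}^{T-1}.
\]

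Second, take $A_1, A_2 \in \conesetB$. The difference of the corresponding elements of $\calM_4$ is again diagonal, so its spectral norm is its largest absolute diagonal entry:
\[
\max_{t \in [T-1],\, i \in [n]} \sqrt{(G_t)_{ii}}\,\absenc{\norm{(A_1)_{:,i}}_2 - \norm{(A_2)_{:,i}}_2}.
\]
By the reverse triangle inequality, as in the proof of Claim~\ref{clm:lip-map-M2}, each factor $\absenc{\norm{(A_1)_{:,i}}_2-\norm{(A_2)_{:,i}}_2}$ is at most $\norm{(A_1-A_2)e_i}_2 \le \norm{A_1-A_2}_2$.

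Third, we bound the weights. For every $t$ and $i$,
\[
(G_t)_{ii} \le \norm{G_t}_2 \le \norm{\Gamma\Gamma^\top}_2 = \norm{\Gamma}_2^2 \le J^2(A^*),
\]
using that $G_t$ is a principal submatrix of $\Gamma\Gamma^\top$ and the bound $\norm{\Gamma}_2 \le J(A^*)$ from Appendix~\ref{sec:prelim-tools}. Combining the three steps gives a Lipschitz constant of $J(A^*)$ for the map $(\conesetB,\norm{\cdot}_2) \mapsto (\calM_4,\norm{\cdot}_2)$.

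The step that needs the most care is the first: one must recognize that the Hadamard product with the diagonal factor kills all off-diagonal entries of $G_t$, which makes the square root explicit. For a general p.s.d. matrix the map $X \mapsto X^{1/2}$ is not Lipschitz, so this diagonal structure is essential to the argument.
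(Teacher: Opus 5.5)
Your proof is correct and follows essentially the same route as the paper. The paper also uses the diagonal structure to write each element of $\calM_4$ as $\diag^{1/2}(G)\left(I_{T-1}\otimes \diag^{1/2}(A^\top A)\right)$, factors the difference accordingly, and then bounds $\max_i \sqrt{G_{ii}} \le \norm{G}_2^{1/2} \le J(A^*)$ and $\max_i \absenc{\norm{(A_1)_{:,i}}_2 - \norm{(A_2)_{:,i}}_2} \le \norm{A_1-A_2}_2$, exactly as you do.
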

Since the map from $\left(\conesetB, \norm{\cdot}_2 \right)$ to $\left(\calM_4,\norm{\cdot}_2\right)$ is Lipschitz and onto, Property \ref{itm:prop-3} from Appendix~\ref{sec:prelim-tools} implies,
\[
\gammafn{2}(\calM_4,\norm{\cdot}_2) \le CJ(A^*) \gammafn{2}(\conesetB,\norm{\cdot}_2),
\]
for some constant $C \ge 1$. Using the definitions of $d_2(\cdot), d_F(\cdot)$ from Section~\ref{sec:prelim-tools} gives us,
\begin{align*}
d_2(\calM_4) &= \sup_{A \in \conesetB} \norm{\left(G \odot \left(I_{T-1} \otimes \diag \left(A^\top A\right)\right)\right)^{1/2}}_2 \\
&= \sup_{A \in \conesetB} \norm{\left(\diag(G) \odot \left(I_{T-1} \otimes \diag\left(A^\top A\right)\right)\right)^{1/2}}_2 \\
&= \sup_{A \in \conesetB} \norm{\diag^{1/2}(G)  \left(I_{T-1} \otimes \diag^{1/2}\left(A^\top A\right)\right)}_2 \\
& \le \sup_{A \in \conesetB}\norm{\diag^{1/2}(G)}_2 \norm{\diag^{1/2}(A^\top A)}\\
&=\sup_{A \in \conesetB}\norm{\diag(G)}_2^{1/2} \norm{\diag(A^\top A)}^{1/2} \le \sup_{A \in \conesetB}\norm{G}_2^{1/2} \norm{A^\top A}_2^{1/2} \\
&\le \sup_{A \in \conesetB}J(A^*) \norm{A}_2 \le J(A^*) 
\end{align*}
and,
\begin{align*}
d_F(\calM_4) &= \left(\sup_{A \in \conesetB} \norm{\left(G \odot \left(I_{T-1} \otimes \diag \left(A^\top A\right)\right)\right)^{1/2}}_F^2\right)^{1/2} \\
&= \left(\sup_{A \in \conesetB}\Tr \left(G \odot \left(I_{T-1} \otimes \diag \left(A^\top A\right)\right)\right)\right)^{1/2} \\
&= \left(\sup_{A \in \conesetB}\sum_{t=1}^{T-1}\Tr \left(G_t \odot \diag \left(A^\top A\right)\right)\right)^{1/2} \\
&= \left(\sup_{A \in \conesetB}\sum_{t=1}^{T-1} \sum_{i=1}^n (G_t)_{ii} (A^\top A)_{ii}\right)^{1/2} \\
& \le \left(\sup_{A \in \conesetB}\sum_{t=1}^{T-1} \max_{i} \absenc{(G_t)_{ii}}\sum_{i=1}^n \absenc{(A^\top A)_{ii} }\right)^{1/2}  \\
&\le \left(\sum_{t=1}^{T-1} J^2(A^*)\right)^{1/2} \le J(A^*) \sqrt{T}.
\end{align*}

Let $t = V_4 \sqrt{u}$ for $u \ge 1$. Since $V_4 \ge U_4$ this means,
\[
\prob \left(\sup_{M_4 \in \calM_4} \absenc{\norm{M_4\widetilde{q}}_2^2-\expec \norm{M_4 \widetilde{q}}_2^2 } \ge c_1F_4+V_4\sqrt{u}\right) \le 2\exp (-c_2\sqrt{u}).
\]
Note that,
\begin{align*}
F_4 &= \gammafn{2}(\calM_4,\norm{\cdot}_2) \left(\gammafn{2}(\calM_4,\norm{\cdot}_2)+d_{F}(\calM_4)\right)+d_F(\calM_4)d_2(\calM_4)\\
&\le \bar{C}_2J(A^*) \gammafn{2}(\conesetB,\norm{\cdot}_2) \left(\bar{C}_2 J(A^*) \gammafn{2}(\conesetB,\norm{\cdot}_2)+J(A^*) \sqrt{T}\right)+J^2(A^*) \sqrt{T}\\
&\le \bar{C}_2^2 J^2(A^*) \left(\gammafn{2}^2(\conesetB,\norm{\cdot}_2) +\gammafn{2}(\conesetB,\norm{\cdot}_2) \sqrt{T}+\sqrt{T}\right)\\
&=\bar{F}_1,
\end{align*}
and,
\begin{align*}
V_4 &= d_2(\calM_4)\left(\gammafn{2}(\calM_4,\norm{\cdot}_2)+d_{F}(\calM_4)\right)\\
&\le J(A^*) \left(\bar{C}_2 J(A^*)\gammafn{2}(\conesetB,\norm{\cdot}_2) +J(A^*)\sqrt{T}\right)\\
&\le \bar{C}_2 J^2(A^*)\left(\gammafn{2}(\conesetB,\norm{\cdot}_2) +\sqrt{T}\right)\\
&=\bar{V}_1.
\end{align*}
Then we have,
\[
\prob \left(\sup_{M_4 \in \calM_4} \absenc{\norm{M_4\widetilde{q}}_2^2-\expec \norm{M_4\widetilde{q}}_2^2 } \ge c_1\bar{F}_1+\bar{V}_1\sqrt{u}\right) \le 2\exp (-c_2\sqrt{u}).
\]
Letting $u = c\log^2 (2/\delta)$ (for sufficiently large $c$) we have with probability at least $1-\delta$, that 
$$\forall M_4 \in \calM_4: \ \norm{M_4\widetilde{q}}_2^2\le \norm{M_4}_F^2+\bar{c}_2 \left(\bar{F}_1+\bar{V}_1 \log (1/\delta)\right).$$ 
This completes the proof.
\end{proof}

\begin{proof}[Proof of Claim~\ref{clm:lip-map-M4}]
Consider $A_1, A_2 \in \conesetB$ and the corresponding $\left(G \odot \left(I_{T-1} \otimes \diag(A_1^\top A_1)\right)\right)^{1/2} \in \calM_4$ and $\left(G \odot \left(I_{T-1} \otimes \diag(A_2^\top A_2)\right)\right)^{1/2} \in \calM_4$. Note that,
\begin{align*}
&\left(G \odot \left(I_{T-1} \otimes \diag\left(A_1^\top A_1\right)\right)\right)^{1/2}-\left(G \odot \left(I_{T-1} \otimes \diag\left(A_2^\top A_2\right)\right)\right)^{1/2}\\
&=\left(\diag(G) \odot \left(I_{T-1} \otimes \diag\left(A_1^\top A_1\right)\right)\right)^{1/2}-\left(\diag(G) \odot \left(I_{T-1} \otimes \diag\left(A_2^\top A_2\right)\right)\right)^{1/2}\\
&=\diag^{1/2}(G)\left(I_{T-1} \otimes \left(\diag^{1/2}\left(A_1^\top A_1\right)-\diag^{1/2}\left(A_2^\top A_2\right)\right)\right).
\end{align*}
Therefore,
\begin{align*}
& \norm{\left(G \odot \left(I_{T-1} \otimes \diag\left(A_1^\top A_1\right)\right)\right)^{1/2}-\left(G \odot \left(I_{T-1} \otimes \diag\left(A_2^\top A_2\right)\right)\right)^{1/2}}_2\\ 
& \le \norm{\diag^{1/2}(G)\left(I_{T-1} \otimes \left(\diag^{1/2}\left(A_1^\top A_1\right)-\diag^{1/2}\left(A_2^\top A_2\right)\right)\right)}_2\\
&\le \norm{\diag^{1/2}(G)}_2 \norm{\diag^{1/2}\left(A_1^\top A_1\right)-\diag^{1/2}\left(A_2^\top A_2\right)}_2.
\end{align*}
Note that,
\begin{align*}
\norm{\diag^{1/2}(G)}_2 &= \max_i \sqrt{G_{ii}} = \left(\max_i G_{ii}\right)^{1/2}\\
&= \left(\max_i \frac{e_i^\top G e_i}{e_i^\top e_i}\right)^{1/2} \; \text{(Where $e_i$ is the $i$-th standard basis vector)}\\
& \le \norm{G}_2^{1/2} \le J(A^*)
\end{align*}
and,
\begin{align*}
&\norm{\diag^{1/2}\left(A_1^\top A_1\right)-\diag^{1/2}\left(A_2^\top A_2\right)}_2 \\
&= \norm{\diag\left(\matrixenc{\norm{(A_1)_{:,1}}_2 & \norm{(A_1)_{:,2}}_2 & \dots & \norm{(A_1)_{:,n}}_2}\right)-\diag\left(\matrixenc{\norm{(A_2)_{:,1}}_2 & \norm{(A_2)_{:,2}}_2 & \dots & \norm{(A_2)_{:,n}}_2}\right)}_2\\
&=\max_{i} \absenc{\norm{(A_1)_{:,i}}_2-\norm{(A_2)_{:,i}}_2} \le \max_i \norm{(A_1)_{:,i}-(A_1)_{:,i}}_2 \le \norm{A_1-A_2}_2
\end{align*}
where,  $(A_1)_{:,i}, (A_2)_{:,i}$ are the $i$-th columns of $A_1, A_2$. Therefore,
\begin{align*}
\norm{\left(G \odot \left(I_{T-1} \otimes \diag\left(A_1^\top A_1\right)\right)\right)^{1/2}-\left(G \odot \left(I_{T-1} \otimes \diag\left(A_2^\top A_2\right)\right)\right)^{1/2}}_2 \le J(A^*) \norm{A_1-A_2}_2.
\end{align*}
and therefore, the map $\left(\conesetB, \norm{\cdot}_2 \right) \mapsto \left(\calM_3,\norm{\cdot}_2\right)$ is $J(A^*)$-Lipschitz.
\end{proof}
\subsection{Proof of Lemma~\ref{lem:SR-expec}}\label{appsec:lem-SR-expec}
\begin{proof}
Consider $2p\dotprod{\overline{P}}{\left(I_{T-1} \otimes \left(A^\top A\right)\right)\Gamma \Gamma^\top}$. Note that since $\overline{P}$ is a diagonal matrix, 
\[
2p\dotprod{\overline{P}}{\left(I_{T-1} \otimes \left(A^\top A\right)\right)\Gamma \Gamma^\top} = 2p\dotprod{\overline{P}}{\diag \left(\left(I_{T-1} \otimes \left(A^\top A\right)\right)\Gamma \Gamma^\top\right)}.
\]
Given $G_1,G_2,\dots,G_{T-1}$, 
\begin{equation*}
2p\dotprod{\overline{P}}{\diag \left(\left(I_{T-1} \otimes \left(A^\top A\right)\right)\Gamma \Gamma^\top\right)} =2p\sum_{t=1}^{T-1} \dotprod{P_t}{\diag \left(A^\top A G_t\right)} =2p\sum_{t=1}^{T-1} \dotprod{q_t}{r_t(A)},
\end{equation*}
where $r_t(A)= \matrixenc{\diag \left(A^\top A G_t\right)_{11} & \diag \left(A^\top A G_t\right)_{22} & \dots & \diag \left(A^\top A G_t\right)_{dd}}^\top$ for all $t \in [T-1]$. Denoting 
$$r(A) = \vect \left(\matrixenc{r_1(A) & r_2(A) & \dots & r_{T-1}(A)}\right),$$ we get,
\begin{equation*}
2p\dotprod{\overline{P}}{\left(I_{T-1} \otimes \left(A^\top A\right)\right)\Gamma \Gamma^\top} =2p\sum_{t=1}^{T-1} \dotprod{q_t}{r_t(A)} =2p \dotprod{q}{r(A)}.
\end{equation*}

Now consider $2p(1-p)\dotprod{\overline{P}}{\left(I_{T-1} \otimes \left(\diag\left(A^\top A\right)\right)\right)\Gamma \Gamma^\top}$. Note that,
\[
2p(1-p)\dotprod{\overline{P}}{\left(I_{T-1} \otimes \left(\diag\left(A^\top A\right)\right)\right)\Gamma \Gamma^\top} = 2p(1-p)\dotprod{\overline{P}}{\diag \left(\left(I_{T-1} \otimes \left(\diag\left(A^\top A\right)\right)\right)\Gamma \Gamma^\top\right)},
\]
which gives, 
\begin{align*}
2p(1-p)\dotprod{\overline{P}}{\diag \left(\left(I_{T-1} \otimes \left(\diag\left(A^\top A\right)\right)\right)\Gamma \Gamma^\top\right)} &=2p(1-p)\sum_{t=1}^{T-1} \dotprod{P_t}{\diag \left(\diag \left(A^\top A\right) G_t\right)}\\
&=2p(1-p)\sum_{t=1}^{T-1} \dotprod{q_t}{s_t(A)},
\end{align*}
where $s_t(A)= \matrixenc{\diag \left(\diag \left(A^\top A\right) G_t\right)_{11} & \dots & \diag \left(\diag \left(A^\top A\right) G_t\right)_{dd}}^\top$ for all $t \in [T-1]$. Letting $s(A) = \vect \left(\matrixenc{s_1(A) & s_2(A) & \dots & s_{T-1}(A)}\right)$, we get,
\begin{align*}
2p(1-p)\dotprod{\overline{P}}{\left(I_{T-1} \otimes \left(\diag\left(A^\top A\right)\right)\right)\Gamma \Gamma^\top} &=2p(1-p)\sum_{t=1}^{T-1} \dotprod{q_t}{s_t(A)}\\
&=2p(1-p) \dotprod{q}{s(A)}.
\end{align*}
Let $\calR =\set{r(A): A \in \conesetB}$ and $\calS =\set{s(A): A \in \conesetB}$. Then,
\begin{align*}
2p\dotprod{\overline{P}}{\left(I_{T-1} \otimes \left(A^\top A-(1-p)\diag\left(A^\top A\right)\right)\right)\Gamma \Gamma^\top} &= 2p \dotprod{q}{r(A)}-2p(1-p)\dotprod{q}{s(A)}\\
&\ge -2p\sup_{r \in \calR} \dotprod{-q}{r}-2p(1-p)\sup_{s\in \calS} \dotprod{q}{s}.
\end{align*}

We can now upper bound $\sup_{r \in \calR} \dotprod{-q}{r}$. Note that $-q \in \matR^{n(T-1)}$ has i.i.d, centered and bounded entries in $[-1,1]$ and therefore $1$-subgaussian. Note that $g_1(x) = \sup_{r \in \calR} \dotprod{x}{r}$ is convex and $J(A^*)\sqrt{T}$-Lipschitz (w.r.t $\norm{\cdot}_2$). Convexity is a direct result of linearity and Lipschitz bounds come from that fact that $g_1(x)-g_1(y) \le \norm{x-y}_2 \sup_{r \in \calR} \norm{r}_2$ (since for any $r \in \calR$, $\norm{r}_2^2 \le J^2(A^*)T$). 

In order to bound $\sup_{r\in \calR} \dotprod{-q}{r}$, we use the following remark from \cite{HDPbook}:
\begin{remark}[Remark 8.5.9 of \cite{HDPbook}]\label{rem:gaus-width} 
If $X$ is a $0$-mean subgaussian vector with $\norm{X}_{\psi_2}=K$, then, $\expec \sup_{t\in \calS} \dotprod{X}{t} \le c K w(\calS)$ for any bounded set $\calS \subset \matR^{n}$.
\end{remark}

Applying Remark~\ref{rem:gaus-width} on $\calR$ implies $\expec \sup_{r \in \calR} \dotprod{-q}{r} \le cw(\calR)$ and from Talagrand's majorizing measure theorem~\cite[Theorem 2.4.1]{talagrand2014upper}, 
\[
w(\calR) \asymp \gammafn{2} \left(\calR,\norm{\cdot}_2\right) = \gammafn{2}\left(\calR_{mat},\norm{\cdot}_F\right),
\]
where $\calR_{mat} = \set{\blkdiag\left(\matrixenc{\diag\left((A^\top A) G_1 \right) & \dots & \diag\left((A^\top A) G_{T-1} \right)}\right):A \in \conesetB}$. 

%
%
\begin{claim}\label{clm:lip-map-Rmat}
The map $\left(\conesetB, \norm{\cdot}_F\right) \mapsto \left(\calR_{mat},\norm{\cdot}_F\right)$ is $2\sqrt{T} J(A^*)$-Lipschitz.
\end{claim}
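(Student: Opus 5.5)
The plan is to prove the Lipschitz bound block by block and then sum over the $T-1$ diagonal blocks. For $A_1,A_2 \in \conesetB$, the difference of the two images is block diagonal. Its $t$-th block is $\diag\left((A_1^\top A_1 - A_2^\top A_2)G_t\right)$. Hence
\begin{equation*}
\norm{R_{mat}(A_1)-R_{mat}(A_2)}_F^2 = \sum_{t=1}^{T-1} \norm{\diag\left((A_1^\top A_1 - A_2^\top A_2)G_t\right)}_F^2 .
\end{equation*}
It therefore suffices to show that each block satisfies $\norm{\diag((A_1^\top A_1 - A_2^\top A_2)G_t)}_F \le 2J(A^*)\norm{A_1-A_2}_F$. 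Summing $T-1 \le T$ such terms and taking a square root then gives the claimed constant $2\sqrt{T}J(A^*)$.

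For a single block, set $H = A_1 - A_2$ and use the polarisation identity $A_1^\top A_1 - A_2^\top A_2 = A_1^\top H + H^\top A_2$. The triangle inequality splits the block into two terms of the same type, and this splitting is the source of the factor $2$. For the first term I will write $G_t = \Gamma_t\Gamma_t^\top$, where $\Gamma_t$ is the $t$-th block row of $\Gamma$. Then $\|\Gamma_t\|_2 \le \|\Gamma\|_2 \le J(A^*)$, which is the bound recorded in Appendix~\ref{sec:prelim-tools}. The $i$-th diagonal entry is $\dotprod{A_1 e_i}{H G_t e_i}$. By Cauchy--Schwarz and $\|A_1 e_i\|_2 \le \|A_1\|_2 \le \|A_1\|_F = 1$, this gives
\begin{equation*}
\sum_i \dotprod{A_1e_i}{HG_te_i}^2 \le \sum_i \|HG_te_i\|_2^2 = \|HG_t\|_F^2 \le \|H\|_F^2\,\|G_t\|_2^2 .
\end{equation*}
The second term $\diag(H^\top A_2 G_t)$ is handled identically, using $\|A_2\|_2 \le 1$.

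The main obstacle is the operator-norm factor, where the target requires $\|G_t\|_2 \le J(A^*)$ rather than only $J^2(A^*)$. My first attempt is to expand $G_t = \sum_{k=0}^{t-1}(A^*)^k((A^*)^k)^\top$, which gives
\begin{equation*}
\|G_t\|_2 \le \sum_{k}\|(A^*)^k\|_2^2 \le \Big(\sup_k \|(A^*)^k\|_2\Big)\, J(A^*).
\end{equation*}
This equals $J(A^*)$ whenever $\sup_k \|(A^*)^k\|_2 \le 1$, e.g.\ in the contractive regime $\|A^*\|_2 \le 1$. If this is not available, I will try to place only one factor $\Gamma_t$ on the $H$ side. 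That means bounding $\dotprod{\Gamma_t^\top A_1 e_i}{\Gamma_t^\top H e_i}$ through $\|\Gamma_t^\top H\|_F \le J(A^*)\|H\|_F$ together with a column bound $\max_i\|\Gamma_t^\top A_1 e_i\|_2$. I expect this last quantity to be the delicate point, since bounding it by $1$ rather than $J(A^*)$ is exactly what separates the constant $J(A^*)$ in the statement from the weaker $J^2(A^*)$.

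Once the per-block bound is settled, the remaining steps are routine. First sum over $t$ to get the constant $2\sqrt{T}J(A^*)$. Then note that the map is onto $\calR_{mat}$ by definition, so Property~\ref{itm:prop-3} can be applied as in the surrounding argument.
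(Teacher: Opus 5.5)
Your reduction matches the paper's. You work block by block, use that taking the diagonal does not increase the Frobenius norm (your entrywise Cauchy--Schwarz does the same job), get the factor $2$ from $A_1^\top A_1-A_2^\top A_2=A_1^\top H+H^\top A_2$ with $\norm{A_j}_2\le\norm{A_j}_F=1$, and sum over $t$. You have also correctly isolated the only nontrivial input, $\norm{G_t}_2\le J(A^*)$, but you leave it open. This is a genuine gap, and it cannot be closed.

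In general only $\norm{G_t}_2=\norm{\Gamma_t}_2^2\le\norm{\Gamma}_2^2\le J^2(A^*)$ holds; this is the bound the paper itself uses in the proof of Claim~\ref{clm:lip-map-M3}. With constant $2\sqrt{T}J(A^*)$ the claim is false. The paper's own proof makes the same unjustified step when it writes $\norm{A_1^\top A_1G_t-A_2^\top A_2G_t}_F^2\le J^2(A^*)\norm{A_1^\top A_1-A_2^\top A_2}_F^2$.

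\textbf{Counterexample.}
\begin{itemize}
\item Take $n=2$ and $\calK=\matR^{2\times2}$, so $\conesetB=\frobsphere$ for every $B$, and let $A^*=a\,e_1e_2^\top$ with $a>0$.
\item Then $(A^*)^2=0$, so $\rho(A^*)=0$ and $J(A^*)=1+a$. Also $G_1=I_2$ and $G_t=I_2+a^2e_1e_1^\top$ for $t\ge2$.
\item For $A_1=e_1e_1^\top$ and $A_2=e_1e_2^\top$ you get $A_1^\top A_1-A_2^\top A_2=\diag(1,-1)$. So for $t\ge2$ the $t$-th block of the difference of the images is $\diag(1+a^2,-1)$, while $\norm{A_1-A_2}_F=\sqrt2$.
\item Hence the squared Frobenius distance between the images is at least $(T-2)(1+a^2)^2=\tfrac{(T-2)(1+a^2)^2}{2}\norm{A_1-A_2}_F^2$.
\item This exceeds $4T(1+a)^2\norm{A_1-A_2}_F^2$ already for $a=10$ and any $T\ge3$.
\end{itemize}

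\textbf{Your fallback.} It cannot rescue this, and it also targets the wrong quantity. The expression $\dotprod{\Gamma_t^\top A_1e_i}{\Gamma_t^\top He_i}$ is the $(i,i)$ entry of $A_1^\top G_tH$. The block of $\calR_{mat}$ instead involves $A_1^\top HG_t$, whose $(i,i)$ entry is $\dotprod{A_1e_i}{HG_te_i}$, with $G_t$ acting directly on $e_i$. So $G_t$ cannot be split as one $\Gamma_t$ on each side.

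\textbf{What your first argument does prove.} It proves the correct statement. Since $\norm{G_t}_2\le\sum_{k\ge0}\norm{(A^*)^k}_2^2\le J(A^*)\sup_{k\ge0}\norm{(A^*)^k}_2$, the map is $2\sqrt{T}\,J(A^*)\sup_k\norm{(A^*)^k}_2$-Lipschitz, hence $2\sqrt{T}J^2(A^*)$-Lipschitz. It is $2\sqrt{T}J(A^*)$-Lipschitz under your condition $\sup_k\norm{(A^*)^k}_2\le1$.

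The same slip appears in the bound $\norm{r}_2^2\le J^2(A^*)T$ used just before, in the proof of Lemma~\ref{lem:SR-expec}. Correcting both only raises the power of $J(A^*)$ in that lemma.
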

Since the map from $\left(\conesetB, \norm{\cdot}_F\right)$ to $\left(\calR_{mat},\norm{\cdot}_F\right)$ is Lipschitz and onto (it is easy to see this simply by definition), Property \ref{itm:prop-3} from Appendix~\ref{sec:prelim-tools} implies,
\[
\gammafn{2} \left(\calR_{mat},\norm{\cdot}_F\right) \le C\sqrt{T}J(A^*)\gammafn{2} \left(\conesetB,\norm{\cdot}_F\right).
\]
Furthermore, 
\[
\prob \left(\absenc{ \sup_{r \in \calR} \dotprod{-q}{r}-\expec \sup_{r \in \calR} \dotprod{-q}{r} } \ge t\right) \le 2\exp \left(-\frac{t^2}{8TJ^2(A^*)}\right).
\]
Note that $$\expec \sup_{r \in \calR} \dotprod{-q}{r} = w(\calR) = \gammafn{2} \left(\calR_{mat}, \norm{\cdot}_F\right) \le C\sqrt{T}J(A^*)\gammafn{2} \left(\conesetB,\norm{\cdot}_F\right).$$ Set $t = C\sqrt{T}J(A^*) \sqrt{\log (2/\delta)}$ and denote
\[
\eventp{1c} = \set{\sup_{r\in R} \dotprod{-q}{r} \le C\sqrt{T}J(A^*)\left(\gammafn{2} \left(\conesetB,\norm{\cdot}_F\right)+\sqrt{\log (2/\delta)}\right)}.
\]
Then,
\[
\prob \left(\eventp{1c}\right) \ge 1-\delta.
\]
We can see that by following the same argument, we have with probability at least $1-\delta$, $$\sup_{r\in R} \dotprod{q}{r} \le C\sqrt{T}J(A^*)\left(\gammafn{2} \left(\conesetB,\norm{\cdot}_F\right)+\sqrt{\log (2/\delta)}\right).$$
Therefore, with probability at least $1-\delta$, 
\[
\dotprod{\overline{P}}{\left(I_{T-1} \otimes \left(A^\top A\right)\right)\Gamma \Gamma^\top} \le C\sqrt{T}J(A^*)\left(\gammafn{2} \left(\conesetB,\norm{\cdot}_F\right)+\sqrt{\log (1/\delta)}\right).
\]

Next we can upper bound $\sup_{s \in \calS} \dotprod{q}{s}$. Note that $q \in \matR^{n(T-1)}$ has i.i.d, centered and bounded entries in $[-1,1]$ and therefore $1$-subgaussian. We can see that $g_2(x) = \sup_{s \in \calS} \dotprod{x}{s}$ is convex and also $J(A^*)\sqrt{T}$-Lipschitz. This can be verified following the same arguments as for $g_1(x)$. 

Applying Remark~\ref{rem:gaus-width} on $\calS$ implies $\expec \sup_{s \in \calS} \dotprod{q}{s} \le c w(\calS)$, hence from Talagrand's majorizing measure theorem~\cite[Theorem 2.4.1]{talagrand2014upper}, we obtain
\[
w(\calS) \asymp \gammafn{2} \left(\calS,\norm{\cdot}_2\right) = \gammafn{2}\left(\calS_{mat},\norm{\cdot}_F\right),
\]
where $\calS_{mat} = \set{\blkdiag\left(\matrixenc{\diag\left(\diag(A^\top A) G_1 \right) & \dots & \diag\left(\diag(A^\top A) G_{T-1} \right)}\right):A \in \conesetB}$. 
%
%
\begin{claim}\label{clm:lip-map-Smat}
The map $\left(\conesetB, \norm{\cdot}_F\right) \mapsto \left(\calS_{mat},\norm{\cdot}_F\right)$ is $2\sqrt{T} J(A^*)$-Lipschitz.
\end{claim}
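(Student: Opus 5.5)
The plan is a direct computation: write out the difference of two images explicitly, use that everything involved is diagonal, and then bound the diagonal entries of the blocks $G_t$.

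Fix $A_1,A_2\in\conesetB$. Write $D_k=\diag(A_k^\top A_k)$, so that $(D_k)_{ii}=\norm{(A_k)_{:,i}}_2^2$. The images in $\calS_{mat}$ are $\blkdiag\left(\matrixenc{\diag(D_k G_1) & \dots & \diag(D_k G_{T-1})}\right)$. Since $D_k$ is diagonal, $\diag(D_k G_t)$ is the diagonal matrix with entries $(D_k)_{ii}(G_t)_{ii}$. Hence the squared Frobenius distance between the two images is
\begin{equation*}
\sum_{t=1}^{T-1}\sum_{i=1}^n \big((D_1)_{ii}-(D_2)_{ii}\big)^2 (G_t)_{ii}^2 \le \Big(\max_{t,i}(G_t)_{ii}\Big)^2 \sum_{t=1}^{T-1}\norm{D_1-D_2}_F^2 .
\end{equation*}

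\textbf{Step 1: the $A$-dependence.} For each column,
\begin{equation*}
(D_1)_{ii}-(D_2)_{ii} = \big(\norm{(A_1)_{:,i}}_2-\norm{(A_2)_{:,i}}_2\big)\big(\norm{(A_1)_{:,i}}_2+\norm{(A_2)_{:,i}}_2\big).
\end{equation*}
Because $A_1,A_2\in\frobsphere$, every column has norm at most $1$, so the second factor is at most $2$. By the reverse triangle inequality the first factor is at most $\norm{(A_1-A_2)_{:,i}}_2$. Summing over $i$ gives $\norm{D_1-D_2}_F\le 2\norm{A_1-A_2}_F$. The sum over $t$ then contributes a factor $T-1\le T$ inside the square. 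Altogether the distance is at most $2\sqrt{T}\,\max_{t,i}(G_t)_{ii}\,\norm{A_1-A_2}_F$.

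\textbf{Step 2: the diagonal of $G_t$ (the main obstacle).} It remains to show $(G_t)_{ii}\le J(A^*)$. Reading off the block structure of $\Gamma$, we have
\begin{equation*}
G_t=\sum_{s=0}^{t-1}(A^*)^s\big((A^*)^s\big)^\top ,
\end{equation*}
and therefore
\begin{equation*}
(G_t)_{ii}=\sum_{s=0}^{t-1}\norm{e_i^\top (A^*)^s}_2^2\le \sum_{s\ge 0}\norm{(A^*)^s}_2^2\le \Big(\sup_{s\ge0}\norm{(A^*)^s}_2\Big)\,J(A^*).
\end{equation*}
The crude bound $(G_t)_{ii}\le\norm{\Gamma}_2^2\le J^2(A^*)$ loses a factor of $J(A^*)$, so it is not enough. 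To reach the stated constant I would instead use the last display together with $\sup_s\norm{(A^*)^s}_2\le 1$. This holds whenever $\norm{A^*}_2\le 1$, which is the normalization used in the experiments and is compatible with the strict-stability setting.

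This is the delicate point. For a general strictly stable $A^*$ with transient growth, the same argument only yields the constant $2\sqrt{T}\,J(A^*)\sup_s\norm{(A^*)^s}_2$. In that case I would make the contractivity assumption explicit at this step. The downstream use in the proof of Lemma~\ref{lem:SR-expec} (via Property~\ref{itm:prop-3}) only needs some Lipschitz constant, which is then absorbed into the $J(A^*)$-dependence of Lemma~\ref{lem:SR-expec}.

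Finally, the map is onto $\calS_{mat}$ by the definition of $\calS_{mat}$, so combining Steps 1 and 2 gives the claim. The same computation, with $D_k$ replaced by $A_k^\top A_k$, handles Claim~\ref{clm:lip-map-Rmat}.
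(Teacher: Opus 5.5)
Your Step~1 is correct, and slightly sharper than the paper's route. The paper drops the outer $\diag$, bounds $\norm{(D_1-D_2)G_t}_F\le\norm{D_1-D_2}_F\norm{G_t}_2$, and then uses $\norm{D_1-D_2}_F\le\norm{A_1^\top A_1-A_2^\top A_2}_F\le 2\norm{A_1-A_2}_F$. You instead use the exact diagonal structure, so only $\max_i(G_t)_{ii}$ enters.

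The issue you flag in Step~2 cannot be fixed by a better argument: the claim with constant $2\sqrt{T}J(A^*)$ is false for general strictly stable $A^*$. Here is a counterexample ($n\ge 2$):
\begin{itemize}
\item[(a)] Take $A^*=Me_1e_2^\top$. Then $(A^*)^2=0$, $\rho(A^*)=0$ and $J(A^*)=1+M$.
\item[(b)] Take $\calK=\matR^{n\times n}$, so $\conesetB=\frobsphere$.
\item[(c)] Take $A_1=e_1e_1^\top$ and $A_2=e_1e_2^\top$.
\end{itemize}
Then $G_t=I+M^2e_1e_1^\top$ for $t\ge 2$ and $\norm{A_1-A_2}_F=\sqrt2$. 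The squared distance between the images is at least $\sum_{t=2}^{T-1}(G_t)_{11}^2=(T-2)(1+M^2)^2$. Hence for $T\ge 4$ the Lipschitz ratio is at least $\tfrac12(1+M^2)\sqrt{T}$, which exceeds $2(1+M)\sqrt{T}$ once $M\ge 5$.

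The paper's proof hides this by implicitly using $\norm{G_t}_2\le J(A^*)$. But $\norm{\Gamma}_2\le J(A^*)$ only gives $\norm{G_t}_2\le\norm{\Gamma\Gamma^\top}_2\le J^2(A^*)$, which is exactly what the paper itself uses in the proof of Claim~\ref{clm:lip-map-M3}.

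What is true is what your argument actually proves. The map is $2\sqrt{T}\,J(A^*)\sup_{s\ge 0}\norm{(A^*)^s}_2$-Lipschitz, hence $2\sqrt{T}J^2(A^*)$-Lipschitz in general. It is $2\sqrt{T}J(A^*)$-Lipschitz when $\sup_{s\ge0}\norm{(A^*)^s}_2\le 1$, e.g.\ when $\norm{A^*}_2\le 1$.

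Two of your closing remarks need adjusting.
\begin{itemize}
\item[(1)] The larger constant is not absorbed into Lemma~\ref{lem:SR-expec} as stated. Its bound $pJ(A^*)\sqrt{T}(\cdots)$ becomes $pJ^2(A^*)\sqrt{T}(\cdots)$, and the same slip affects the Lipschitz constants $J(A^*)\sqrt{T}$ claimed there for $g_1,g_2$. The extra factor is only absorbed downstream, by the condition $T\gtrsim J^4(A^*)\max\{\gamma_2^2(\conesetB,\norm{\cdot}_F),\log^2(1/\delta)\}/p^4$ of Theorem~\ref{thm:main-thm}.
\item[(2)] For Claim~\ref{clm:lip-map-Rmat} the computation is not literally the same. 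There $A_k^\top A_k$ is not diagonal, so off-diagonal entries of $G_t$ enter. You need $\norm{G_t}_2$ in place of $\max_i(G_t)_{ii}$; it obeys the same bound $\sum_s\norm{(A^*)^s}_2^2$, and the same counterexample applies.
\end{itemize}
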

Since the map from $\left(\conesetB, \norm{\cdot}_F\right)$ to $\left(\calS_{mat},\norm{\cdot}_F\right)$ is Lipschitz and onto (it is easy to see this simply by definition), Property \ref{itm:prop-3} from Appendix~\ref{sec:prelim-tools} implies,
\[
\gammafn{2} \left(\calS_{mat},\norm{\cdot}_F\right) \le C\sqrt{T}J(A^*)\gammafn{2} \left(\conesetB,\norm{\cdot}_F\right).
\]
Denote the event
\[
\eventp{2c} = \set{\sup_{s \in \calS} \dotprod{q}{s} \le C\sqrt{T}J(A^*)\left(\gammafn{2} \left(\conesetB,\norm{\cdot}_F\right)+\sqrt{\log (2/\delta)}\right)}.
\]
Then similar to before,
\[
\prob \left(\eventp{2c}\right) \ge 1-\delta.
\]

Putting events $\eventp{1c}, \eventp{2c}$ together, we get that w.p. at least $1-2\delta$, it holds for all $A \in \conesetB$,
\[
2p \dotprod{q}{r(A)}-2p(1-p)\dotprod{q}{s(A)} \ge -cpJ(A^*)\sqrt{T} \left(\gammafn{2} \left(\conesetB,\norm{\cdot}_F\right)+\sqrt{\log (1/\delta)}\right),
\]
for a suitably large constant $c$.
\end{proof}
\begin{proof}[Proof of Claim~\ref{clm:lip-map-Rmat}]
Consider $A_1, A_2 \in \conesetB$ and the corresponding $R_1, R_2 \in \calR_{mat}$. Note that,
\begin{align*}
\norm{R_1-R_2}_F^2 &= \sum_{t=1}^T \norm{\diag \left(A_1^\top A_1 G_t\right)-\diag \left(A_2^\top A_2 G_t\right)}_F^2\\
&\le \sum_{t=1}^T \norm{A_1^\top A_1 G_t-A_2^\top A_2 G_t}_F^2\\
&\le J^2(A^*) \sum_{t=1}^{T-1} \norm{A_1^\top A_1-A_2^\top A_2}_F^2 \\
&\le 4J^2(A^*) T \norm{A_1-A_2}_F^2.
\end{align*}
Therefore, the map $\left(\conesetB, \norm{\cdot}_F\right) \mapsto \left(\calR_{mat},\norm{\cdot}_F\right)$ is $2\sqrt{T} J(A^*)$-Lipschitz.
\end{proof}

\begin{proof}[Proof of Claim~\ref{clm:lip-map-Smat}]
Consider $A_1, A_2 \in \conesetB$ and the corresponding $S_1, S_2 \in \calS_{mat}$. Note that,
\begin{align*}
\norm{S_1-S_2}_F^2 &= \sum_{t=1}^T \norm{\diag \left(\diag(A_1^\top A_1) G_t\right)-\diag \left(\diag(A_2^\top A_2) G_t\right)}_F^2\\
&\le \sum_{t=1}^T \norm{\diag(A_1^\top A_1) G_t-\diag(A_2^\top A_2) G_t}_F^2\\
&\le J^2(A^*) \sum_{t=1}^{T-1} \norm{A_1^\top A_1-A_2^\top A_2}_F^2 \\
&\le 4J^2(A^*) T \norm{A_1-A_2}_F^2.
\end{align*}
Therefore, the map $\left(\conesetB, \norm{\cdot}_F\right) \mapsto \left(\calS_{mat},\norm{\cdot}_F\right)$ is $2\sqrt{T} J(A^*)$-Lipschitz.
\end{proof}

\section{Proof of Lemma~\ref{lem:RHS-bound}}\label{appsec:rhs_proof_tangent-appendix}

\begin{proof}
In order to derive the bounds in Lemma~\ref{lem:RHS-bound}, we need to bound the following three terms.
\begin{enumerate}
\item $2 \dotprod{\widetilde{Z}-A^*Z}{(\widehat{A}-B)Z}+2(1-p)\dotprod{A^*\diag^{1/2}(ZZ^\top)}{(\widehat{A}-B)\diag^{1/2}(ZZ^\top)}$ \label{itm:cross-prod-lem8},
\item $2\norm{(A^*-B)Z}_F\norm{(\widehat{A}-B)Z}_F$,
\item $2(1-p)\norm{(A^*-B)\diag^{1/2}(ZZ^\top)}_F\norm{(\widehat{A}-B)\diag^{1/2}(ZZ^\top)}_F$.
\end{enumerate}

Let $\optA = \frac{\widehat{A}-B}{\norm{\widehat{A}-B}_F}$ and $\widetilde{A}^* = \frac{A^*-B}{\norm{A^*-B}_F}$. Since $\widehat{A},A^* \in \calK$, we have $\optA,\widetilde{A}^* \in \conesetB$. Therefore
\begin{align*}
2\norm{(A^*-B)Z}_F\norm{(\widehat{A}-B)Z}_F &= 2 \norm{A^*-B}_F\norm{\widehat{A}-B}_F \norm{\widetilde{A}^* Z}_F\norm{\optA Z}_F\\ 
&\le 2 \norm{A^*-B}_F\norm{\widehat{A}-B}_F \sup_{A \in \conesetB} \norm{AZ}_F \sup_{A \in \conesetB} \norm{AZ}_F \\
&= 2 \norm{A^*-B}_F\norm{\widehat{A}-B}_F \sup_{A \in \conesetB} \norm{A Z}_F^2.
\end{align*}
Similarly,
\begin{align*}
&2(1-p)\norm{(A^*-B)\diag^{1/2}(ZZ^\top)}_F\norm{(\widehat{A}-B)\diag^{1/2}(ZZ^\top)}_F \\
&= 2(1-p) \norm{A^*-B}_F\norm{\widehat{A}-B}_F \norm{\widetilde{A}^*\diag^{1/2}(ZZ^\top)}_F\norm{\optA\diag^{1/2}(ZZ^\top)}_F\\ 
&\leq 2(1-p) \norm{A^*-B}_F\norm{\widehat{A}-B}_F \sup_{A \in \conesetB} \norm{A\diag^{1/2}(ZZ^\top)}_F^2.
\end{align*}
Recall Lemmas~\ref{lem:M1-expec} and ~\ref{lem:M2-expec} from the proof of Lemma~\ref{lem:LHS-bound} and the definitions of $\breve{P}$ and $\overline{P}$. 
By Lemmas~\ref{lem:M1-expec} and~\ref{lem:M2-expec}, we have for any $A \in \conesetB$, on the event $\eventp{1a}\cap \eventp{2a}$,
\begin{align*}
&\norm{AZ}_F^2 \le \norm{(I_{T-1} \otimes A) \breve{P} \Gamma}_F^2+\bar{c}_1\left(\bar{F}_1+\bar{V}_1\log (1/\delta)\right),\; \text{and},\\
&(1-p)\norm{A\diag^{1/2}(ZZ^\top)}_F^2 \le (1-p) \norm{(I_{T-1} \otimes \diag^{1/2}(A^\top A)) \breve{P} \Gamma}_F^2+\bar{c}_1(1-p)\left(\bar{F}_1+\bar{V}_1\log (1/\delta)\right),
\end{align*}
where $\bar{F}_1, \bar{V}_1$ are as in Lemmas ~\ref{lem:M1-expec} and ~\ref{lem:M2-expec}. Following the same arguments as in the proof of Lemma~\ref{lem:LHS-bound}, 
\begin{align*}
\norm{AZ}_{F}^2 &\le \norm{\left(I_{T-1} \otimes A\right)\overline{P}\Gamma}_F^2+2p\dotprod{\left(I_{T-1} \otimes A\right)\overline{P}\Gamma}{\left(I_{T-1} \otimes A\right)\Gamma} \\
&+ p^2 \norm{\left(I_{T-1} \otimes A\right)\Gamma}_F^2+\bar{c}_1 \left(\bar{F}_1+\bar{V}_1\log (1/\delta)\right),
\end{align*}
\quad \text{and} \quad
\begin{align*}
&(1-p)\norm{A \diag^{1/2}(ZZ^\top)}_{F}^2\\
&\le (1-p)\norm{\left(I_{T-1} \otimes \diag^{1/2}\left(A^\top A\right)\right)\overline{P}\Gamma}_F^2+p^2(1-p)\norm{\left(I_{T-1} \otimes \diag^{1/2}\left(A^\top A\right)\right)\Gamma}_F^2\\
&+2p(1-p)\dotprod{\left(I_{T-1} \otimes \diag^{1/2}\left(A^\top A\right)\right)\overline{P}\Gamma}{\left(I_{T-1} \otimes \diag^{1/2}\left(A^\top A\right)\right)\Gamma}+\bar{c}_1 \left(\bar{F}_1+\bar{V}_1\log (1/\delta)\right).
\end{align*}
Using Lemmas ~\ref{lem:M3-expec}, ~\ref{lem:M4-expec}, and ~\ref{lem:SR-expec}, under the event $\eventp{1b}\cap \eventp{2b} \cap \eventp{2c} \cap \eventp{3c}$ (see proofs of Lemmas~\ref{lem:LHS-bound} and~\ref{lem:SR-expec}), we have for all $A \in \conesetB$, 
\begin{align*}
\norm{AZ}_{F}^2 &\le p(1-p)\norm{\left(G \odot (I_{T-1} \otimes (A^\top A))\right)^{1/2}}_F^2+ p^2 \norm{\left(I_{T-1} \otimes A\right)\Gamma}_F^2\\
&+c_3 pJ(A^*)\sqrt{T} \left(\gammafn{2} \left(\conesetB,\norm{\cdot}_F\right)+\sqrt{\log (1/\delta)}\right) \\
&+\bar{c}_1 \left(\bar{F}_1+\bar{V}_1\log (1/\delta)\right)+p(1-p) \bar{c}_2 \left(\bar{F}_1+\bar{V}_1\log (1/\delta)\right),
\end{align*}
and,
\begin{align*}
&(1-p)\norm{A \diag^{1/2}(ZZ^\top)}_{F}^2\\
&\le p(1-p)^2 \norm{\left(G \odot \left(I_{T-1} \otimes \diag(A^\top A)\right)\right)^{1/2}}_F^2+p^2(1-p)\norm{\left(I_{T-1} \otimes \diag^{1/2}\left(A^\top A\right)\right)\Gamma}_F^2\\
&+c_3 p(1-p)J(A^*)\sqrt{T} \left(\gammafn{2} \left(\conesetB,\norm{\cdot}_F\right)+\sqrt{\log (1/\delta)}\right)\\
&+\bar{c}_1 \left(\bar{F}_1+\bar{V}_1\log (1/\delta)\right) +p(1-p)^2 \bar{c}_2\left(\bar{F}_1+\bar{V}_1\log (1/\delta)\right),
\end{align*}
where $\bar{c}_1,\bar{c}_2,$ and, $c_3$ are  constants.

A simple calculation shows that
\begin{align*}
&\norm{\left(G \odot (I_{T-1} \otimes (A^\top A))\right)^{1/2}}_F^2 \le J^2(A^*) T \; \tag{see $d_F(.)$ bounds in Lemma~\ref{lem:M3-expec}}, \\
&\norm{\left(G \odot \left(I_{T-1} \otimes \diag(A^\top A)\right)\right)^{1/2}}_F^2 \le J^2(A^*) T \; \tag{see $d_F(.)$ bounds in Lemma~\ref{lem:M4-expec}},\\ &\norm{\left(I_{T-1} \otimes A\right)\Gamma}_F^2 \le J^2(A^*) T, \quad \text{and}, \\
&\norm{\left(I_{T-1} \otimes \diag^{1/2}\left(A^\top A\right)\right)\Gamma}_F^2 \le J^2(A^*)T.
\end{align*}
Therefore, we have
\begin{align}\label{eq:norm1-unif-bound-PGD}
\norm{AZ}_{F}^2 &\le pJ^2(A^*)T+\bar{c}_1 \left(\bar{F}_1+\bar{V}_1\log (1/\delta)\right)+p(1-p) \bar{c}_2 \left(\bar{F}_1+\bar{V}_1\log (1/\delta)\right)\\
&+c_3 pJ(A^*)\sqrt{T} \left(\gammafn{2} \left(\conesetB,\norm{\cdot}_F\right)+\sqrt{\log (1/\delta)}\right),
\end{align}
and
\begin{align}\label{eq:norm2-unif-bound-PGD}
&(1-p)\norm{A \diag^{1/2}(ZZ^\top)}_{F}^2\\
&\le p(1-p) J^2(A^*) T+\bar{c}_1 \left(\bar{F}_1+\bar{V}_1\log (1/\delta)\right) +p(1-p)^2 \bar{c}_2\left(\bar{F}_1+\bar{V}_1\log (1/\delta)\right)\\
&+c_3 p(1-p)J(A^*)\sqrt{T} \left(\gammafn{2} \left(\conesetB,\norm{\cdot}_F\right)+\sqrt{\log (1/\delta)}\right).
\end{align}

It remains to bound $2 \dotprod{\widetilde{Z}-A^*Z}{(\widehat{A}-B)Z}+2(1-p)\dotprod{A^*\diag^{1/2}(ZZ^\top)}{(\widehat{A}-B)\diag^{1/2}(ZZ^\top)}$. To do this, we first show that it can be written as quadratics of centered 1-subgaussian random variables, as seen in Claims~\ref{clm:quad-dotprod-direct} and ~\ref{clm:quad-dotprod-diag}. 

\begin{claim}\label{clm:quad-dotprod-direct} Let $\bar{\eta} = \vect(\matrixenc{\eta_1 & \eta_2 & \dots & \eta_T})$, and 
\[
\breve{P} = \matrixenc{P_1 & 0 & \dots & 0\\ 0 & P_2 & \dots & 0\\\vdots & \vdots & \dots & \vdots\\ 0 & 0 & \dots & P_{T-1}}, 
\widetilde{P} = \matrixenc{P_2 & 0 & \dots & 0\\ 0 & P_3 & \dots & 0\\\vdots & \vdots & \dots & \vdots\\ 0 & 0 & \dots & P_{T}}, \text{ and } \optA = \frac{\widehat{A}-B}{\norm{\widehat{A}-B}_F}.
\] Then, it holds that
\begin{align*}
\dotprod{\widetilde{Z}-A^*Z}{\left(\widehat{A}-B\right)Z} & \\
= \norm{\widehat{A}-B}_F\bar{\eta}^\top \bigg(&   \matrixenc{\Gamma^\top \left(\left((I_{T-1} \otimes {A^*}^\top) \widetilde{P}-\breve{P} (I_{T-1} \otimes {A^*}^\top)\right) (I_{T-1} \otimes \optA) \breve{P} \right) \Gamma & 0\\ 0 & 0}\\
&+\matrixenc{0 & 0\\
\widetilde{P}(I_{T-1} \otimes \optA) \breve{P} \Gamma & 0}\bigg) \bar{\eta}.
\end{align*}
\end{claim}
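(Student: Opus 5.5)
The plan is to vectorize everything and express each matrix in terms of the noise stack $\bar{\eta}$. The identity is then a deterministic bookkeeping computation, valid for every realization of $(P_t)$.

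First I rewrite the two factors of the inner product. As in the proof of Claim~\ref{clm:quad-frob-norm}, $\vect(Z)=\breve{P}x=\breve{P}\Gamma\eta$, where $\eta=\vect([\eta_1\ \cdots\ \eta_{T-1}])$. This uses $x_0=0$, so that $x_1=\eta_1$ and $x=\Gamma\eta$. Hence
\begin{equation*}
\vect\big((\widehat{A}-B)Z\big)=\norm{\widehat{A}-B}_F\,(I_{T-1}\otimes\optA)\breve{P}\Gamma\eta .
\end{equation*}
Next I write $\tilde{X}=A^*X+\tilde{E}$. Setting $\tilde{\eta}=\vect(\tilde{E})=\vect([\eta_2\ \cdots\ \eta_T])$, this gives $\vect(\tilde{X})=(I_{T-1}\otimes A^*)\Gamma\eta+\tilde{\eta}$. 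Since $\vect(\tilde{Z})=\widetilde{P}\vect(\tilde{X})$ and $\vect(A^*Z)=(I_{T-1}\otimes A^*)\breve{P}\Gamma\eta$, I obtain
\begin{equation*}
\vect(\tilde{Z}-A^*Z)=\big(\widetilde{P}(I_{T-1}\otimes A^*)-(I_{T-1}\otimes A^*)\breve{P}\big)\Gamma\eta+\widetilde{P}\tilde{\eta}.
\end{equation*}

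Taking the Euclidean inner product of these two vectors and transposing the first factor produces two pieces. The first is
\begin{equation*}
\norm{\widehat{A}-B}_F\,\eta^\top\Gamma^\top\big((I_{T-1}\otimes {A^*}^\top)\widetilde{P}-\breve{P}(I_{T-1}\otimes {A^*}^\top)\big)(I_{T-1}\otimes\optA)\breve{P}\Gamma\eta .
\end{equation*}
Here I use that $\widetilde{P}$ and $\breve{P}$ are diagonal, hence symmetric. The second piece is $\norm{\widehat{A}-B}_F\,\tilde{\eta}^\top\widetilde{P}(I_{T-1}\otimes\optA)\breve{P}\Gamma\eta$.

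The only point needing care is expressing both pieces as a single quadratic form in $\bar{\eta}=\vect([\eta_1\ \cdots\ \eta_T])\in\matR^{nT}$. Both $\eta$ and $\tilde{\eta}$ are coordinate restrictions of $\bar{\eta}$: $\eta$ is the first $n(T-1)$ coordinates and $\tilde{\eta}$ the last $n(T-1)$ coordinates. So $\eta=S_1\bar{\eta}$ and $\tilde{\eta}=S_2\bar{\eta}$ for the selection matrices $S_1=[I\ 0]$ and $S_2=[0\ I]$.

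The first piece is $\bar{\eta}^\top S_1^\top(\cdot)S_1\bar{\eta}$. This is exactly the top-left block of the displayed matrix, padded with a zero last block-row and block-column of width $n$. The second piece is $\bar{\eta}^\top S_2^\top W S_1\bar{\eta}$ with $W=\widetilde{P}(I_{T-1}\otimes\optA)\breve{P}\Gamma$. This is the matrix whose rows indexed by $\eta_2,\dots,\eta_T$ and columns indexed by $\eta_1,\dots,\eta_{T-1}$ carry $W$, with zeros elsewhere. That matches the second displayed matrix once the block partitions are read as $(n(T-1),n)$ for columns and $(n,n(T-1))$ for rows.

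Summing the two pieces gives the claim. I expect no real obstacle beyond keeping these index partitions straight.
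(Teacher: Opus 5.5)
Your proposal is correct and follows essentially the same computation as the paper. The paper works column by column, writing $(\widetilde{Z}-A^*Z)_t=(P_{t+1}A^*-A^*P_t)x_t+P_{t+1}\eta_{t+1}$ and reassembling the resulting sums of traces into block-diagonal form, which is your vectorized Kronecker identity written out blockwise. The one fact you use implicitly and should state is $P_t y_t=P_t^2x_t=P_tx_t$ (each $P_t$ is a $0/1$ diagonal matrix). This is what justifies $\vect(Z)=\breve{P}x$ and $\vect(\widetilde{Z})=\widetilde{P}\vect(\tilde{X})$.
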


\begin{claim}\label{clm:quad-dotprod-diag}
Let 
\begin{equation*}
    \breve{P} = \matrixenc{P_1 & 0 & \dots & 0\\ 0 & P_2 & \dots & 0\\\vdots & \vdots & \dots & \vdots\\ 0 & 0 & \dots & P_{T-1}}, \quad \text{and } \optA = \frac{\widehat{A}-B}{\norm{\widehat{A}-B}_F}.
\end{equation*}
Then, it holds that
\begin{equation*}
\dotprod{A^* \diag^{1/2}(ZZ^\top)}{(\widehat{A}-B)\diag^{1/2}(ZZ^\top)} = \norm{\widehat{A}-B}_F 
\bar{\eta}^\top \matrixenc{\Gamma^\top \breve{P} \left(I_{T-1} \otimes \diag\left(\optA^\top A^*\right)\right) \Gamma & 0 \\ 0 & 0} \bar{\eta},
\end{equation*}
\end{claim}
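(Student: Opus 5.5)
The plan is to mirror the proof of Claim~\ref{clm:quad-frob-norm}: first rewrite the inner product as a trace against $\diag(ZZ^\top)$, then as a quadratic form in $\vect(Z)$, and finally substitute $\vect(Z)=\breve{P}\Gamma\eta$.

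\textbf{Step 1: trace form.} Write $\widehat{A}-B=\norm{\widehat{A}-B}_F\,\optA$ and $D=\diag^{1/2}(ZZ^\top)$. Using cyclicity of the trace and the fact that $D$ is diagonal,
\begin{equation*}
\dotprod{A^*D}{\optA D}=\Tr\left(D{A^*}^\top\optA D\right)=\Tr\left(D^2{A^*}^\top\optA\right)=\sum_{i=1}^n (ZZ^\top)_{ii}\,({A^*}^\top\optA)_{ii}.
\end{equation*}
Only the diagonal of ${A^*}^\top\optA$ enters. Since a matrix and its transpose have the same diagonal, $\diag({A^*}^\top\optA)=\diag(\optA^\top A^*)$.

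\textbf{Step 2: quadratic form in $\vect(Z)$.} Expand $(ZZ^\top)_{ii}=\sum_{t=1}^{T-1}Z_{it}^2$ and exchange the two sums, exactly as in the proof of Claim~\ref{clm:quad-frob-norm}. This gives
\begin{equation*}
\sum_{t=1}^{T-1}Z_{:,t}^\top\,\diag(\optA^\top A^*)\,Z_{:,t}=\vect(Z)^\top\left(I_{T-1}\otimes\diag(\optA^\top A^*)\right)\vect(Z).
\end{equation*}

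\textbf{Step 3: substitute $\vect(Z)=\breve{P}\Gamma\eta$.} Note that $P_ty_t=P_t^2x_t=P_tx_t$ because $P_t$ is a $0/1$ diagonal matrix, which is why $\vect(Z)=\breve{P}\Gamma\eta$ holds. Substituting yields
\begin{equation*}
\eta^\top\Gamma^\top\breve{P}\left(I_{T-1}\otimes\diag(\optA^\top A^*)\right)\breve{P}\Gamma\eta.
\end{equation*}
The matrices $\breve{P}$ and $I_{T-1}\otimes\diag(\cdot)$ are both diagonal, so they commute, and $\breve{P}^2=\breve{P}$. Hence one factor of $\breve{P}$ can be dropped, leaving $\Gamma^\top\breve{P}\left(I_{T-1}\otimes\diag(\optA^\top A^*)\right)\Gamma$ as stated.

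\textbf{Step 4: pass from $\eta$ to $\bar\eta$.} The vector $\eta$ consists of the first $n(T-1)$ coordinates of $\bar\eta$. Padding the middle matrix with zero blocks for the final $n$ coordinates therefore gives the displayed block form in $\bar\eta$. Multiplying through by $\norm{\widehat{A}-B}_F$ completes the identity.

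No step here is a genuine obstacle, since the argument is pure bookkeeping. The only points needing care are (i) the idempotence and commutation of $\breve{P}$, which justify the single $\breve{P}$ in the statement, and (ii) the transpose in $\diag(\optA^\top A^*)$ versus $\diag({A^*}^\top\optA)$, which is harmless because only diagonals appear.
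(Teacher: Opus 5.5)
Your proof is correct and follows the same route as the paper's: the trace identity against $\diag(ZZ^\top)$, the expansion into $\vect(Z)^\top\left(I_{T-1}\otimes\diag(\cdot)\right)\vect(Z)$, and the substitution $\vect(Z)=\breve{P}\Gamma\eta$. The paper's derivation actually ends with $\Gamma^\top\breve{P}\left(I_{T-1}\otimes\diag({A^*}^\top\optA)\right)\breve{P}\Gamma$, so the three facts you make explicit are exactly what is needed to reach the stated single-$\breve{P}$ form: $\breve{P}^2=\breve{P}$, diagonal matrices commute, and $\diag({A^*}^\top\optA)=\diag(\optA^\top A^*)$.
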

The proofs of Claims~\ref{clm:quad-dotprod-direct} and ~\ref{clm:quad-dotprod-diag} are deferred to Appendixes~\ref{appsec:proof-clm-quad-dotprod-direct} and ~\ref{appsec:proof-clm-quad-dotprod-diag}.
 
We are now ready to bound the inner product terms with Claims~\ref{clm:quad-dotprod-direct} and ~\ref{clm:quad-dotprod-diag}. For any $A \in \conesetB$, let 

\begin{align*}
R_1(A) &= \matrixenc{\Gamma^\top \left(\left((I_{T-1} \otimes {A^*}^\top) \widetilde{P}-\breve{P} (I_{T-1} \otimes {A^*}^\top)\right) (I_{T-1} \otimes A) \breve{P} \right) \Gamma & 0\\ 0 & 0},\\
R_2(A) &= \matrixenc{0 & 0\\
\widetilde{P}(I_{T-1} \otimes A) \breve{P} \Gamma & 0},\text{ and}\\
R_3(A) &=(1-p) \matrixenc{\Gamma^\top \breve{P} \left(I_{T-1} \otimes \diag\left(A^\top A^*\right)\right) \Gamma & 0 \\ 0 & 0}.
\end{align*} 
Then, we have
\begin{align*}
&\dotprod{\widetilde{Z}-A^*Z}{AZ}+(1-p)\dotprod{A^* \diag^{1/2}(ZZ^\top)}{A\diag^{1/2}(ZZ^\top)}\\
&=\bar{\eta}^\top \matrixenc{\Gamma^\top \left(\left((I_{T-1} \otimes {A^*}^\top) \widetilde{P}-\breve{P} (I_{T-1} \otimes {A^*}^\top)\right) (I_{T-1} \otimes A) \breve{P} \right) \Gamma & 0\\ 0 & 0}\bar{\eta}\\
&+\eta^\top \bigg( \matrixenc{0 & 0\\
\widetilde{P}(I_{T-1} \otimes A) \breve{P} \Gamma & 0}+(1-p)\matrixenc{\Gamma^\top \breve{P} \left(I_{T-1} \otimes \diag\left(A^\top A^*\right)\right) \Gamma & 0 \\ 0 & 0}\bigg) \bar{\eta}\\
&=\bar{\eta}^\top \bigg(R_1(A)+R_2(A)+R_3(A)\bigg) \bar{\eta},
\end{align*}
where $R_1(A), R_2(A)$, and $R_3(A)$ are random matrices depending on the random variables $(P_t)_{t=1}^T$. Conditional on $(P_t)_{t=1}^T$, let 
$$\calR = \set{R_1(A)+R_2(A)+R_3(A): \  A \in \conesetB}.$$ 
Then, we can show the following.
\begin{lemma}\label{lem:dotprod-fixedP}
Let $\calR = \set{R_1(A)+R_2(A)+R_3(A): \ A \in \conesetB}$. There exist constants $c,\bar{c}_1,\bar{c}_2$ such that for any $u >0$, conditional on $(P_t)_{t=1}^T$, with probability at least $1-e^{-u}$,
\begin{align*}
&\sup_{R \in \calR} \absenc{\bar{\eta}^\top R \bar{\eta}} -\sup_{R \in \calR} \absenc{\Tr \left( R\right)}\\
&\le c\left((2-p)J^2(A^*)\norm{A^*}_2+J(A^*)\right)\bigg(\bar{c}_1 \gammafn{1}(\conesetB,\norm{\cdot}_2)+\bar{c}_2 \gammafn{2}(\conesetB,\norm{\cdot}_F)\sqrt{T}+\sqrt{u} \sqrt{T}+u\bigg).
\end{align*}
\end{lemma}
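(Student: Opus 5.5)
Conditional on $(P_t)_{t=1}^T$, the matrices $R_1(A),R_2(A),R_3(A)$ are deterministic, and $\bar{\eta}$ has independent, centered, unit-variance, $L$-subgaussian entries. Since $\expec[\bar{\eta}^\top R\bar{\eta}\mid (P_t)] = \Tr(R)$, I will apply part 2 of Theorem~\ref{thm:non-psd-sup} to $\calR$ (rescaling $\bar\eta$ by $L$, which only changes constants). On an event of conditional probability at least $1-e^{-u}$ this gives
\[
\sup_{R\in\calR}\absenc{\bar\eta^\top R\bar\eta-\Tr(R)}\lesssim \gammafn{1}(\calR,\norm{\cdot}_2)+\gammafn{2}(\calR,\norm{\cdot}_F)+\sqrt{u}\,d_F(\calR)+u\,d_2(\calR).
\]
The triangle inequality $\sup|\bar\eta^\top R\bar\eta|\le \sup|\bar\eta^\top R\bar\eta-\Tr R|+\sup|\Tr R|$ then yields the stated form. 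It remains to bound the four complexity quantities of $\calR$ in terms of those of $\conesetB$, with a Lipschitz constant $K:=(2-p)J^2(A^*)\norm{A^*}_2+J(A^*)$ and an extra factor $\sqrt T$ on the Frobenius-type quantities.

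\textbf{Linearity and $\norm{\cdot}_2$-Lipschitz constant.} The map $A\mapsto R(A):=R_1(A)+R_2(A)+R_3(A)$ is linear, so $R(A_1)-R(A_2)=R(A_1-A_2)$. Using $\norm{\breve P}_2,\norm{\widetilde P}_2\le1$, $\norm{\Gamma}_2\le J(A^*)$ and $\norm{I_{T-1}\otimes M}_2=\norm{M}_2$:
\begin{itemize}
\item $\norm{R_1(H)}_2\le 2J^2(A^*)\norm{A^*}_2\norm{H}_2$;
\item $\norm{R_2(H)}_2\le J(A^*)\norm{H}_2$;
\item $\norm{R_3(H)}_2\le (1-p)J^2(A^*)\norm{\diag(H^\top A^*)}_2\le(1-p)J^2(A^*)\norm{A^*}_2\norm{H}_2$, since $|(H^\top A^*)_{ii}|\le\norm{H}_2\norm{A^*}_2$.
\end{itemize}
The constants for $R_1$ and $R_3$ add to $(3-p)J^2\norm{A^*}_2\lesssim (2-p)J^2\norm{A^*}_2$, as $2-p\ge1$. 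Hence $A\mapsto R(A)$ is $cK$-Lipschitz from $(\conesetB,\norm{\cdot}_2)$ onto $(\calR,\norm{\cdot}_2)$. Property~\ref{itm:prop-3} then gives $\gammafn{1}(\calR,\norm{\cdot}_2)\lesssim K\gammafn{1}(\conesetB,\norm{\cdot}_2)$, and since $\norm{A}_2\le\norm{A}_F=1$ on $\conesetB$, also $d_2(\calR)\lesssim K$.

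\textbf{Frobenius bounds.} I use $\norm{I_{T-1}\otimes M}_F=\sqrt{T-1}\norm{M}_F$ and put the Kronecker factor in Frobenius norm with all other factors in operator norm:
\begin{itemize}
\item $\norm{R_1(H)}_F\le 2J^2\norm{A^*}_2\sqrt T\norm{H}_F$;
\item $\norm{R_2(H)}_F\le J\sqrt T\norm{H}_F$;
\item $\norm{R_3(H)}_F\le(1-p)J^2\sqrt T\norm{\diag(H^\top A^*)}_F\le (1-p)J^2\sqrt T\norm{A^*}_2\norm{H}_F$, using $\norm{\diag(M)}_F\le\norm{M}_F$ and $\norm{H^\top A^*}_F\le\norm{H}_F\norm{A^*}_2$.
\end{itemize}
So the map is $cK\sqrt T$-Lipschitz in Frobenius norm. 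This gives $\gammafn{2}(\calR,\norm{\cdot}_F)\lesssim K\sqrt T\gammafn{2}(\conesetB,\norm{\cdot}_F)$ and $d_F(\calR)\lesssim K\sqrt T$. Substituting everything into the chaos bound gives the lemma with $c$ absorbing constants.

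\textbf{Main obstacle.} The delicate point is the $R_3$ term. The map $H\mapsto\diag(H^\top A^*)$ must be Lipschitz in both $\norm{\cdot}_2$ and $\norm{\cdot}_F$, so that both $\gammafn{1}$ and $\gammafn{2}$ transfer. The entrywise bound above handles both. A second point to check is that Theorem~\ref{thm:non-psd-sup} applies to non-symmetric $R$: one may replace $R$ by $(R+R^\top)/2$, which leaves the quadratic form and trace unchanged and at most preserves the norms and $\gamma$-functionals up to constants.
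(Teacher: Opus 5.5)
Your proposal is correct and follows the paper's proof essentially step for step. You condition on $(P_t)_{t=1}^T$, rescale $\bar\eta$ by $L$, apply part 2 of Theorem~\ref{thm:non-psd-sup} to $\calR$, transfer $\gamma_1$, $\gamma_2$, $d_2$, $d_F$ from $\conesetB$ through the Lipschitz constants $K$ and $K\sqrt{T}$ (as in Claim~\ref{clm:lip-map-calR}), and conclude with $\sup_{R}|\bar\eta^\top R\bar\eta|-\sup_{R}|\Tr(R)|\le\sup_{R}|\bar\eta^\top R\bar\eta-\Tr(R)|$. The differences are only cosmetic: you bound $(I_{T-1}\otimes {A^*}^\top)\widetilde{P}-\breve{P}(I_{T-1}\otimes {A^*}^\top)$ by $2\norm{A^*}_2$ where the paper writes $\norm{A^*}_2$, you absorb $3-p\le 2(2-p)$ into the constant, and your symmetrization remark is harmless but not needed.
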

The proof of Lemma~\ref{lem:dotprod-fixedP} follows from observing that the entries of $\eta$ are zero mean and $L$-subgaussian, and  applying Theorem~\ref{thm:non-psd-sup} (proof deferred to Appendix~\ref{appsec:proof-lem-dotprod-fixedP}). 

For a suitably large constant $c> 1$, let 
\begin{equation*}
    L_1=c\left((2-p)J^2(A^*)\norm{A^*}_2+J(A^*)\right) \bigg(\bar{c}_1 \gammafn{1}(\conesetB,\norm{\cdot}_2)+\bar{c}_2 \gammafn{2}(\conesetB,\norm{\cdot}_F)\sqrt{T}+\sqrt{\log (1/\delta)} \sqrt{T}+\log (1/\delta)\bigg),
\end{equation*}
and denote the event
\begin{equation*}
    \eventp{1d} = \set{\forall R \in \calR: \ \absenc{\bar{\eta}^\top R \bar{\eta}} \le \sup_{R \in \calR} \absenc{\Tr \left( R\right)}+L_1}.
\end{equation*}
Setting $u=\log(1/\delta)$ in Lemma~\ref{lem:dotprod-fixedP} we can see that
$\prob\left(\eventp{1d}\vert (P_t)_{t=1}^T\right) \ge 1-\delta$.

Now, consider $\sup_{R \in \calR} \absenc{\Tr(R)} = \sup_{A \in \conesetB} \absenc{\Tr \left(R_1(A)+R_2(A)+R_3(A)\right)}$. Since $\Tr(R_2(A))=0$ for all $A \in \conesetB$, we have $\sup_{R \in \calR} \absenc{\Tr(R)} = \sup_{A \in \conesetB} \absenc{\Tr \left(R_1(A)\right)+\Tr \left(R_3(A)\right)}$. For any $A \in \conesetB$, 
\begin{align*}
\Tr \left(R_1(A)\right)+\Tr \left(R_3(A)\right) & = \Tr \left( \Gamma^\top \left(\left((I_{T-1} \otimes {A^*}^\top) \widetilde{P}-\breve{P} (I_{T-1} \otimes {A^*}^\top)\right) (I_{T-1} \otimes A) \breve{P} \right) \Gamma\right)\\
&+(1-p)\Tr \left(\Gamma^\top \breve{P} \left(I_{T-1} \otimes \diag\left(A^\top A^*\right)\right) \Gamma\right)\\
& = \Tr \left( \Gamma^\top \left((I_{T-1} \otimes {A^*}^\top) \widetilde{P} (I_{T-1} \otimes A) \breve{P} \right) \Gamma\right)\\
& - \Tr \left( \Gamma^\top \left(\breve{P} \left(I_{T-1} \otimes \left({A^*}^\top A\right)\right) \breve{P} \right) \Gamma\right)\\
&+(1-p)\Tr \left(\Gamma^\top \breve{P} \left(I_{T-1} \otimes \diag\left(A^\top A^*\right)\right) \breve{P} \Gamma\right).
\end{align*}
Let $\overline{P}_1 = \breve{P}-pI$ and $\overline{P}_2 = \widetilde{P}-pI$. Then replacing $\breve{P}, \widetilde{P}$ with $\overline{P}_1+pI,\overline{P}_2+pI$ respectively and expanding the terms gives us,
\begin{align*}
\Tr \left(R_1(A)\right)+\Tr \left(R_3(A)\right) & = \Tr \left( \Gamma^\top \left((I_{T-1} \otimes {A^*}^\top) \overline{P}_2 (I_{T-1} \otimes A) \overline{P}_1 \right) \Gamma\right) - \Tr \left( \Gamma^\top \left(\overline{P}_1 \left(I_{T-1} \otimes \left({A^*}^\top A\right)\right) \overline{P}_1 \right) \Gamma\right)\\
&+(1-p)\Tr \left(\Gamma^\top \overline{P}_1 \left(I_{T-1} \otimes \diag\left(A^\top A^*\right)\right)\overline{P}_1 \Gamma\right)+p^2(1-p)\Tr \left(\Gamma^\top \left(I_{T-1} \otimes \diag\left(A^\top A^*\right)\right)  \Gamma\right)\\
& + p\Tr \left( \Gamma^\top \left((I_{T-1} \otimes {A^*}^\top) \overline{P}_2 (I_{T-1} \otimes A) \right) \Gamma\right)+2p(1-p)\Tr \left(\Gamma^\top \overline{P}_1 \left(I_{T-1} \otimes \diag\left(A^\top A^*\right)\right)  \Gamma\right)\\
& - p\Tr \left( \Gamma^\top \left(\overline{P}_1 \left(I_{T-1} \otimes \left({A^*}^\top A\right)\right)  \right) \Gamma\right).
\end{align*}

Therefore, for any $A \in \conesetB$, we have
\begin{align*}
\absenc{\Tr \left(R_1(A)\right)+\Tr \left(R_3(A)\right)} & \le \bigg |\Tr \left( \Gamma^\top \left((I_{T-1} \otimes {A^*}^\top) \overline{P}_2 (I_{T-1} \otimes A) \overline{P}_1 \right) \Gamma\right) - \Tr \left( \Gamma^\top \left(\overline{P}_1 \left(I_{T-1} \otimes \left({A^*}^\top A\right)\right) \overline{P}_1 \right) \Gamma\right)\\
&+(1-p)\Tr \left(\Gamma^\top \overline{P}_1 \left(I_{T-1} \otimes \diag\left(A^\top A^*\right)\right)\overline{P}_1 \Gamma\right)+p^2(1-p)\Tr \left(\Gamma^\top \left(I_{T-1} \otimes \diag\left(A^\top A^*\right)\right)  \Gamma\right) \bigg | \\
& + \bigg| p\Tr \left( \Gamma^\top \left((I_{T-1} \otimes {A^*}^\top) \overline{P}_2 (I_{T-1} \otimes A) \right) \Gamma\right)+2p(1-p)\Tr \left(\Gamma^\top \overline{P}_1 \left(I_{T-1} \otimes \diag\left(A^\top A^*\right)\right)  \Gamma\right)\\
& - p\Tr \left( \Gamma^\top \left(\overline{P}_1 \left(I_{T-1} \otimes \left({A^*}^\top A\right)\right)  \right) \Gamma\right)\bigg|.
\end{align*}

Let us define 
\begin{align*}
f_1(A) &= \Tr \left( \Gamma^\top \left((I_{T-1} \otimes {A^*}^\top) \overline{P}_2 (I_{T-1} \otimes A) \overline{P}_1 \right) \Gamma\right) - \Tr \left( \Gamma^\top \left(\overline{P}_1 \left(I_{T-1} \otimes \left({A^*}^\top A\right)\right) \overline{P}_1 \right) \Gamma\right)\\
&+(1-p)\Tr \left(\Gamma^\top \overline{P}_1 \left(I_{T-1} \otimes \diag\left(A^\top A^*\right)\right)\overline{P}_1 \Gamma\right).
\end{align*}
Then, we can see that for any  $A \in \conesetB$,
\begin{align*}
\expec f_1(A) &= \expec \Tr \left( \Gamma^\top \left((I_{T-1} \otimes {A^*}^\top) \overline{P}_2 (I_{T-1} \otimes A) \overline{P}_1 \right) \Gamma\right) - \expec \Tr \left( \Gamma^\top \left(\overline{P}_1 \left(I_{T-1} \otimes \left({A^*}^\top A\right)\right) \overline{P}_1 \right) \Gamma\right)\\
&+(1-p)\expec \Tr \left(\Gamma^\top \overline{P}_1 \left(I_{T-1} \otimes \diag\left(A^\top A^*\right)\right)\overline{P}_1 \Gamma\right)\\
&=-p(1-p)\Tr \left(\Gamma^\top \left(I_{T-1} \otimes \diag\left(A^\top A^*\right)\right)  \Gamma\right)+p(1-p)^2\Tr \left(\Gamma^\top \left(I_{T-1} \otimes \diag\left(A^\top A^*\right)\right)  \Gamma\right)\\
&=-p^2(1-p)\Tr \left(\Gamma^\top \left(I_{T-1} \otimes \diag\left(A^\top A^*\right)\right)  \Gamma\right).
\end{align*}
which implies,
\begin{align*}
\absenc{f_1(A)-\expec f_1(A)} &= \bigg |\Tr \left( \Gamma^\top \left((I_{T-1} \otimes {A^*}^\top) \overline{P}_2 (I_{T-1} \otimes A) \overline{P}_1 \right) \Gamma\right) - \Tr \left( \Gamma^\top \left(\overline{P}_1 \left(I_{T-1} \otimes \left({A^*}^\top A\right)\right) \overline{P}_1 \right) \Gamma\right)\\
&+(1-p)\Tr \left(\Gamma^\top \overline{P}_1 \left(I_{T-1} \otimes \diag\left(A^\top A^*\right)\right)\overline{P}_1 \Gamma\right)+p^2(1-p)\Tr \left(\Gamma^\top \left(I_{T-1} \otimes \diag\left(A^\top A^*\right)\right)  \Gamma\right) \bigg |. 
\end{align*}

Recall the definitions of $q$ and $G$ from the proof of Lemma~\ref{lem:LHS-bound}. We can see that, following the parameterizations in the proof of Lemma~\ref{lem:LHS-bound} to show that,
\begin{align*}
\Tr \left( \Gamma^\top \left(\overline{P}_1 \left(I_{T-1} \otimes \left({A^*}^\top A\right)\right) \overline{P}_1 \right) \Gamma\right) &= \dotprod{\Gamma \Gamma^\top}{\overline{P}_1\left(I_{T-1} \otimes \left({A^*}^\top A\right)\right)\overline{P}_1}\\
&=\sum_{t=1}^{T-1} \dotprod{G_t}{(\overline{P}_1)_t {A^*}^\top A (\overline{P}_1)_t}\\
&=\sum_{t=1}^{T-1} q_t^\top \left(G_t \odot {A^*}^\top A\right)q_t \\
& = q^\top \left(G \odot \left(I_{T-1} \otimes \left({A^*}^\top A\right)\right)\right) q,
\end{align*}
and,
\begin{align*}
\Tr \left(\Gamma^\top \overline{P}_1 \left(I_{T-1} \otimes \diag\left(A^\top A^*\right)\right)\overline{P}_1 \Gamma\right) &= \dotprod{\Gamma \Gamma^\top}{\overline{P}_1\left(I_{T-1} \otimes \diag\left(A^\top A^*\right)\right)\overline{P}_1}\\
&=\sum_{t=1}^{T-1} \dotprod{G_t}{(\overline{P}_1)_t \diag\left(A^\top A^*\right) (\overline{P}_1)_t}\\
&=\sum_{t=1}^{T-1} q_t^\top \left(G_t \odot \diag\left(A^\top A^*\right)\right)q_t \\
& = q^\top \left(G \odot \left(I_{T-1} \otimes \diag\left(A^\top A^*\right)\right)\right) q.
\end{align*}
Let $\widetilde{q} = \vect\left(\matrixenc{q_2 & q_3 & \dots & q_T}\right)$. Then, we can also see that, 
\begin{align*}
\Tr \left( \Gamma^\top \left((I_{T-1} \otimes {A^*}^\top) \overline{P}_2 (I_{T-1} \otimes A) \overline{P}_1 \right) \Gamma\right) &= \dotprod{(I_{T-1} \otimes {A^*}) \Gamma \Gamma^\top}{\overline{P}_2 (I_{T-1} \otimes A) \overline{P}_1}\\
&=\sum_{t=1}^{T-1} \dotprod{A^* G_t}{(\overline{P}_2)_t A (\overline{P}_1)_t}\\
&=\sum_{t=1}^{T-1} q_{t+1}^\top \left((A^*G_t) \odot A\right)q_t \\
& = \widetilde{q}^\top \left(\left((I_{T-1} \otimes A^*)G\right) \odot \left(I_{T-1} \otimes A\right)\right) q.
\end{align*}

Now let $\bar{q} = \vect\left(\matrixenc{q_1 & q_2 & q_3 & \dots & q_T}\right)$, and,
\begin{align*}
&S_1(A) = \matrixenc{0_{n \times n(T-1)} & 0_{n \times n} \\ \left((I_{T-1} \otimes A^*)G\right) \odot \left(I_{T-1} \otimes A\right) & 0_{n(T-1) \times n} },\\
&S_2(A) = (1-p)\matrixenc{G \odot \left(I_{T-1} \otimes \diag\left(A^\top A^*\right)\right) & 0_{n(T-1) \times n} \\ 0_{n \times n(T-1)} & 0_{n \times n}},\\
&S_3(A) = -\matrixenc{G \odot \left(I_{T-1} \otimes \left({A^*}^\top A\right)\right) & 0_{n(T-1) \times n} \\ 0_{n \times n(T-1)} & 0_{n \times n}}.
\end{align*}
Thus, for any $A \in \conesetB$, $f_1(A) = \bar{q}^\top \left(S_1(A)+S_2(A)+S_3(A)\right)\bar{q}$. We can show the following Lemma.

\begin{lemma}\label{lem:trace-uniform-P}
Let $\calS = \set{S_1(A)+S_2(A)+S_3(A): A \in \conesetB}$. There exist constants $\bar{c}_1,\bar{c}_2$ such that for any $u >0$, with probability at least $1-e^{-u}$,
\begin{equation*}
\sup_{S \in \calS} \absenc{\bar{q}^\top S \bar{q}-\expec \bar{q}^\top S \bar{q}} \le \left((3-p)J^2(A^*)\norm{A^*}_2\right) \left(\bar{c}_1 \gammafn{1} (\conesetB, \norm{\cdot}_2)+\bar{c}_2\gammafn{2} (\conesetB, \norm{\cdot}_F)\sqrt{T}+u+\sqrt{u}\sqrt{T}\right).
\end{equation*}
\end{lemma}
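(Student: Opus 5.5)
We apply Theorem~\ref{thm:non-psd-sup} to the chaos $\bar{q}^\top S \bar{q}$ over the set $\calS$. The entries of $\bar{q}$ are independent, centered, and bounded in $[-1,1]$, hence $c$-subgaussian for an absolute constant $c$; after rescaling by this constant the theorem applies verbatim. Part 2 of Theorem~\ref{thm:non-psd-sup} then gives, with probability at least $1-e^{-u}$,
\[
\sup_{S\in\calS}\absenc{\bar{q}^\top S\bar{q}-\expec \bar{q}^\top S\bar{q}} \lesssim \gammafn{1}(\calS,\norm{\cdot}_2)+\gammafn{2}(\calS,\norm{\cdot}_F)+\sqrt{u}\,d_F(\calS)+u\,d_2(\calS).
\]
It remains to express each of the four quantities in terms of $\conesetB$, using Property~\ref{itm:prop-3}.

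\textbf{Step 1: Lipschitz and size bounds for each block.} We show that $A\mapsto S_i(A)$ is linear and bounded in both norms, using three facts:
\begin{itemize}
\item[(i)] $\norm{G}_2=\max_t\norm{G_t}_2\le\norm{\Gamma\Gamma^\top}_2\le J^2(A^*)$;
\item[(ii)] $\norm{(I_{T-1}\otimes A^*)G}_2\le \norm{A^*}_2J^2(A^*)$;
\item[(iii)] the Schur-product inequality $\norm{X\odot Y}_2\le \norm{X}_2\max_i\norm{Y_{i,:}}_2$, or, after writing the Hadamard product via Remark~\ref{rem:thm-had-kron} as $Q^\top(X\otimes Y)Q$, the bound $\norm{X\odot Y}_2\le\norm{X}_2\norm{Y}_2$.
\end{itemize}
These give
\[
\norm{S_1(A)-S_1(A')}_2\le J^2(A^*)\norm{A^*}_2\norm{A-A'}_2 ,
\]
and similar bounds for $S_2$ (factor $(1-p)$, using $\norm{\diag(A^\top A^*)}_2\le\norm{A}_2\norm{A^*}_2$) and for $S_3$ (factor $1$). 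The sum map is therefore $(3-p)J^2(A^*)\norm{A^*}_2$-Lipschitz from $(\conesetB,\norm{\cdot}_2)$ onto $(\calS,\norm{\cdot}_2)$. Property~\ref{itm:prop-3} then yields
\[
\gammafn{1}(\calS,\norm{\cdot}_2)\lesssim (3-p)J^2(A^*)\norm{A^*}_2\,\gammafn{1}(\conesetB,\norm{\cdot}_2),
\]
and the same argument gives $d_2(\calS)\le(3-p)J^2(A^*)\norm{A^*}_2$ because $\norm{A}_2\le1$ on $\conesetB$.

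\textbf{Step 2: Frobenius norms (main obstacle).} We need $\norm{S_i(A)-S_i(A')}_F\lesssim \sqrt{T}J^2(A^*)\norm{A^*}_2\norm{A-A'}_F$ and $d_F(\calS)\lesssim\sqrt{T}J^2(A^*)\norm{A^*}_2$. Each $S_i$ is block diagonal or block subdiagonal with $T-1$ blocks of the form $X_t\odot Y$, where $X_t\in\{A^*G_t,G_t\}$ and $Y\in\{A,\diag(A^\top A^*),{A^*}^\top A\}$.

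For a single block, the elementwise bound gives $\norm{X_t\odot Y}_F\le \max_{ij}\absenc{(X_t)_{ij}}\,\norm{Y}_F\le\norm{X_t}_2\norm{Y}_F$. Summing over $t$ then gives
\[
\norm{S_i(A)}_F^2\le T\,J^4(A^*)\norm{A^*}_2^2\norm{Y}_F^2 ,
\]
and the same bound applied to differences gives the Lipschitz estimate. For $Y={A^*}^\top A$ we use $\norm{{A^*}^\top(A-A')}_F\le\norm{A^*}_2\norm{A-A'}_F$. The diagonal case $\diag(A^\top A^*)$ is handled similarly.

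The delicate point is $S_1$. Its blocks are off-diagonal, pairing $q_{t+1}$ with $q_t$, so the block indexing has to be checked to confirm that it still has exactly $T-1$ blocks, each bounded entrywise by $\norm{A^*G_t}_2$. Once this is verified, the Frobenius bound for $S_1$ follows in the same way.

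\textbf{Step 3: Assembling.} Property~\ref{itm:prop-3} gives
\[
\gammafn{2}(\calS,\norm{\cdot}_F)\lesssim\sqrt{T}(3-p)J^2(A^*)\norm{A^*}_2\,\gammafn{2}(\conesetB,\norm{\cdot}_F).
\]
Substituting this, together with the bounds on $\gammafn{1}(\calS,\norm{\cdot}_2)$, $d_2(\calS)$ and $d_F(\calS)$ from Steps 1 and 2, into the chaos inequality above and collecting constants into $\bar{c}_1,\bar{c}_2$ yields the stated bound.
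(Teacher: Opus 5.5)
Your proposal is correct and follows the paper's proof essentially step for step. You apply Theorem~\ref{thm:non-psd-sup} to $\bar{q}$ and show that $A\mapsto S_1(A)+S_2(A)+S_3(A)$ is $(3-p)J^2(A^*)\norm{A^*}_2$-Lipschitz in spectral norm and $\sqrt{T}$ times that in Frobenius norm; you then transfer $\gammafn{1}$ and $\gammafn{2}$ via Property~\ref{itm:prop-3} and bound $d_2(\calS)$, $d_F(\calS)$ the same way. Two small remarks: your explicit justification of the Hadamard-product norm bounds (via the isometry $Q_{n(T-1)}$, or the row-norm Schur inequality) supplies a step the paper uses without comment, and the ``delicate point'' you flag for $S_1$ is immediate, since $S_1(A)$ is just the block-diagonal matrix $\left((I_{T-1}\otimes A^*)G\right)\odot(I_{T-1}\otimes A)$ padded with zeros.
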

The proof of Lemma~\ref{lem:trace-uniform-P} also follows from observing the entries of $\bar{q}$ are zero mean and 1-subgaussian, and applying Theorem~\ref{thm:non-psd-sup} (proof deferred to Appendix~\ref{appsec:proof-lem-trace-uniform-P}). Setting $u = \log(1/\delta)$ and denoting
\begin{align*}
L_2&=\left((3-p)J^2(A^*)\norm{A^*}_2\right) \bigg(\bar{c}_1 \gammafn{1} (\conesetB, \norm{\cdot}_2)+\bar{c}_2\gammafn{2} (\conesetB, \norm{\cdot}_F)\sqrt{T}+\log (1/\delta)+\sqrt{\log (1/\delta)}\sqrt{T}\bigg), \\
\eventp{2d} &= \set{\forall S \in \calS: \ \absenc{\bar{q}^\top S \bar{q}-\expec \bar{q}^\top S \bar{q}}\le L_2}.
\end{align*}
we see that Lemma~\ref{lem:trace-uniform-P} implies $\prob\left(\eventp{2d}\right) \ge 1-\delta$.

Now consider 
\begin{align*}
f_2(A) &= \bigg | p\Tr \left( \Gamma^\top \left((I_{T-1} \otimes {A^*}^\top) \overline{P}_2 (I_{T-1} \otimes A) \right) \Gamma\right)+2p(1-p)\Tr \left(\Gamma^\top \overline{P}_1 \left(I_{T-1} \otimes \diag\left(A^\top A^*\right)\right)  \Gamma\right)\\
&- p\Tr \left( \Gamma^\top \left(\overline{P}_1 \left(I_{T-1} \otimes \left({A^*}^\top A\right)\right)  \right) \Gamma\right) \bigg |.
\end{align*}
We can bound $f_2(A)$ uniformly over $A \in \conesetB$ as follows.
\begin{lemma}\label{lem:dotprod-unifP}
For any $\delta \in (0,1)$, with probability at least $1-\delta$, for any $A \in \conesetB$,
\begin{align*}
f_2(A) \le \ \widetilde{c}p(2-p)J^2(A^*)\norm{A^*}_2 \sqrt{T} \left( \gammafn{2}(\conesetB,\norm{\cdot}_F)+\sqrt{\log (1/\delta)}\right).
\end{align*}
for some constant $\widetilde{c} > 1$.
\end{lemma}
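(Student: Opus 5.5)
We bound the three terms in $f_2(A)$ separately, each as a linear functional of the centered Bernoulli vector, mirroring the proof of Lemma~\ref{lem:SR-expec}. Since $\overline{P}_1$ and $\overline{P}_2$ are diagonal, every trace $\Tr(\Gamma^\top \overline{P}_i W \Gamma) = \dotprod{\overline{P}_i}{W\Gamma\Gamma^\top}$ (after cycling) involves only the diagonal of $W\Gamma\Gamma^\top$. The first term needs the same treatment with the cyclic permutation $\Tr(\Gamma^\top (I\otimes {A^*}^\top)\overline{P}_2(I\otimes A)\Gamma) = \dotprod{\overline{P}_2}{\diag((I\otimes A)\Gamma\Gamma^\top(I\otimes {A^*}^\top))}$. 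This lets us write:
\begin{enumerate}
\item the first term as $p\dotprod{\widetilde{q}}{a(A)}$, with blocks $a_t(A)$ given by the diagonal of $A\,(\Gamma\Gamma^\top)_{tt}{A^*}^\top$;
\item the second term as $2p(1-p)\dotprod{q}{b(A)}$, with blocks $b_t(A)$ given by the diagonal of $\diag(A^\top A^*)G_t$;
\item the third term as $p\dotprod{q}{c(A)}$, with blocks $c_t(A)$ given by the diagonal of $({A^*}^\top A) G_t$.
\end{enumerate}
The triangle inequality gives $f_2(A)\le p\sup|\dotprod{\widetilde q}{a}|+2p(1-p)\sup|\dotprod{q}{b}|+p\sup|\dotprod{q}{c}|$. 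The weights add to $p(3-2p)\lesssim p(2-p)$, up to constants.

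For each of the three index sets $\calR_a=\{a(A):A\in\conesetB\}$, $\calR_b$, $\calR_c$ we follow the two-step argument of Lemma~\ref{lem:SR-expec}.

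\textbf{Expectation.} The entries of $q$ are independent, centered, bounded in $[-1,1]$, hence $O(1)$-subgaussian. Remark~\ref{rem:gaus-width} and the majorizing measure theorem then give $\expec\sup_{r\in\calR_a}\dotprod{\pm\widetilde q}{r}\lesssim \gammafn{2}(\calR_a,\norm{\cdot}_2)$. To bound this we use the Lipschitz property, as in Claims~\ref{clm:lip-map-Rmat} and~\ref{clm:lip-map-Smat}. Since the maps are linear in $A$,
\[
\norm{a(A_1)-a(A_2)}_2^2\le\sum_t\norm{(A_1-A_2)G_t{A^*}^\top}_F^2\le T\,J^4(A^*)\norm{A^*}_2^2\norm{A_1-A_2}_F^2,
\]
using $\norm{G_t}_2\le\norm{\Gamma}_2^2\le J^2(A^*)$. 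Note the factor is $J^2(A^*)$ here rather than the $J(A^*)$ appearing in Lemma~\ref{lem:SR-expec}; this matches the $J^2(A^*)\norm{A^*}_2$ in the statement. The same bound holds for $b$ (via $\norm{\diag(M)}_F\le\norm{M}_F$) and for $c$. Property~\ref{itm:prop-3} then yields $\gammafn{2}(\calR_\cdot,\norm{\cdot}_2)\lesssim \sqrt{T}J^2(A^*)\norm{A^*}_2\,\gammafn{2}(\conesetB,\norm{\cdot}_F)$. Since $0$ is in the span, the suprema of absolute values are controlled by applying the bound to both $\pm q$.

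\textbf{Concentration.} The functions $x\mapsto\sup_{r}\dotprod{\pm x}{r}$ are convex and Lipschitz with constant $\sup_r\norm{r}_2\le \sqrt{T}J^2(A^*)\norm{A^*}_2$, because $\norm{A}_F=1$ on $\conesetB$. Theorem~\ref{thm:bounded-func-conc} then gives deviations of order $\sqrt{T}J^2(A^*)\norm{A^*}_2\sqrt{\log(1/\delta)}$ with probability $1-\delta$.

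A union bound over the six events (three sets, two signs), with $\delta$ rescaled, assembles the claimed bound with $\widetilde c$ absorbing constants.

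The main obstacle is the first term, where $\overline{P}_2$ (that is, $\widetilde q$, built from $P_2,\dots,P_T$) sits between two different factors $A$ and ${A^*}^\top$. We must check that the cyclic-trace rewriting is legitimate and that the resulting diagonal vector depends linearly on $A$ with the stated Lipschitz constant. We also need $\widetilde q$ to have independent entries; this holds because it is a sub-vector of $\bar q$. Once this term is handled, the other two are direct repeats of Lemma~\ref{lem:SR-expec}.
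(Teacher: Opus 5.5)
Your proposal is correct and follows essentially the paper's argument. The paper stacks the three terms into a single diagonal vector $h(A)$ paired with $\bar q$ and runs the width/majorizing-measure bound, Property~\ref{itm:prop-3} and Theorem~\ref{thm:bounded-func-conc} once; you instead split the terms by the triangle inequality and run the same argument on each, and your use of $\pm q$ handles the absolute value in $f_2$ more carefully than the paper does. One harmless slip: the weights sum to $p+2p(1-p)+p=2p(2-p)$, not $p(3-2p)$, and $2p(2-p)$ is exactly the factor in the paper's Lipschitz constant.
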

The proof of Lemma~\ref{lem:dotprod-unifP} follows from an application of Theorem~\ref{thm:bounded-func-conc}. A detailed proof is available in Appendix~\ref{appsec:proof-lem-dotprod-unifP}.

Denoting
\[
\eventp{3d} = \set{\forall A \in \conesetB: \ f_2(A) \le \widetilde{c}p(2-p)J^2(A^*)\norm{A^*}_2 \sqrt{T} \left( \gammafn{2}(\conesetB,\norm{\cdot}_F)+\sqrt{\log (1/\delta)}\right)},
\]
 we can see that from Lemma~\ref{lem:dotprod-unifP}, $\prob \left(\eventp{3d}\right) \ge 1-\delta$.

On the event $\eventp{1d} \cap \eventp{2d} \cap \eventp{3d}$, we have for any $R \in \calR$,
\begin{align*}
\bar{\eta}^\top R \bar{\eta} &\le c(2-p)J^2(A^*)\norm{A^*}_2 \bigg(\bar{c}_1 \gammafn{1}(\conesetB,\norm{\cdot}_2)+\bar{c}_2 \gammafn{2}(\conesetB,\norm{\cdot}_F)\sqrt{T}+\sqrt{\log (1/\delta)} \sqrt{T}+\log (1/\delta)\bigg)\\
&+cJ(A^*) \bigg(\bar{c}_1 \gammafn{1}(\conesetB,\norm{\cdot}_2)+\bar{c}_2 \gammafn{2}(\conesetB,\norm{\cdot}_F)\sqrt{T}+\sqrt{\log (1/\delta)} \sqrt{T}+\log (1/\delta)\bigg)\\
&+\left((3-p)J^2(A^*)\norm{A^*}_2\right) \bigg(\bar{c}_1 \gammafn{1} (\conesetB, \norm{\cdot}_2)+\bar{c}_2\gammafn{2} (\conesetB, \norm{\cdot}_F)\sqrt{T}+\log (1/\delta)+\sqrt{\log (1/\delta)}\sqrt{T}\bigg)\\
&+\widetilde{c}2p(2-p)J^2(A^*)\norm{A^*}_2 \sqrt{T} \left( \gammafn{2}(\conesetB,\norm{\cdot}_F)+\sqrt{\log (1/\delta)}\right)\\
&\le \widetilde{c}\left(J^2(A^*)\norm{A^*}_2+J(A^*)\right)  \left(\gammafn{1}(\conesetB,\norm{\cdot}_2)+\gammafn{2}(\conesetB,\norm{\cdot}_F)\sqrt{T}+\sqrt{\log (1/\delta)} \sqrt{T}+\log (1/\delta)\right).
\end{align*}
This implies,
\begin{align}
& \dotprod{\widetilde{Z}-A^*Z}{AZ}+(1-p)\dotprod{A^* \diag^{1/2}(ZZ^\top)}{A\diag^{1/2}(ZZ^\top)} \notag \\
& \le c_1\left(J^2(A^*)\norm{A^*}_2+J(A^*)\right) \bigg(\gammafn{1}(\conesetB,\norm{\cdot}_2)+\gammafn{2}(\conesetB,\norm{\cdot}_F)\sqrt{T}+\sqrt{T\log (1/\delta)}+\log (1/\delta)\bigg) \notag \\
& \le \widetilde{c}_1J^3(A^*) \bigg(\gammafn{1}(\conesetB,\norm{\cdot}_2)+\gammafn{2}(\conesetB,\norm{\cdot}_F)\sqrt{T}+\sqrt{T\log (1/\delta)}+\log (1/\delta)\bigg). \label{eq:dotprod-PGD-unif-bound}
\end{align}
for some constant $\widetilde{c}_1$ (where the last inequality uses the fact that $J(A^*)\ge 1+\norm{A^*}_2$).

Therefore with probability at least $1-\delta$,
\begin{align*}
&2\dotprod{\widetilde{Z}-A^*Z}{(\widehat{A}-B)Z} +2(1-p)\dotprod{A^* \diag^{1/2}(ZZ^\top)}{(\widehat{A}-B)\diag^{1/2}(ZZ^\top)}\\
&+2(1-p)\norm{(A^*-B)\diag^{1/2}(ZZ^\top)}_F \norm{(\widehat{A}-B)\diag^{1/2}(ZZ^\top)}_F+2\norm{\left(A^*-B\right)Z}_F\norm{(\widehat{A}-B)Z}_F \\
& \le  c_1 J^3(A^*) \norm{\widehat{A}-B}_F \bigg(\gammafn{1}(\conesetB,\norm{\cdot}_2)+\gammafn{2}(\conesetB,\norm{\cdot}_F)\sqrt{T}+\sqrt{T\log \left(1/\delta\right)}+\log (1/\delta)\bigg)\\
&+c_2J^2(A^*) \norm{\widehat{A}-B}_F\norm{A^*-B}_F \bigg(p T+\bar{L}_1 +p(1-p)^2 \bar{L}_1+p\sqrt{T} \left(\gammafn{2} \bigg(\conesetB,\norm{\cdot}_F\right)+\sqrt{\log (1/\delta)}\bigg)\bigg),
\end{align*}
where 
\begin{align*}
\bar{L}_1 &= C \left(\gammafn{2}^2(\conesetB,\norm{\cdot}_2) +\gammafn{2}(\conesetB,\norm{\cdot}_2) \sqrt{T}+\sqrt{T} \log \left(1/\delta\right)\right),
\end{align*}
and $C,c_1,c_2$ are constants.
\end{proof}

\subsection{Proof of Claim~\ref{clm:quad-dotprod-direct}}\label{appsec:proof-clm-quad-dotprod-direct}

\begin{proof}
Since $x_{t+1}=A^*x_t+\eta_{t+1}$, and, $y_t=\smpl_t x_t$, we can see that, 
\begin{align*}
\left(\widetilde{Z}-A^*Z\right)_{t}
&=\smpl_{t+1}y_{t+1}-A^*\smpl_{t} y_t \\&= \smpl_{t+1} x_{t+1}- \smpl_{t} x_{t}\\ 
&= \left(P_{t+1}A^*-A^*P_t\right)x_t+P_{t+1} \eta_{t+1}.
\end{align*}

Let $R=\matrixenc{\left(P_{2}A^*-A^*P_1\right)x_1 & \left(P_{3}A^*-A^*P_2\right)x_2&\dots&\left(P_{T}A^*-A^*P_{T-1}\right)x_{T-1}}$ and $E_{proj} = \matrixenc{ P_2 \eta_2 & P_3 \eta_3 & \dots & P_T \eta_T}$. Then we can see that,
$\widetilde{Z}-A^*Z=R+E_{proj}$. 

Note that $Z = \matrixenc{ P_1 x_1 & P_2 x_2 & \dots & P_{T-1}x_{T-1}}$. Therefore,
\begin{align*}
&\dotprod{\widetilde{Z}-A^*Z}{\left(\widehat{A}-B\right)Z} \\
&= \dotprod{R}{\left(\widehat{A}-B\right)Z}+\dotprod{E_{proj}}{\left(\widehat{A}-B\right)Z}\\
& =\Tr \left(\left(\widehat{A}-B\right)ZR^\top\right)+\Tr \left(\left(\widehat{A}-B\right)ZE_{proj}^\top\right)\\
&=\sum_{t=1}^{T-1} \Tr\left(\left(\widehat{A}-B\right)P_t x_t x_t^\top ({A^*}^\top P_{t+1}-P_t {A^*}^\top)\right)+\sum_{t=1}^{T-1} \Tr\left(\left(\widehat{A}-B\right)P_t x_t \eta_{t+1}^\top P_{t+1}\right)\\
&=\sum_{t=1}^{T-1} x_t^\top ({A^*}^\top P_{t+1}-P_t {A^*}^\top)\left(\widehat{A}-B\right)P_t x_t +\sum_{t=1}^{T-1} \eta_{t+1}^\top P_{t+1}\left(\widehat{A}-B\right)P_t x_t.
\end{align*}

We can use the definitions of the $x,\eta,\Gamma,\widetilde{P}$, and $\breve{P}$, to show that for $\sum_{t=1}^{T-1} x_t^\top ({A^*}^\top P_{t+1}-P_t {A^*}^\top)\left(\widehat{A}-B\right)P_t x_t$,
\begin{align*}
\sum_{t=1}^{T-1} x_t^\top ({A^*}^\top P_{t+1}-P_t {A^*}^\top)\left(\widehat{A}-B\right)P_t x_t &= \norm{\widehat{A}-B}_F \sum_{t=1}^{T-1} x_t^\top ({A^*}^\top P_{t+1}-P_t {A^*}^\top)\optA P_t x_t
\end{align*}
Note that,
\begin{align*}
&\sum_{t=1}^{T-1} x_t^\top ({A^*}^\top P_{t+1}-P_t {A^*}^\top)A P_t x_t\\
&= x^\top \matrixenc{
({A^*}^\top P_{2}-P_1 {A^*}^\top)\optA P_1 & 0 & \dots & 0\\
0 & ({A^*}^\top P_{3}-P_2 {A^*}^\top)\optA P_2 & \dots & 0 \\
\vdots & \vdots & \dots & \vdots\\
0 & 0 & \dots & ({A^*}^\top P_{T}-P_{T-1} {A^*}^\top)A P_{T-1}} x \\
&=x^\top \left(\left((I_{T-1} \otimes {A^*}^\top) \widetilde{P}-\breve{P} (I_{T-1} \otimes {A^*}^\top)\right) (I_{T-1} \otimes \optA) \breve{P} \right) x\\
&=\eta^\top \Gamma^\top \left(\left((I_{T-1} \otimes {A^*}^\top) \widetilde{P}-\breve{P} (I_{T-1} \otimes {A^*}^\top)\right) (I_{T-1} \otimes \optA) \breve{P} \right) \Gamma \eta
\end{align*}
and therefore,
\begin{align*}
&\sum_{t=1}^{T-1} x_t^\top ({A^*}^\top P_{t+1}-P_t {A^*}^\top)\left(\widehat{A}-B\right)P_t x_t \\
&= \norm{\widehat{A}-B}_F \eta^\top \Gamma^\top \left(\left((I_{T-1} \otimes {A^*}^\top) \widetilde{P}-\breve{P} (I_{T-1} \otimes {A^*}^\top)\right) (I_{T-1} \otimes \optA) \breve{P} \right) \Gamma \eta\\
&=\norm{\widehat{A}-B}_F\bar{\eta}^\top \matrixenc{\Gamma^\top \left(\left((I_{T-1} \otimes {A^*}^\top) \widetilde{P}-\breve{P} (I_{T-1} \otimes {A^*}^\top)\right) (I_{T-1} \otimes \optA) \breve{P} \right) \Gamma & 0\\ 0 & 0}\bar{\eta}
\end{align*}
Let $\widetilde{\eta} = \vect\left(\matrixenc{\eta_2 & \eta_3 & \dots & \eta_T}\right)$. Then,
\begin{align*}
\sum_{t=1}^{T-1} \eta_{t+1}^\top P_{t+1}\left(\widehat{A}-B\right)P_t x_t &= \norm{\widehat{A}-B}_F \sum_{t=1}^{T-1} \eta_{t+1}^\top P_{t+1}\optA P_t x_t= \norm{\widehat{A}-B}_F \widetilde{\eta}^\top \widetilde{P}(I_{T-1} \otimes \optA) \breve{P} x\\
&= \norm{\widehat{A}-B}_F \widetilde{\eta}^\top \widetilde{P}(I_{T-1} \otimes \optA) \breve{P} \Gamma \eta= \norm{\widehat{A}-B}_F \bar{\eta}^\top \matrixenc{0 & 0\\
\widetilde{P}(I_{T-1} \otimes \optA) \breve{P} \Gamma & 0} \bar{\eta}.
\end{align*}
\end{proof}

\subsection{Proof of Claim~\ref{clm:quad-dotprod-diag}}\label{appsec:proof-clm-quad-dotprod-diag}

\begin{proof}
Let $\optA=\frac{\widehat{A}-B}{\norm{\widehat{A}-B}_F}$. Note that,
\begin{align*}
\dotprod{A^*  \diag^{1/2}(ZZ^\top)}{(\widehat{A}-B)\diag^{1/2}(ZZ^\top)}
&=\Tr \left((\widehat{A}-B)^\top A^*  \diag(ZZ^\top)\right)\\
&=\norm{\widehat{A}-B}\Tr \left(A^\top A^*  \diag(ZZ^\top)\right)\\
&=\norm{\widehat{A}-B}\dotprod{{A^*}^\top \optA}{\diag(ZZ^\top)}\\ 
& = \norm{\widehat{A}-B}_F\dotprod{\diag\left({A^*}^\top \optA\right)}{\diag\left(ZZ^\top\right)}\\
& = \norm{\widehat{A}-B}_F\sum_{i=1}^n \left(\diag\left({A^*}^\top \optA\right)\right)_{ii} \left({\diag\left(ZZ^\top\right)}\right)_{ii}\\
& = \norm{\widehat{A}-B}_F\sum_{i=1}^n \left(\diag\left({A^*}^\top \optA\right)\right)_{ii} \sum_{t=1}^{T-1} Z_{it}^2\\
& = \norm{\widehat{A}-B}_F\sum_{t=1}^{T-1} \sum_{i=1}^n \left(\diag\left({A^*}^\top \optA\right)\right)_{ii}  Z_{it}^2\\
& = \norm{\widehat{A}-B}_F\sum_{t=1}^{T-1} \left(Z_{:,t}\right)^\top \diag\left({A^*}^\top \optA\right) Z_{:,t}\\
& = \norm{\widehat{A}-B}_F \vect{(Z)}^\top \left(I_{T-1} \otimes \diag\left({A^*}^\top \optA\right)\right) \vect{(Z)}\\
&=\norm{\widehat{A}-B}_F \eta^\top \Gamma^\top \breve{P} \left(I_{T-1} \otimes \diag\left({A^*}^\top \optA\right)\right) \breve{P} \Gamma \eta\\
&=\norm{\widehat{A}-B}_F  \bar{\eta}^\top \matrixenc{\Gamma^\top \breve{P} \left(I_{T-1} \otimes \diag\left({A^*}^\top \optA\right)\right) \breve{P} \Gamma & 0 \\ 0 & 0} \bar{\eta}.
\end{align*}
\end{proof}

\subsection{Proof of Lemma~\ref{lem:dotprod-fixedP}}\label{appsec:proof-lem-dotprod-fixedP}

\begin{proof} Since $\bar{\eta}$ is zero mean and $L$-subgaussian, we can see that, $\eta_{L} = \frac{\bar{\eta}}{L}$ is 1-subgaussian. Therefore, applying Theorem~\ref{thm:non-psd-sup} on $\eta_{L}^\top R \eta_{L}$ we get that,
\begin{align*}
\prob\bigg(\sup_{R \in \calR} \absenc{\eta_{L}^\top R \eta_{L} - \expec \left[\eta_{L}^\top R \eta_{L} \bigg \vert (P_t)_{t=1}^T \right]} \ge c' \left(\gammafn{1}(\calR,\norm{\cdot}_2)+\gammafn{2}(\calR,\norm{\cdot}_F)\right)+\sqrt{u} d_F(\calR)+ud_2(\calR) \ \bigg \vert (P_t)_{t=1}^T \bigg) \le e^{-u}.
\end{align*}
Note that since each entry of $\bar{\eta}$ has unit variance, $\expec \left[\eta_{L}^\top R \eta_{L} \bigg \vert (P_t)_{t=1}^T \right] = \frac{1}{L^2}\expec \left[\bar{\eta}^\top R \bar{\eta}\right] = \frac{1}{L^2}\Tr(R)$. Therefore, conditional on $(P_t)_{t=1}^T$, with probability at least $1-e^{-u}$,
\begin{align*}
\sup_{R \in \calR} \absenc{\eta_{L}^\top R \eta_{L} - \frac{1}{L^2}\Tr \left( R\right)} \le c' \left(\gammafn{1}(\calR,\norm{\cdot}_2)+\gammafn{2}(\calR,\norm{\cdot}_F)\right)+\sqrt{u} d_F(\calR)+ud_2(\calR).
\end{align*}
which implies
\begin{align*}
\sup_{R \in \calR} \absenc{\bar{\eta}^\top R \bar{\eta}-\Tr \left( R\right)} \le c\left(c' \left(\gammafn{1}(\calR,\norm{\cdot}_2)+\gammafn{2}(\calR,\norm{\cdot}_F)\right)+\sqrt{u} d_F(\calR)+ud_2(\calR)\right).
\end{align*}
where $c$ is a constant.

Similar to the proof of Lemma~\ref{lem:M1-expec} we next show that the maps $(\conesetB,\norm{\cdot}_2) \mapsto (\calR,\norm{\cdot}_2)$ and $(\conesetB,\norm{\cdot}_F) \mapsto (\calR,\norm{\cdot}_F)$ are Lipschitz. 

\begin{claim}\label{clm:lip-map-calR}
The following statements hold a.s.
\begin{enumerate}
\item The map $(\conesetB,\norm{\cdot}_2) \mapsto (\calR,\norm{\cdot}_2)$ is $\left((2-p)J^2(A^*)\norm{A^*}_2+J(A^*)\right)$-Lipschitz, and,
\item The map $(\conesetB,\norm{\cdot}_F) \mapsto (\calR,\norm{\cdot}_F)$ is $\left((2-p)J^2(A^*)\norm{A^*}_2+J(A^*)\right)\sqrt{T}$-Lipschitz.
\end{enumerate}
\end{claim}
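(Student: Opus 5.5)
The plan is to use linearity in $A$ and bound each of the three blocks separately in operator and Frobenius norm; the only non-routine step is showing that the commutator-type factor in $R_1$ has norm at most $\norm{A^*}_2$, not $2\norm{A^*}_2$.

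\textbf{Reduction to $\Delta$.} For fixed $(P_t)_{t=1}^T$, each of $A\mapsto R_1(A)$, $R_2(A)$, $R_3(A)$ is linear in $A$. So for $A_1,A_2\in\conesetB$ with $\Delta:=A_1-A_2$,
\[
R(A_1)-R(A_2)=R_1(\Delta)+R_2(\Delta)+R_3(\Delta).
\]
By the triangle inequality it suffices to bound $\norm{R_i(\Delta)}_2$ and $\norm{R_i(\Delta)}_F$ for each $i$. The zero-padded block structure does not change either norm. Throughout I use $\norm{\Gamma}_2\le J(A^*)$, $\norm{\breve{P}}_2,\norm{\widetilde{P}}_2\le 1$ almost surely, and $\norm{I_{T-1}\otimes\Delta}_2=\norm{\Delta}_2$.

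\textbf{The term $R_1$ (main step).} A crude triangle inequality on $(I_{T-1}\otimes {A^*}^\top)\widetilde{P}-\breve{P}(I_{T-1}\otimes {A^*}^\top)$ gives $2\norm{A^*}_2$, which is too weak. Instead I will use the following fact: if $P,Q$ are orthogonal projections and $M$ is any matrix, then $\norm{MQ-PM}_2\le\norm{M}_2$. The proof is the identity
\[
MQ-PM=(I-P)MQ-PM(I-Q).
\]
The first summand maps $\operatorname{range}(Q)$ into $\operatorname{range}(I-P)$ and kills $\operatorname{range}(I-Q)$. The second maps $\operatorname{range}(I-Q)$ into $\operatorname{range}(P)$ and kills $\operatorname{range}(Q)$. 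In the decompositions $\operatorname{range}(Q)\oplus\operatorname{range}(I-Q)$ and $\operatorname{range}(P)\oplus\operatorname{range}(I-P)$, the difference is therefore a block matrix with two blocks in off-diagonal positions. Its norm is the larger of the two block norms, each of which is at most $\norm{M}_2$.

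I apply this with $M=I_{T-1}\otimes {A^*}^\top$, $Q=\widetilde{P}$ and $P=\breve{P}$; both are diagonal $0/1$ matrices, hence orthogonal projections. This gives
\[
\norm{R_1(\Delta)}_2\le J^2(A^*)\norm{A^*}_2\norm{\Delta}_2 .
\]
For the Frobenius norm I use $\norm{XYZ}_F\le\norm{X}_2\norm{Y}_F\norm{Z}_2$ with the middle factor $I_{T-1}\otimes\Delta$, whose Frobenius norm is $\sqrt{T-1}\,\norm{\Delta}_F$. This gives
\[
\norm{R_1(\Delta)}_F\le J^2(A^*)\norm{A^*}_2\sqrt{T}\,\norm{\Delta}_F .
\]

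\textbf{The terms $R_2$ and $R_3$.} For $R_2$, submultiplicativity gives
\[
\norm{R_2(\Delta)}_2\le J(A^*)\norm{\Delta}_2,\qquad \norm{R_2(\Delta)}_F\le J(A^*)\sqrt{T}\,\norm{\Delta}_F .
\]
For $R_3$, I use that extracting the diagonal does not increase the spectral or Frobenius norm. This gives $\norm{\diag(\Delta^\top A^*)}_2\le\norm{A^*}_2\norm{\Delta}_2$ and $\norm{\diag(\Delta^\top A^*)}_F\le\norm{A^*}_2\norm{\Delta}_F$. Hence
\[
\norm{R_3(\Delta)}_2\le(1-p)J^2(A^*)\norm{A^*}_2\norm{\Delta}_2,\qquad \norm{R_3(\Delta)}_F\le(1-p)J^2(A^*)\norm{A^*}_2\sqrt{T}\,\norm{\Delta}_F .
\]
If the version of $R_3$ carrying a second factor $\breve{P}$ (as in Claim~\ref{clm:quad-dotprod-diag}) is used instead, the same bounds hold because $\norm{\breve{P}}_2\le1$.

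\textbf{Conclusion.} Summing the three spectral bounds gives $\left(1+(1-p)\right)J^2(A^*)\norm{A^*}_2+J(A^*)=(2-p)J^2(A^*)\norm{A^*}_2+J(A^*)$ times $\norm{\Delta}_2$, which is part 1. The Frobenius bounds give the same constant times $\sqrt{T}$, which is part 2. Both hold for every realization of $(P_t)_{t=1}^T$, hence almost surely.
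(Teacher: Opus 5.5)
Your proof is correct and follows the same route as the paper: linearity in $A$, the triangle inequality over $R_1,R_2,R_3$, and submultiplicativity with $\norm{\Gamma}_2\le J(A^*)$ and $\norm{\breve{P}}_2,\norm{\widetilde{P}}_2\le 1$. The one addition is your identity $MQ-PM=(I-P)MQ-PM(I-Q)$, which justifies $\norm{(I_{T-1}\otimes {A^*}^\top)\widetilde{P}-\breve{P}(I_{T-1}\otimes {A^*}^\top)}_2\le\norm{A^*}_2$; the paper uses this bound without comment, and it is what gives the stated constant $(2-p)$ rather than the $(3-p)$ that the naive triangle inequality would produce.
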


Hence from Property~\ref{itm:prop-3} from Appendix~\ref{sec:prelim-tools}, we get,
\begin{align*}
&\gammafn{1}\left(\calR,\norm{\cdot}_2\right) \le c_1 \left((2-p)J^2(A^*)\norm{A^*}_2+J(A^*)\right) \gammafn{1}\left(\conesetB,\norm{\cdot}_2\right),\\
&\gammafn{2}\left(\calR,\norm{\cdot}_F\right) \le c_1 \left((2-p)J^2(A^*)\norm{A^*}_2+J(A^*)\right)\sqrt{T} \gammafn{2}\left(\conesetB,\norm{\cdot}_F\right).
\end{align*}
We can also see that,
\begin{align*}
d_2(\calR) &= \sup_{R\in \calR} \norm{R}_2 = \sup_{A \in \conesetB} \norm{R_1(A)+R_2(A)+R_3(A)}_2\\ 
&\le\sup_{A \in \conesetB} \left((2-p)J^2(A^*)\norm{A^*}_2+J(A^*)\right) \norm{A}_2  \quad \text{a.s} \\
& \le \left((2-p)J^2(A^*)\norm{A^*}_2+J(A^*)\right), \;\text{and},\\
d_F(\calR) &= \sup_{R\in \calR} \norm{R}_F = \sup_{A \in \conesetB} \norm{R_1(A)+R_2(A)+R_3(A)}_F\\
& \le \sup_{A \in \conesetB}\left((2-p)J^2(A^*)\norm{A^*}_2+J(A^*)\right)\sqrt{T} \norm{A}_F \quad \text{a.s}\\
& = \left((2-p)J^2(A^*)\norm{A^*}_2+J(A^*)\right)\sqrt{T}.
\end{align*}

Therefore, conditional on $(P_t)_{t=1}^T$, with probability at least $1-e^{-u}$,
\begin{align*}
&\sup_{R \in \calR} \absenc{\bar{\eta}^\top R \bar{\eta}} -\sup_{R \in \calR} \absenc{\Tr \left( R\right)} \le \sup_{R \in \calR} \absenc{\bar{\eta}^\top R \bar{\eta}-\Tr \left( R\right)}\\
&\le c\left((2-p)J^2(A^*)\norm{A^*}_2+J(A^*)\right) \bigg(\bar{c}_1 \gammafn{1}(\conesetB,\norm{\cdot}_2)+\bar{c}_2 \gammafn{2}(\conesetB,\norm{\cdot}_F)\sqrt{T}+\sqrt{u} \sqrt{T}+u\bigg).
\end{align*}
\end{proof}

\begin{proof}[Proof of Claim~\ref{clm:lip-map-calR}]
Let $A_1,A_2 \in \conesetB$, and $\widetilde{R}_1, \widetilde{R}_2 \in \calR$ be such that $\widetilde{R}_1 = R_1(A_1)+R_2(A_1)+R_3(A_1)$ and $\widetilde{R}_2 = R_1(A_2)+R_2(A_2)+R_3(A_2)$. Then,
\begin{align*}
\norm{\widetilde{R}_1-\widetilde{R}_2}_2 & = \norm{R_1(A_1)+R_2(A_1)+R_3(A_1)-R_1(A_2)-R_2(A_2)-R_3(A_2)}_2\\
& \le \norm{R_1(A_1)-R_1(A_2)}_2+\norm{R_2(A_1)-R_2(A_2)}_2+\norm{R_3(A_1)-R_3(A_2)}_2\\
& = \norm{\Gamma^\top \left(\left((I_{T-1} \otimes {A^*}^\top) \widetilde{P}-\breve{P} (I_{T-1} \otimes {A^*}^\top)\right) (I_{T-1} \otimes (A_1-A_2)) \breve{P} \right) \Gamma}_2 \\
&+\norm{\matrixenc{0 & 0\\
\widetilde{P}(I_{T-1} \otimes (A_1-A_2)) \breve{P} \Gamma & 0}}_2+\norm{(1-p)\Gamma^\top \breve{P} \left(I_{T-1} \otimes \diag\left((A_1-A_2)^\top A^*\right)\right) \Gamma}_2\\
& \le \norm{\Gamma}_2^2\norm{\left((I_{T-1} \otimes {A^*}^\top) \widetilde{P}-\breve{P} (I_{T-1} \otimes {A^*}^\top)\right)}_2 \norm{\breve{P}}_2 \norm{A_1-A_2}_2 \\
&+\norm{\Gamma}_2 \norm{\widetilde{P}}_2 \norm{\breve{P}}_2 \norm{A_1-A_2}_2+(1-p)\norm{\Gamma}_2^2\norm{\breve{P}}_2\norm{ \diag\left((A_1-A_2)^\top A^*\right)}_2\\
& = \norm{\Gamma}_2^2\norm{\left((I_{T-1} \otimes {A^*}^\top) \widetilde{P}-\breve{P} (I_{T-1} \otimes {A^*}^\top)\right)}_2 \norm{\breve{P}}_2 \norm{A_1-A_2}_2 \\
&+\norm{\Gamma}_2 \norm{\widetilde{P}}_2 \norm{\breve{P}}_2 \norm{A_1-A_2}_2+(1-p)\norm{\Gamma}_2^2\norm{\breve{P}}_2\max_{i \in [n]} {\left((A_1-A_2)_{:,i}\right)}^\top {(A^{*})}_{:,i}\\
& \le J^2(A^*) \norm{A^*}_2 \norm{A_1-A_2}_2 +J(A^*) \norm{A_1-A_2}_2+(1-p)J^2(A^*) \norm{\left((A_1-A_2)\right)}_2 \norm{{(A^{*})}}_2 \quad \text{a.s}\\
&=\left((2-p)J^2(A^*) \norm{\left((A_1-A_2)\right)}_2+J(A^*) \right) \norm{A_1-A_2}_2 
\end{align*}
where $(A_1-A_2)_{:,i}$ and ${(A^{*})}_{:,i}$ are the $i$-th columns of $A_1-A_2$ and $A^*$ respectively. Therefore, the map $(\conesetB,\norm{\cdot}_2) \mapsto (\calR,\norm{\cdot}_2)$ is $\left((2-p)J^2(A^*)\norm{A^*}_2+J(A^*)\right)$-Lipschitz a.s.

To show that  $(\conesetB,\norm{\cdot}_F) \mapsto (\calR,\norm{\cdot}_F)$ is Lipschitz, we can follow the same arguments as above.
\begin{align*}
\norm{\widetilde{R}_1-\widetilde{R}_2}_F & = \norm{R_1(A_1)+R_2(A_1)+R_3(A_1)-R_1(A_2)-R_2(A_2)-R_3(A_2)}_F\\
& \le \norm{R_1(A_1)-R_1(A_2)}_F+\norm{R_2(A_1)-R_2(A_2)}_F+\norm{R_3(A_1)-R_3(A_2)}_F\\
& = \norm{\Gamma^\top \left(\left((I_{T-1} \otimes {A^*}^\top) \widetilde{P}-\breve{P} (I_{T-1} \otimes {A^*}^\top)\right) (I_{T-1} \otimes (A_1-A_2)) \breve{P} \right) \Gamma}_F \\
&+\norm{\matrixenc{0 & 0\\
\widetilde{P}(I_{T-1} \otimes (A_1-A_2)) \breve{P} \Gamma & 0}}_F+\norm{(1-p)\Gamma^\top \breve{P} \left(I_{T-1} \otimes \diag\left((A_1-A_2)^\top A^*\right)\right) \Gamma}_F\\
& \le \norm{\Gamma}_2^2\norm{\left((I_{T-1} \otimes {A^*}^\top) \widetilde{P}-\breve{P} (I_{T-1} \otimes {A^*}^\top)\right)}_2 \norm{\breve{P}}_2 \sqrt{T}\norm{A_1-A_2}_F \\
&+\norm{\Gamma}_2 \norm{\widetilde{P}}_2 \norm{\breve{P}}_2 \sqrt{T}\norm{A_1-A_2}_F+(1-p)\norm{\Gamma}_2^2\norm{\breve{P}}_2\sqrt{T}\norm{ \diag\left((A_1-A_2)^\top A^*\right)}_F\\
& \le J^2(A^*)\norm{A^*}_2\sqrt{T} \norm{A_1-A_2}_F +J(A^*) \sqrt{T}\norm{A_1-A_2}_F\\
&+(1-p)J^2(A^*)\sqrt{T} \left(\sum_{i \in [n]} \left({\left((A_1-A_2)_{:,i}\right)}^\top {(A^{*})}_{:,i}\right)^2\right)^{1/2}\\
& \le J^2(A^*)\norm{A^*}_2\sqrt{T} \norm{A_1-A_2}_F +J(A^*) \sqrt{T}\norm{A_1-A_2}_F\\
&+(1-p)J^2(A^*)\sqrt{T} \left(\sum_{i \in [n]}\norm{\left((A_1-A_2)_{:,i}\right)}_2^2 \norm{{(A^{*})}_{:,i}}_2^2\right)^{1/2} \quad \text{a.s}\\
& \le J^2(A^*)\norm{A^*}_2\sqrt{T} \norm{A_1-A_2}_F +J(A^*) \sqrt{T}\norm{A_1-A_2}_F\\
&+(1-p)J^2(A^*)\sqrt{T} \norm{A^*}_2\left(\sum_{i \in [n]}\norm{\left((A_1-A_2)_{:,i}\right)}_2^2\right)^{1/2}\\
& \le \bigg( J^2(A^*) \norm{A^*}_2 +J(A^*) +(1-p)J^2(A^*)  \norm{{(A^{*})}}_2 \bigg) \sqrt{T} \norm{\left((A_1-A_2)\right)}_F \\
&=\left((2-p)J^2(A^*) \norm{A^*}_2+J(A^*) \right)\sqrt{T} \norm{A_1-A_2}_F.
\end{align*}
Therefore, the map $(\conesetB,\norm{\cdot}_F) \mapsto (\calR,\norm{\cdot}_F)$ is $\left((2-p)J^2(A^*)\norm{A^*}_2+J(A^*)\right)\sqrt{T}$-Lipschitz a.s.
\end{proof}

\subsection{Proof of Lemma~\ref{lem:trace-uniform-P}}\label{appsec:proof-lem-trace-uniform-P}

\begin{proof}
Let $\calS = \set{S_1(A)+S_2(A)+S_3(A): A \in \conesetB}$. Since each $\bar{q}_i \in [-p,1-p]$ for $i \in [nT]$ and centered, we can see that, entries of $\bar{q}$ are zero mean, 1-subgaussian. Therefore, we can apply Theorem~\ref{thm:non-psd-sup} on $\calS$ to show that for any $u>0$, with probability $1-e^{-u}$,
\begin{align*}
\sup_{S \in \calS} \absenc{\bar{q}^\top S \bar{q}-\expec \bar{q}^\top S \bar{q}} \le c' \left(\gammafn{1}(\calS,\norm{\cdot}_2)+\gammafn{2}(\calS,\norm{\cdot}_F)\right)+ud_2(\calS)+\sqrt{u}d_F(\calS).
\end{align*}
Similar to in the proof of Lemma~\ref{lem:M1-expec} we next show that $(\conesetB,\norm{\cdot}_2) \mapsto (\calS,\norm{\cdot}_2)$ and $(\conesetB,\norm{\cdot}_F) \mapsto (\calS,\norm{\cdot}_F)$ are Lipschitz.
\begin{claim}\label{clm:lip-map-calS}
The following statements hold.
\begin{enumerate}
\item The map $(\conesetB,\norm{\cdot}_2) \mapsto (\calS,\norm{\cdot}_2)$ is $(3-p)J^2(A^*)\norm{A^*}_2$-Lipschitz, and,
\item The map $(\conesetB,\norm{\cdot}_F) \mapsto (\calS,\norm{\cdot}_F)$ is $(3-p)J^2(A^*)\norm{A^*}_2\sqrt{T}$-Lipschitz.
\end{enumerate}
\end{claim}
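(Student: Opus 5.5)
Since $S_1(A)$, $S_2(A)$ and $S_3(A)$ are each linear in $A$, the plan is to reduce both statements to bounding $\norm{S_i(\Delta)}_2$ and $\norm{S_i(\Delta)}_F$ with $\Delta = A_1 - A_2$. Indeed, for $A_1, A_2 \in \conesetB$ we have $S(A_1) - S(A_2) = S_1(\Delta) + S_2(\Delta) + S_3(\Delta)$, where $\diag(A_1^\top A^*) - \diag(A_2^\top A^*) = \diag(\Delta^\top A^*)$ and similarly for ${A^*}^\top A$. By the triangle inequality it then suffices to show
\[
\norm{S_1(\Delta)}_2 \le J^2(A^*)\norm{A^*}_2\norm{\Delta}_2, \qquad \norm{S_2(\Delta)}_2 \le (1-p)J^2(A^*)\norm{A^*}_2\norm{\Delta}_2, \qquad \norm{S_3(\Delta)}_2 \le J^2(A^*)\norm{A^*}_2\norm{\Delta}_2,
\]
together with the analogous Frobenius bounds carrying an extra factor $\sqrt{T}$ and $\norm{\Delta}_F$ in place of $\norm{\Delta}_2$. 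Summing the three constants gives $1 + (1-p) + 1 = 3-p$.

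\textbf{Spectral-norm bounds.} First I strip the zero padding. Zero-padding a matrix, or placing a block off the diagonal as in $S_1$, does not change its spectral or Frobenius norm. Moreover each relevant matrix is block diagonal, with $t$-th blocks
\[
(A^*G_t)\odot \Delta, \qquad (1-p)\, G_t \odot \diag(\Delta^\top A^*), \qquad G_t \odot ({A^*}^\top \Delta).
\]
So the norms reduce to a maximum over $t$ (spectral) or a root-sum-of-squares over $t$ (Frobenius). Next I invoke Schur's inequality $\norm{X\odot Y}_2 \le \norm{X}_2\norm{Y}_2$, valid for arbitrary square matrices. For the bound on $\norm{G_t}_2$, each $G_t$ is a principal diagonal block of $\Gamma\Gamma^\top$, so $\norm{G_t}_2 \le \norm{\Gamma}_2^2 \le J^2(A^*)$.
\begin{itemize}
\item[(i)] For $S_1$: $\norm{A^*G_t}_2\norm{\Delta}_2 \le J^2(A^*)\norm{A^*}_2\norm{\Delta}_2$.
\item[(ii)] For $S_3$: $\norm{G_t}_2\norm{{A^*}^\top\Delta}_2 \le J^2(A^*)\norm{A^*}_2\norm{\Delta}_2$.
\item[(iii)] For $S_2$: a Hadamard product with a diagonal matrix is diagonal, so the block is bounded by $(1-p)\max_i (G_t)_{ii}\,|\Delta_{:,i}^\top A^*_{:,i}| \le (1-p)J^2(A^*)\norm{A^*}_2\norm{\Delta}_2$, exactly as in the proof of Claim~\ref{clm:lip-map-calR}.
\end{itemize}

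\textbf{Frobenius-norm bounds.} Here I use $\norm{X\odot Y}_F \le \max_{ij}|X_{ij}|\,\norm{Y}_F \le \norm{X}_2\norm{Y}_F$ and $\norm{{A^*}^\top\Delta}_F \le \norm{A^*}_2\norm{\Delta}_F$. This bounds each block of $S_1$ and $S_3$ by $J^2(A^*)\norm{A^*}_2\norm{\Delta}_F$. For $S_2$, each block is bounded by $(1-p)J^2(A^*)\big(\sum_i (\Delta_{:,i}^\top A^*_{:,i})^2\big)^{1/2} \le (1-p)J^2(A^*)\norm{A^*}_2\norm{\Delta}_F$, by Cauchy--Schwarz columnwise. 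Summing the squares of the $T-1$ blocks yields the factor $\sqrt{T}$.

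\textbf{Main obstacle.} The only delicate point is that $(A^*G_t)\odot\Delta$ is a Hadamard product of two non-p.s.d.\ matrices. I must therefore justify Schur's spectral bound in that generality. The route is to write $X\odot Y = Q_n^\top(X\otimes Y)Q_n$ with $Q_n$ an isometry (Remark~\ref{rem:thm-had-kron}), which gives $\norm{X\odot Y}_2 \le \norm{X\otimes Y}_2 = \norm{X}_2\norm{Y}_2$. Everything else is routine bookkeeping.
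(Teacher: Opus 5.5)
Your proof is correct and follows essentially the same route as the paper. Both arguments use linearity of $S_1,S_2,S_3$ in $A$ and the triangle inequality. Both then apply $\norm{X\odot Y}_2\le\norm{X}_2\norm{Y}_2$ (and its Frobenius analogue $\norm{X\odot Y}_F\le\norm{X}_2\norm{Y}_F$) together with $\norm{G}_2\le J^2(A^*)$, and sum the three contributions to get $1+(1-p)+1=3-p$.

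The differences are only in presentation. The paper applies the Hadamard bounds directly to the full matrices $(I_{T-1}\otimes A^*)G$, $G$ and $I_{T-1}\otimes(A_1-A_2)$ rather than block by block. It also invokes Schur's spectral inequality for non-p.s.d.\ factors without comment, whereas you justify it via the isometric embedding $Q_n^\top(X\otimes Y)Q_n$.
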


Using Property~\ref{itm:prop-3} from Appendix~\ref{sec:prelim-tools}, we get,
\begin{align*}
&\gammafn{1}\left(\calS,\norm{\cdot}_2\right) \le \bar{c}_1 (3-p)J^2(A^*)\norm{A^*}_2 \gammafn{1}\left(\conesetB,\norm{\cdot}_2\right),\\
&\gammafn{2}\left(\calS,\norm{\cdot}_F\right) \le \bar{c}_2 (3-p)J^2(A^*)\norm{A^*}_2\sqrt{T} \gammafn{2}\left(\conesetB,\norm{\cdot}_F\right).
\end{align*}
We can also see that,
\begin{align*}
d_2(\calS) &= \sup_{S\in \calS} \norm{S}_2 
= \sup_{A \in \conesetB} \norm{S_1(A)+S_2(A)+S_3(A)}_2\\
& \le (3-p)J^2(A^*)\norm{A^*}_2  \sup_{A \in \conesetB} \norm{A}_2  \\
& \le (3-p)J^2(A^*)\norm{A^*}_2, \\
\text{ and } \ d_F(\calS) &= \sup_{S\in \calS} \norm{S}_F = \sup_{A \in \conesetB} \norm{S_1(A)+S_2(A)+S_3(A)}_F\\
& \le (3-p)J^2(A^*)\norm{A^*}_2\sqrt{T} \sup_{A \in \conesetB}   \norm{A}_F \\
& = (3-p)J^2(A^*)\norm{A^*}_2\sqrt{T}.
\end{align*}

Therefore, with probability at least $1-e^{-u}$,
\begin{align*}
&\sup_{S \in \calS} \absenc{\bar{q}^\top S \bar{q}-\expec \bar{q}^\top S \bar{q}} \\
&\le (3-p)J^2(A^*)\norm{A^*}_2 \bigg(\bar{c}_1 \gammafn{1}(\conesetB,\norm{\cdot}_2)+\bar{c}_2 \gammafn{2}(\conesetB,\norm{\cdot}_F)\sqrt{T}+\sqrt{u} \sqrt{T}+u\bigg).
\end{align*}
\end{proof}

\begin{proof}[Proof of Claim~\ref{clm:lip-map-calS}]

Let $A_1,A_2 \in \conesetB$, and $\widetilde{S}_1,\widetilde{S}_2 \in \calS$ be such that $\widetilde{S}_1 = S_1(A_1)+S_2(A_1)+S_3(A_1)$ and $\widetilde{S}_2 = S_1(A_2)+S_2(A_2)+S_3(A_2)$. We will start with the map, $(\conesetB,\norm{\cdot}_2) \mapsto (\calS,\norm{\cdot}_2)$. Note that,
\begin{align*}
\norm{\widetilde{S}_1-\widetilde{S}_2}_2 &= \norm{S_1(A_1)+S_2(A_1)+S_3(A_1)-S_1(A_2)-S_2(A_2)-S_3(A_2)}_2\\
&\le \norm{S_1(A_1)-S_1(A_2)}_2+\norm{S_2(A_1)-S_2(A_2)}_2+\norm{S_3(A_1)-S_3(A_2)}_2\\
&=\norm{\left(\left(I_{T-1} \otimes A^*\right)G\right) \odot \left(I_{T-1} \otimes (A_1-A_2)\right)}_2\\
&+(1-p)\norm{G \odot \left(I_{T-1} \otimes \diag\left((A_1-A_2)^\top A^*\right)\right)}_2+\norm{G \odot \left(I_{T-1} \otimes \left({A^*}^\top (A_1-A_2)\right)\right)}_2\\
& \le \norm{\left(I_{T-1} \otimes A^*\right)G}_2\norm{A_1-A_2}_2+(1-p)\norm{G}_2\norm{A^*}_2\norm{(A_1-A_2}_2+\norm{G}_2 \norm{A^*}_2\norm{A_1-A_2}_2\\
&\le (3-p)J^2(A^*) \norm{A^*}_2 \norm{(A_1-A_2}_2.
\end{align*}
Therefore, the map $(\conesetB,\norm{\cdot}_2) \mapsto (\calS,\norm{\cdot}_2)$ is $(3-p)J^2(A^*)\norm{A^*}_2$-Lipschitz.

Now consider the map, $(\conesetB,\norm{\cdot}_F) \mapsto (\calS,\norm{\cdot}_F)$. We can see that,
\begin{align*}
\norm{\widetilde{S}_1-\widetilde{S}_2}_F &= \norm{S_1(A_1)+S_2(A_1)+S_3(A_1)-S_1(A_2)-S_2(A_2)-S_3(A_2)}_F\\
&\le \norm{S_1(A_1)-S_1(A_2)}_F+\norm{S_2(A_1)-S_2(A_2)}_F+\norm{S_3(A_1)-S_3(A_2)}_F\\
&=\norm{\left(\left(I_{T-1} \otimes A^*\right)G\right) \odot \left(I_{T-1} \otimes (A_1-A_2)\right)}_F\\
&+(1-p)\norm{G \odot \left(I_{T-1} \otimes \diag\left((A_1-A_2)^\top A^*\right)\right)}_F+\norm{G \odot \left(I_{T-1} \otimes \left({A^*}^\top (A_1-A_2)\right)\right)}_F\\
& \le \left(\norm{\left(I_{T-1} \otimes A^*\right)G}_2+(1-p)\norm{G}_2\norm{A^*}_2+\norm{G}_2 \norm{A^*}_2\right)\sqrt{T}\norm{A_1-A_2}_F\\
&\le (3-p)J^2(A^*) \norm{A^*}_2 \sqrt{T}\norm{(A_1-A_2}_F.
\end{align*}
Therefore, the map $(\conesetB,\norm{\cdot}_F) \mapsto (\calS,\norm{\cdot}_F)$ is $(3-p)J^2(A^*)\norm{A^*}_2\sqrt{T}$-Lipschitz.
\end{proof}

\subsection{Proof of Lemma~\ref{lem:dotprod-unifP}}\label{appsec:proof-lem-dotprod-unifP}

\begin{proof}
Consider 
\begin{align*}
f_2(A) &= \bigg | p\Tr \left( \Gamma^\top \left((I_{T-1} \otimes {A^*}^\top) \overline{P}_2 (I_{T-1} \otimes A) \right) \Gamma\right)+2p(1-p)\Tr \left(\Gamma^\top \overline{P}_1 \left(I_{T-1} \otimes \diag\left(A^\top A^*\right)\right)  \Gamma\right)\\
&- p\Tr \left( \Gamma^\top \left(\overline{P}_1 \left(I_{T-1} \otimes \left({A^*}^\top A\right)\right)  \right) \Gamma\right) \bigg |.
\end{align*}
Note that, using the properties of trace, we can rearrange the terms to get,
\begin{align*}
f_2(A) &= \bigg | p\dotprod{(I_{T-1} \otimes {A^*}) \Gamma \Gamma^\top (I_{T-1} \otimes {A}^\top)}{\overline{P}_2}+2p(1-p)\dotprod{\Gamma \Gamma^\top \left(I_{T-1} \otimes \diag\left({A^*}^\top A\right)\right) }{\overline{P}_1}\\
&- p\dotprod{\Gamma \Gamma^\top \left(I_{T-1} \otimes \left({A}^\top A^*\right)\right)}{\overline{P}_1} \bigg |.
\end{align*}
For any $A \in \conesetB$, let,
\begin{align*}
H(A) &= p\matrixenc{0_{n \times n} & 0_{n \times n(T-1)} \\ 0_{n(T-1) \times n} & (I_{T-1} \otimes {A^*}) \Gamma \Gamma^\top (I_{T-1} \otimes {A}^\top)}-p\matrixenc{ \Gamma \Gamma^\top \left(I_{T-1} \otimes \left({A}^\top A^*\right)\right) & 0_{n(T-1) \times n}\\0_{n \times n(T-1)} & 0_{n \times n}}\\
&+2p(1-p)\matrixenc{\Gamma \Gamma^\top \left(I_{T-1} \otimes \diag\left({A^*}^\top A\right)\right)  & 0_{n(T-1) \times n}\\ 0_{n \times n(T-1)} & 0_{n \times n}}.
\end{align*}
Note that then for any $A \in \conesetB$, we have $f_2(A) = \dotprod{H(A)}{\overline{P}}$ where 
$$\overline{P} = \matrixenc{P_1 & 0 & 0 & \dots & 0 \\ 0 & P_2 & 0 & \dots & 0 \\ \vdots & \vdots & \vdots & \dots & \vdots \\ 0 & 0 & 0 & \dots & P_T}-pI.$$ 
Since $\overline{P}$ is a diagonal matrix, the only entries that affect the trace are the diagonal entries of $H(A)$. Following a simple calculation, we can show that the matrix consisting of the diagonal entries of $H(A)$ (denoted by $H_D(A)$) can be written as,
\begin{align*}
& H_D(A) \\
&= p\matrixenc{
0 & 0 & \dots & 0 \\ 
0 & \diag\left(A^* G_1 A^\top\right) & \dots & 0\\ 
\vdots & \vdots & \dots & \vdots\\
0 & 0 & \dots & \diag\left(A^* G_{T-1} A^\top\right)
}\\
&+2p(1-p)\matrixenc{\diag\left(G_1\diag\left( {A^{*}}^\top A\right)\right) & 0 & \dots & 0 & 0 \\
0 & \diag\left(G_2\diag\left( {A^{*}}^\top A\right)\right) & \dots & 0 & 0\\
\vdots & \vdots & \ddots & \vdots & \vdots \\
0 & 0 & \dots & \diag\left(G_{T-1}\diag\left( {A^{*}}^\top A\right)\right) & 0\\
0 & 0 & \dots & 0 & 0}\\
&-p\matrixenc{\diag\left(G_1 A^\top A^*\right) & 0 & \dots & 0 & 0 \\
0 & \diag\left(G_2 A^\top A^*\right) & \dots & 0 & 0\\
\vdots & \vdots & \ddots & \vdots & \vdots \\
0 & 0 & \dots & \diag\left(G_{T-1} A^\top A^*\right) & 0\\
0 & 0 & \dots & 0 & 0},
\end{align*}
and $\dotprod{H(A)}{\overline{P}} = \dotprod{H_D(A)}{\overline{P}}$. Let $h(A)$ be a vector such that $h(A)_i = H_D(A)_{ii}$. Then, we can also see that, $\dotprod{H(A)}{\overline{P}} =\dotprod{h(A)}{\bar{q}}$. Now let,
\begin{equation*}
\calH = \set{h(A): \ A \in \conesetB}, \quad \text{and} \quad
\calH_{mat} = \set{H_D(A): \ A \in \conesetB}.
\end{equation*}
Note that by definition, the map, $(\calH, \norm{\cdot}_2) \mapsto (\calH_{mat}, \norm{\cdot}_F)$ is a one-to-one and onto mapping. Therefore, we can see that $\gammafn{2}(\calH, \norm{\cdot}_2)=\gammafn{2}(\calH_{mat}, \norm{\cdot}_F)$. Also note that since each $q_i \in [-p,1-p]$ and centered, we have a vector $q$ with zero mean, 1-subgaussian entries. Applying the Remark~\ref{rem:gaus-width} on $\calH$ implies $\expec \sup_{h \in \calH} \dotprod{q}{h} \le c w(\calH)$ and from Talagrand's majorizing measure theorem~\cite[Theorem 2.4.1]{talagrand2014upper}, 
\[
w(\calH) \asymp \gammafn{2} \left(\calH,\norm{\cdot}_2\right) = \gammafn{2}\left(\calH_{mat},\norm{\cdot}_F\right).
\]
Furthermore, letting $g(x) = \sup_{h \in \calH} \dotprod{x}{h}$, we can see that, 
\begin{align*}
&\absenc{g(x_1)-g(x_2)} 
= \absenc{\sup_{h \in \calH} \dotprod{x_1}{h}-\sup_{h \in \calH} \dotprod{x_2}{h}} \\
& \le \sup_{h \in \calH} \norm{h}_2 \norm{x_1-x_2}_2\\
& \le \sup_{A \in \conesetB} \norm{h(A)}_2 \norm{x_1-x_2}_2\\
& \le \sup_{A \in \conesetB} \bigg(p\left(\sum_{t=1}^{T-1} \norm{\diag(A^*G_tA^\top)}_F^2\right)^{1/2}+2p(1-p)\left(\sum_{t=1}^{T-1} \norm{\diag(G_t\diag({A^*}^\top A)}_F^2\right)^{1/2}\\
&+p\left(\sum_{t=1}^{T-1} \norm{\diag(G_tA^\top A^*)}_F^2\right)^{1/2}\bigg) \norm{x_1-x_2}_2\\
& \le \sup_{A \in \conesetB}  \bigg(p \norm{G}_2 \norm{A^*}_2 \sqrt{T} \norm{A}_F+2p(1-p)\norm{G}_2 \norm{A^*}_2 \sqrt{T} \norm{A}_F\\
& +p\norm{G}_2 \norm{A^*}_2 \sqrt{T} \norm{A}_F\bigg) \norm{x_1-x_2}_2\\
&\le 2p(2-p)J^2(A^*)\norm{A^*}_2 \sqrt{T} \norm{x_1-x_2}_2.
\end{align*}
Therefore, $g_1(x)$ is $2p(2-p)J^2(A^*)\norm{A^*}_2 \sqrt{T}$-Lipschitz with respect to $\norm{\cdot}_2$. Therefore, applying the Theorem~\ref{thm:bounded-func-conc} on $g(q)$ gives us, for any $t>0$,
\begin{align*}
\prob \left(\absenc{\sup_{h \in H} \dotprod{q}{h}-\expec \sup_{h \in H} \dotprod{q}{h}} \ge t\right) \le 2\exp\left(-\frac{t^2}{8p^2(2-p)^2J^4(A^*)\norm{A^*}_2^2 T}\right).
\end{align*}
Letting $t = c 2p(2-p)J^2(A^*)\norm{A^*}_2 \sqrt{T} \sqrt{\log (1/\delta)}$ (for $\delta>0$), we have, 
\begin{align*}
\prob \left(\sup_{h \in H} \dotprod{q}{h} \ge \expec \sup_{h \in H} \dotprod{q}{h}+ c 2p(2-p)J^2(A^*)\norm{A^*}_2 \sqrt{T} \sqrt{\log (1/\delta)}\right) \le 2\exp\left(-c\log (1/\delta)\right).
\end{align*}
and therefore, with probability at least $1-\delta$, 
\begin{align*}
\sup_{h \in H} \dotprod{q}{h} &\le \expec \sup_{h \in H} \dotprod{q}{h}+ c p(2-p)J^2(A^*)\norm{A^*}_2 \sqrt{T} \sqrt{\log (1/\delta)}\\ 
&\le c\gammafn{2}(\calH_{mat},\norm{\cdot}_F)+ c p(2-p)J^2(A^*)\norm{A^*}_2 \sqrt{T} \sqrt{\log (1/\delta)}.
\end{align*}
Now consider the map $(\conesetB, \norm{\cdot}_F) \mapsto (\calH_{mat}, \norm{\cdot}_F)$. We can see that this map is onto by definitions of $\calH$ and $\calH_{mat}$. We next show that this map is Lipschitz as well.
\begin{claim}\label{clm:lip-map-calH}
The map $(\conesetB, \norm{\cdot}_F) \mapsto (\calH_{mat}, \norm{\cdot}_F)$ is $2p(2-p)J^2(A^*)\norm{A^*}_2 \sqrt{T}$-Lipschitz.
\end{claim}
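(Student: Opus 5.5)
The plan is to argue exactly as in Claims~\ref{clm:lip-map-Rmat} and~\ref{clm:lip-map-Smat}. Fix $A_1, A_2 \in \conesetB$. By construction $H_D(\cdot)$ is linear in $A$: each block is $\diag(\cdot)$ applied to an expression linear in $A$. Hence
\[
H_D(A_1)-H_D(A_2)=H_D(A_1-A_2).
\]
Writing $\Delta = A_1 - A_2$, we then bound $\norm{H_D(\Delta)}_F$ through the triangle inequality over its three block-diagonal pieces. This gives
\[
\norm{H_D(\Delta)}_F \le p\Big(\sum_{t=1}^{T-1}\norm{\diag(A^*G_t\Delta^\top)}_F^2\Big)^{1/2}+2p(1-p)\Big(\sum_{t=1}^{T-1}\norm{\diag(G_t\diag({A^*}^\top\Delta))}_F^2\Big)^{1/2}+p\Big(\sum_{t=1}^{T-1}\norm{\diag(G_t\Delta^\top A^*)}_F^2\Big)^{1/2}.
\]

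Each term is handled in three moves. First, use $\norm{\diag(M)}_F\le\norm{M}_F$ to drop the outer $\diag$. Second, use submultiplicativity in the form $\norm{XYZ}_F\le\norm{X}_2\norm{Y}_F\norm{Z}_2$, placing the Frobenius norm on $\Delta$. For the first term this gives $\norm{A^*G_t\Delta^\top}_F\le\norm{A^*}_2\norm{G_t}_2\norm{\Delta}_F$, and the third term works the same way. For the middle term we additionally need $\norm{\diag({A^*}^\top\Delta)}_F\le\norm{{A^*}^\top\Delta}_F\le\norm{A^*}_2\norm{\Delta}_F$. Third, apply the bound $\norm{G_t}_2\le\norm{\Gamma\Gamma^\top}_2\le J^2(A^*)$, already used in Claim~\ref{clm:lip-map-M3}. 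Summing over $t\le T-1$ then produces a factor $\sqrt{T}$ in each term.

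Collecting the three terms,
\[
\norm{H_D(A_1)-H_D(A_2)}_F \le \big(p+2p(1-p)+p\big)J^2(A^*)\norm{A^*}_2\sqrt{T}\norm{A_1-A_2}_F .
\]
Since $2p+2p(1-p)=2p(2-p)$, this is exactly the claimed Lipschitz constant $2p(2-p)J^2(A^*)\norm{A^*}_2\sqrt{T}$. The bound mirrors the Lipschitz estimate for $g$ computed just before the claim, with $\norm{A}_F$ replaced by $\norm{\Delta}_F$.

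There is no real obstacle. The only points needing care are checking that each block of $H_D$ depends linearly on $A$, so that differences pass inside (the $\diag(\cdot)$ operations are linear), and placing the operator versus Frobenius norms correctly when applying the submultiplicativity bound above.
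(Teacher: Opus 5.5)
Your proposal is correct and follows the paper's proof essentially line for line. Like the paper, you use linearity of $H_D$, apply the triangle inequality over the three block-diagonal pieces, drop the outer $\diag(\cdot)$, use submultiplicativity with $\norm{G_t}_2\le J^2(A^*)$, and sum over $t$ to obtain $\sqrt{T}$, giving the constant $p+2p(1-p)+p=2p(2-p)$.
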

Using Claim~\ref{clm:lip-map-calH}, and the Property~\ref{itm:prop-3} from Appendix~\ref{sec:prelim-tools}, we can see that, 
\[
\gammafn{2}(\calH_{mat},\norm{\cdot}_F) \le \bar{c}_1 p(2-p)J^2(A^*)\norm{A^*}_2 \sqrt{T} \gammafn{2}(\conesetB,\norm{\cdot}_F)
\]
and therefore, with probability at least $1-\delta$,
\begin{align*}
\sup_{h \in H} \dotprod{q}{h} &\le \widetilde{c}p(2-p)J^2(A^*)\norm{A^*}_2 \sqrt{T} \left( \gammafn{2}(\conesetB,\norm{\cdot}_F)+\sqrt{\log (1/\delta)}\right).
\end{align*}
which implies with probability at least $1-\delta$, for any $A \in \conesetB$,
\begin{align*}
&\bigg | p\Tr \left( \Gamma^\top \left((I_{T-1} \otimes {A^*}^\top) \overline{P}_2 (I_{T-1} \otimes A) \right) \Gamma\right)+2p(1-p)\Tr \left(\Gamma^\top \overline{P}_1 \left(I_{T-1} \otimes \diag\left(A^\top A^*\right)\right)  \Gamma\right)\\
&- p\Tr \left( \Gamma^\top \left(\overline{P}_1 \left(I_{T-1} \otimes \left({A^*}^\top A\right)\right)  \right) \Gamma\right) \bigg |\\
& \le \widetilde{c}p(2-p)J^2(A^*)\norm{A^*}_2 \sqrt{T} \left( \gammafn{2}(\conesetB,\norm{\cdot}_F)+\sqrt{\log (1/\delta)}\right).
\end{align*}
for some constant $\widetilde{c}$.
\end{proof}

\begin{proof}[Proof of Claim~\ref{clm:lip-map-calH}]
Let $A_1, A_2 \in \conesetB$ and let $H_1, H_2$, such that $H_1 = H_D(A_1)$ and $H_2=H_D(A_2)$. Then,
\begin{align*}
&\norm{H_1-H_2}_F \\
&= \norm{H_D(A_1)-H_D(A_2)}_F\\
&=\norm{H_D(A_1-A_2)}_F \; \text{(since $H_D(A)$ is linear)}\\
& \le  p\left(\sum_{t=1}^{T-1} \norm{\diag(A^*G_t(A_1-A_2)^\top)}_F^2\right)^{1/2}+2p(1-p)\left(\sum_{t=1}^{T-1} \norm{\diag(G_t\diag({A^*}^\top (A_1-A_2))}_F^2\right)^{1/2}\\
&+p\left(\sum_{t=1}^{T-1} \norm{\diag(G_t(A_1-A_2)^\top A^*)}_F^2\right)^{1/2}\\
& \le  p \norm{G}_2 \norm{A^*}_2 \sqrt{T} \norm{A_1-A_2}_F+2p(1-p)\norm{G}_2 \norm{A^*}_2 \sqrt{T} \norm{A_1-A_2}_F+p\norm{G}_2 \norm{A^*}_2 \sqrt{T} \norm{A_1-A_2}_F\\
&\le 2p(2-p)J^2(A^*)\norm{A^*}_2 \sqrt{T} \norm{A_1-A_2}_F.
\end{align*}
Therefore, the map $(\conesetB, \norm{\cdot}_F) \mapsto (\calH_{mat}, \norm{\cdot}_F)$ is $2p(2-p)J^2(A^*)\norm{A^*}_2 \sqrt{T}$-Lipschitz.
\end{proof}

\section{Proof of Theorem~\ref{thm:pgd-thm}}\label{app-sec:pgd-proof}

By Lemmas \ref{lem:PGD-error-decomp} to \ref{lem:PGD-third-term}, we obtain with probability at least $1 - \delta$, for all $k = 0, 1, \dots$
\begin{align*}
    \norm{A^{(k+1} - B}_F &\leq \norm{A^{(k)} - B}_F \left(1 - \frac{1}{2J^2(A^*)} + \frac{C}{p^2}\left\{\frac{\gamma_1(\calC_{B}, \norm{.}_2) }{T} + \frac{\gamma_2(\calC_{B}, \norm{.}_F)}{\sqrt{T}} + \frac{\log(1/\delta)}{T} + \sqrt{\frac{\log(1/\delta)}{T}} \right\}\right) \\
    &+ \norm{A^* - B}_F\left[\frac{2-p}{p} + \frac{\left\{\overline F_1 + \overline V_1 \log(1/\delta) \right\}\left\{\overline c_1 + p(1-p)\overline c_2 \right\} }{p^2J^2(A^*)T} + \frac{c_3(2-p)\left\{\gamma_2(\calC_{B}, \norm{.}_F) + \sqrt{\log(1/\delta)} \right\}}{pJ(A^*)\sqrt{T}}  \right] \\
    &+ \frac{J(A^*)}{p^2}\left[\frac{\gamma_1(\calC_{B}, \norm{.}_2)}{T} + \frac{\log(1/\delta)}{T} + \frac{\gamma_2(\calC_{B}, \norm{.}_F)}{\sqrt{T}} + \sqrt{\frac{\log(1/\delta)}{T}} \right],
\end{align*}
where, as in \eqref{eq:f1-v1-bar}, we have
\begin{align*}
    \bar{F}_1&=\bar{C}_2^2 J^2(A^*) \left(\gammafn{2}^2(\calC_B ,\norm{\cdot}_2) +\gammafn{2}(\calC_B,\norm{\cdot}_2) \sqrt{T}+\sqrt{T}\right) \; ,\\
\bar{V}_1&=\bar{C}_2 J^2(A^*)\left(\gammafn{2}(\calC_B,\norm{\cdot}_2) +\sqrt{T}\right).
\end{align*}
By simplifying the expressions associated to the term multiplying $\norm{A^* - B}_F$ above, which involves tedious calculations, 
we obtain 
\begin{align*}
      \norm{A^{(k+1)} - B}_F &\leq \norm{A^{(k)} - B}_F \left(1 - \frac{1}{2J^2(A^*)} + \frac{C}{p^2}\left\{\frac{\gamma_1(\calC_{B}, \norm{.}_2) }{T} + \frac{\gamma_2(\calC_{B}, \norm{.}_F)}{\sqrt{T}} + \frac{\log(1/\delta)}{T} + \sqrt{\frac{\log(1/\delta)}{T}} \right\}\right) \\
      &+ \norm{A^* - B}_F\Bigg[\frac{2}{p} + \frac{C^\prime}{p^2}\left(\frac{\gamma_2^2(\calC_{B}, \norm{.}_F)}{T} + \frac{\gamma_2(\calC_{B}, \norm{.}_F) + 1}{\sqrt{T}} + \left\{\frac{\gamma_2(\calC_{B}, \norm{.}_F) + \sqrt{T}}{T} \right\}\log(1/\delta)  \right) \\
      &+ \frac{C^{\prime\prime}\left(\gamma_2(\calC_{B}, \norm{.}_F) + \sqrt{\log(1/\delta)} \right)}{pJ(A^*)\sqrt{T}} \Bigg] \\
      &+ \frac{J(A^*)}{p^2}\left[\frac{\gamma_1(\calC_{B}, \norm{.}_2)}{T} + \frac{\log(1/\delta)}{T} + \frac{\gamma_2(\calC_{B}, \norm{.}_F)}{\sqrt{T}} + \sqrt{\frac{\log(1/\delta)}{T}} \right].
\end{align*}
Thus, we obtain with probability at least $1 - \delta$, for all $k = 0, 1, \dots$
\begin{equation*}
    \norm{A^{(k+1)} - B}_F \leq \rho \norm{A^{(k)} - B}_F + \widetilde \varepsilon, 
\end{equation*}
with 
\begin{equation*}
    \rho = 1 - \frac{1}{2J^2(A^*)} + \frac{C}{p^2}\left\{\frac{\gamma_1(\calC_{B}, \norm{.}_2) }{T} + \frac{\gamma_2(\calC_{B}, \norm{.}_F)}{\sqrt{T}} + \frac{\log(1/\delta)}{T} + \sqrt{\frac{\log(1/\delta)}{T}} \right\},
\end{equation*}
and 
\begin{align*}
    \widetilde \varepsilon &= \frac{2}{p} + \frac{C^\prime}{p^2}\left(\frac{\gamma_2^2(\calC_{B}, \norm{.}_F)}{T} + \frac{\gamma_2(\calC_{B}, \norm{.}_F) + 1}{\sqrt{T}} + \left\{\frac{\gamma_2(\calC_{B}, \norm{.}_F) + \sqrt{T}}{T} \right\}\log(1/\delta)  \right) + \frac{C^{\prime\prime}\left(\gamma_2(\calC_{B}, \norm{.}_F) + \sqrt{\log(1/\delta)} \right)}{pJ(A^*)\sqrt{T}}  \\
      &+ \frac{J(A^*)}{p^2}\left[\frac{\gamma_1(\calC_{B}, \norm{.}_2)}{T} + \frac{\log(1/\delta)}{T} + \frac{\gamma_2(\calC_{B}, \norm{.}_F)}{\sqrt{T}} + \sqrt{\frac{\log(1/\delta)}{T}} \right].
\end{align*}

Under the sufficient condition
\begin{equation*}
        T \geq \frac{CJ^4(A^*)}{p^4}\left\{\gamma_2^2(\calC_{B}, \norm{.}_F) + \log^2(1/\delta) \right\}
      + \frac{C J^2(A^*)}{p^2}\left\{\gamma_1(\calC_{B}, \norm{.}_2) + \log(1/\delta)\right\},
\end{equation*}
we have that $\rho < 1 - (CJ^2(A^*))^{-1}$, for a constant $C \geq 1$.

Now, by unrolling the recurrence relation, and using the finite geometric series formula, we obtain with probability at least $1 - \delta$, for all $k = 0, 1, \dots$
\begin{equation*}
    \norm{A^{(k)} - B}_F \leq \rho^k \norm{A^{(0)} - B}_F + \frac{1 - \rho^k}{1 - \rho}\widetilde \varepsilon. 
\end{equation*}
By using the triangle inequality twice, and the inequality in the last display, we obtain with probability at least $1 - \delta$, for all $k = 0, 1, \dots$
\begin{align*}
    \norm{A^{(k)} - A^*}_F &\leq \norm{A^{(k)} - B}_F + \norm{B - A^*}_F \\
    &\leq \rho^k \norm{A^{(0)} - B}_F + \frac{1 - \rho^k}{1 - \rho}\varepsilon + \norm{B - A^*}_F \\
    &\leq \rho^k \norm{A^{(0)} - A^*}_F + (1 + \rho^k)\norm{A^* - B}_F + \frac{\widetilde \varepsilon}{1 - \rho}.
\end{align*}
Under our sufficient condition on $T$, and simplifying the expression of $\widetilde \varepsilon$ in terms of dominating terms to get $\varepsilon$, we obtain the bound in the statement.

\begin{lemma}\label{lem:PGD-error-decomp}
    For $p \in (0,1]$, consider the objective function
    \begin{equation*}
        g(A) = \norm{\widetilde Z - AZ}_F^2 - (1-p) \norm{A\operatorname{diag}^{1/2}(ZZ^\top)}_F^2.
    \end{equation*}

Recall the projected gradient descent (PGD) algorithm
\begin{equation*}
    A^{(k+1)} = \Pi_{\cal K}(A^{(k)} - \gamma \nabla g(A^{(k)})), 
\end{equation*}
where $A^{(k)}$ is the $k$-th iteration of the algorithm computed from the data, $\gamma > 0$ is the learning rate, and $\Pi_{\cal K}(.)$ is the orthogonal projection onto the set $\calK$. Recall that 
\begin{align*}
    D = \operatorname{diag}^{1/2}(ZZ^\top), \quad M = ZZ^\top - (1-p)D, \quad \conesetB = \calT_{\calK, B} \cap \frobsphere,
\end{align*}
 and denote the normalized directions as
\begin{equation*}
    H^{(k)}_B := \frac{A^{(k)} - B}{\norm{A^{(k)}-B}_F}, \quad H^*_B := \frac{A^* - B}{\norm{A^* - B}_F}.
\end{equation*}
It holds that 
\begin{align*}
    &\norm{A^{(k+1)} -B}_F \leq \norm{A^{(k)} - B}_F \sup_{V \in \calC_B } \dotprod{V}{H_B^{(k)}(I - 2\gamma \mathbb{E}[M]} 
    \\&+ 2\gamma\norm{A^{(k)} - B}_F \sup_{V \in \calC_B } \dotprod{V}{H_B^{(k)} (\mathbb{E}[M] - M)}  
    \\&+ 2\gamma \sup_{V \in \calC_B} \left\{ \norm{VZ}_F \norm{(A^* - B) Z}_F + (1-p)\norm{VD}_F \norm{(A^* - B)D}_F 
    + \dotprod{VZ}{\widetilde Z - A^* Z} + (1-p)\dotprod{VD}{A^* D} \right\} \\
    &=: (I) + (II) + (III).
\end{align*}
\end{lemma}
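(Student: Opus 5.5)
The plan is to combine the variational characterization of the projection onto $\calK$ with an explicit formula for $\nabla g$. First I compute the gradient. Since $\norm{AD}_F^2 = \Tr(A D^2 A^\top)$ and $D^2 = \operatorname{diag}(ZZ^\top)$, we get
\begin{align*}
\nabla g(A) = -2(\widetilde Z - AZ)Z^\top - 2(1-p)AD^2 = 2A\bigl(ZZ^\top - (1-p)D^2\bigr) - 2\widetilde Z Z^\top .
\end{align*}
I interpret $M$ as $ZZ^\top-(1-p)D^2$, which is the matrix that actually appears here. Next, I write $\widetilde Z Z^\top = A^* M + (\widetilde Z - A^*Z)Z^\top + (1-p)A^*D^2$. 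Subtracting the analogous expression with $B$ in place of $A^{(k)}$ then gives
\begin{align*}
A^{(k)} - \gamma\nabla g(A^{(k)}) - B = (A^{(k)}-B)(I-2\gamma M) - 2\gamma (B - A^*)M + 2\gamma(\widetilde Z - A^*Z)Z^\top + 2\gamma(1-p)A^*D^2 .
\end{align*}

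Second, I use the standard projection fact: since $B\in\calK$ and $\calK$ is closed and convex,
\begin{align*}
\norm{\Pi_\calK(Y) - B}_F \le \sup_{V\in\calC_B}\dotprod{V}{Y-B}.
\end{align*}
To see this, note that $\Pi_\calK(Y)-B \in \tanconeB$, so taking $V=(\Pi_\calK(Y)-B)/\norm{\Pi_\calK(Y)-B}_F$ gives $\norm{\Pi_\calK(Y)-B}_F = \dotprod{V}{\Pi_\calK(Y)-B}$. The obtuse-angle property gives $\dotprod{Y-\Pi_\calK(Y)}{B-\Pi_\calK(Y)}\le 0$, which implies $\dotprod{V}{\Pi_\calK(Y)-B} \le \dotprod{V}{Y-B}$. (If $\Pi_\calK(Y)=B$, the bound is trivial as long as the supremum is nonnegative, which I would either check or handle separately.) I apply this with $Y = A^{(k)}-\gamma\nabla g(A^{(k)})$.

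Third, I split the supremum using subadditivity of $\sup$. I write $I-2\gamma M = (I-2\gamma\expec[M]) + 2\gamma(\expec[M]-M)$ and factor out $\norm{A^{(k)}-B}_F$ via $H_B^{(k)}$. This produces terms (I) and (II).

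For the remaining pieces, I use the identity $\dotprod{V}{CZ^\top}=\dotprod{VZ}{C}$ and its analogue $\dotprod{V}{CD^2} = \dotprod{VD}{CD}$. This turns $2\gamma\dotprod{V}{(\widetilde Z-A^*Z)Z^\top + (1-p)A^*D^2}$ into the two inner products in (III). For the bias piece, I expand
\begin{align*}
-\dotprod{V}{(B-A^*)M} = \dotprod{VZ}{(A^*-B)Z} - (1-p)\dotprod{VD}{(A^*-B)D},
\end{align*}
and bound it by Cauchy--Schwarz by $\norm{VZ}_F\norm{(A^*-B)Z}_F + (1-p)\norm{VD}_F\norm{(A^*-B)D}_F$. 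Collecting all terms inside a single supremum over $V\in\calC_B$ gives (III).

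The only delicate points are bookkeeping. One is making sure the sign conventions and the $D$ versus $D^2$ convention for $M$ match the statement. The other is the degenerate case where $A^{(k)}=B$, so that $H_B^{(k)}$ is undefined; there I would interpret those terms as $0$, which leaves the bound valid. I expect the projection inequality to be the main conceptual step, and everything else to be algebra.
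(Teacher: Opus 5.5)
Your proposal follows the paper's proof essentially step for step. It uses the same gradient identity, with $M=ZZ^\top-(1-p)D^2$, which is what the paper actually uses later. It then applies the bound $\norm{\Pi_{\calK}(Y)-B}_F\le\sup_{V\in\calC_B}\dotprod{V}{Y-B}$, intercalates $\mathbb{E}[M]$, splits the supremum, and applies Cauchy--Schwarz to the two bias inner products. The only difference is that you derive the projection inequality from the obtuse-angle property, whereas the paper cites Lemma~2 of \cite{lv2021}.

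The degenerate case $\Pi_{\calK}(Y)=B$ that you left open cannot be ``checked'', because the statement genuinely fails there. Take $n=1$, $\calK=[0,\infty)$, $A^*=B=0$ and $A^{(0)}=a>0$. The right-hand side at $k=0$ equals $Y=a(1-2\gamma M)+2\gamma\widetilde Z Z^\top$, which is negative when $\widetilde Z Z^\top<0$ and $a$ is small. The left-hand side is $0$, so the inequality is violated; the paper's Lemma~\ref{lem:proj-to-cone} has the same defect. Replacing each supremum by its positive part, i.e.\ taking suprema over $\tanconeB\cap\frobball$, repairs this. It leaves Theorem~\ref{thm:pgd-thm} intact, since the later upper bounds on (I)--(III) are nonnegative.
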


\begin{proof}[Proof of Lemma \ref{lem:PGD-error-decomp}]

By Lemma \ref{lem:proj-to-cone}, we have
\begin{align*}
    \norm{A^{(k+1)} - B}_F &= \norm{\Pi_{\calK}\left(A^{(k)} - \gamma \nabla g(A^{(k)}) \right) - B}_F \\
    &\leq \sup_{V \in \calC_B}\dotprod{V}{A^{(k)} - \gamma \nabla g(A^{(k)}) - B} \\
    &= \sup_{V \in \calC_B} \dotprod{V}{A^{(k)} - B - \gamma\left[\nabla g(A^{(k)}) - \nabla g(B) \right] - \gamma\left[\nabla g(B) - \nabla g(A^*) \right] - \gamma \nabla g(A^*)}.
\end{align*}
The gradients above can be calculated explicitly as $\nabla g(A^{(k)}) - \nabla g(B) = 2(A^{(k)} - B)M$, $\nabla g(B) - \nabla g(A^*) = 2(B - A^*)M$, and $\nabla g(A^*) = 2[A^*M - \widetilde Z Z^\top]$. Substituting into the last display, we obtain
\begin{align*}
    \norm{A^{(k+1)} - B}_F &\leq \sup_{V \in \calC_B} \left\{ \dotprod{V}{(A^{(k)} - B)(I - 2\gamma M} + 2\gamma \dotprod{V}{(A^* - B)M} - 2\gamma\dotprod{V}{A^*M - \widetilde Z Z^\top} \right\}
\end{align*}
Intercalating by $\mathbb{E}[M]$, we obtain 
\begin{align*}
    &\norm{A^{(k+1)} - B}_F \leq \sup_{V \in \calC_B} \dotprod{V}{(A^{(k)} - B)(I - 2\gamma \mathbb{E}[M])} + 2\gamma \sup_{V \in \calC_B} \dotprod{V}{(A^{(k)} - B)(\mathbb{E}[M] - M)} \\
    &+ 2\gamma \sup_{V \in \calC_B} \left\{ \dotprod{V}{(A^* - B)Z Z^\top} - (1-p)\dotprod{V}{(A^* - B)DD^\top} + \dotprod{V}{(\widetilde Z - A^* Z)Z^\top + (1-p)A^* DD^\top} \right\}.
\end{align*}
Normalizing by $\norm{A^{(k)} - B}_F$, and using the cyclic properties of the trace, we have
\begin{align*}
    &\norm{A^{(k+1)} - B}_F \leq \norm{A^{(k)} - B}_F \sup_{V \in \calC_B} \dotprod{V}{H_B^{(k)}(I_n - 2\gamma \mathbb{E}[M])} \\
    &+ 2\gamma \norm{A^{(k)} - B}_F\sup_{V \in \calC_B} \dotprod{V}{H_B^{(k)}(\mathbb{E}[M] - M)} \\
    &+ 2\gamma \sup_{V \in \calC_B} \left\{ \dotprod{VZ}{(A^* - B)Z} - (1-p)\dotprod{VD}{(A^* - B)D} + \dotprod{VZ}{\widetilde Z - A^* Z} + (1-p)\dotprod{VD}{A^* D} \right\} \\
    &=:(I) + (II) + (IIA). 
\end{align*}
For $(IIA)$, we have by the Cauchy-Schwarz inequality, 
\begin{align*}
    (IIA) &\leq 2\gamma \sup_{V \in \calC_B} &\left\{ \norm{VZ}_F \norm{(A^* - B) Z}_F + (1-p)\norm{VD}_F \norm{(A^* - B)D}_F 
    + \dotprod{VZ}{\widetilde Z - A^* Z} + (1-p)\dotprod{VD}{A^* D} \right\} \\
    &=: (III),
\end{align*}
and putting together $(I), (II)$ and $(III)$ in the last display completes the proof.
\end{proof}

\begin{lemma}\label{lem:term-1-PGD-decomp}
    Suppose that the learning rate is chosen as 
    \begin{equation*}
        \gamma = \frac{1}{2p^2 T J^2(A^*)}.
    \end{equation*}
Then, it holds for all $k = 0,1,2,\dots$ that
    \begin{equation*}
        \sup_{V \in \calC_{B}} \dotprod{V}{H_B^{(k)}(I - 2\gamma \mathbb{E}[M]}_F  \leq \left(1 - \frac{1}{2J^2(A^*)}\right).
    \end{equation*}
\end{lemma}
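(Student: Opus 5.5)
The plan is to reduce the supremum to an operator-norm bound on $I-2\gamma\,\mathbb{E}[M]$, and then to sandwich the spectrum of $\mathbb{E}[M]$. Here $\mathbb{E}$ is the full expectation over $(\smpl_t)$ and $(\eta_t)$, and $DD^\top=\diag(ZZ^\top)$ as in the gradient computation of Lemma~\ref{lem:PGD-error-decomp}.

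\textbf{Step 1: reduction to an operator norm.} Every $V\in\calC_B$ and $H_B^{(k)}$ has unit Frobenius norm. By Cauchy--Schwarz and $\norm{HW}_F\le\norm{H}_F\norm{W}_2$,
\begin{equation*}
\sup_{V\in\calC_B}\dotprod{V}{H_B^{(k)}(I-2\gamma\mathbb{E}[M])}\le \norm{I-2\gamma\mathbb{E}[M]}_2 .
\end{equation*}
This holds uniformly in $k$ and needs no probability, which is why the lemma holds for all $k$ deterministically.

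\textbf{Step 2: compute $\mathbb{E}[M]$.} First take the expectation over $(\smpl_t)$ conditional on $(x_t)$, exactly as in the proof of Lemma~\ref{lem:unb-est}. This gives $\expec_{\smpl}[ZZ^\top]=p^2XX^\top+p(1-p)\diag(XX^\top)$ and $\expec_{\smpl}[\diag(ZZ^\top)]=p\,\diag(XX^\top)$. Hence
\begin{equation*}
\expec_{\smpl}[M]=p^2XX^\top+p(1-p)\diag(XX^\top)-(1-p)p\,\diag(XX^\top)=p^2XX^\top .
\end{equation*}
Next take the expectation over $\eta$. Since $x_0=0$, we have $x_t=\sum_{s=0}^{t-1}(A^*)^s\eta_{t-s}$. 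With unit-variance independent noise this gives
\begin{equation*}
\mathbb{E}[M]=p^2\sum_{t=1}^{T-1}\Sigma_t,\qquad \Sigma_t=\sum_{s=0}^{t-1}(A^*)^s((A^*)^s)^\top .
\end{equation*}
So $\mathbb{E}[M]$ is symmetric positive definite.

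\textbf{Step 3: spectral sandwich.} The $s=0$ term of each $\Sigma_t$ gives $\Sigma_t\succeq I$, so $\lambda_{\min}(\mathbb{E}[M])\ge p^2(T-1)$. For the upper bound, $\norm{\Sigma_t}_2\le\sum_s\norm{(A^*)^s}_2^2\le J^2(A^*)$, so $\lambda_{\max}(\mathbb{E}[M])\le p^2TJ^2(A^*)$. Equivalently, $\norm{\expec XX^\top}_2\le\expec\norm{\Gamma\eta}^2$-type bounds with $\norm{\Gamma}_2\le J(A^*)$ give the same estimate.

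With $\gamma=1/(2p^2TJ^2(A^*))$, the eigenvalues of $2\gamma\mathbb{E}[M]$ then lie in $[(T-1)/(TJ^2(A^*)),\,1]$. Therefore $I-2\gamma\mathbb{E}[M]\succeq 0$ and
\begin{equation*}
\norm{I-2\gamma\mathbb{E}[M]}_2\le 1-\frac{T-1}{TJ^2(A^*)}\le 1-\frac{1}{2J^2(A^*)}\qquad\text{for } T\ge 2 .
\end{equation*}

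\textbf{Main obstacle.} The only delicate point is Step 2. One must check that the bias correction exactly cancels the diagonal inflation, so that $\expec_{\smpl}[M]=p^2XX^\top$ with no residual diagonal term. One must also confirm that $M$ is meant with $DD^\top=\diag(ZZ^\top)$ rather than $D$ itself. The choice of step size then guarantees that the upper eigenvalue bound keeps $I-2\gamma\mathbb{E}[M]$ positive semidefinite, so the norm is controlled by $\lambda_{\min}$ alone.
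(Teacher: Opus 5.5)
Your proposal is correct and follows the paper's proof essentially step for step. You reduce the supremum to $\norm{I-2\gamma\mathbb{E}[M]}_2$ via Cauchy--Schwarz and submultiplicativity, use the tower property to get $\mathbb{E}[M]=p^2\,\mathbb{E}[XX^\top]=p^2\sum_t\Sigma_t$, and bound its spectrum between $p^2(T-1)$ and $p^2TJ^2(A^*)$. Your explicit remark that the upper eigenvalue bound is what makes $I-2\gamma\mathbb{E}[M]\succeq 0$, so that its norm equals $1-2\gamma\lambda_{\min}(\mathbb{E}[M])$, spells out a step the paper uses only implicitly.
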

\begin{proof}[Proof of Lemma \ref{lem:term-1-PGD-decomp}]
We first bound the maximum and minimum eigenvalues of $\mathbb{E}[M]$, and use it to bound $(I)$. 

In Lemma \ref{lem:unb-est}, the conditional expectation of $M$ given $(P_1, \dots, P_T)$ was derived. By the tower property, we have 
\begin{equation}\label{eq:expec-M-pgd}
    \mathbb{E}[M] = \mathbb{E}[\mathbb{E}[M \mid \mathcal (P_t)_{t=1}^T)]] = \mathbb{E}[p^2 XX^\top] = p^2 \mathbb{E}[X X^\top]. 
\end{equation}
Denote $\eta = \operatorname{vec}([\eta_1, \dots, \eta_{T-1}])$ to be formed by column-wise vectorization, and $$\Gamma_t = [(A^*)^{t-1}, (A^*)^{t-2}, \dots, I, 0, \dots 0] \in \mathbb{R}^{n\times n(T-1)}.$$ Moreover, let $\Gamma = [\Gamma_1, \dots, \Gamma_{T-1}^\top$], so that $X_t = \Gamma_t \eta$. Since the entries of $\eta_t$ are i.i.d centered random variables with unit variance, we have by expanding the outer product on the RHS of \eqref{eq:expec-M-pgd}, 
\begin{align*}
    \mathbb{E}[M] &= p^2 \mathbb{E}[XX^\top] = p^2 \sum_{t=1}^{T-1}\Gamma_t \mathbb{E}[\eta \eta^\top]\Gamma_t^\top = p^2 \sum_{t=1}^{T-1} \Gamma_t \Gamma_t^\top.
\end{align*}
By expanding the outer product involving $\Gamma_t$, it is easy to see that $\Gamma_t \Gamma_t \succeq I$. Then from the last display, we have by the symmetry, as soon as $T \geq 2$
of $\mathbb{E}[M]$
\begin{equation} \label{eq:pgd_contrac_lambmin}
    \mathbb{E}[M] \succeq p^2(T-1)I_n \implies \lambda_{\min}(\mathbb{E}[M]) \geq p^2(T-1) \geq p^2T/2. 
\end{equation}
Moreover, by Weyl's inequality for the sum of symmetric matrices, we have 
\begin{align}
    \lambda_{\max}(\mathbb{E}[M]) &\leq p^2 \sum_{t=1}^{T-1}\lambda_{\max}(\Gamma_t \Gamma_t^\top) = p^2 \sum_{t=1}^{T-1} \|\Gamma_t\|_2^2 \nonumber \\
    &\leq p^2 (T-1)\|\Gamma\|_2^2 \nonumber  \leq p^2 T J^2(A^*). \label{eq:lambda-max-EM}
\end{align}
We now proceed to bound $(I)$. Set the learning rate to $\gamma = (2p^2 T J^2(A^*))^{-1}$, and fix $V \in \mathbb{S}_n$. By the Cauchy-Schwarz inequality,  the submultiplicativity of matrix norms, and \eqref{eq:pgd_contrac_lambmin}, 
we have for our choice of $\gamma$ that
\begin{align*}
    \dotprod{V}{H^{(k)}_B (I_n - 2\gamma \mathbb{E}[M])} &\leq  \norm{V}_F\norm{H^{(k)}_B(I_n - 2\gamma \mathbb{E}[M])}_F \\
    &\leq  \norm{H^{(k)}_B}_F \norm{I_n - 2\gamma \mathbb{E}[M]}_2 \\
    &=  \norm{I_n - 2\gamma \mathbb{E}[M]}_2 \\
    &\leq 1 - 2\gamma \lambda_{min}(\mathbb{E}[M])  = 1 - \frac{1}{2J^2(A^*)} , \tag{using \eqref{eq:pgd_contrac_lambmin}} 
\end{align*}
which completes the proof.
\end{proof}

\begin{lemma}\label{lem:PGD-centered-term}
   Suppose the learning rate is chosen as 
       \begin{equation*}
        \gamma = \frac{1}{2p^2 T J^2(A^*) }.
    \end{equation*}
    Then for $\delta \in (0, 1)$, it holds with probability at least $1 - \delta$ for all $k = 0,1,2,\dots$ that
   \begin{equation*}
   2\gamma \sup_{V \in \calC_{B}} \dotprod{V}{H_B^{(k)}(\mathbb{E}[M] - M)} \leq
    \frac{C}{p^2}\left(\frac{\gamma_1(\calC_B, \norm{.}_2)}{T} + \frac{\gamma_2(\calC_B, \norm{.}_F)}{\sqrt{T}} + \frac{\log(1/\delta)}{T} + \frac{\sqrt{\log(1/\delta)}}{\sqrt{T}} \right).
\end{equation*}
\end{lemma}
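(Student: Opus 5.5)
The plan is to remove the dependence on the data-dependent direction $H_B^{(k)}$ by taking a supremum over it as well. Since $A^{(k)}, B \in \calK$, we have $H_B^{(k)} \in \calC_B$ for every $k$; if $A^{(k)} = B$ the term is absent. Hence
\begin{equation*}
\sup_{V \in \calC_B} \dotprod{V}{H_B^{(k)}(\expec[M]-M)} \le \sup_{V,W \in \calC_B} \dotprod{V}{W(\expec[M]-M)} = \sup_{V,W\in\calC_B}\Tr\big((V^\top W)(\expec[M]-M)\big).
\end{equation*}
This quantity does not depend on $k$, so a single high-probability event controls all iterations at once. We then split $\expec[M]-M = (\expec[M]-\expec[M\mid (P_t)]) + (\expec[M\mid (P_t)] - M)$ and bound each piece uniformly over $(V,W)$, using the same two-stage conditioning as in the proofs of Lemmas~\ref{lem:LHS-bound} and~\ref{lem:RHS-bound}.

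\emph{Noise stage.} Conditional on $(P_t)_{t=1}^T$, write $\Tr(V^\top W M) = \eta^\top \Gamma^\top \breve{P}\big(I_{T-1}\otimes (V^\top W - (1-p)\diag(V^\top W))\big)\breve{P}\Gamma\eta$. This follows exactly as in Claims~\ref{clm:quad-frob-norm} and~\ref{clm:quad-dotprod-diag}, with $\optA^\top\optA$ replaced by the bilinear $V^\top W$. The index set $\calM=\{\Gamma^\top\breve P(I\otimes(\cdot))\breve P\Gamma\}$ is not p.s.d., so we apply Theorem~\ref{thm:non-psd-sup} with $u=\log(1/\delta)$. The needed inputs are these:
\begin{itemize}
\item $d_2(\calM)\lesssim J^2(A^*)$ and $d_F(\calM)\lesssim J^2(A^*)\sqrt T$, since $\norm{V^\top W}_F\le 1$.
\item The map $(V,W)\mapsto$ matrix is Lipschitz with constant $\lesssim J^2(A^*)$ in $\norm{\cdot}_2$ and $\lesssim J^2(A^*)\sqrt T$ in $\norm{\cdot}_F$, because $\norm{V_1^\top W_1 - V_2^\top W_2}\le \norm{V_1-V_2}+\norm{W_1-W_2}$ for unit-norm elements.
\item Property~\ref{itm:prop-3}, combined with $\gamma_\alpha(\calC_B\times\calC_B)\lesssim\gamma_\alpha(\calC_B)$ (take product admissible sequences with a shifted index), turns these into $\gamma_1(\calM,\norm{\cdot}_2)\lesssim J^2\gamma_1(\calC_B,\norm{\cdot}_2)$ and $\gamma_2(\calM,\norm{\cdot}_F)\lesssim J^2\sqrt T\gamma_2(\calC_B,\norm{\cdot}_F)$.
\end{itemize}
This gives a deviation of order $J^2(A^*)\big(\gamma_1+\sqrt T\gamma_2+\sqrt{T\log(1/\delta)}+\log(1/\delta)\big)$.

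\emph{Mask stage.} Next, center $\expec[\,\cdot\mid (P_t)]$ over the masks. Write $\breve P = \overline P + pI$ and expand, as in the proof of Lemma~\ref{lem:LHS-bound}. This produces a quadratic term $q^\top\big(G\odot(I\otimes (V^\top W-(1-p)\diag V^\top W))\big)q$, whose mean cancels against the $p(1-p)$ diagonal part, consistent with Lemma~\ref{lem:unb-est}. It also produces a linear term $2p\dotprod{\overline P}{(I\otimes(\cdot))\Gamma\Gamma^\top}$.
\begin{itemize}
\item The quadratic term is controlled by Theorem~\ref{thm:non-psd-sup} exactly as in Lemma~\ref{lem:trace-uniform-P}, with $\norm{G}_2\le J^2(A^*)$ and the Hadamard bound $\norm{G\odot X}_2\le\max_i G_{ii}\norm{X}_2$.
\item The linear term is controlled by Theorem~\ref{thm:bounded-func-conc} together with the Gaussian-width argument of Lemma~\ref{lem:SR-expec}.
\end{itemize}
Both contributions are of the same order as the noise stage, or smaller by a factor of $p$.

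\emph{Conclusion.} Summing the two stages, multiplying by $2\gamma = 1/(p^2TJ^2(A^*))$ cancels the $J^2(A^*)$ and yields the stated bound $\frac{C}{p^2}\big(\frac{\gamma_1}{T}+\frac{\gamma_2}{\sqrt T}+\frac{\log(1/\delta)}{T}+\sqrt{\frac{\log(1/\delta)}{T}}\big)$ after rescaling $\delta$.

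The main obstacle is the chaining step over the product set $\calC_B\times\calC_B$. We must check that the bilinear parametrization $V^\top W$ (and its $\diag$ part) stays Lipschitz in both metrics with only $J^2(A^*)$ loss. We must also check that the Hadamard structure in the mask-stage quadratic form does not inflate $d_2$ or the $\gamma_1$ term. If the product-set $\gamma$ bound proves delicate, I would instead bound a single symmetrized set $\{(V+W)^\top(V+W)\}$ via polarization, reducing to the p.s.d. machinery already used in Lemmas~\ref{lem:M1-expec} and~\ref{lem:M3-expec}.
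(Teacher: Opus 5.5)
Your proposal is correct. Its noise stage is exactly the paper's proof. The paper also passes to $\sup_{U,V\in\calC_B}\dotprod{U^\top V}{\expec[M]-M}$ and writes $\dotprod{U^\top V}{M}=\eta^\top R_{U,V}\eta$ with $R_{U,V}=\Gamma^\top\breve P\big(I_{T-1}\otimes\{U^\top V-(1-p)\diag(U^\top V)\}\big)\breve P\Gamma$. It then applies Theorem~\ref{thm:non-psd-sup} with $d_2\le 2J^2(A^*)$, $d_F\le 2J^2(A^*)\sqrt T$, and Lipschitz constants $4J^2(A^*)$ and $4J^2(A^*)\sqrt T$ on the product set.

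Where you differ is the mask stage, which the paper does not have. The paper applies Theorem~\ref{thm:non-psd-sup} to matrices that contain $\breve P$, and then reads the centering $\expec[\eta^\top R_{U,V}\eta]$ as $\dotprod{U^\top V}{\expec[M]}$. Since that index set is random, the theorem can only be used conditionally on $(P_t)_{t=1}^T$. There the centering is $\Tr(R_{U,V})=\dotprod{U^\top V}{\expec[M\mid (P_t)_{t=1}^T]}$, so the fluctuation $\expec[M\mid (P_t)_{t=1}^T]-\expec[M]$ is never controlled.

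Your second stage supplies exactly this missing piece, and your bookkeeping is right. The centered $q$-chaos $\sum_t q_t^\top(G_t\odot X)q_t$ has $d_2\lesssim J^2(A^*)$ by Schur's bound $\norm{G_t\odot X}_2\le\max_i (G_t)_{ii}\norm{X}_2$, and $d_F\lesssim J^2(A^*)\sqrt T$ since $|(G_t)_{ij}|\le J^2(A^*)$. So it is of the same order as the noise stage. The linear term carries an extra factor $p$. Hence the stated rate survives after multiplying by $2\gamma$.

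So your route is the paper's route completed, at the cost of redoing the Lemma~\ref{lem:LHS-bound}-style expansion for a bilinear, non-p.s.d.\ index. One small simplification: since $M$ is symmetric, you may replace $V^\top W$ by its symmetric part, so the two cross terms from $\breve P=\overline P+pI$ coincide as $2p\dotprod{\overline P}{\cdot}$. The polarization fallback is not needed.
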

\begin{proof}[Proof of Lemma \ref{lem:PGD-centered-term}]

Since $A^{(k)}$ and $B$ lies in the tangent cone, we have $H_B^{(k)} \in \calC_{B}$. Thus,  
\begin{align*}
    \sup_{V \in \calC_B} \dotprod{V}{H_B^{(k)}(\mathbb{E}[M] - M)} \leq \sup_{U,V \in \calC_B} \dotprod{U^\top V}{\mathbb{E}[M] - M},
\end{align*}
where we recall that $M = ZZ^\top - (1-p)DD^\top$. We will rewrite the RHS of the last display as a suitable quadratic form, before applying Theorem \ref{thm:non-psd-sup}.

To this end, note that in the proof of Claim \ref{clm:quad-frob-norm} in Appendix \ref{appsec:clm-quad-frob-norm}, we have shown that 
\begin{equation*}
    \dotprod{A^\top A}{\operatorname{diag}(ZZ^\top)} = \eta^\top \Gamma^\top \check P (I_{T-1} \otimes \operatorname{diag}(A^\top A))\check P \Gamma \eta.
\end{equation*}
In fact, the identity above holds for any $U, V \in \mathbb{R}^{n\times n}$ in place of $A^\top$ and $A$, noting that all the terms other than $A$ on the RHS in the last display do not depend on $A$. Thus, we have 
\begin{equation*}
    \dotprod{U^\top V}{DD^\top} = \dotprod{U^\top V}{\operatorname{diag}(ZZ^\top)} = \eta^\top \Gamma^\top \check P (I_{T-1} \otimes \operatorname{diag}(U^\top V))\check P \Gamma \eta 
\end{equation*}
Moreover, we have 
\begin{equation*}
    \dotprod{U^\top V}{Z Z^\top} = \Tr(Z^\top V^\top U Z) = \vect(Z^\top)\left(I_{T-1} \otimes U^\top V \right)\vect(Z
) = \eta^\top \Gamma^\top \check P \left(I_{T-1} \otimes U^\top V \right)\check P \Gamma \eta.
\end{equation*}
Thus, we have
\begin{align*}
    \dotprod{U^\top V}{M} &= \dotprod{U^\top V}{ZZ^\top - (1-p)DD^\top} = \eta^\top \Gamma^\top \check P\left(I_{T-1} \otimes \left\{U^\top V - (1-p)\operatorname{diag}(U^\top V) \right\} \right)P\Gamma \eta. 
\end{align*}
Define the matrix valued-map 
\begin{equation*}
    R:\begin{cases}
        \calC_B \times \calC_B \mapsto \mathbb{R}^{n(T-1)\times n(T-1)}, \\
        (U,V) \mapsto R_{U,V},
    \end{cases}
\end{equation*}
where
\begin{equation*}
    R_{U,V} = \Gamma^\top \check P\left(I_{T-1} \otimes \left\{U^\top V - (1-p)\operatorname{diag}(U^\top V) \right\} \right)P\Gamma,
\end{equation*}
and is surjective onto $\calR := \left\{R_{U,V}: U, V \in \calC_B \right\}$ by construction. Then applying Theorem \ref{thm:non-psd-sup}, we have with probability at least $1 - \delta$
\begin{equation}\label{eq:quad-form-bound-PGD}
    \sup_{w \in \calR} \left|\eta^\top w\eta - \mathbb{E}[\eta^\top w\eta] \right| \leq C \left(\gamma_1(\calR, \|.\|_2 + \gamma_2(\calR, \|.\|_F) + \sqrt{\log(1/\delta)}d_F(\calR) + \log(1/\delta)d_2(\calR) \right),
\end{equation}
where $C > 0$ is some absolute constant. 
Note that although $\eta$ is $L$-subgaussian, a rescaling allows one to obtain a 1-subgaussian random vector, which only affects the constants in the resulting bound above.

We now proceed to bound the terms on the RHS of \eqref{eq:quad-form-bound-PGD}. By the triangle inequality, and repeated applications of the sub-multiplicativity of norms, we have 
\begin{align*}
    d_2(\calR) &= \sup_{w \in \calR} \norm{w}_2 = \sup_{U,V \in \calC_B} \norm{\Gamma^\top \check P\left(I_{T-1} \otimes \left\{U^\top V - (1-p)\operatorname{diag}(U^\top V) \right\} \right)P\Gamma}_2 \\
    &\leq \sup_{U,V \in \calC_B} \norm{\Gamma}_2^2 \norm{\check P}\|_2^2 \norm{U^\top V - (1-p)\operatorname{diag}(U^\top V)}_2  \\
    &\leq J^2(A^*)  \sup_{U,V \in \calC_B} \left(\norm{U^\top V}_2 + (1-p)\norm{\operatorname{diag}(U^\top V)}_2 \right) \\
    &\leq J^2(A^*)(2-p)\sup_{U,V\in \calC_B} \norm{U^\top V}_2 \leq 2J^2(A^*).
\end{align*}

On the other hand, by using the similar lines of arguments as above, we have
\begin{align*}
    d_F(\calR) &= \sup_{w \in \calR} \norm{w}_F = \sup_{U,V \in \calC_B} \norm{\Gamma^\top \check P\left(I_{T-1} \otimes \left\{U^\top V - (1-p)\operatorname{diag}(U^\top V) \right\} \right)P\Gamma}_F \\
    &\leq \sup_{U,V \in \calC_B} \norm{\Gamma}_2^2 \norm{\check P}_2^2 \sqrt{T-1}\left(\norm{U^\top V}_F + (1-p) \norm{\operatorname{diag}(U^\top V}_F  \right) \\
    &\leq 2J^2(A^*)\sqrt{T}.
\end{align*}

It remains to suitably bound the $\gamma_1$ and $\gamma_2$ functionals of the image space by the $\gamma$ functionals of the domain. To this end, let
$\calM := \left\{ (U,V)^\top: U, V \in \calC_B \right\}$, and define the map $\Phi:\calM \rightarrow \calR$, where $\Phi((U,V)^\top = R_{U,V}$, so $\Phi$ is similarly surjective onto $\calR$. Fix $(U_1, V_1)^\top, (U_2,V_2)^\top \in \calM$. Then, again by the repeated application of the sub-multiplicity of norms and the triangle inequality, we obtain
\begin{align*}
    \norm{R_{U_1,V_1} - R_{U_2, V_2}}_2 &= \norm{\Gamma^\top \check P\left(I_{T-1} \otimes \left\{ \left(U_1^\top V_1 - U_2^\top V_2\right) - (1-p)\operatorname{diag}\left(U_1^\top V_1 - U_2^\top V_2  \right) \right\} \right)P\Gamma}_2 \\
    &\leq \norm{\Gamma}_2^2 \norm{\check P}_2^2 \norm{(U_1^\top V_1 - U_2^\top V_2) - (1-p)\operatorname{diag}(U_1^\top V_1 - U_2^\top V_2) }_2 \\
    &\leq 2J^2(A^*)\norm{U_1^\top V_1 - U_2^\top V_2}_2 = 2J^2(A^*) \norm{U_1^\top (V_1 - V_2) + (U_1 - U_2)^\top V_2}_2 \\
    &\leq 2J^2(A^*)\left[\norm{U_1^\top (V_1 - V_2)}_2 + \norm{(U_1 - U_2)^\top V_2}_2 \right] \\
    &\leq 2J^2(A^*)\left[\norm{V_1 - V_2}_2 + \norm{U_1 - U_2}_2 \right] \\
    &\leq 4J^2(A^*)\norm{(U_1, V_1)^\top - (U_2, V_2)^\top}_2,
\end{align*}
so the map $\Phi$ is $4J^2(A^*)$-Lipschitz with respect to the spectral norm. Employing similar arguments as before, we obtain
\begin{equation*}
    \norm{R_{U_1, V_1} - R_{U_2,V_2}}_F \leq 4J^2(A^*)\sqrt{T}\norm{(U_1 V_1)^\top - (U_2, V_2)^\top}_F,
\end{equation*}
and thus $\Phi$ is $4J^2(A^*)\sqrt{T}$-Lipschitz with respect to the Frobenius norm. Thus, we have for some absolute constants $C$ (which may differ from line to line)
\begin{equation*}
    \gamma_1(\calR, \norm{.}_2) \leq C J^2(A^*) \gamma_1(\calM, \norm{.}_2) \leq \gamma_1(\calC_B, \norm{.}_2),
\end{equation*}
and 
\begin{equation*}
    \gamma_2(\calR, \norm{.}_2) \leq CJ^2(A^*)\sqrt{T}\gamma_2(\calM, \norm{}_F) \leq CJ^2(A^*)\sqrt{T}\gamma_2(\calC_B, \norm{.}_F) 
\end{equation*}
Gathering facts, we have with probability at least $1 - \delta$, 
\begin{align*}
      \sup_{w \in \calR} \left|\eta^\top w\eta - \mathbb{E}[\eta^\top w\eta] \right| \leq CJ^2(A^*)\left(\gamma_1(\calC_B, \norm{.}_2) + \gamma_2(\calC_B, \norm{.}_F)\sqrt{T} + \log(1/\delta) + \sqrt{T\log(1/\delta)} \right),
\end{align*}
which implies that by choosing the learning rate to be $\gamma = (2p^2 TJ^2(A^*)^{-1}$, we have with probability at least $1 - \delta$,
\begin{equation*}
   2\gamma \sup_{V \in \calC_B} \dotprod{V}{H_B^{(k)}(\mathbb{E}[M] - M)} \leq
    \frac{C}{p^2}\left(\frac{\gamma_1(\calC_B, \norm{.}_2)}{T} + \frac{\gamma_2(\calC_B, \norm{.}_F)}{\sqrt{T}} + \frac{\log(1/\delta)}{T} + \frac{\sqrt{\log(1/\delta)}}{\sqrt{T}} \right),
\end{equation*}
completing the proof.
\end{proof}

\begin{lemma}\label{lem:PGD-third-term}
There exist positive constants $C_1, C_2$ such that for $\delta \in (0,1)$, it holds with probability at least $1 - \delta$ that
    \begin{align*}
    &2\gamma\sup_{V \in \calC_{B}} \left\{ \norm{VZ}_F \norm{(A^* - B) Z}_F + (1-p)\norm{VD}_F \norm{(A^* - B)D}_F 
    + \dotprod{VZ}{\widetilde Z - A^* Z} + (1-p)\dotprod{VD}{A^* D} \right\} \\
    &\leq \left[\frac{2-p}{p} + \frac{\left\{\overline F_1 + \overline V_1\log(1/\delta)\right\}\left\{\overline c_1 + p(1-p)\overline c_2 \right\}}{p^2 J^2(A^*)T} + \frac{c_3(2-p)\left\{\gammafn{2}(\calC_{B},\norm{\cdot}_F) + \sqrt{\log(1/\delta)} \right\}}{pJ(A^*)\sqrt{T}}  \right] \norm{A^* - B}_F  \\
    &+ \frac{J(A^*)}{p^2}\left[\frac{\gammafn{1}(\calC_{B},\norm{\cdot}_2)}{T} + \frac{\log(1/\delta)}{T} + \frac{\gammafn{2}(\calC_{B},\norm{\cdot}_F)}{\sqrt{T}} + \sqrt{\frac{\log(1/\delta)}{T}} \right].
    \end{align*}
\end{lemma}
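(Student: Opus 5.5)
We split the supremum in Lemma~\ref{lem:PGD-third-term} into its four summands and bound each uniformly over $V \in \calC_B$. Since $H_B^* = (A^*-B)/\norm{A^*-B}_F \in \calC_B$, the first two terms factor as $\norm{A^*-B}_F$ times quantities dominated by $\sup_{V\in\calC_B}\norm{VZ}_F^2$ and $(1-p)\sup_{V\in\calC_B}\norm{VD}_F^2$. Concretely,
\[
\norm{VZ}_F\norm{(A^*-B)Z}_F \le \norm{A^*-B}_F \sup_{U\in\calC_B}\norm{UZ}_F^2 ,
\]
and similarly for the $D$ term. For these two suprema we reuse the uniform bounds \eqref{eq:norm1-unif-bound-PGD} and \eqref{eq:norm2-unif-bound-PGD} from the proof of Lemma~\ref{lem:RHS-bound}. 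These hold on the intersection of the events of Lemmas~\ref{lem:M1-expec}, \ref{lem:M2-expec}, \ref{lem:M3-expec}, \ref{lem:M4-expec} and \ref{lem:SR-expec}, and give the leading term $pJ^2(A^*)T + p(1-p)J^2(A^*)T$ plus fluctuations. The fluctuations are of three kinds: $(\bar c_1 + p(1-p)\bar c_2)(\bar F_1 + \bar V_1\log(1/\delta))$, a second copy with $p(1-p)^2$, and $c_3 p(2-p) J(A^*)\sqrt{T}\,(\gammafn{2}(\calC_B,\norm{\cdot}_F)+\sqrt{\log(1/\delta)})$.

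Multiplying by $2\gamma = 1/(p^2 T J^2(A^*))$ turns the leading part into $(1+(1-p))/p = (2-p)/p$. The $\bar F_1,\bar V_1$ part becomes $\{\bar F_1+\bar V_1\log(1/\delta)\}\{\bar c_1+p(1-p)\bar c_2\}/(p^2J^2(A^*)T)$, after absorbing the $p(1-p)^2$ copy into the constants. The Lemma~\ref{lem:SR-expec} part becomes $c_3(2-p)\{\gammafn{2}+\sqrt{\log(1/\delta)}\}/(pJ(A^*)\sqrt T)$. Together these give exactly the bracket multiplying $\norm{A^*-B}_F$ in the statement.

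For the two inner-product terms we note that
\[
\dotprod{VZ}{\widetilde Z - A^*Z} + (1-p)\dotprod{VD}{A^*D}
\]
is precisely the quantity bounded in \eqref{eq:dotprod-PGD-unif-bound}, uniformly over $A\in\calC_B$. That bound was obtained via Claims~\ref{clm:quad-dotprod-direct} and \ref{clm:quad-dotprod-diag} and Lemmas~\ref{lem:dotprod-fixedP}, \ref{lem:trace-uniform-P} and \ref{lem:dotprod-unifP} on the events $\eventp{1d}\cap\eventp{2d}\cap\eventp{3d}$. It gives
\[
\widetilde c_1 J^3(A^*)\bigl(\gammafn{1}(\calC_B,\norm{\cdot}_2)+\gammafn{2}(\calC_B,\norm{\cdot}_F)\sqrt T+\sqrt{T\log(1/\delta)}+\log(1/\delta)\bigr).
\]
Multiplying by $2\gamma$ yields $\frac{J(A^*)}{p^2}\bigl[\gammafn{1}/T + \gammafn{2}/\sqrt T + \sqrt{\log(1/\delta)/T} + \log(1/\delta)/T\bigr]$, up to constants.

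Finally, we take a union bound over the finitely many events used (roughly ten) and rescale $\delta$ by a constant. All events are uniform over $\calC_B$, so the bound holds simultaneously for the supremum, and hence for all iterations $k$. The main obstacle is bookkeeping rather than new probability. We must check that \eqref{eq:dotprod-PGD-unif-bound} is stated for the sum itself and not for its absolute value being lower bounded; one-sidedness suffices here because only an upper bound of the supremum is needed. We must also ensure that the conditional-on-$(P_t)$ events combine with the unconditional ones via the tower property. A further subtlety is that the $\sup$ of the sum is bounded by the sum of sups, which is harmless.
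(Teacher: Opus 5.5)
Your proposal is correct and follows the paper's proof essentially step for step. The two cross terms are reduced via $H_B^*\in\calC_B$ to $\norm{A^*-B}_F$ times the uniform bounds \eqref{eq:norm1-unif-bound-PGD} and \eqref{eq:norm2-unif-bound-PGD}, and the inner-product terms are handled by reusing \eqref{eq:dotprod-PGD-unif-bound} uniformly over $\calC_B$. Multiplying by $2\gamma = 1/(p^2TJ^2(A^*))$ then gives exactly the stated bracket, and your one-step factorization $\norm{VZ}_F\norm{(A^*-B)Z}_F\le\norm{A^*-B}_F\sup_{U\in\calC_B}\norm{UZ}_F^2$ is a cleaner version of the paper's detour through $\sqrt{\Psi_1}$.
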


\begin{proof}[Proof of Lemma \ref{lem:PGD-third-term}]

Notice that the inner product terms are analogous to the LHS \eqref{eq:dotprod-PGD-unif-bound} with $V$ in place of $A$ (which similarly lies in $\calC_{B})$. Notice that the RHS of \eqref{eq:dotprod-PGD-unif-bound} does not depend on $A$. Thus, we have 
\begin{align*}
 &\sup_{V \in \calC_{B}}  \dotprod{VZ}{\widetilde Z - A^*Z} + (1-p)\dotprod{VD}{A^*D} \\ &\leq c_1\left(J^2(A^*)\norm{A^*}_2+J(A^*)\right) \left(\gammafn{1}(\calC_{B},\norm{\cdot}_2)+\gammafn{2}(\calC_{B},\norm{\cdot}_F)\sqrt{T-1}+\sqrt{(T-1)\log \left(1/\delta\right)}+\log(1/\delta)\right) \\
 &\leq c_1' J^3(A^*) \left(\gammafn{1}(\calC_{B},\norm{\cdot}_2)+\gammafn{2}(\calC_{B},\norm{\cdot}_F)\sqrt{T}+\sqrt{T\log \left(1/\delta\right)}+\log(1/\delta)\right) \\
 &=: \Psi_0.
\end{align*}

On the other hand, $\norm{VZ}_F$ is analogous to \eqref{eq:norm1-unif-bound-PGD}, with $V$ in place of $A$. Thus, we have 
\begin{align*}
    \sup_{V \in \calC_{B}} \norm{VZ}_F^2 &\leq pJ^2(A^*)T+ \left(\bar{F}_1+\bar{V}_1\log (1/\delta)\right)\left\{\overline c_1 + p(1-p)\overline c_2 \right\}
     \\&+c_3 pJ(A^*)\sqrt{T} \left(\gammafn{2} \left(\calC_{B},\norm{\cdot}_F\right)+\sqrt{\log(1/\delta)}\right) \\
     &=: \Psi_1.
\end{align*}

Since $x \mapsto \sqrt{x}$ is monotone increasing and continuous over $x \geq 0$, we have that 
\begin{align*}
   \sup_{V \in \calC_{B}} \norm{VZ}_F \norm{(A^*-B)Z}_F &= \norm{(A^*-B)Z}_F\sqrt{\sup_{V \in \calB} \norm{VZ}_F^2} \\
   &\leq \norm{(A^* - B)Z}_F \sqrt{\Psi_1} \\
   &\leq  \sqrt{\Psi_1} \norm{A^* - B}_F\sup_{V \in \calC_{B}} \norm{VZ}_F  \\
   &\leq \Psi_1 \norm{A^* - B}_F.
\end{align*}

Proceeding similarly as above, we have by \eqref{eq:norm2-unif-bound-PGD}
\begin{align*}
    (1-p) \sup_{V \in \calC_B} \norm{VD}_F^2 &\leq p(1-p) J^2(A^*) T+ \left\{\bar{F}_1+\bar{V}_1\log (1/\delta)\right\}\left(\overline c_1 + p(1-p)^2 \overline c_2\right)\\
&+c_3 p(1-p)J(A^*)\sqrt{T} \left(\gammafn{2} \left(\calC_B,\norm{\cdot}_F\right)+\sqrt{\log (1/\delta)}\right) \\
&=: \Psi_2,
\end{align*}
which thus implies
\begin{align*}
    (1-p)\sup_{V \in \calC_B} \norm{VD}_F \norm{(A^*-B)D}_F &= \sqrt{1-p}\norm{(A^*-B)D}_F\sqrt{1-p} \sqrt{\sup_{V \in \calC_B} \norm{VD}_F^2} \\
    &\leq \sqrt{1-p}\norm{(A^*-B)D}_F \sqrt{\Psi_2} \\
    &\leq \sqrt{\Psi_2}\norm{A^* - B}_F \sqrt{1-p}\sup_{V \in \calC_{B}}\norm{VD}_F \\
    &\leq \Psi_2 \norm{A^* - B}_F.
\end{align*}
Collecting the $\Psi_0, \Psi_1, \Psi_2$ bounds, grouping terms and substituting $\gamma$ completes the proof.
\end{proof}

\begin{lemma}\label{lem:proj-to-cone}
    Let $\calK$ be a closed convex set, and denote $\Pi_{\calK}$ to be the  projection operator onto the set $\calK$.
Then, it holds that
\begin{equation*}
    \norm{\Pi_{\calK}(A) - B}_F \leq \sup_{V \in \calC_B} \dotprod{V}{A-B}.
\end{equation*}
\end{lemma}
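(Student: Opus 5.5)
Write $X := \Pi_{\calK}(A)$; the claim is trivial if $X = B$, so assume $X \neq B$. The plan is to derive the inequality from the variational characterization of the projection onto a closed convex set, and then use that $X - B$ lies in the tangent cone. The sharper statement we will actually prove is
\[
\norm{X - B}_F \le \dotprod{\tfrac{X-B}{\norm{X-B}_F}}{A - B}.
\]
Since the unit vector on the left-hand side of the inner product belongs to $\conesetB$, the supremum over $\conesetB$ can only be larger, which gives the lemma.

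\textbf{Step 1 (obtuse-angle inequality).} Because $X$ minimizes $\norm{Y-A}_F^2$ over the convex set $\calK$, for every $Y \in \calK$ we have
\[
\dotprod{A - X}{Y - X} \le 0.
\]
This follows by differentiating $t \mapsto \norm{X + t(Y-X) - A}_F^2$ at $t = 0^+$. Applying it with $Y = B \in \calK$ gives $\dotprod{A - X}{B - X} \le 0$, which is equivalent to $\dotprod{A - X}{X - B} \ge 0$.

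\textbf{Step 2 (rearrangement).} Expand the inner product by inserting $X$:
\[
\dotprod{A - B}{X - B} = \dotprod{A - X}{X - B} + \norm{X - B}_F^2 \ge \norm{X - B}_F^2.
\]
Dividing both sides by $\norm{X - B}_F > 0$ yields exactly the sharper statement above.

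\textbf{Step 3 (membership in $\conesetB$).} Set $V := (X - B)/\norm{X - B}_F$. By construction $V \in \frobsphere$. Since $X \in \calK$, the matrix $V$ has the form $t(X - B)$ with $t = 1/\norm{X-B}_F \ge 0$, so $V$ lies in $\tanconeB$ directly from the definition of the tangent cone, even before taking the closure. Hence $V \in \conesetB$, and therefore
\[
\norm{X - B}_F \le \dotprod{V}{A - B} \le \sup_{V' \in \conesetB} \dotprod{V'}{A - B}.
\]

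Every step is elementary. The only point requiring care is the degenerate case $X = B$: there the left-hand side is $0$, and $\conesetB$ may even be empty (for instance when $\calK = \{B\}$). We adopt the convention $\sup \emptyset = 0$ for this case, or equivalently note that the lemma is only ever applied when $X \neq B$.
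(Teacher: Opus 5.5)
Your argument is correct whenever $\Pi_{\calK}(A)\neq B$, and it differs from the paper, which gives no argument of its own and defers to Lemma 2 of \cite{lv2021} and Example 2.32 of \cite{mordukhovich2023}. Your obtuse-angle argument is self-contained. It uses only the variational inequality for the projection at the single test point $Y=B$. It also exhibits an explicit witness $V=(X-B)/\norm{X-B}_F$ that lies in the cone even before taking the closure.

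\textbf{The gap is your claim that the case $X=B$ is trivial.} It is false, and so is the lemma as literally stated in that case. If $\Pi_{\calK}(A)=B$, the same variational inequality gives $\dotprod{A-B}{Y-B}\le 0$ for all $Y\in\calK$. Hence $\dotprod{V}{A-B}\le 0$ for every $V\in\tanconeB$, and the supremum over the \emph{sphere} $\conesetB$ can be strictly negative.

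Concretely, let $\calK$ be the entrywise nonnegative $n\times n$ matrices, $B=0$ and $A=-\mathbf{1}\mathbf{1}^\top$. Then $\Pi_{\calK}(A)=0=B$, so the left-hand side is $0$. Also $\tanconeB=\calK$, and
$\sup_{V\in\conesetB}\dotprod{V}{A}=-\min\{\sum_{i,j}V_{ij}:\ V\ge 0,\ \norm{V}_F=1\}=-1<0$.
Your convention $\sup\emptyset=0$ only handles the empty-cone case, not this one.

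So no proof can cover the degenerate case. The statement needs one of two repairs:
\begin{enumerate}
\item add the hypothesis $\Pi_{\calK}(A)\neq B$; or
\item take the supremum over $\tanconeB\cap\frobball$ instead. This supremum is $\ge 0$ because $0\in\tanconeB$, and it dominates the sphere supremum, so your Steps 1--3 prove this version in all cases.
\end{enumerate}

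Your fallback remark is also inaccurate. Nothing in Lemma~\ref{lem:PGD-error-decomp} rules out $A^{(k+1)}=B$, and in any case that remark is not ``equivalent'' to adopting a convention. What makes the degenerate case harmless there is different. The supremum is subsequently bounded above by nonnegative quantities, so the recursion $\norm{A^{(k+1)}-B}_F\le\rho\norm{A^{(k)}-B}_F+\widetilde\varepsilon$ holds trivially when $A^{(k+1)}=B$.
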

\begin{proof}[Proof of Lemma \ref{lem:proj-to-cone}]
The proof of Lemma \ref{lem:proj-to-cone} proceeds analogously to the proof of Lemma 2 in \cite{lv2021}, which relies on the construction in Example 2.32 of \cite{mordukhovich2023}.
\end{proof}

\section{Proofs of corollaries and support lemmas}

\subsection{Proof of Corollary~\ref{cor:stat-lipschitz-regression}}\label{appsec:stat-lipschitz-regression}

From Lemma 4 of \cite{tyagi2024learning}, we can see that since $A^* \in \calK$, $\exists B \in \calK$ such that each row of $B$ is formed by taking $l+1$ affine pieces of slope $\pm L$, and $\absenc{B_{i,j}-B_{i,j+1}} = \frac{L}{T}$ and $\norm{A^*-B}_F^2 \le \frac{4n^4L^2}{T^2 l^2}$. Following the proof of Corollary 4 of \cite{tyagi2024learning}, and letting $\alpha_n=\sqrt{n(l+1)\log \left(\frac{en}{l+1}\right)}$, we can also see that,
\begin{equation*}
\gammafn{2}\left(\calC_{B}, \norm{\cdot}_F\right) \le \alpha_n \quad \text{and} \quad 
\gammafn{1}\left(\calC_{B}, \norm{\cdot}_2\right) \le n\alpha_n \left[\log \left(\frac{n}{\alpha_n}\right) +1\right]. 
\end{equation*}
Then from Theorem~\ref{thm:main-thm} there exists $c_1,c_2>0$ such that for any $\delta \in (0,1)$, if 
\begin{align*}
T \geq c_1\frac{ J^4(A^*) \max \set{ n(l+1)\log \left(\frac{en}{l+1}\right),\log^2 \left(1/\delta\right)}}{p^4},
\end{align*}
it holds with probability at least $1-\delta$,
\begin{align*}
\norm{\hat{A}-A^*}_F 
 &\leq c_2 \bigg[\frac{J^3(A^*)}{p^2} \bigg(\frac{ n\alpha_n \left[\log \left(\frac{n}{\alpha_n}\right) +1\right]}{T}+\frac{\alpha_n}{\sqrt{T}}+\sqrt{\frac{\log \left(1/\delta\right)}{T}}\bigg)+ J^2(A^*)\frac{2n^2L}{T l}\frac{1}{p}\bigg].
\end{align*}
Therefore, using the fact that $\alpha_n \lesssim \sqrt{nl\log n}$ and $n\alpha_n\left[\log\left(\frac{n}{\alpha_n}\right)+1\right] \lesssim \sqrt{n^3l\log^3 n}$, there exist constants $c'_1,c_2' > 1$ such that for any $\delta \in (0,1)$,
\begin{align*}
T \geq c'_1 \frac{ J^4(A^*) \max \set{ nl\log n,\log^2 \left(1/\delta\right)}}{p^4},
\end{align*}
it holds with probability at least $1-\delta$,
\begin{align*}
\norm{\hat{A}-A^*}_F &\leq c'_2  \bigg[\frac{J^3(A^*)}{p^2} \bigg(\frac{ n^{3/2}\sqrt{l}\log^{3/2} n}{T}+\frac{n^{1/2}\sqrt{l}\log^{1/2} n}{\sqrt{T}}+\frac{\sqrt{\log \left(1/\delta\right)}}{\sqrt{T}}\bigg)+\frac{J^2(A^*) n^2L}{T l} \frac{1}{p}\bigg].
\end{align*}

Therefore we have,
\begin{equation*}
\norm{\hat{A}-A^*}_F 
\lesssim  \frac{J^3(A^*)}{p^2}\max\set{\frac{ n^{3/2}\log^{3/2} n}{T},\frac{n^{1/2}\log^{1/2} n}{\sqrt{T}}}  \sqrt{l} + \frac{J^2(A^*) n^2L}{pT }  \frac{1}{l} + \frac{J^3(A^*)\sqrt{\log \left(1/\delta\right)}}{p^2\sqrt{T}}.
\end{equation*}
We now choose $l$ by balancing the first two terms.
\begin{equation*}
\frac{J^3(A^*)}{p^2}\max\set{\frac{ n^{3/2}\log^{3/2} n}{T},\frac{n^{1/2}\log^{1/2} n}{\sqrt{T}}}  \sqrt{l} = \frac{J^2(A^*) n^2L}{pT }  \frac{1}{l}.
\end{equation*}
Using this and the fact that $l \ge 1$, we get, 
\[
l = \ceil{\bigg(\frac{p n^2 L}{J(A^*) T \max\set{\frac{ n^{3/2}\log^{3/2} n}{T},\frac{n^{1/2}\log^{1/2} n}{\sqrt{T}}}}\bigg)^{2/3}} = \ceil{\frac{\frac{p^{2/3} n L^{2/3}}{T^{1/3} \log^{1/3} n }}{J^{2/3}(A^*)  \max\set{ \frac{n^{2/3} \log^{2/3} n }{T^{1/3}},1}}}.
\]
Let $\nu = \frac{\frac{p^{2/3} n L^{2/3}}{T^{1/3} \log^{1/3} n }}{J^{2/3}(A^*)  \max\set{ \frac{n^{2/3} \log^{2/3} n }{T^{1/3}},1}}$. Then $l \leq \nu+1$ and $\frac{1}{l} \leq \frac{1}{\nu}$. 

Since $\max \set{a,b} \asymp a+b$, we can see that there exists a constant $c$ such that for any $\delta \in (0,1)$, the sufficient condition, $T \geq c'_1 \frac{ J^4(A^*) \max \set{ nl\log n,\log^2 \left(1/\delta\right)}}{p^4},$ is satisfied when,
\begin{align*}
T \geq c \frac{ J^4(A^*) \max \set{ n\log n+ \frac{p^{2/3} n^2 L^{2/3} \log^{2/3} n}{J^{2/3}(A^*)  \left( n^{2/3} \log^{2/3} n+T^{1/3}\right)},\log^2 \left(1/\delta\right)}}{p^4}.
\end{align*}
It is easy to verify that this holds whenever,
\begin{align*}
T \gtrsim \frac{J^4(A^*)}{p^4}\max\set{n\log n, \log^{2} \left(1/\delta\right), \frac{p^{2/3} n^{4/3} L^{2/3}}{J^{2/3}(A^*)}, \frac{p^{1/2} n^{3/2} L^{1/2} \log^{1/2} n}{J^{1/2}(A^*)}}
\end{align*}

Under the sufficient condition on $T$, 
\begin{align}
\norm{\hat{A}-A^*}_F 
 &\lesssim  \frac{J^3(A^*)}{p^2}\max\set{\frac{ n^{3/2}\log^{3/2} n}{T},\frac{n^{1/2}\log^{1/2} n}{\sqrt{T}}}  \sqrt{l} + \frac{J^2(A^*) n^2L}{pT }  \frac{1}{l} + \frac{J^3(A^*)\sqrt{\log \left(1/\delta\right)}}{p^2\sqrt{T}} \nonumber\\
&\lesssim \frac{J^3(A^*)}{p^2}\max\set{\frac{ n^{3/2}\log^{3/2} n}{T},\frac{n^{1/2}\log^{1/2} n}{\sqrt{T}}} \nonumber\\
&+\frac{J^3(A^*)}{p^2}\max\set{\frac{ n^{3/2}\log^{3/2} n}{T},\frac{n^{1/2}\log^{1/2} n}{\sqrt{T}}}\bigg(\frac{p n^2 L}{J(A^*) T \max\set{\frac{ n^{3/2}\log^{3/2} n}{T},\frac{n^{1/2}\log^{1/2} n}{\sqrt{T}}}}\bigg)^{1/3}\nonumber\\
&+ \frac{J^2(A^*) n^2L}{pT } \frac{J^{2/3}(A^*)  \max\set{ \frac{n^{2/3} \log^{2/3} n }{T^{1/3}},1}}{\frac{p^{2/3} n L^{2/3}}{T^{1/3} \log^{1/3} n }}   + \frac{J^3(A^*)\sqrt{\log \left(1/\delta\right)}}{p^2\sqrt{T}}\nonumber\\
&\lesssim \frac{J^3(A^*)}{p^2}\max\set{\frac{ n^{3/2}\log^{3/2} n}{T},\frac{n^{1/2}\log^{1/2} n}{\sqrt{T}}} +\frac{J^{8/3}(A^*)n L^{1/3} \log^{1/3} n}{p^{5/3} T^{2/3}}\max\set{\frac{ n^{2/3}\log^{2/3} n}{T^{1/3}},1}\nonumber\\
&+ \frac{J^{5/3}(A^*) n L^{1/3} \log^{1/3} n}{p^{5/3} T^{2/3} }  \max\set{ \frac{n^{2/3} \log^{2/3} n }{T^{1/3}},1}   + \frac{J^3(A^*)\sqrt{\log \left(1/\delta\right)}}{p^2\sqrt{T}}\nonumber\\
&\lesssim \frac{J^3(A^*)}{p^2}\max\set{\frac{ n^{3/2}\log^{3/2} n}{T},\frac{n^{1/2}\log^{1/2} n}{\sqrt{T}}} \nonumber\\
&+ \frac{J^3(A^*) n L^{1/3} \log^{1/3} n}{p^2 T^{2/3} }  \max\set{ \frac{n^{2/3} \log^{2/3} n }{T^{1/3}},1}   + \frac{J^3(A^*)\sqrt{\log \left(1/\delta\right)}}{p^2\sqrt{T}}.\label{eq:lip-err-bnd}
\end{align}

\subsection{Proof of Corollary~\ref{cor:pgd-lipschitz-regression}}\label{appsec:pgd-lipschitz-regression}

Recall from the proof of Corollary~\ref{cor:stat-lipschitz-regression}, the construction of $B$ where each row of $B$ is formed by taking $l+1$ affine pieces, and $\absenc{B_{i,j}-B_{i,j+1}} = \frac{L}{T}$ and $\norm{A^*-B}_F^2 \le \frac{4n^4L^2}{T^2 l^2}$. We have observed that,
\begin{equation*}
\gammafn{2}\left(\calC_{B}, \norm{\cdot}_F\right) \lesssim \sqrt{nl\log n} \quad \text{and} \quad
\gammafn{1}\left(\calC_{B}, \norm{\cdot}_2\right) \lesssim \sqrt{n^3 l \log^3 n}.
\end{equation*}

Now, letting $l = \ceil{\frac{\frac{p^{2/3} n L^{2/3}}{T^{1/3} \log^{1/3} n }}{J^{2/3}(A^*)  \max\set{ \frac{n^{2/3} \log^{2/3} n }{T^{1/3}},1}}}$, we will  establish bounds for the factors used in Theorem~\ref{thm:pgd-thm}. We start with the sufficient condition \eqref{eq:T-PGD-cond},
  \begin{align*}
        T &\geq \frac{CJ^4(A^*)}{p^4}\left\{\gamma_2^2(\calC_{B}, \norm{.}_F) + \log^2(1/\delta) \right\}  
        + \frac{C J^2(A^*)}{p^2}\left\{\gamma_1(\calC_{B}, \norm{.}_2) + \log(1/\delta)\right\},
    \end{align*}
which is satisfied when,
\begin{align*}
        T &\geq \frac{CJ^4(A^*)}{p^4}\left\{nl \log n + \log^2(1/\delta) \right\}  
        + \frac{C J^2(A^*)}{p^2}\left\{n^{3/2} \sqrt{l} \log^{3/2} n + \log(1/\delta)\right\},
    \end{align*}
Note that $l \le \nu+1$ and $1/l \le 1/\nu$ where $\nu = \frac{\frac{p^{2/3} n L^{2/3}}{T^{1/3} \log^{1/3} n }}{J^{2/3}(A^*)  \max\set{ \frac{n^{2/3} \log^{2/3} n }{T^{1/3}},1}}$. This leads to,
\begin{align*}
        T &\geq \frac{CJ^4(A^*)}{p^4}\left\{n \log n+ n\log n \bigg(\frac{\frac{p^{2/3} n L^{2/3}}{T^{1/3} \log^{1/3} n }}{J^{2/3}(A^*)  \max\set{ \frac{n^{2/3} \log^{2/3} n }{T^{1/3}},1}}\bigg)+ \log^2(1/\delta) \right\}\\  
        &+ \frac{C J^2(A^*)}{p^2}\left\{n^{3/2}  \log^{3/2} n+n^{3/2}  \log^{3/2} n\bigg(\frac{\frac{p^{1/3} n^{1/2} L^{1/3}}{T^{1/6} \log^{1/6} n }}{J^{1/3}(A^*)  \max\set{ \frac{n^{1/3} \log^{1/3} n }{T^{1/6}},1}}\bigg) + \log(1/\delta)\right\},
    \end{align*}
which is satisfied when, 
\begin{align}\label{eq:lip-pgd-T}
T &\geq C'\frac{J^4(A^*)}{p^4}\max\set{n\log n, \log^{2} \left(1/\delta\right), \frac{p^{2/3} n^{4/3} L^{2/3}}{J^{2/3}(A^*)}, \frac{p^{1/2} n^{3/2} L^{1/2} \log^{1/2} n}{J^{1/2}(A^*)}}\nonumber\\
&+C'\frac{J^2(A^*)}{p^2}\max\set{n^{3/2}\log^{3/2} n, \log \left(1/\delta\right), \frac{p^{1/3} n^{5/3} L^{1/3}}{J^{1/3}(A^*)}, \frac{p^{2/7} n^{12/7} L^{2/7} \log^{8/7} n}{J^{2/7}(A^*)}}
\end{align}
\paragraph{Bounding $\varepsilon$.} 
For the choice of $T$ satisfying \eqref{eq:lip-pgd-T}, we can see that,
\begin{align*}
    &\varepsilon = \frac{C_2}{p} \norm{A^* - B}_F + C_1 \frac{J(A^*)}{p^2}\Bigg[\frac{\gamma_1(\calC_{B}, \norm{.}_2)}{T} + \frac{\log(1/\delta)}{T} + \frac{\gamma_2(\calC_{B}, \norm{.}_F)}{\sqrt{T}} + \sqrt{\frac{\log(1/\delta)}{T}} \Bigg].\\
    &\le \frac{C_2 n^2L}{pTl}+ C_1 \frac{J(A^*)}{p^2}\Bigg[\frac{ n^{3/2}\sqrt{l}\log^{3/2} n}{T} + \frac{\log(1/\delta)}{T} + \frac{n^{1/2}\sqrt{l}\log^{1/2} n}{\sqrt{T}} + \sqrt{\frac{\log(1/\delta)}{T}} \Bigg]
\end{align*}
From \eqref{eq:lip-err-bnd} in the proof of Corollary~\ref{cor:stat-lipschitz-regression},
\begin{align*}
 &\frac{n^2L}{pTl}+ \frac{J(A^*)}{p^2}\Bigg[\frac{ n^{3/2}\sqrt{l}\log^{3/2} n}{T} + \frac{\log(1/\delta)}{T} + \frac{n^{1/2}\sqrt{l}\log^{1/2} n}{\sqrt{T}} + \sqrt{\frac{\log(1/\delta)}{T}} \Bigg] \\
 &\le \frac{J(A^*)}{p^2}\max\set{\frac{ n^{3/2}\log^{3/2} n}{T},\frac{n^{1/2}\log^{1/2} n}{\sqrt{T}}} \\
&+ \frac{J(A^*) n L^{1/3} \log^{1/3} n}{p^2 T^{2/3} }  \max\set{ \frac{n^{2/3} \log^{2/3} n }{T^{1/3}},1}   + \frac{J(A^*)\sqrt{\log \left(1/\delta\right)}}{p^2\sqrt{T}}
\end{align*}
which implies there exists a constant $C$ such that,
\begin{align*}
\varepsilon &\le C \bigg(
\frac{J(A^*)}{p^2}\max\set{\frac{ n^{3/2}\log^{3/2} n}{T},\frac{n^{1/2}\log^{1/2} n}{\sqrt{T}}} + \frac{J(A^*) n L^{1/3} \log^{1/3} n}{p^2 T^{2/3} }  \max\set{ \frac{n^{2/3} \log^{2/3} n }{T^{1/3}},1}   + \frac{J(A^*)\sqrt{\log \left(1/\delta\right)}}{p^2\sqrt{T}}
\bigg)
\end{align*}

\paragraph{Deriving the final bounds.}
Let $T$ be as given in \eqref{eq:lip-pgd-T}. Then, Theorem~\ref{thm:pgd-thm} implies there exists $\rho \le 1-\frac{1}{cJ^2(A^*)}$ such that,
\begin{align*}
\norm{A^{(k)}-A^*}_F \le \rho^k \norm{A^{(0)}-A^*}_F +\frac{\varepsilon}{1-\rho}
\end{align*} 
Let,
\begin{align*}
\hat{\varepsilon} &= C \bigg(
\frac{J(A^*)}{p^2}\max\set{\frac{ n^{3/2}\log^{3/2} n}{T},\frac{n^{1/2}\log^{1/2} n}{\sqrt{T}}} + \frac{J(A^*) n L^{1/3} \log^{1/3} n}{p^2 T^{2/3} }  \max\set{ \frac{n^{2/3} \log^{2/3} n }{T^{1/3}},1}   + \frac{J(A^*)\sqrt{\log \left(1/\delta\right)}}{p^2\sqrt{T}}
\bigg).
\end{align*}

Then,
\begin{align*}
\norm{A^{(k)}-A^*}_F \le \rho^k \norm{A^{(0)}-A^*}_F +\frac{\hat{\varepsilon}}{1-\rho}.
\end{align*}

\subsection{Proof of Lemma \ref{lem:gamma2-isotone-bound}}

For a matrix $\epsilon = (\epsilon_{ij})_{1 \leq i,j \leq n}$ whose entries are independent centered Gaussian random variables with variance $\sigma^2 > 0$, we denote $\Pi_{\calC}(\epsilon)$ to be the projection onto the closed convex cone $\calC$.

Since $\calT_{\calK, A^*}$ is a closed convex cone, we have by equation (4.4) of \cite{chatterjee2018},
\begin{equation*}
    \mathbb{E}\norm{\Pi_{\calT_{\calK, A^*}}(\epsilon)}^2_F \leq Ck(A^*) \sigma^2 \log^8(n).
\end{equation*}
Now, $\epsilon$ can be written as $\epsilon = \sigma G$, where $G = (G_{ij}) \stackrel{i.i.d}{\sim} \mathcal{N}(0, 1)$. Then since the projection operator onto a cone is positively homogeneous, we have by Proposition 2.4 of \cite{tropp2014},
\begin{equation*}
    \mathbb{E}\norm{\Pi_{\calT_{\calK, A^*}}(\epsilon)}^2_F = \mathbb{E}\left[\norm{\Pi_{\calT_{\calK, A^*}}(\sigma G)}_F^2 \right]
    = \sigma^2 \mathbb{E}\left[\norm{\Pi_{\operatorname{vec}(\calT_{\calK, A^*})}(\operatorname{vec}(G))}_2^2\right] = \sigma^2 \delta(\operatorname{vec}(\calT_{\calK, A^*})),
\end{equation*}
which implies that 
\begin{equation}\label{eq:isotone-stat-dim-bound}
    \delta(\operatorname{vec}(\calT_{\calK, A^*} ))\leq Ck(A^*) \log^8(n).
\end{equation}

Recall Talagrand's majorizing measures theorem \cite[Theorem 2.4.1]{talagrand2014upper}, which gives us 
\begin{equation*}
    \gamma_2(\calC_{A^*}, \norm{.}_F) \asymp w(\calC_{A^*}).
\end{equation*}
Now, for $\operatorname{vec}(\calT_{\calK, A^*}) = \left\{\operatorname{vec}(H): H \in \calT_{\calK, A^*} \right\} \subset \mathbb{R}^{n^2}$, we have by Proposition 10.2 of \cite{tropp2014}
\begin{equation}\label{eq:isotone-width-to-statdim}
    w(\calC_{A^*}) = \mathbb{E}\left[\sup_{H \in \calC_{A^*}} \dotprod{G}{H}\right]
    = \mathbb{E}\left[\sup_{h \in \operatorname{vec}(\calT_{\calK, A^*}) \cap \mathbb{S}^{n^2}} \dotprod{\operatorname{vec}(G)}{h} \right] \leq \sqrt{\delta(\operatorname{vec}(\calT_{\calK, A^*}})),
\end{equation}
which implies that 
\begin{equation*}
    \gamma_2\left(\calC_{A^*}, \norm{.}_F\right) \lesssim \sqrt{\delta(\operatorname{vec}( \calT_{\calK, A^*}))} \leq C\sqrt{k(A^*)}\log^4(n).
\end{equation*}
The proof is now complete.

\subsection{Lemma \ref{lem:isotone-covering-numbers-net} and its proof}
\begin{lemma}\label{lem:isotone-covering-numbers-net}
For any $r, \epsilon > 0$, it holds that 
\begin{equation*}
    \calN\left(\calC_{A^*}, \norm{.}_2, \epsilon\right) \leq \calN\left(\calC_{A^*}, \norm{.}_F, r\right) \times \calN\left(r\calB_n, \norm{.}_2, \epsilon\right).
\end{equation*}
\end{lemma}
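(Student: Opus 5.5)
We will use a two-scale covering argument. First cover $\calC_{A^*}$ coarsely in Frobenius norm, then cover each coarse cell finely in spectral norm.

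Let $\{H_1,\dots,H_N\}\subset \calC_{A^*}$ be a minimal $r$-net of $\calC_{A^*}$ with respect to $\norm{\cdot}_F$, so $N=\calN(\calC_{A^*},\norm{\cdot}_F,r)$. Let $\{W_1,\dots,W_M\}$ be a minimal $\epsilon$-net of the Frobenius ball $r\calB_n$ with respect to $\norm{\cdot}_2$, so $M=\calN(r\calB_n,\norm{\cdot}_2,\epsilon)$.

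We claim the translates $\{H_i+W_j: i\le N,\ j\le M\}$ form an $\epsilon$-net of $\calC_{A^*}$ in $\norm{\cdot}_2$. To see this, fix $H\in\calC_{A^*}$.

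1. Choose $i$ with $\norm{H-H_i}_F\le r$. Then $H-H_i\in r\calB_n$.
2. Choose $j$ with $\norm{(H-H_i)-W_j}_2\le\epsilon$.
3. This gives $\norm{H-(H_i+W_j)}_2\le\epsilon$.

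The net has cardinality at most $NM$, which is the stated bound.

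The only subtlety concerns the convention for covering numbers. If $\calN$ requires centers to lie inside the covered set, the points $H_i+W_j$ need not lie in $\calC_{A^*}$. The standard fix is to use external covering numbers. Alternatively, we replace each center by a point of $\calC_{A^*}\cap B_2(H_i+W_j,\epsilon)$, whenever that set is nonempty, at the cost of replacing $\epsilon$ by $2\epsilon$ on the left-hand side.

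We will adopt the external convention, consistent with how these numbers enter the entropy integral bound for $\gamma_1$. Under that convention the inequality holds exactly as stated. No probabilistic or structural property of the bi-isotonic cone is needed; the argument uses only $\norm{\cdot}_2\le\norm{\cdot}_F$ implicitly through the set inclusions, and the triangle inequality.
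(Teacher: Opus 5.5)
Your proof is correct and is the same double-net argument the paper uses: a Frobenius $r$-net of $\calC_{A^*}$, a spectral $\epsilon$-net of $r\calB_n$, and the translated centers $H_i+W_j$ covering $\calC_{A^*}$ in spectral norm. Your remark on external versus internal covering numbers addresses a point the paper leaves implicit, since its centers $H_j+E_\ell$ are also not shown to lie in $\calC_{A^*}$; adopting the external convention is harmless for the entropy-integral bound on $\gamma_1$.
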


\begin{proof}[Proof of Lemma \ref{lem:isotone-covering-numbers-net}]
Here and in the lemmas that follow, we will denote the Frobenius and spectral balls by $\calB_{F}$ (which is just $\frobball$) and $\calB_2$ respectively for added clarity.

We will use a double net argument, first to cover $C_{A^*}$ with Frobenius balls, then to cover each residual term with spectral balls. Take a $r$-net $\{H_1, \dots, H_M\}$ with respect to the Frobenius norm, so that for every $H \in \calC_{A^*}$,  there exists $H_j$ among the centers $H_1, \dots, H_M$ such that $\norm{H - H_j}_F \leq r$. Denote the covering number of this net by $M := \calN(\calC_{A^*}, \norm{.}_F, r)$, and denote the residual term by $R := H - H_j$. Then $\norm{H - H_j}_F \leq r$ is equivalent to saying that $R \in r\calB_F$.

Now, cover $r\calB_F$ with spectral balls, consisting of centers $E_1, \dots, E_L$ such that for any $R \in r\calB_F$, there exists $E_{\ell}$ such that $\norm{R - E_{\ell}}_2 \leq \epsilon$. Denote the covering number of this second net by $L := \calN\left(r\calB_F, \norm{.}_2, \epsilon \right)$. By this construction, we have for every $H \in \calC_{A^*}$, there exists $j \in \{1, \dots, M\}$, $\ell \in \{1, \dots, L\}$ (possibly depending on $j$) such that
\begin{equation*}
    \norm{(H - H_j) - E_{\ell}}_2 = \norm{H - (H_j + E_{\ell})}_2 \leq \epsilon,
\end{equation*}
which means that 
\begin{equation*}
    C_{A^*} \subseteq \cup_{j=1}^M \cup_{\ell=1}^L \calB_2(H_j + E_{\ell}, \epsilon).
\end{equation*}
This implies that
\begin{align*}
    &\calN(C_{A^*}, \norm{.}_2, \epsilon) \leq ML,
\end{align*}
which completes the proof.
\end{proof}

\subsection{Lemma \ref{lem:metric-entropy-frob-ball} and its proof}

\begin{lemma}\label{lem:metric-entropy-frob-ball}
There exists a constant $C > 0$ such that for every $r > 0$, it holds that
\begin{equation*}
    \int_0^r \log\calN\left(r\calB_n, \norm{.}_2, \epsilon\right) d\epsilon \leq Crn^{3/2}.
\end{equation*}
\end{lemma}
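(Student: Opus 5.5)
The plan is to reduce by scaling to the unit Frobenius ball, and then split the entropy integral at the scale $n^{-1/2}$. Below that scale I use a volumetric bound adapted to the spectral norm; above it I use dual Sudakov. This is the same fine/coarse splitting described in the proof of Corollary~\ref{cor:isotone-stat}.

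\emph{Scaling.} Since $\calN(r\frobball,\norm{\cdot}_2,\epsilon)=\calN(\frobball,\norm{\cdot}_2,\epsilon/r)$, the substitution $\epsilon=ru$ gives
\[
\int_0^r \log\calN(r\frobball,\norm{\cdot}_2,\epsilon)\,d\epsilon = r\int_0^1 \log\calN(\frobball,\norm{\cdot}_2,u)\,du .
\]
It therefore suffices to show $\int_0^1 \log\calN(\frobball,\norm{\cdot}_2,u)\,du\le Cn^{3/2}$. We may stop the integral at $1$ because $\norm{\cdot}_2\le\norm{\cdot}_F$, so $\calN(\frobball,\norm{\cdot}_2,u)=1$ for $u\ge 1$.

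\emph{Coarse scales, $u\in[n^{-1/2},1]$.} Apply the dual Sudakov inequality to the norm $\norm{\cdot}_2$ on the Euclidean space $(\matR^{n\times n},\norm{\cdot}_F)$:
\[
\log\calN(\frobball,\norm{\cdot}_2,u)\le C\,\frac{(\expec\norm{G}_2)^2}{u^2}\le C\,\frac{n}{u^2},
\]
where $G$ is an $n\times n$ standard Gaussian matrix and we used $\expec\norm{G}_2\le 2\sqrt n$. Integrating this over $[n^{-1/2},1]$ gives at most $Cn\cdot n^{1/2}=Cn^{3/2}$.

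\emph{Fine scales, $u\in(0,n^{-1/2}]$.} The naive bound $\calN(\frobball,\norm{\cdot}_F,u)\le(3/u)^{n^2}$ would produce an extra $\log n$ factor, so I instead compare volumes in $\matR^{n^2}$. Let $\calB_{op}$ denote the spectral unit ball, and take a maximal $u$-separated set in $\frobball$ with respect to $\norm{\cdot}_2$. The standard packing argument gives
\[
\calN(\frobball,\norm{\cdot}_2,u)\le \frac{\mathrm{vol}\big(\frobball+\tfrac u2\calB_{op}\big)}{\mathrm{vol}\big(\tfrac u2\calB_{op}\big)}.
\]
Since $\calB_{op}\subseteq\sqrt n\,\frobball$, the numerator is at most $(1+u\sqrt n/2)^{n^2}\mathrm{vol}(\frobball)$. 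For the ratio $\mathrm{vol}(\frobball)/\mathrm{vol}(\calB_{op})$ I use the classical volume estimates (Saint-Raymond): $\mathrm{vol}(\calB_{op})^{1/n^2}\asymp n^{-1/2}$ and $\mathrm{vol}(\frobball)^{1/n^2}\asymp n^{-1}$, so the ratio is at most $(c/\sqrt n)^{n^2}$. Combining these, for $u\le n^{-1/2}$ we get
\[
\log\calN(\frobball,\norm{\cdot}_2,u)\le n^2\log\frac{C'}{u\sqrt n}.
\]
This is consistent with the sharp entropy estimates of \cite{hinrichs2017}, which could be cited instead. Substituting $v=u\sqrt n$,
\[
\int_0^{n^{-1/2}} n^2\log\frac{C'}{u\sqrt n}\,du = n^{3/2}\int_0^1\log\frac{C'}{v}\,dv = C''n^{3/2}.
\]

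Summing the coarse and fine contributions and multiplying by $r$ proves the lemma. I expect the fine-scale step to be the main obstacle: it hinges on the volume-ratio estimate $\big(\mathrm{vol}(\frobball)/\mathrm{vol}(\calB_{op})\big)^{1/n^2}\lesssim n^{-1/2}$, and this is exactly what removes the logarithmic factor. If a clean citation for that estimate is not at hand, I would fall back on the entropy numbers $e_k(\mathrm{id}:S_2^n\to S_\infty^n)\asymp 2^{-k/n^2}n^{-1/2}$ for $k\ge n^2$ from \cite{hinrichs2017}, which yield the same bound directly.
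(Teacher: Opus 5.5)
Your proof is correct. It has the same skeleton as the paper's: the rescaling to $\frobball$, the split of $\int_0^1$ at $n^{-1/2}$, and the same two pointwise bounds, $\log\calN(\frobball,\norm{\cdot}_2,u)\lesssim n/u^2$ on $[n^{-1/2},1]$ and $\lesssim n^2\log(C/(u\sqrt n))$ on $(0,n^{-1/2}]$, integrated in the same way.

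Where you differ is in how those two bounds are obtained. The paper reads both off the two-sided entropy-number asymptotics for $\mathrm{id}:S_2^n\to S_\infty^n$ in \cite[Theorem 1.1]{hinrichs2017}. It translates $e_j\asymp\sqrt{n/j}$ (for $n\le j\le n^2$) and $e_j\asymp n^{-1/2}2^{-j/n^2}$ (for $j\ge n^2$) into covering numbers at each scale. You instead get the coarse bound from dual Sudakov with $\expec\norm{G}_2\le 2\sqrt n$. You get the fine bound from a volumetric packing argument whose only nontrivial input is $(\mathrm{vol}(\frobball)/\mathrm{vol}(\calB_{op}))^{1/n^2}\lesssim n^{-1/2}$.

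The paper's route is a single citation of a recent sharp result. Because that result is two-sided, it also shows the order $n^{3/2}$ cannot be improved. Your route uses only classical tools and only the upper bounds that are actually needed. It also makes explicit why the naive Frobenius-net bound loses a $\log n$ and what removes it.

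You can even avoid citing Saint-Raymond. Let $A$ be uniform on $[-1,1]^{n\times n}$, so $\expec\norm{A}_2\le C\sqrt n$. By Markov, at least half the cube lies in $2C\sqrt n\,\calB_{op}$. Hence $\mathrm{vol}(\calB_{op})^{1/n^2}\ge (2C\sqrt n)^{-1}$, which is exactly the direction (a lower bound on the denominator) your volume ratio requires.
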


\begin{proof}[Proof of Lemma \ref{lem:metric-entropy-frob-ball}]
It suffices to show that 
\begin{equation*}
    \int_0^1 \log\calN\left(\calB_{F}, \norm{.}_2, \varepsilon \right) d\varepsilon \lesssim n^{3/2},
\end{equation*}
since by a change-of-variable $\varepsilon = \epsilon/r$, we have
\begin{equation*}
    \int_0^r \log\calN(r\calB_F, \norm{.}_2, \epsilon) d\epsilon = \int_0^r \log\calN\left(\calB_F, \norm{.}_2, \frac{\epsilon}{r}\right) d\epsilon 
    = r\int_0^1 \log\calN(\calB_F, \norm{.}_2, \varepsilon) d\varepsilon.
\end{equation*}
To this end, we will rely on a key entropy number estimate from \cite{hinrichs2017}. In order to make the connection clear, we will translate their results to our special case using their notation. Recall that for the Schatten norms, we have $\norm{.}_{S_{\infty}} = \norm{.}_2$ and $\norm{.}_{S_2} = \norm{.}_F$. Then, in their notation, denote $\calS_p^n := (\mathbb{R}^{n\times n}, \norm{.}_{S_p})$, and set $X = \calS_2^n = (\mathbb{R}^{n\times n}, \norm{.}_F)$, $Y = \calS_{\infty}^n = (\mathbb{R}^{n\times n}, \norm{.}_2)$. Let $T:X \rightarrow Y$ be the identity operator $T(A) = A$. The respective balls are given by 
\begin{equation*}
    \calB_X = \left\{A \in \mathbb{R}^{n\times n}: \norm{A}_F \leq 1 \right\} = \calB_{F}, \quad \text{and} \quad \calB_Y := \left\{A \in \mathbb{R}^{n\times n}: \norm{A}_2 \leq 1 \right\}.
\end{equation*}
The $j$-th entropy number of the operator $T$ of our setup is given by
\begin{equation*}
    e_j(T) := \inf\bigg\{\varepsilon > 0: T(\calB_X) = \calB_F \subseteq \cup_{\ell=1}^{2^{j-1}} (y_{\ell} + \varepsilon \calB_Y), 
    \text{ for some } y_1, \dots, y_{2^{j-1}}  \bigg\},
\end{equation*}
which is the smallest achievable radius given $2^{j-1}$ spectral balls to cover the Frobenius ball. With our definition of $T$, $e_j(\calS_{p}^n \hookrightarrow \calS_{q}^n) = e_j(\calS_{2}^n \hookrightarrow \calS_{\infty}^n) = e_j(T)$. Using Theorem 1.1 of \cite{hinrichs2017}, we can reverse engineer the covering numbers at different scales $\varepsilon$. 

In particular, the range $n \leq j \leq n^2$ corresponds to $n^{-1/2} \lesssim \varepsilon \lesssim 1$. Moreover, within this range, setting $\varepsilon \asymp e_j(T) \asymp \sqrt{n/j}$ which gives $j \asymp n/\varepsilon^2$, so $2^{Cn/\varepsilon^2}$ balls suffice for a cover, so $\log\calN(\calB_F, \norm{.}_2, \varepsilon ) \asymp n/\varepsilon^2.$ 

It remains to look at the case $j \geq n^2$. In this regime, we have the range $0 \lesssim \varepsilon \lesssim n^{-1/2}$, where $\varepsilon \asymp n^{-1/2}2^{-j/n^2}$ which gives $j \asymp n^2 \log(C/\varepsilon\sqrt{n})$, so $2^{cn^2\log(C/\varepsilon\sqrt{n})}$ balls suffices as a cover. Thus, $\log\calN(\calB_F, \norm{.}_2, \varepsilon) \asymp n^2 \log(C/\varepsilon\sqrt{n})$.

Breaking the integral into two parts, we have 
\begin{equation*}
    \int_0^1 \log\calN(\calB_F, \norm{.}_2, \varepsilon) d\varepsilon  \lesssim  \int_0^{n^{-1/2}} n^2 \log\left(\frac{C}{\varepsilon\sqrt{n}}\right) d\varepsilon 
    + \int_{n^{-1/2}}^1 \frac{n}{\varepsilon^2} d\varepsilon 
    =: I_1 + I_2.
\end{equation*}
Performing a change of variable with $u = \varepsilon\sqrt{n}$, we obtain
\begin{align*}
    I_1 = n^{3/2}\int_0^1 \log\left(\frac{C}{u}\right) du = n^{3/2}\left(1 + \log(C) \right).
\end{align*}
On the other hand, we have $I_2 = n^{3/2} - n$,
so 
\begin{equation*}
    \int_0^1 \log\calN\left(\calB_F, \norm{.}_2, \varepsilon \right) d\varepsilon \lesssim n^{3/2},
\end{equation*}
as desired.
\end{proof}

\subsection{Proof of Lemma \ref{lem:isotonic-gamma1-bound}}
Since 
\begin{equation*}
    \calN(\calC_{A^*}, \norm{.}_2, 1) \leq \calN(\calC_{A^*}, \norm{.}_F, 1) = 1,
\end{equation*}
the entropy integral of $\calC_{A^*}$ with respect to $\norm{.}_2$ has upper limit $1$. By \cite{Talagrand2005-gf}, we have
\begin{equation*}
    \gamma_1\left(\calC_{A^*}, \norm{.}_2\right) \lesssim \int_0^1 \log\calN\left(\calC_{A^*}, \norm{.}_2, \epsilon \right) d\epsilon 
    = \int_0^r \log\calN(\calC_{A^*}, \norm{.}_2, \epsilon) d\epsilon + \int_r^1 \log\calN(\calC_{A^*}, \norm{.}_2, \epsilon) d\epsilon.
\end{equation*}
By Sudakov's minorization inequality and \eqref{eq:isotone-width-to-statdim}, 
\begin{equation*}
    \log\calN\left(\calC_{A^*}, \norm{.}_F, \epsilon \right) \lesssim \frac{\delta(\operatorname{vec}(\calT_{\calK, A^*}))}{\epsilon^2} \implies \int_r^1 \log\calN\left(\calC_{A^*}, \norm{.}_2, \epsilon \right) d\epsilon \lesssim \frac{\delta(\operatorname{vec}( \calT_{\calK, A^*}))}{r}.
\end{equation*}
On the other hand, we have by Lemma \ref{lem:isotone-covering-numbers-net}, 
\begin{equation*}
    \int_0^r \log\calN(\calC_{A^*}, \norm{.}_2, \epsilon) d\epsilon \leq \int_0^r \log\calN\left(\calC_{A^*}, \norm{.}_F, r \right) d\epsilon + \int_0^r \log\calN\left(r\calB_F, \norm{.}_2, \epsilon \right) d\epsilon.
\end{equation*}
Again by \eqref{eq:isotone-width-to-statdim}, we have 
\begin{equation*}
     \int_0^r \log\calN\left(\calC_{A^*}, \norm{.}_F, r \right) d\epsilon \leq \frac{ \delta(\operatorname{vec}(\calT_{\calK, A^*}))}{r}.
\end{equation*}
Now, by Lemma \ref{lem:metric-entropy-frob-ball}, we obtain
\begin{equation*}
    \int_0^r \log\calN\left(r\calB_F, \norm{.}_2, \epsilon \right) d\epsilon \lesssim rn^{3/2}.
\end{equation*}
Gathering facts, we have 
\begin{equation*}
    \gamma_1\left(\calC_{A^*}, \norm{.}_2 \right) \lesssim \frac{\delta(\operatorname{vec}(\calT_{\calK, A^*}))}{r} + rn^{3/2}.
\end{equation*}
Balancing the two terms, the optimal choice of $r$ is given by
\begin{equation*}
    r^* = \sqrt{\delta(\operatorname{vec}(\calT_{\calK, A^*}))}n^{-3/4},
\end{equation*}
and substituting back, we obtain by \eqref{eq:isotone-stat-dim-bound}
\begin{equation*}
    \gamma_1\left(\calC_{A^*}, \norm{.}_2 \right) \lesssim  n^{3/4} \sqrt{\delta(\operatorname{vec}(\calT_{\calK, A^*}))} \lesssim n^{3/4}\sqrt{k(A^*)}\log^4(n).
\end{equation*}
The proof is now complete.

\section{Lipschitz regression} \label{app-sec:lip_reg}
\subsection{Corollaries of Theorems \ref{thm:main-thm} and \ref{thm:pgd-thm}}
Following \cite[Example 4]{tyagi2024learning}, consider the setting where $A^*$ is formed as follows: each row $i$ of $A^*$ is uniformly sampled from a $L$-Lipschitz function $f_i:[0,1] \rightarrow \matR$ with a step size of $1/T$. Denote  $$\calK = \set{A \in \matR^{n \times n}:\absenc{A_{i,j}-A_{i,j+1}} \le \frac{L}{T}, i \in [n], j\in [n-1]}.$$
\begin{corollary}\label{cor:stat-lipschitz-regression}
There exist constants $c_1,c_2>1$ such that for any $\delta \in (0,1)$, if 
\begin{equation*}
T \geq c_1\frac{J^4(A^*)}{p^4}\max\bigg\{n\log n, \log^{2} \left(1/\delta\right),\frac{p^{2/3} n^{4/3} L^{2/3}}{J^{2/3}(A^*)}, \frac{p^{1/2} n^{3/2} L^{1/2} \log^{1/2} n}{J^{1/2}(A^*)}\bigg\},
\end{equation*}
then it holds with probability at least $1-\delta$ that
\begin{align*}
\norm{\hat{A}-A^*}_F &\leq c_2 \frac{J^3(A^*)}{p^2}\bigg(\frac{ n^{3/2}\log^{3/2}(n)}{T}+\frac{n^{1/2}\log^{1/2}(n)}{\sqrt{T}} \\
&+ \frac{n L^{1/3} \log^{1/3}(n)}{ T^{2/3} }  \max\set{ \frac{(n \log n)^{2/3} }{T^{1/3}},1} + \frac{\sqrt{\log \left(1/\delta\right)}}{\sqrt{T}}\bigg).
\end{align*}
\end{corollary}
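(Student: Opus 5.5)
The plan is to apply Theorem~\ref{thm:main-thm} with a well-chosen $B\neq A^*$ rather than $B=A^*$. The tangent cone at a generic Lipschitz-sampled $A^*$ can be large, while at a piecewise-affine approximant it is much smaller. Following \cite[Lemma 4]{tyagi2024learning}, for any integer $l\ge 1$ there is a matrix $B\in\calK$ with the following properties:
\begin{itemize}
\item each row of $B$ consists of $l+1$ affine pieces of slope $\pm L$;
\item consecutive row entries satisfy $\absenc{B_{i,j}-B_{i,j+1}}=L/T$;
\item the approximation error obeys $\norm{A^*-B}_F^2\le 4n^4L^2/(T^2l^2)$.
\end{itemize}
At such a $B$, most difference constraints are active. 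The tangent cone $\tanconeB$ then has a combinatorial description whose local complexity scales with $nl$ rather than $n^2$.

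The next step imports the complexity bounds from the proof of \cite[Corollary 4]{tyagi2024learning}. With $\alpha_n=\sqrt{n(l+1)\log(en/(l+1))}$, these give $\gammafn{2}(\conesetB,\norm{\cdot}_F)\le\alpha_n$ and $\gammafn{1}(\conesetB,\norm{\cdot}_2)\le n\alpha_n[\log(n/\alpha_n)+1]$. Up to constants these are $\sqrt{nl\log n}$ and $\sqrt{n^3l\log^3 n}$. Substituting into Theorem~\ref{thm:main-thm} shows the following. If $T\gtrsim J^4(A^*)\max\{nl\log n,\log^2(1/\delta)\}/p^4$, then with probability at least $1-\delta$,
\[
\norm{\est{A}-A^*}_F\lesssim \frac{J^3(A^*)}{p^2}\max\set{\frac{n^{3/2}\log^{3/2}n}{T},\frac{n^{1/2}\log^{1/2}n}{\sqrt T}}\sqrt l+\frac{J^2(A^*)n^2L}{pTl}+\frac{J^3(A^*)\sqrt{\log(1/\delta)}}{p^2\sqrt T}.
\]

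I then optimize over $l$ by balancing the $\sqrt l$ term against the $1/l$ term. This gives $l=\lceil \nu\rceil$ with
\[
\nu=\frac{p^{2/3}nL^{2/3}}{T^{1/3}\log^{1/3}n\;J^{2/3}(A^*)\max\{n^{2/3}\log^{2/3}n/T^{1/3},1\}}.
\]
I use $l\le \nu+1$ for the $\sqrt l$ term and $1/l\le 1/\nu$ for the approximation term. Plugging in, the $+1$ contributes the first two terms of the stated rate. The balanced terms give $nL^{1/3}\log^{1/3}n\,T^{-2/3}\max\{(n\log n)^{2/3}T^{-1/3},1\}$, with prefactors $J^{8/3}/p^{5/3}$ and $J^{5/3}/p^{5/3}$. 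Both are dominated by $J^3(A^*)/p^2$, since $J(A^*)\ge 1$ and $p\le 1$.

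The step I expect to be the main obstacle is the sample-size condition, because $l$ itself depends on $T$. I will replace $nl\log n$ by $n\log n+n\nu\log n$ and use $\max\{a,b\}\asymp a+b$ to write the self-referential requirement
\[
T\gtrsim \frac{J^4(A^*)}{p^4}\Bigl(n\log n+\frac{p^{2/3}n^2L^{2/3}\log^{2/3}n}{J^{2/3}(A^*)(n^{2/3}\log^{2/3}n+T^{1/3})}\Bigr).
\]
I then split into cases according to which term of the denominator dominates. When $T^{1/3}$ dominates, solving $T^{4/3}\gtrsim J^{10/3}p^{-10/3}n^2L^{2/3}\log^{2/3}n$ yields the $n^{3/2}L^{1/2}\log^{1/2}n$ requirement. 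When $n^{2/3}\log^{2/3}n$ dominates, the requirement reduces to $T\gtrsim J^{10/3}p^{-10/3}n^{4/3}L^{2/3}$. Together these give exactly the four-term maximum in the statement. The delicate part is checking that in each regime the extracted explicit condition indeed implies the implicit one; I would verify this by direct monotonicity in $T$.
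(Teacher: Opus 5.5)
Your proposal is correct and follows the paper's proof essentially step for step. You use the same piecewise-affine $B$ with $\norm{A^*-B}_F^2\le 4n^4L^2/(T^2l^2)$, the same $\gamma_1,\gamma_2$ bounds at $B$, the same balancing choice $l=\lceil\nu\rceil$ with $l\le\nu+1$ and $1/l\le1/\nu$, and the same reduction of the $l$-dependent sample-size condition via $\max\{a,b\}\asymp a+b$.

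Two harmless slips, both also present in the paper, do not affect the conclusion:
\begin{itemize}
\item The approximation term's prefactor is $J^{8/3}(A^*)/p^{5/3}$, not $J^{5/3}(A^*)/p^{5/3}$; it is still dominated by $J^3(A^*)/p^2$.
\item In the $T^{1/3}$-dominant regime you actually obtain the weaker requirement $T\gtrsim (J(A^*)/p)^{5/2}n^{3/2}L^{1/2}\log^{1/2}n$. The stated condition implies it because $J(A^*)/p\ge 1$.
\end{itemize}
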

The proof closely mirrors that of \cite[Corollary 4]{tyagi2024learning}. Using the $\gammafn{1}(\conesetB, \norm{\cdot}_2)$ and $\gammafn{2}(\conesetB, \norm{\cdot}_F)$ bounds derived in \cite{tyagi2024learning}, we show that there exists a $B \in \calK$ close to $A^*$, where each row of $B$ is formed by taking $l+1$ affine pieces whose slopes are equal to $\pm L$ and $\absenc{B_{i,j}-B_{i,j+1}} = \frac{L}{T}$. Then using Theorem~\ref{thm:main-thm} and choosing $l$ to balance the recovery and misspecification error, we get the desired bound. The details are in Appendix~\ref{appsec:stat-lipschitz-regression}

\begin{remark}
    For $p = 1$, we recover the statement of Corollary 4 of \cite{tyagi2024learning} up to constants and a $J(A^*)$ factor. 
\end{remark}

\begin{corollary}\label{cor:pgd-lipschitz-regression}
Suppose that the learning rate is chosen as $\gamma = \frac{1}{2p^2 T J^2(A^*) }$. Then there exist constants $c,C,C' > 1$ such that for any $\delta \in (0,1)$, if 
\begin{align*}
        T &\geq C'\frac{J^4(A^*)}{p^4}\max\left\{n\log n, \log^{2} \left(1/\delta\right), \frac{p^{2/3} n^{4/3} L^{2/3}}{J^{2/3}(A^*)},\frac{p^{1/2} n^{3/2} L^{1/2} \log^{1/2} n}{J^{1/2}(A^*)}\right\}\\
        &+C'\frac{J^2(A^*)}{p^2}\max\left\{n^{3/2}\log^{3/2} n, \log \left(1/\delta\right),\frac{p^{1/3} n^{5/3} L^{1/3}}{J^{1/3}(A^*)}, \frac{p^{2/7} n^{12/7} L^{2/7} \log^{8/7} n}{J^{2/7}(A^*)}\right\},
    \end{align*} 
then with probability at least $1 - \delta$, for all $k = 0,1,\dots$, it holds that
    \begin{align*}
        & \norm{A^{(k)}-A^*}_F \le \rho^k \norm{A^{(0)}-A^*}_F +\frac{\varepsilon}{1-\rho}
    \end{align*}
where $\rho \le 1-\frac{1}{cJ^2(A^*)}$ and
\begin{equation*}
    \varepsilon =C\frac{J(A^*)}{p^2} \bigg(\max\set{\frac{ n^{3/2}\log^{3/2} n}{T},\frac{n^{1/2}\log^{1/2} n}{\sqrt{T}}} + \frac{ n (L \log n)^{1/3}}{ T^{2/3} }  \max\set{ \frac{(n \log n)^{2/3} }{T^{1/3}},1} + \frac{\sqrt{\log \left(1/\delta\right)}}{\sqrt{T}}
\bigg).
\end{equation*}
\end{corollary}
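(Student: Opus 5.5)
The plan is to apply Theorem~\ref{thm:pgd-thm} with a carefully chosen $B \in \calK$ rather than $B = A^*$, exactly as in the proof of Corollary~\ref{cor:stat-lipschitz-regression}. By Lemma 4 of \cite{tyagi2024learning}, for every integer $l \ge 1$ there is a $B \in \calK$ whose rows consist of $l+1$ affine pieces with slopes $\pm L$, and which satisfies $\norm{A^*-B}_F \le 2n^2L/(Tl)$. For this $B$, the arguments behind Corollary 4 of \cite{tyagi2024learning} give the complexity bounds $\gammafn{2}(\conesetB,\norm{\cdot}_F) \lesssim \sqrt{nl\log n}$ and $\gammafn{1}(\conesetB,\norm{\cdot}_2) \lesssim \sqrt{n^3 l\log^3 n}$. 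I will substitute these into the sample-size condition \eqref{eq:T-PGD-cond} and into the expression for $\varepsilon$ in Theorem~\ref{thm:pgd-thm}. The contraction factor $\rho \le 1 - 1/(cJ^2(A^*))$ then comes directly from that theorem.

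The next step is to choose $l$. I will use the same value as in the statistical corollary, namely $l = \lceil \nu \rceil$ with $\nu$ obtained by balancing $\frac{J}{p^2}\max\{n^{3/2}\log^{3/2}n/T,\, n^{1/2}\log^{1/2}n/\sqrt T\}\sqrt l$ against $\frac{n^2L}{pTl}$. This choice gives $l \le \nu+1$ and $1/l \le 1/\nu$.

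With this $l$, I bound $\varepsilon$ as follows. Since the Theorem~\ref{thm:pgd-thm} error is the same as the Theorem~\ref{thm:main-thm} bound up to the prefactors $J(A^*)$ versus $J^3(A^*)$ and $1$ versus $J^2(A^*)$, the calculation leading to \eqref{eq:lip-err-bnd} can be reused verbatim with $J^3$ replaced by $J$. The misspecification term $\frac{n^2L}{pTl}$ is dominated by the balanced expression $\frac{J n L^{1/3}\log^{1/3} n}{p^2 T^{2/3}}\max\{(n\log n)^{2/3}T^{-1/3},1\}$. This uses $J \ge 1$, $p \le 1$ and $1/l \le 1/\nu$. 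The $\log(1/\delta)/T$ term is absorbed into $\sqrt{\log(1/\delta)/T}$ once $T \ge \log(1/\delta)$.

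The main obstacle is verifying \eqref{eq:T-PGD-cond}, because $l$ itself depends on $T$. I will expand $l \le \nu + 1$ in both parts of the condition. In the $J^4/p^4$ part, $nl\log n \lesssim n\log n + n\nu\log n$. Requiring $T \gtrsim \frac{J^4}{p^4}\, n\nu\log n$ and splitting according to which argument of the max in $\nu$ is active gives the thresholds $p^{2/3}n^{4/3}L^{2/3}J^{-2/3}$ and $p^{1/2}n^{3/2}L^{1/2}\log^{1/2}n\,J^{-1/2}$. This is the same computation already done for Corollary~\ref{cor:stat-lipschitz-regression}. The new work is the $J^2/p^2$ part, where $n^{3/2}\sqrt l\log^{3/2}n \lesssim n^{3/2}\log^{3/2}n + n^{3/2}\sqrt\nu\log^{3/2}n$. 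In each of the two regimes of the max, I will solve $T \ge \frac{J^2}{p^2}n^{3/2}\sqrt\nu\log^{3/2}n$ for $T$. Because $\sqrt\nu$ contains $T^{-1/6}$ (times $T^{1/6}$ in one regime), solving the resulting fixed-point inequalities $T^{7/6} \gtrsim \dots$ and $T \gtrsim \dots$ yields the exponents $p^{2/7}n^{12/7}L^{2/7}\log^{8/7}n\,J^{-2/7}$ and $p^{1/3}n^{5/3}L^{1/3}J^{-1/3}$ respectively. Taking the maximum over regimes and using $\max\{a,b\} \asymp a+b$ gives the stated condition on $T$.

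Combining these bounds in \eqref{eq:pgd-thm-final-form} yields the claim, with the exponent bookkeeping and the absorption of lower-order terms carried out carefully.
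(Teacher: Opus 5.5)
Your proposal follows the paper's proof essentially step for step: the same $B$ from Lemma 4 of \cite{tyagi2024learning} with $l=\lceil\nu\rceil$, the bound on $\varepsilon$ obtained from \eqref{eq:lip-err-bnd} after dividing by $J^2(A^*)$, and the verification of \eqref{eq:T-PGD-cond} by inserting $l\le\nu+1$ and splitting on the two regimes of the max in $\nu$. One small arithmetic correction, which the paper's own computation also needs, is required: writing $J=J(A^*)$, for $T\le n^2\log^2 n$ one has $\nu=p^{2/3}n^{1/3}L^{2/3}/(J^{2/3}\log n)$, so the $J^2/p^2$ requirement is actually $\frac{J^2}{p^2}\cdot\frac{p^{1/3}n^{5/3}L^{1/3}\log n}{J^{1/3}}$ (an extra $\log n$ compared with the exponent you state), but the stated condition still implies it, because with $T_A=\frac{J^4}{p^4}\cdot\frac{p^{1/2}n^{3/2}L^{1/2}\log^{1/2}n}{J^{1/2}}$ and $T_B=\frac{J^2}{p^2}\cdot\frac{p^{2/7}n^{12/7}L^{2/7}\log^{8/7}n}{J^{2/7}}$ (both already part of the condition) one has $\max\{T_A,T_B\}\ge T_A^{2/9}T_B^{7/9}=(J/p)^{19/9}n^{5/3}L^{1/3}\log n\ge (J/p)^{5/3}n^{5/3}L^{1/3}\log n$.
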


The proof follows from using the choice of $l$ (and therefore $B$) derived in Lemma~\ref{cor:pgd-lipschitz-regression}, and applying Theorem~\ref{thm:pgd-thm}, before deriving the appropriate bounds for $\rho, \varepsilon$ and $T$. The proof is outlined in Appendix~\ref{appsec:pgd-lipschitz-regression}.

\begin{remark}
    In Corollary \ref{cor:stat-lipschitz-regression}, the sharp boundary for $\norm{\widehat A - A^*}_F = o(1)$ when $p = o(1)$ is a long and ugly expression. However, it is easy to see that for a fixed level $\delta$, a sufficient scaling regime is
\begin{equation}\label{eq:p-floor-lipschitz}
    J^{3/2}(A^*)\max\left\{1, L^{1/6} \frac{n^{5/6}\log^{3/4}(n)}{T^{1/4}}  \right\} = o(p),
\end{equation}
with $p = o(1)$. 
Then, we have 
\begin{equation*}
    \limsup_{k \rightarrow \infty} \norm{A^{(k)} - A^*}_F \leq \frac{\varepsilon}{1 - \rho}.
\end{equation*}
\end{remark}

\begin{remark}
For the projection step in Lipschitz regression, each row projection is a convex-quadratic program with $n$ variables and $O(n)$ linear constraints. This can be solved, e.g., using interior-point methods with an overall running time polynomial in $n$.
\end{remark}

\subsection{Experiments}
We sample from a piecewise linear function to construct the transition matrix $A^*$. For $|r_{ik}| \leq 1$, let $g_i(x) = L r_{ik},$ whenever $x \in [k-1/T, k/T)$, be a piecewise constant function. Define $f_i(x) = a_i + \int_0^x g_i(u) du$, so $f_i:[0, (n-1)/T] \rightarrow \mathbb{R}_+$ is piecewise linear, and construct
\begin{equation*}
    A_{ij}^* = f_i\left(\frac{j-1}{T}\right)
    = a_i + \frac{L}{T}\sum_{k=1}^{j-1} r_{ik}.
\end{equation*}
Since $|g_i(u)| \leq L$, $f_i$ is $L$-Lipschitz by construction, so $A^*$ is row-wise Lipschitz. To control the similarities between rows, we define for any $\lambda \in (0, 1)$,
\begin{equation*}
    r_{ij} = \lambda z_{ij} + (1-\lambda)q_j,
\end{equation*}
where $z_{ij} \stackrel{i.i.d}{\sim} \text{Unif}([-1, 1])$ and $q_j \stackrel{i.i.d}{\sim} \text{Unif}([-1, 1])$, $j=1, \dots, n-1$. We set $\lambda = 0.3$, where the generated intercept $a = (a_1, \dots, a_n)$ is a standard gaussian vector with unit norm scaled by $\alpha = 0.3$. The spectral radius was imposed to be at most $0.8$ by rejection sampling (i.e., keeping the first $A^*$ whose spectral radius was at most $0.8$). The $A^*$'s are fixed for each configuration, and only differs across configuration by their scaling $L/T$ to make them more comparable. We set $n = 100$, and explore all combinations of $T \in \{100, 200, 400 \}$, $p \in \{0.3, 0.5, 0.9\}$, $L \in \{1, 1.5\}$
with $200$ replications in each configuration. The projection and PGD tolerance were set to $10^{-4}$ and $10^{-3}$ respectively, with 350 maximum iterations. The projection step was performed by solving another constrained optimization problem using \texttt{cvxpy}, using the \texttt{OSQP} solver.

Median errors together with the 25th to 75th quantile bands can be seen in Figure \ref{fig:lipschitz}. Since the norm can vary significantly between different trajectory lengths $T$, we report the absolute Frobenius norms instead of the relative ones. As expected, the results are worse whenever the Lipschitz constant is bigger, with the errors generally moving in the same direction as the trajectory lengths. For $L = 1.5$, we see a small increase of errors when moving from $T = 100$ to $T = 200$ whenever $p \in \{0.3, 0.5\}$. This can be attributed to the instability of the PGD at short trajectory lengths for this larger Lipschitz constant, since the maximum number of iterations were hit before convergence for these cases.

\begin{figure}
    \centering
    \resizebox{0.7\linewidth}{!}{
    \input{plots/lipschitz.pgf}
    }
    \caption{Absolute Frobenius errors for row-wise Lipschitz matrices.}
    \label{fig:lipschitz}
\end{figure}

\end{document}